\newcommand{\poly}{\mathrm{poly}}
\numberwithin{equation}{section}
\DeclarePairedDelimiter\ceil{\lceil}{\rceil}
 \theoremstyle{plain}
      \newtheorem{asm}{Assumption}
\newtheorem{thm}{Theorem}
\newtheorem{lem}{Lemma}[section]
\newtheorem{claim}[lem]{Claim}
\newtheorem{prop}[thm]{Proposition}
\newtheorem{defn}{Definition}[section]
\newtheorem{inst}{Instance Class}[section]
\newtheorem{rem}{Remark}[section]
\newtheorem{protocol}{Protocol}[section]
\renewcommand{\Pr}{\mathbb{P}}
\newcommand{\Exp}{\mathbb{E}}
\newcommand{\Var}{\mathrm{Var}}
\newcommand{\iidsim}{\overset{\mathrm{i.i.d.}}{\sim}}
\newcommand{\KL}{\mathrm{KL}}
\newcommand{\N}{\mathbb{N}}
\newcommand{\I}{\mathbb{I}}
\DeclareMathOperator*{\argmax}{arg\,max}
\renewcommand{\P}{\mathbf{P}}
\def\ddefloop#1{\ifx\ddefloop#1\else\ddef{#1}\expandafter\ddefloop\fi}
\def\ddef#1{\expandafter\def\csname bb#1\endcsname{\ensuremath{\mathbb{#1}}}}
\def\ddefloop#1{\ifx\ddefloop#1\else\ddef{#1}\expandafter\ddefloop\fi}
\def\ddef#1{\expandafter\def\csname fr#1\endcsname{\ensuremath{\mathfrak{#1}}}}
\def\ddefloop#1{\ifx\ddefloop#1\else\ddef{#1}\expandafter\ddefloop\fi}
\def\ddef#1{\expandafter\def\csname scr#1\endcsname{\ensuremath{\mathscr{#1}}}}
\def\ddefloop#1{\ifx\ddefloop#1\else\ddef{#1}\expandafter\ddefloop\fi}
\def\ddef#1{\expandafter\def\csname b#1\endcsname{\ensuremath{\mathbf{#1}}}}
\def\ddef#1{\expandafter\def\csname c#1\endcsname{\ensuremath{\mathcal{#1}}}}
\def\ddef#1{\expandafter\def\csname h#1\endcsname{\ensuremath{\widehat{#1}}}}
\def\ddef#1{\expandafter\def\csname t#1\endcsname{\ensuremath{\widetilde{#1}}}}
\def\ddefloop#1{\ifx\ddefloop#1\else\ddef{#1}\expandafter\ddefloop\fi}
\def\ddef#1{\expandafter\def\csname mat#1\endcsname{\ensuremath{\mathbf{#1}}}}
\newcommand{\gap}{\mathsf{gap}}
\renewcommand{\hbar}{\bar{h}}
\newcommand{\bern}{\mathrm{Bernoulli}}
\newcommand{\tsum}{{\textstyle \sum}}
\newcommand{\qcheckpith}[1][\pihat]{\mathring{Q}_h^{#1,t}}
\newcommand{\epscs}{\epsilon_{\texttt{cs}}}
\newcommand{\delcs}{\delta_{\texttt{cs}}}
\newcommand{\epsltoe}{\epsilon_{\texttt{L2E}}}
\newcommand{\algfillcomment}[1]{\hfill {\color{blue} \texttt{// #1}}}
\newcommand{\Vst}{V^\star}
\newcommand{\Vpi}{V^\pi}
\newcommand{\Qhat}{\widehat{Q}}
\newcommand{\Qst}{Q^\star}
\newcommand{\pist}{\pi^\star}
\newcommand{\pihat}{\widehat{\pi}}
\newcommand{\wpi}{w^\pi}
\newcommand{\Pst}{W}
\newcommand{\Qpi}{Q^\pi}
\newcommand{\simplex}{\bigtriangleup}
\newcommand{\Delmin}{\Delta_{\min}}
\newcommand{\Creg}{C_{\cR}}
\newcommand{\Xgood}{\cZ}
\newcommand{\Eexplore}{\cE_{\mathrm{exp}}}
\newcommand{\Eest}{\cE_{\mathrm{est}}}
\newcommand{\Esap}{\cE_{\textsc{\texttt{L2E}}}}
\newcommand{\CK}{C_K}
\newcommand{\wst}{w^\star}
\newcommand{\cOtil}{\widetilde{\cO}}
\newcommand{\Phat}{\widehat{W}}
\newcommand{\Phatst}{\widehat{W}}
\newcommand{\wtil}{\widetilde{w}}
\newcommand{\Deltil}{\widetilde{\Delta}}
\newcommand{\euler}{\textsc{Euler}\xspace}
\newcommand{\sap}{\textsc{\texttt{Learn2Explore}}}
\newcommand{\ies}{\texttt{FindExplorableSets}}
\newcommand{\collectdata}{\textsc{\texttt{EliminateActions}}}
\newcommand{\mcae}{\textsc{\texttt{Moca-SE}}}
\newcommand{\algname}{\textsc{Moca}\xspace}
\newcommand{\collectsamp}{\textsc{\texttt{CollectSamples}}}
\newcommand{\ahatst}{\widehat{a}^\star}
\newcommand{\Qpihat}{Q^{\pihat}}
\newcommand{\Qhatpihat}{\Qhat^{\pihat}}
\newcommand{\Vpihat}{V^{\pihat}}
\newcommand{\Delpihat}{\Delta^{\pihat}}
\newcommand{\frakA}{\cA}
\newcommand{\frakD}{\mathfrak{D}}
\newcommand{\Xbareps}{\mathrm{OPT}(\epsout)}
\renewcommand{\ast}{a^\star}
\newcommand{\delsamp}{\delta_{\mathrm{samp}}}
\newcommand{\ceuler}{c_{\mathrm{eu}}}
\newcommand{\pitil}{\widetilde{\pi}}
\newcommand{\false}{\texttt{false}}
\newcommand{\true}{\texttt{true}}
\newcommand{\Xlast}{\cX^{\numepochs+1}}
\newcommand{\nlast}{n^{\numepochs+1}}
\newcommand{\gamlast}{\gamma^{\numepochs+1}}
\newcommand{\Compbsolve}{\cC^{\star}}
\newcommand{\Compb}{\cC}
\newcommand{\Clow}{C_{\textsc{lot}}}
\newcommand{\epssolved}{\epsilon^{\star}}
\newcommand{\Delkl}{\Delta_{\mathrm{KL}}}
\newcommand{\finalround}{\texttt{FinalRound}}
\newcommand{\iotaepstil}{\iota_{\epstil}}
\newcommand{\numepochseps}{\ell_{\epsilon}}
\newcommand{\iotadeltil}{\iota_{\deltil}}
\newcommand{\epsexp}{\epsilon_{\mathrm{exp}}}
\newcommand{\jexp}{\iota_{\mathrm{exp}}}
\newcommand{\epsmoca}{\epsilon}
\newcommand{\delmoca}{\delta}
\newcommand{\iotaepsmoca}{\iota_{\epsmoca}}
\newcommand{\iotadelmoca}{\iota_{\delmoca}}
\newcommand{\numepochs}{\ell_{\epsmoca}}
\newcommand{\deltil}{\delta}
\newcommand{\epstil}{\epsilon}
\newcommand{\epsout}{\epsilon_{\mathrm{tol}}}
\newcommand{\epsoutm}{\epsilon_{\mathrm{tol}(m)}}
\newcommand{\epsoutmstop}{\epsilon_{\mathrm{tol}(\mstop)}}
\newcommand{\epsoutmstopb}{\epsilon_{\mathrm{tol}(\mstop-1)}}
\newcommand{\opt}{\mathrm{OPT}}
\newcommand{\delout}{\delta_{\mathrm{tol}}}
\newcommand{\deloutm}{\delta_{\mathrm{tol}(m)}}
\newcommand{\deloutmstop}{\delta_{\mathrm{tol}(\mstop)}}
\newlength\tindent
\newcommand{\kevin}[1]{}
\providecommand\theHALG@line{\thealgorithm.\arabic{ALG@line}}
\title{Beyond No Regret: Instance-Dependent PAC Reinforcement Learning}
\author{Andrew Wagenmaker\footnote{University of Washington, Seattle, WA. \href{mailto:ajwagen@cs.washington.edu}{ajwagen@cs.washington.edu}} \and Max Simchowitz\footnote{MIT, Cambridge, MA. \href{mailto:msimchow@mit.edu}{msimchow@mit.edu}}  \and Kevin Jamieson\footnote{University of Washington, Seattle, WA. \href{mailto:jamieson@cs.washington.edu}{jamieson@cs.washington.edu}} }
\date{June 20, 2022}
\begin{document}

\maketitle

\begin{abstract}

The theory of reinforcement learning has focused on two fundamental problems: achieving low regret, and identifying $\epsilon$-optimal policies. While a simple reduction allows one to apply a low-regret algorithm to obtain an $\epsilon$-optimal policy and achieve the worst-case optimal rate, it is unknown whether low-regret algorithms can obtain the instance-optimal rate for policy identification. We show this is not possible---there exists a fundamental tradeoff between achieving low regret and identifying an $\epsilon$-optimal policy at the instance-optimal rate.

Motivated by our negative finding, we propose a new measure of instance-dependent sample complexity for PAC tabular reinforcement learning which explicitly accounts for the attainable state visitation distributions in the underlying MDP. We then propose and analyze a novel, planning-based algorithm which attains this sample complexity---yielding a complexity which scales with the suboptimality gaps and the ``reachability'' of a state. We show our algorithm is nearly minimax optimal, and on several examples that our instance-dependent sample complexity offers significant improvements over worst-case bounds.

\end{abstract}

%
%
%
%


\section{Introduction}
Two of the most fundamental problems in Reinforcement Learning (RL) are regret minimization, and PAC (Probably Approximately Correct) policy identification. In the former setting, the goal of the agent is simply to play actions that collect sufficient reward in an online fashion, while in the latter, the goal of the agent is to explore their environment in order to identify an $\epsilon$-optimal policy with probability $1-\delta$. 

These objectives are intimately related:  for an agent to achieve low-regret they must play ``good'' policies, and therefore can solve the PAC problem as well. Indeed, in the worst case, optimal performance can be achieved by the ``online-to-batch'' reduction: running a worst-case optimal regret algorithm for $K$ episodes, and averaging its chosen policies (or choosing one at random) to make a recommendation. In this paper, we ask if online-to-batch is all there is to PAC learning. Focusing on the non-generative tabular setting, we ask
\begin{quote}
\emph{Does the online-to-batch reduction yield tight instance-dependent guarantees in non-generative, tabular PAC reinforcement learning? Or, are there other algorithmic principles and measures of sample complexity that emerge in the PAC setting but are absent when studying regret?}
\end{quote}
Mirroring recent developments in the regret setting which obtain instance-dependent regret guarantees, we approach this question from an instance-dependent perspective, and seek to develop instance-dependent PAC guarantees. 

Our focus on the non-generative setting brings to light the role of exploration in learning good policies. The majority of low-regret algorithms rely on playing actions they believe will lead to large reward (the principle of \emph{optimism}) and only explore enough to ensure they do not overcommit to suboptimal actions. While this is sufficient to balance the exploration-exploitation tradeoff and induce enough exploration to obtain low regret, as we will see, when the goal is simply \emph{exploration} and no concern is given for the online reward obtained, much more aggressive exploration can be used to efficiently traverse the MDP and learn a good policy. Hence, in addressing our question above, we aim to understand more broadly what are the most effective exploration strategies for traversing an unknown MDP when the goal is to learn a good policy.

\subsection{Our Contributions}

We demonstrate the importance of non-optimistic planning via three main contributions:
\begin{itemize}[leftmargin=*]
	\item \emph{New measure of instance-dependent complexity.} We propose a novel, fully instance-dependent measure of complexity for MDPs, the \emph{gap-visitation complexity}:
	\begin{align*}
	\Compb(\cM,\epsilon) & := \sum_{h=1}^H \inf_{\pi} \max_{s,a} \min \left \{ \frac{1}{ \wpi_h(s,a) \Deltil_h(s,a)^2},  \frac{\Pst_h(s)^2}{\wpi_h(s,a) \epsilon^2}  \right \} + \frac{H^2 | \opt(\epsilon)|}{\epsilon^2}
	\end{align*}
	where here $\wpi_h(s,a)$ is the probability of visiting $(s,a)$ at step $h$ under policy $\pi$, $\Deltil_h(s,a)$ is a measure of the suboptimality of choosing action $a$ at state $s$ and step $h$, $\Pst_h(s)$ is the \emph{maximum reachability} of state $s$ at step $h$, and $\opt(\epsilon)$ is the set of all ``near-optimal'' state-action tuples. We show that $\Compb(\cM,\epsilon)$ is no larger than the minimax optimal PAC rate, and that in some cases, $\Compb(\cM,\epsilon)$ is equivalent to the instance-optimal complexity. 
	\item \emph{A novel planning-based algorithm.} We propose and analyze a computationally efficient planning-based algorithm, \algname, which returns an $\epsilon$-optimal policy with probability at least $1-\delta$ after $\cOtil(\Compb(\cM,\epsilon) \cdot \log 1/\delta)$ episodes, for finite $\delta > 0$ and $\epsilon > 0$. 
	Rather than relying on optimism to guarantee exploration, it employs an aggressive exploration strategy which seeks to reach states of interest as quickly as possible, coupling this with a Monte Carlo estimator and action-elimination procedure to identify suboptimal actions. 
	\item \emph{Insufficiency of online-to-batch.} We show, through several explicit instances, that low-regret algorithms cannot achieve our proposed measure of complexity, and indeed can do arbitrarily worse. This shows that optimistic planning does not suffice to attain sharp instance-dependent PAC guarantees in tabular reinforcement learning. 
\end{itemize}
	
\begin{wrapfigure}{R}{0.4\textwidth}
\vspace{-1em}
  \begin{center}
    \includegraphics[width=0.37\textwidth]{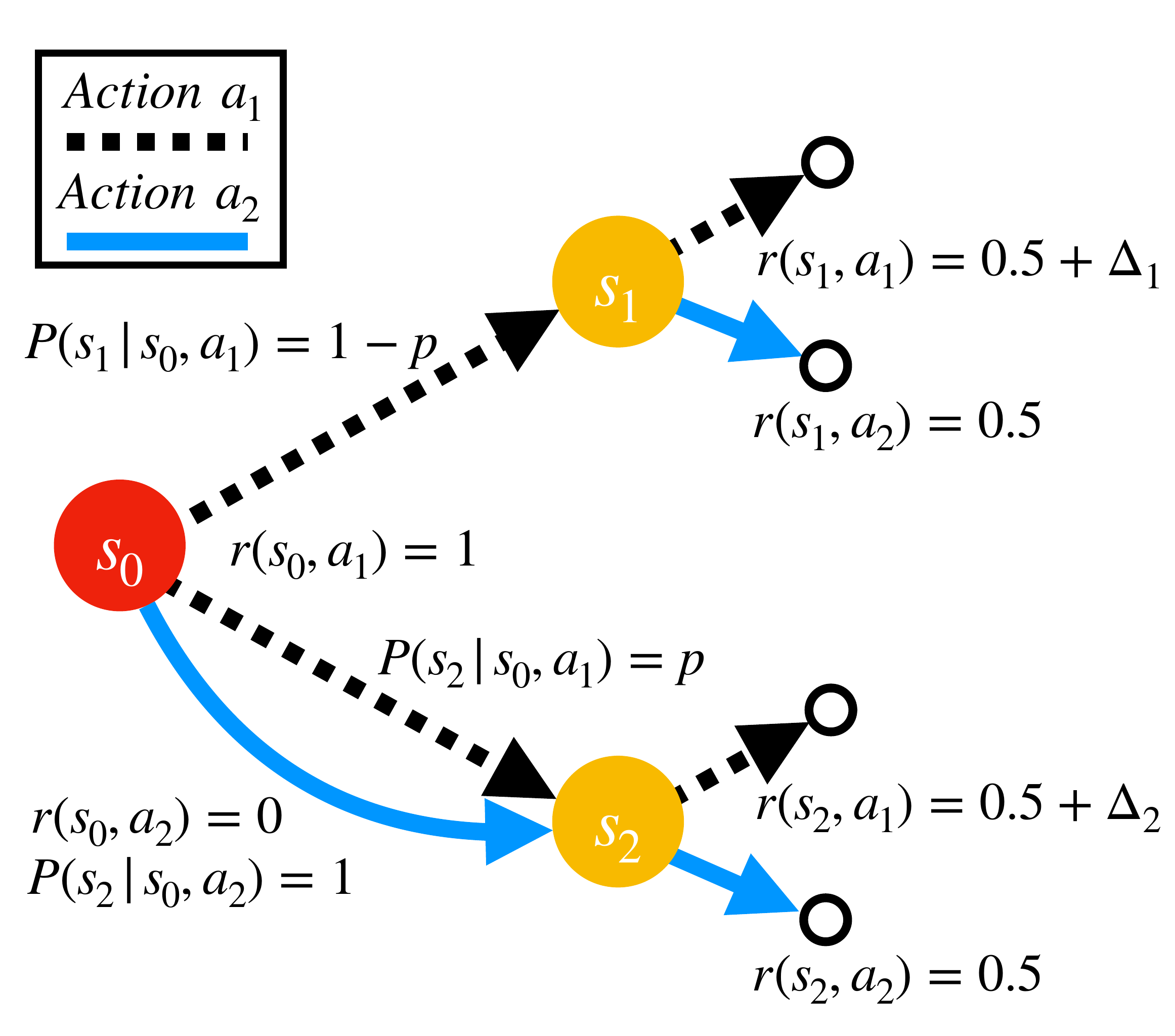}
  \end{center}
  \vspace{-15pt}
  \caption{A motivating example}
  \label{fig:mdp_ex1}
\end{wrapfigure}

\paragraph{A Motivating Example.} 
Consider the MDP in \Cref{fig:mdp_ex1}. In state $s_0$, action $a_1$ is optimal and transitions to state $s_1$ with probability $1-p$ and state $s_2$ with probability $p$. Action $a_2$ is suboptimal and transitions to state $s_2$ with probability 1. 
To learn a good policy, we need to identify the optimal action in both $s_1$ and $s_2$. An optimistic or low-regret algorithm will primarily play $a_1$ in $s_0$, as this action is optimal, and it will therefore only reach $s_2$ approximately $\cO(p K)$ times. It follows that a low-regret algorithm will take at least $\Omega(\frac{1}{p \Delta_2^2})$ episodes to learn the optimal action in $s_2$. In contrast, we could instead play $a_2$ in $s_0$, collecting less reward but learning the optimal action in $s_2$ in only $\Omega(\frac{1}{\Delta_2^2})$ episodes. For small $p$, this could be arbitrarily better. The following result makes this formal, illustrating that for identifying good policies in MDPs, existing low-regret and optimistic approaches can be highly suboptimal, and more intentional exploration procedures are needed. 
\vspace{-.25em}
\begin{prop}[Informal]\label{prop:two_state_ex} 
On the example in \Cref{fig:mdp_ex1}, any low-regret algorithm must run for at least $K \ge \Omega \big ( \frac{\log 1/\delta}{ \Delta_1^2} +  \frac{\log 1/\delta}{p \Delta_2^2}  \big )$ episodes to identify the optimal policy, while \algname will terminate and output the optimal policy after only $K \le \cO \big (  \frac{\log 1/\delta}{\Delta_1^2} + \frac{\log 1/\delta}{ \Delta_2^2} \big )$ episodes.
\end{prop}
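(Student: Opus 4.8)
The plan is to establish the two directions separately: the upper bound by instantiating \algname's general sample-complexity guarantee on this specific MDP, and the lower bound by a change-of-measure argument that couples the defining low-regret property with an information-theoretic identification requirement.

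\textbf{Upper bound.} I would invoke the main theorem for \algname, which states that with probability at least $1-\delta$ it halts and returns an $\epsilon$-optimal policy within $\cOtil(\Compb(\cM,\epsilon)\log\tfrac1\delta)$ episodes, applied with $\epsilon$ chosen small enough (below the relevant gaps) that an $\epsilon$-optimal policy on the instance of \Cref{fig:mdp_ex1} is forced to be optimal. The remaining task is to bound $\Compb(\cM,\epsilon)$. Away from the decisions governed by $\Delta_1,\Delta_2$ the per-layer terms are constants; at the layer(s) where those decisions occur, the crucial point is that the infimizing $\pi$ inside $\Compb$ need not be the reward-optimal policy, so one may take $\pi$ to play $a_1$ at $s_0$ with probability $q$ and $a_2$ with probability $1-q$, yielding $w^\pi_h(s_1,\cdot)\asymp q$ and $w^\pi_h(s_2,\cdot)\gtrsim 1-q$ (since $a_2$ reaches $s_2$ with probability one). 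The $\max_{s,a}$ of the resulting terms is then $\asymp\max\{\tfrac1{q\Delta_1^2},\tfrac1{(1-q)\Delta_2^2}\}$, where the $\min$ against the reachability term $W_h(s)^2/(w^\pi_h(s,a)\epsilon^2)$ keeps the optimal-action entries from blowing up; optimized over $q$ this is $\cO(\tfrac1{\Delta_1^2}+\tfrac1{\Delta_2^2})$, and the factor $1/p$ that burdens optimistic algorithms is absent precisely because $\pi$ is free to bias toward $a_2$. The additive term $H^2|\opt(\epsilon)|/\epsilon^2$ is also $\cO(\tfrac1{\Delta_1^2}+\tfrac1{\Delta_2^2})$ for this fixed, constant-size MDP under the chosen $\epsilon$, and combining gives $K\le\cO\big((\tfrac1{\Delta_1^2}+\tfrac1{\Delta_2^2})\log\tfrac1\delta\big)$ after absorbing poly-logarithmic factors.

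\textbf{Lower bound.} I would work with a small family of instances obtained from $\cM$ by reward perturbations. For the $\tfrac{\log 1/\delta}{\Delta_1^2}$ term: to return the optimal policy the algorithm must identify the optimal action at the $\Delta_1$-decision, so comparing $\cM$ to the instance perturbed by $\Theta(\Delta_1)$ there and applying the divergence-decomposition identity (the KL chain rule along trajectories, valid for arbitrary history-dependent algorithms) together with the Bretagnolle--Huber inequality forces $\Omega(\log\tfrac1\delta/\Delta_1^2)$ visits to that state; as it is reached at most once per episode, $K=\Omega(\log\tfrac1\delta/\Delta_1^2)$. For the $\tfrac{\log 1/\delta}{p\Delta_2^2}$ term: compare $\cM$ to the instance $\cM'$ in which the other action at $s_2$ is made optimal, a change localized at $s_2$ with per-sample KL $\Theta(\Delta_2^2)$; correctness on both instances forces $\Exp_{\cM}[\#\text{visits to }s_2]=\Omega(\log\tfrac1\delta/\Delta_2^2)$. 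The low-regret hypothesis now enters quantitatively: each episode playing $a_2$ at $s_0$ caps that episode's value at $V^\star(s_0)-\Delta_1$ and hence contributes at least $\Delta_1$ to the regret, so such episodes number at most $R_K/\Delta_1$ in expectation; since under $a_1$ the state $s_2$ is reached with probability only $p$, we get $\Exp_{\cM}[\#\text{visits to }s_2]\le pK + R_K/\Delta_1$. Combining, $pK+R_K/\Delta_1=\Omega(\log\tfrac1\delta/\Delta_2^2)$, and as a low-regret algorithm has $R_K$ sublinear (poly-logarithmic for this fixed instance) the $pK$ term must carry the bound, giving $K=\Omega(\log\tfrac1\delta/(p\Delta_2^2))$. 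Adding the two contributions yields the claim.

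\textbf{Main obstacle.} The delicate step is the last one of the lower bound: converting the qualitative ``low regret'' hypothesis into the quantitative cap $R_K/\Delta_1$ on plays of $a_2$ at $s_0$, and then separating the $pK$ and $R_K/\Delta_1$ contributions cleanly across all ranges of $K$ (the small-$K$ regime especially), which is where the precise formal notion of ``low-regret algorithm'' must be pinned down. By comparison the information-theoretic machinery and the $\Compb$ computation are routine, the only subtlety on the upper-bound side being the choice of $\epsilon$ small enough to force exact optimality yet large enough to control $H^2|\opt(\epsilon)|/\epsilon^2$.
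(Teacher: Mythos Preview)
Your proposal is correct and mirrors the paper's proof: the upper bound invokes \Cref{thm:complexity} and evaluates the gap-visitation complexity via a mixture policy at $s_0$, while the lower bound couples bandit identification lower bounds at $s_1,s_2$ with the regret constraint to cap $N_1(s_0,a_2)$. One minor sharpening the paper exploits is that the suboptimality of $a_2$ at $s_0$ is $\Theta(1)$ (the deterministic reward drops from $1$ to $0$), not $\Delta_1$, so $N_1(s_0,a_2)\le C_1K^\alpha+C_2$ directly rather than divided by $\Delta_1$; this only tightens constants and does not affect the conclusion or the delicate step you correctly flag.
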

\vspace{-.5em}

We stress that our goal in this work is \emph{not} to match the $\delta \to 0$ scaling of the optimal instance-dependent lower bound for $(\epsilon,\delta)$-PAC, but rather to obtain an instance-dependent complexity that captures the \emph{finite-time} difficulty of learning an $\epsilon$-optimal policy, and scales with an intuitive notion of MDP explorability, as in the example above. Even in the much simpler bandits setting, hitting the instance-optimal rate usually requires algorithms that ``track'' the optimal allocation, which can typically only be accomplished in the aforementioned $\delta \to 0$ limit, making such algorithms impractical in practice \citep{garivier2016optimal}. 
In contrast to this approach, we focus on the non-asymptotic regime, avoiding mixing-time and tracking arguments, and seeking to instead obtain ``practical'' instance-dependence.


\subsection{Organization} 
The remainder of this paper is organized as follows. First, in \Cref{sec:related} we review the related work on PAC RL. \Cref{sec:prelim} then introduces our notation and the basic problem setting we are working in. \Cref{sec:results} presents our new notion of complexity, the \emph{gap-visitation complexity}, and states our main results. In \Cref{sec:lowregret_breaks}, building on the example above, we introduce a particular class of MDP instances which shows that low-regret algorithms are provably suboptimal for PAC RL. \Cref{sec:alg_proof} provides an overview of our algorithm, \algname, and a proof sketch of our main theorem. Finally, we conclude in \Cref{sec:conclusion} with several interesting directions for future work. In the interest of space, detailed proofs of all our results are deferred to the appendix.

\section{Related Work}\label{sec:related}
The literature on PAC RL is vast and dates back at least two decades \citep{kearns2002near,kakade2003sample}. We cannot do it justice here so we aim to review just the most relevant works. In particular, as we focus on the tabular setting in this work, we omit discussion of similar works in reinforcement learning with function approximation (e.g. \cite{jin2020provably}).
In what follows, all claimed sample complexities hide constants and logarithmic factors. In addition, we state only the leading order $\epsilon^{-2}$ term---many works also have lower order $\poly(S,A,H) \epsilon^{-1}$ terms.

\paragraph{Minimax $(\epsilon,\delta)$-PAC Bounds.}
The vast majority of work has focused on minimax sample complexities that hold for \emph{any} MDP with arbitrary probability transition kernels and bounded rewards \citep{lattimore2012pac,dann2015sample,azar2017minimax,dann2017unifying}. 
The current state of the art in the stationary setting (i.e., $P_h(s'|s,a)=P(s'|s,a), \forall h \in [H]$) is \cite{dann2019policy}, which outputs an $\epsilon$-optimal policy with probability at least $1-\delta$ after at most $SA H^2 \epsilon^{-2}\log(1/\delta)$ episodes. This is known to be worst-case optimal \citep{dann2015sample}. In the non-stationary setting, \cite{menard2020fast} achieves a complexity of $SA H^3 \epsilon^{-2} \log(1/\delta)$.

\paragraph{PAC Bounds via Online-to-Batch Conversion.} 
As noted in the introduction, a PAC guarantee can be obtained from any low-regret algorithm using an online-to-batch conversion. For example, if an algorithm has a regret guarantee which, after $K$ episodes, scales as $\cO(\sqrt{C K})$, by randomly drawing a policy from the set of all policies played, via a simple application of Markov's inequality, one can guarantee that this policy will be $\frac{\sqrt{C}}{\delta \sqrt{K}}$-optimal with probability $1-\delta$. It follows that setting $K \ge \frac{C}{\epsilon^2 \delta^2}$ we are able to learn an $\epsilon$-optimal policy.\footnote{The reader will notice that the scaling in $\delta$ is suboptimal, scaling as $1/\delta^2$ instead of the familiar $\log 1/\delta$. To obtain a $\log 1/\delta$ scaling, instead of returning a \emph{single} policy, one could return a \emph{uniform distribution} over the policies returned by the regret-minimizing algorithm. At the start of each episode, a single policy would be drawn from this distribution and played for the duration of the episode. As a standard regret guarantee gives that $\sum_{k=1}^K (V_0^{\pi_k} - \Vst_0) \le \cO(\sqrt{K \log 1/\delta})$, choosing $K \ge \epsilon^{-2} \log 1/\delta$ implies the expected suboptimality of this distribution is no more than $\epsilon$ with probability $1-\delta$. However, the variance of this method is still large and so, since the standard PAC setting requires that a single policy be returned, we state subsequent online-to-batch results as scaling in $1/\delta^2$.} See \citep{jin2018q,menard2020fast} for a more in-depth discussion of this approach.

\paragraph{Gap-Dependent Regret Bounds for Episodic MDPs, and their implications for PAC.} Turning away from minimax-bounds to instant dependent analyses, optimistic planning algorithms have been shown to obtain gap-dependent regret bounds that, in many regimes, scale as $\log(K) \sum_{s,a,h} \frac{1}{\Delta_h(s,a)}$ \citep{simchowitz2019non,xu2021fine,dann2021beyond}, ignoring horizon and logarithmic factors. Here $\Delta_h(s,a)$ is the $Q$-value sub-optimality gaps under the optimal policy $\pist$ defined as $\Delta_h(s,a) := \max_{a' \in \cA} Q^{\pist}_h(s,a') - Q^{\pist}_h(s,a)$.
Using the online-to-batch conversion, we can obtain a PAC guarantee scaling as $\sum_{s,a,h} \frac{1}{\Delta_{h}(s,a) \cdot\epsilon} \cdot \frac{1}{\delta^2}$.\footnote{In the worst case, these bounds also incur a dependence on $S/\Delta_{\min}$, the inverse of the minimum nonzero gap $\Delta_{\min} := \min_{s,a,h}\{\Delta_h(s,a): \Delta_h(s,a) \ne 0\}$, scaled by the number of states $S$ \citep{simchowitz2019non}, or, with a more sophisticated algorithm, \cite{xu2021fine}, scaled by the number of states with non-unique optimal actions.} In a similar vein, \cite{ok2018exploration} propose an algorithm that has instance-optimal regret, though it is not computationally efficient and they only achieve the optimal rate in the asymptotic $T \rightarrow \infty$ regime.

\paragraph{Horizon-Free Instance Dependent Bounds.} 
A parallel line of work seeks regret bounds which replace dependence on the horizon $H$ with more refined quantities. The algorithm of \cite{zanette2019tighter}, \euler, yields regret of $\sqrt{SAK\min\{\mathbb{Q}_\star  H, \,\mathcal{G}^2 \}} $ (ignoring lower order terms), where $\mathbb{Q}_\star= \max_{s,a,h} \Var[R(s,a)] + \Var_{s^+ \sim P(\cdot| s,a)}[  V^{\star}(s') ]$ is a measure of reward and value variance, and $\mathcal{G}$ is a deterministic upper bound on the cumulative reward in an episode. Translated to the PAC setting, this implies a sample complexity (again suppressing lower order terms) of $\min\{\mathbb{Q}_\star H, \,\mathcal{G}^2\} SA \epsilon^{-2} $. In the special case where $\mathcal{G} = 1$, subsequent works sharpen polynomial dependence on $H$ in the lower-order term to polylogarithmic in both the PAC \citep{wang2020long}\footnote{This work suffers a worse $\epsilon^{-3}$ guarantee for PAC.} and regret \citep{zhang2020reinforcement} settings, thereby nearly eliminating the dependence on $H$ altogether. In this work, we do not focus on the horizon factor $H$, and hence these works, while compelling, are somewhat orthogonal.

\paragraph{Towards Instance-Dependent PAC Learning.}
To date, only several works have derived instance-dependent PAC bounds in the non-generative setting. The aforementioned instance-dependent regret guarantees can be seen as a first step in this direction, albeit with a suboptimal $1/\delta^2$ scaling.
\cite{jonsson2020planning} obtains a complexity that scales as the $Q$-value gap for the first time step, but that is exponential in $H$. 
Very recently, \cite{marjani2021navigating} studied the problem of best-policy identification, and proposed an algorithm which has an instance-dependent sample complexity. However, their results are purely asymptotic $(\delta \rightarrow 0)$, while we are concerned with the setting of finite $\epsilon > 0$ and $\delta > 0$. We discuss \citep{marjani2021navigating} in more detail in \Cref{sec:interpret_results}. In the special case of linear dynamical systems and smooth rewards, a setting which encompasses the Linear Quadratic Regulator problem, \cite{wagenmaker2021task} establish a finite-time, instance-dependent lower bound and matching upper bound for $\epsilon$-optimal policy identification. To our knowledge, this is the only work to obtain an instance-optimal $(\epsilon,\delta)$-PAC result, but their analysis does not apply to tabular MDPs.

\paragraph{Generative Model Setting.} 
In the generative model setting, the agent can query any $s$ and $a$ and observe the next state and reward. This setting is much simpler, entirely obviating the need for intentional exploration, and more favorable results are therefore obtainable.
A number of impactful analysis techniques have been developed for this setting with corresponding minimax bounds \citep{azar2013minimax,sidford2018near,agarwal2020model,li2020breaking}. Recently, several instance-dependent results have been shown in the generative model setting \citep{zanette2019generative,khamaru2020temporal,khamaru2021instance}. Most relevant is the work of \cite{zanette2019generative} which proposes the \textsc{Bespoke} algorithm and achieves a sample complexity of $\sum_{s,a} \frac{\log(1/\delta)}{\max\{ \epsilon^2, \Delta(s,a)^2 \}} $, ignoring horizon dependence.
A major shortcoming of this result is that their complexity will always scale at least as $S \epsilon^{-2}$, since for every state there exists an action $a$ such that $\Delta(s,a) = 0$. \cite{marjani2020best} study best policy identification in the $\delta \rightarrow 0$ regime. While they obtain an instance-dependent complexity, it is not clear they hit the instance-optimal rate.

\paragraph{Lower Bounds.}
We are unaware of any instance-dependent lower bound for $(\epsilon,\delta)$-PAC for MDPs.
Indeed, we are not even aware of an instance-dependent lower bound for $(\epsilon,\delta)$-PAC for contextual bandits ($H=1$). 
On the other hand, it is straightforward to obtain lower bounds for exact best policy optimization \citep{ok2018exploration,marjani2020best,marjani2021navigating}.
However, the best-policy identification case is frequently trivial because the sample complexity necessarily becomes vacuous as a state becomes harder and harder to access. Furthermore, the known lower bounds in this setting are relatively uninterpretable solutions to non-convex optimization problems.

\section{Preliminaries}\label{sec:prelim}
\paragraph{Notation.} 
We let $[N] = \{ 1,2,\ldots,N \}$. $\simplex(\mathcal{X})$ denotes the set of probability distributions over a set $\mathcal{X}$. 
$\Exp_{\pi} [ \cdot ]$ denotes the expectation over the trajectories induced by policy $\pi$ and $\Pr_{\pi} [ \cdot]$ denotes the probability measure induced by $\pi$. 
We let $\gtrsim$ refer to inequality up to absolute constants, and let $\cO(\cdot)$ hide absolute constants, and $\cOtil(\cdot)$ hide absolute constants as well as $\poly\log$ terms. 
In general, we use $\log$ to denote the base 2 logarithm.

\paragraph{Markov Decision Processes.}
We study finite-horizon, time inhomogeneous Markov Decision Processes (MDPs) given by the tuple $\cM = (\cS, \cA, H, \{ P_h \}_{h=1}^H, P_0, \{ R_h \}_{h=1}^H )$. Here $\cS$ is the set of states ($S := |\cS|$), $\cA$ the set of actions ($A := |\cA|$), $H$ the horizon, $P_h : \cS \times \cA \rightarrow \bigtriangleup(\cS)$ the transition kernel at step $h$, $P_0 \in \bigtriangleup(\cS)$ the initial state distribution, and $R_h : \cS \times \cA \rightarrow \bigtriangleup([0,1])$ the reward distribution, with $r_h(s,a) = \Exp[R_h(s,a)]$.
We assume that $\{ P_h \}_{h=1}^H, P_0,$ and $\{ R_h \}_{h=1}^H$ are all initially unknown to the learner.


An \emph{episode} is a trajectory $\{ (s_h,a_h,R_h) \}_{h=1}^H$ where $s_1 \sim P_0$, $s_{h+1} \sim P_h(\cdot | s_h,a_h)$, and $R_h \sim R_h(s_h,a_h)$. After $H$ steps, the MDP restarts and the process repeats. A \emph{policy} $\pi$ is a mapping from states to actions: $\pi : \cS \times [H] \rightarrow \simplex(\cA)$. 
$\pi_h(a|s)$ denotes the probability that $\pi$ chooses $a$ at $(s,h)$. If for all $(s,h)$, $\pi_h(a|s) = 1$ for some $a$, we say $\pi$ is a \emph{deterministic policy} and denote $\pi_h(s)$ the action it chooses at $(s,h)$. Otherwise we say $\pi$ is a \emph{stochastic policy}.

Given a policy $\pi$, the $Q$-value function, $\Qpi : \cS \times \cA \times [H] \rightarrow [0,H]$, denotes the expected reward obtained by playing action $a$ in state $s$ at time $h$, and then playing $\pi$ for all subsequent time. Formally, it is defined as
\begin{align*}
\Qpi_h(s,a) := \Exp_{\pi} \left [ \sum_{h' = h}^H R_{h'}(s_{h'},a_{h'}) | s_h = s, a_h = a \right ] .
\end{align*}
We also define the value function, $\Vpi : \cS \times [H] \rightarrow [0,H]$, as $\Vpi_h(s) := \Exp_{a \sim \pi_h(s)}[\Qpi_h(s,a)]$. The $Q$-function satisfies the Bellman equation:
\begin{align*}
\Qpi_h(s,a) = r_h(s,a) + \sum_{s'} P_h(s'|s,a) \Vpi_{h+1}(s').
\end{align*}
We let $\Vpi_{H+1}(s) = 0$ and $\Qpi_{H+1}(s,a) = 0$. We define the optimal $Q$-function as $\Qst_h(s,a) := \sup_\pi \Qpi_h(s,a)$,  $\Vst_h(s) := \sup_\pi \Vpi_h(s)$, and let $\pist$ denote an optimal policy. 
$\Vpi_0 := \sum_s P_0(s) \Vpi_1(s) $ denotes the \emph{value} of a policy, the expected reward it will obtain, and $\Vst_0 := \sup_\pi \Vpi_0$.

\paragraph{Optimal Actions and Effective Gap.}  
Critical to our analysis is the concept of a \emph{suboptimality gap}. In particular, we will define the suboptimality gap as:
\begin{align*}
\Delta_h(s,a) := \Vst_h(s) - \Qst_h(s,a).
\end{align*}
In words, $\Delta_h(s,a)$ denotes the suboptimality of taking action $a$ in $(s,h)$, and then playing the optimal policy henceforth. We also let $\Delta^\pi_h(s,a) := \max_{a'} \Qpi_h(s,a') - \Qpi_h(s,a)$. 

We say $a$ is optimal at $(s,h)$ if $\Delta_h(s,a) = 0$ (at least one such action is guaranteed to exist). We say $a$ is the unique optimal action at $(s,h)$ if $\Delta_h(s,a) = 0$, but $\Delta_h(s,a') > 0$ for all other $a \ne a'$, and say $a$ is a non-unique optimal action if there exists another $a'$ for which $\Delta_h(s,a) = \Delta_h(s,a') = 0$. We say $\mathcal{M}$ has unique optimal actions if, for all $(s,h)$, there is a unique  optimal action $a$. 

We now construct an effective gap $\Deltil_h(s,a)$ which coincides with $\Delta_h(s,a)$ for suboptimal actions, but is possibly non-zero if the optimal action is unique. Formally, at a particular $(s,h)$, we denote the minimum non-zero gap as
\begin{align*}
\Delmin(s,h) := \min_{a : \Delta_h(s,a) > 0} \Delta_h(s,a).
\end{align*}
The effective gap is then defined as follows:
\begin{align*}
\Deltil_h(s,a) := \begin{cases}  \Delta_h(s,a) & \Delta_h(s,a) > 0 \\
\Delmin(s,h) & a \text{ is the unique action at $s,h$ for which } \Delta_h(s,a) = 0 \\
0 & a \text{ is a non-unique action at $s,h$ for which } \Delta_h(s,a) = 0.
\end{cases}
\end{align*}

Finally, we introduce the idea of a state-action visitation distribution. We define
\begin{align*}
\wpi_h(s,a) := \Pr_{\pi} [ s_h = s, a_h = a], \quad \wpi_h(s) := \Pr_{\pi} [ s_h = s].
\end{align*}
Note that $\wpi_h(s,a) = \pi_h(a|s) \wpi_h(s)$. We denote the \emph{maximum reachability} of a state $s$ at time $h$ by:
\begin{align*}
W_h(s) := \sup_\pi \wpi_h(s).
\end{align*}
In words, $\Pst_h(s)$ is the maximum probability with which we could hope to reach $s$ at time $h$.

\paragraph{Special Cases: Bandits and Contextual Bandits.}
Two important special cases of the tabular MDP setting are the multi-armed bandit and contextual bandit problems. Both settings are of horizon $H = 1$. In the multi-armed bandit setting, there is a single state, and at every timestep the learner must choose an action (arm) and observes the reward for that action. The value of a (deterministic) policy is then measured simply by the expected reward obtained by the single action that policy takes. As the setting has only a single state and horizon of 1, we simplify notation and let $\Delta(a)$ denote the gap associated with action $a$.

The contextual bandit setting is a slight generalization of the multi-armed bandit where now we do allow for multiple states. In this setting, a state is sampled from $s \sim P_0$, the learner chooses an action to play, receives a reward, and the process repeats. As the state is sampled from $P_0$, the learner has no control over which state they visit. We do not assume any similarity between the different states---every state can be thought of as an independent multi-armed bandit.

Due to the simplicity of these settings---both are absent of any ``dynamics''---they therefore prove to be useful benchmarks on which to evaluate the optimality of our results.

\paragraph{PAC Reinforcement Learning Problem.}
In this work we study PAC RL. Formally, in PAC RL, the goal is to, with probability $1-\delta$, identify a policy $\pihat$ such that 
\begin{align}\label{eq:pac_def}
 \Vst_0 - \Vpihat_0 \le \epsilon 
\end{align}
using as few episodes as possible. We say that a policy satisfying \eqref{eq:pac_def} is \emph{$\epsilon$-optimal} and that an algorithm which returns a policy satisfying \eqref{eq:pac_def} with probability at least $1-\delta$ is $(\epsilon,\delta)$-PAC. Note that our goal is to find a single policy not a distribution over policies\footnote{That is, we want to find some policy $\pihat$ such that $\Vst_0 - V_0^{\pihat} \le \epsilon$, not a distribution over policies $\lambda \in \simplex(\Pi)$ such that $\Vst_0 - \sum_{\pi \in \Pi} \lambda_\pi V_0^\pi \le \epsilon$. Note that returning a single policy is the standard goal of PAC RL found in the literature.}.

\section{Instance-Dependent PAC Policy Identification}\label{sec:results}
Before stating our main result, we introduce our new notion of sample complexity for MDPs.

\begin{defn}[Gap-Visitation Complexity]\label{defn:gap_visitation_complexity}
For a given MDP $\cM$, we define the \emph{gap-visitation complexity} as:
\begin{align*}
\Compb(\cM,\epsilon) & := \sum_{h=1}^H \inf_{\pi} \max_{s,a} \min \left \{ \frac{1}{\wpi_h(s,a) \Deltil_h(s,a)^2},  \frac{\Pst_h(s)^2}{\wpi_h(s,a) \epsilon^2}  \right \} + \frac{H^2 | \opt(\epsilon) |}{\epsilon^2}.
\end{align*}
where the infimum is over all policies, both deterministic and stochastic, and:
\begin{align*}
\opt(\epsilon) & := \big \{ (s,a,h) \ : \ \epsilon \ge \Pst_h(s)   \Deltil_h(s,a)/3 \big \}.
\end{align*}
In the special case when $\cM$ has unique optimal actions, we define the \emph{best-policy gap-visitation complexity} as:
\begin{align*}
\Compbsolve(\cM) & := \sum_{h=1}^H \inf_{\pi} \max_{s,a } \frac{1}{\wpi_h(s,a) \Delta_h(s,a)^2}.
\end{align*}
\end{defn}
\noindent Since $\wpi_h(s,a) = \pi_h(a|s) \wpi_h(s)$, as long as $\wpi_h(s) > 0$ for some $\pi$, we can always choose our policy such that all actions are supported and $\wpi_h(s,a) > 0$ for all $a$\footnote{Here, we adopt the convention that, in the trivial case $\Pst_h(s) = 0$ (and thus $\wpi_h(s,a) = 0$), $\frac{\Pst_h(s)^2}{\wpi_h(s,a)\epsilon^2}$ evaluates to $0$.}. Recall that we have defined $\Deltil_h(s,a)$ so that $\Deltil_h(s,a) > 0$ for all $a$ as long as $(s,h)$ has a unique optimal action. This implies that as $\epsilon \rightarrow 0$, if $\cM$ has unique optimal actions, $|\opt(\epsilon)| \rightarrow 0$. Given this new notion of sample complexity, we are now ready to state our main result.

\begin{thm}\label{thm:complexity}
There exists an $(\epsilon,\delta$)-PAC algorithm, \algname, which, with probability at least $1-\delta$, terminates after running for at most
\begin{align*}
 \Compb(\cM,\epsilon)  \cdot H^2 c_\epsilon \log \tfrac{1}{\delta}  + \tfrac{\Clow(\epsilon)}{\epsilon} 
\end{align*}
episodes and returns an $\epsilon$-optimal policy, for lower-order term $\Clow(\epsilon) = \poly(S,A,H, \log \tfrac{1}{\epsilon}, \log \tfrac{1}{\delta})$ and $c_\epsilon = \poly\log(SAH/\epsilon)$. 
Furthermore, if $\epsilon < \epssolved := \min \{ \min_{s,a,h} \Pst_h(s) \Delta_h(s,a)/3,$ $ 2 H^2 S \min_{s,h} \Pst_h(s) \}$ and $\cM$ has unique optimal actions, \algname terminates after at most
\begin{align*}
 \Compbsolve(\cM)  \cdot H^2 c_{\epssolved} \log \tfrac{1}{\delta}  + \tfrac{\Clow(\epssolved)}{\epssolved} 
\end{align*}
episodes and returns $\pist$, the optimal policy, with probability $1-\delta$.
\end{thm}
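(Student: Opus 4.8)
The plan is to design \algname as a multi-epoch algorithm whose core primitive is, at each step $h$ and for each state $s$, to (i) drive a policy that maximizes the probability of reaching $(s,h)$, then (ii) from the visits to $(s,h)$, run a Monte Carlo / action-elimination subroutine that discards provably suboptimal actions. The key conceptual point is that the per-$(s,h)$ cost of eliminating action $a$ is governed by the inverse visitation probability times the inverse squared gap, i.e. $\frac{1}{\wpi_h(s,a)\Deltil_h(s,a)^2}$; but once $\epsilon$ is large enough relative to $\Pst_h(s)\Deltil_h(s,a)$ (the regime captured by $\opt(\epsilon)$), we do not need to resolve that action at all, and instead pay only the crude worst-case rate $\frac{H^2}{\epsilon^2}$ aggregated over $\opt(\epsilon)$ — this is precisely the $\min\{\cdot,\cdot\}$ structure and the additive $\frac{H^2|\opt(\epsilon)|}{\epsilon^2}$ term in $\Compb(\cM,\epsilon)$. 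The first thing I would do is make this reduction precise: show that returning the greedy policy over the surviving action sets is $\epsilon$-optimal with probability $1-\delta$, using a Bellman-error telescoping argument across the horizon, where the per-layer error is controlled either because a state is hard to reach (so its contribution is at most $\Pst_h(s)\cdot(\text{residual gap})\le 3\epsilon/H$ — hence the $/3$ and the form of $\opt(\epsilon)$) or because all near-optimal-enough actions have been resolved.

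Next I would bound the sample complexity. For the reachability step, I would invoke an existing optimistic/regret primitive (e.g.\ \euler\ or \strongeuler) run on a reward that is $1$ at $(s,h)$ and $0$ elsewhere, so that after $\cOtil(1/\Pst_h(s))$-ish episodes we visit $(s,h)$ a constant fraction of $K$ times; more carefully, to obtain $n$ visits to $(s,h)$ one needs roughly $n/\Pst_h(s)$ episodes up to lower-order additive terms, and the Monte Carlo elimination needs $n \gtrsim \frac{H^2\log(1/\delta)}{\Deltil_h(s,a)^2}$ visits to kill $a$ — multiplying gives the $\frac{\Pst_h(s)^2}{\wpi_h(s,a)\epsilon^2}$ vs $\frac{1}{\wpi_h(s,a)\Deltil_h(s,a)^2}$ tradeoff after one notes $\wpi_h(s,a)$ is the reachability times the exploration policy's action probability. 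Summing over $h$ and taking the $\inf_\pi\max_{s,a}$ (the algorithm implicitly optimizes the exploration allocation at each layer, or we can afford a worst-case-over-$(s,a)$ bound at each $h$) yields the $\Compb(\cM,\epsilon)\cdot H^2 c_\epsilon\log\frac1\delta$ main term, with all the bookkeeping (epoch doubling, union bounds over $S,A,H$ and over epochs, burn-in costs of the reachability subroutine) absorbed into $\Clow(\epsilon)/\epsilon$ and the polylog factor $c_\epsilon$.

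For the second ("best-policy") statement, I would check that the hypothesis $\epsilon<\epssolved$ forces $\opt(\epsilon)=\emptyset$: since $\epsilon < \min_{s,a,h}\Pst_h(s)\Delta_h(s,a)/3$ and, under unique optimal actions, $\Deltil_h(s,a)=\Delmin(s,h)>0$ even at the optimal action, every $(s,a,h)$ fails the defining inequality of $\opt(\epsilon)$, so $|\opt(\epsilon)|=0$ and the additive $H^2|\opt(\epsilon)|/\epsilon^2$ term vanishes. Then in the $\min\{\cdot,\cdot\}$, the second branch $\frac{\Pst_h(s)^2}{\wpi_h(s,a)\epsilon^2}$ exceeds the first branch for all $(s,a)$ (using $\epsilon<\Pst_h(s)\Delta_h(s,a)/3\le\Pst_h(s)\Deltil_h(s,a)/3$ and bounding $\Pst_h(s)^2/\wpi_h(s,a)$ appropriately — here the second clause $\epsilon < 2H^2S\min_{s,h}\Pst_h(s)$ handles the worst case of the visitation ratio), so $\Compb(\cM,\epsilon)$ collapses to $\Compbsolve(\cM)$. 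Finally, with $\opt(\epsilon)=\emptyset$ the telescoping argument shows the surviving greedy policy has zero residual gap at every layer, hence equals $\pist$, and the sample-complexity expression reduces as claimed with $\epssolved$ in place of $\epsilon$ in the lower-order terms.

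The main obstacle I anticipate is controlling the reachability subroutine in the non-generative setting: getting $n$ samples at a hard-to-reach $(s,h)$ requires an exploration policy that is itself learned online, and the cost of learning it (not merely executing it) must be shown to be a lower-order additive term rather than multiplying the dominant $1/\Deltil^2$ cost — this is where optimistic planning is genuinely needed as a subroutine, and where the careful epoching, the $\Pst_h(s)^2$ (rather than $\Pst_h(s)$) in the second branch of the $\min$, and the polylogarithmic overhead $c_\epsilon$ all come from. A secondary subtlety is that the exploration allocation achieving the $\inf_\pi$ differs across $(s,a)$ at a fixed layer, so one must argue that a single well-chosen exploration policy per layer (or a small mixture) simultaneously handles the $\max_{s,a}$, which is exactly why the complexity is $\sum_h\inf_\pi\max_{s,a}$ and not $\inf_\pi\sum_h\max_{s,a}$ — aligning the algorithm's behavior with this order of quantifiers is the crux of the upper bound.
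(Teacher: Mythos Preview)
Your high-level plan captures the right spirit---Monte Carlo action elimination at each $(s,h)$, driven by a learned exploration policy, with backward induction over $h$---and your identification of the main obstacle (the cost of \emph{learning} the exploration policy must be lower-order) is exactly right. But there are two genuine gaps.

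\textbf{Per-state exploration is suboptimal in $S$.} Your step (i), ``drive a policy that maximizes the probability of reaching $(s,h)$'' for each $s$ individually, leads to the wrong dependence on $S$. If you explore state-by-state and set the per-state tolerance $\epsilon_h(s)\sim\beta\epsilon/\Pst_h(s)$, then to make $\sum_{h}\sup_\pi\sum_s\wpi_h(s)\epsilon_h(s)\le\epsilon$ you are forced to take $\beta\sim(SH)^{-1}$, which inflates the leading term by $S^2$. The paper's fix is to \emph{group} states by reachability level: \sap\ runs a regret minimizer on an indicator reward supported on an entire set $\cX\subseteq\cS\times\cA$, and returns a partition $\{\cX_j\}$ with $\sup_\pi\sum_{(s,a)\in\cX_j}\wpi_h(s,a)\le 2^{-j+1}$ together with a policy cover $\Pi_j$ that visits every $(s,a)\in\cX_j$ roughly $2^{-j}|\Pi_j|/|\cX|$ times per replay. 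This grouping is what lets you convert the per-$(s,a)$ sampling cost into $\inf_\pi\max_{s,a}$ rather than $\sum_{s,a}\inf_\pi$, and it is also what makes the $H^2|\opt(\epsilon)|/\epsilon^2$ term appear with the correct constant: in the final round you sample each remaining $(s,a)$ in proportion to the reachability of its \emph{group}, not its individual $\Pst_h(s)$. Your closing paragraph flags the $\inf_\pi\max_{s,a}$ issue as a ``secondary subtlety,'' but it is actually the central algorithmic innovation.

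\textbf{The second statement needs an early-termination meta-loop.} Your argument that $\epsilon<\epssolved$ forces $|\opt(\epsilon)|=0$ and collapses the $\min$ to the gap branch is correct for the \emph{leading} term, but it does nothing for the lower-order term $\Clow(\epsilon)/\epsilon$, which would blow up as $\epsilon\to 0$. The paper handles this by wrapping \mcae\ in a doubling loop over tolerances $\epsoutm=H2^{-m}$: as soon as every active set $\frakA_h(s)$ is a singleton, the algorithm halts and outputs $\pihat=\pist$. The key accounting step is that this halting epoch $\bar m$ satisfies $\epsilon_{\mathrm{tol}(\bar m)}\gtrsim\epssolved$, so the accumulated lower-order cost is $\Clow(\epssolved)/\epssolved$, independent of the input $\epsilon$. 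Relatedly, the second clause in $\epssolved$, namely $2H^2S\min_{s,h}\Pst_h(s)$, is not about ``the worst case of the visitation ratio'' in the $\min$; it is the threshold below which every state lies in $\Xgood_h$ (the set of sufficiently reachable states the algorithm bothers to explore), so that no state is ignored and the singleton condition can actually trigger.
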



%
%


\noindent In addition, \algname is computationally efficient with computational cost scaling polynomially in problem parameters. In \Cref{sec:alg_proof}, we provide a sketch of the proof of \Cref{thm:complexity} and state the definition \algname. The full proof is deferred to \Cref{sec:detailed_proof}.

\subsection{Interpreting the Complexity}\label{sec:interpret_results}
Intuitively, the first term in the gap-visitation complexity quantifies how quickly we can eliminate all actions at least $\epsilon/\Pst_h(s)$-suboptimal for all $s$ and $h$, given that we must explore in our particular MDP. For a given $s$ and $h$, if we play policy $\pi$ for $K$ episodes, we will reach $(s,h)$ on average $\wpi_h(s) K$ times. Thus, if we imagine that there is a bandit at $(s,h)$, to eliminate action $a$ will require that we run for at least $\frac{1}{\wpi_h(s,a) \Delta_h(s,a)^2}$ episodes. The following result makes this rigorous---up to $H$ factors, a complexity of $\cO(\Compbsolve(\cM) \cdot \log 1/\delta)$, which \algname achieves, cannot be improved on in general for best-policy identification.

\begin{prop}\label{prop:bpi_lb1}
Fix some $S > 1, A > 1, H > 1$, $\hbar \in [H]$, transition kernels $\{ P_h \}_{h = 1}^{\hbar - 1}$, and gaps $\{ \gap(s,a) \}_{s \in [S], a \in [A - 1]} \subseteq (0,1/2)^{SA}$. Then there exists some MDP $\cM$ with $S$ states, $A$ actions, horizon $H$, transition kernel $ P_h $ for $h \le \hbar - 1$, and gaps
\begin{align*}
\Delta_{\hbar}(s,a) = \gap(s,a), \quad \forall s \in \cS, a \in \cA, a \neq \pist_{\hbar}(s), \qquad \Delta_{h}(s,a) \ge 1, \quad \forall s \in \cS, a \in \cA, h \neq \hbar,
\end{align*}
such that any $(0,\delta)$-PAC algorithm with stopping time $K_\delta$ requires:
\begin{align*}
\Exp_{\cM}[K_\delta] \gtrsim  \inf_\pi \max_{s,a} \frac{1}{\wpi_{\hbar}(s,a) \Delta_{\hbar}(s,a)^2} \cdot \log \tfrac{1}{2.4\delta} .
\end{align*}
\end{prop}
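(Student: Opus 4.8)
The plan is a change-of-measure (transportation) argument that reduces \Cref{prop:bpi_lb1} to a family of Bernoulli best-arm-identification lower bounds planted at step $\hbar$. First I would build $\cM$ as follows: use the prescribed kernels $P_1,\dots,P_{\hbar-1}$; take all rewards at steps $h\neq\hbar$ deterministic and arranged so that at every such $(s,h)$ there is an optimal action with every suboptimal action of gap at least $1$ (possible since the residual horizon at any step $h\ne\hbar$ is at least one and rewards lie in $[0,1]$); and route all step-$\hbar$ transitions through a single action-independent map into a reward-free region, so that $\Qst_\hbar(s,a)$ equals the mean step-$\hbar$ reward of $(s,a)$ plus a constant independent of $a$. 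At $(s,\hbar)$, I let the designated optimal action have reward $\mathrm{Bern}(3/4)$ and each other action $a$ have reward $\mathrm{Bern}(3/4-\gap(s,a))$; since $\gap(s,a)\in(0,1/2)$ all means lie in $(1/4,3/4)$, $\Delta_\hbar(s,a)=\gap(s,a)$ for suboptimal $a$ exactly, and the prescribed gaps and kernels are realized. I would additionally choose the pre-$\hbar$ rewards so that every state $s$ with $W_\hbar(s)>0$ is visited with positive probability under $\pist_\cM$ — this is what makes the $(0,\delta)$-PAC requirement bind at each such state, and is the part of the construction that must be tailored to the given $\{P_h\}_{h<\hbar}$.

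Next, for each suboptimal pair $(s^\circ,a^\circ)$ at step $\hbar$ with $W_\hbar(s^\circ)>0$, I would introduce, for small $\xi>0$, the alternative instance $\cM'$ identical to $\cM$ except that the reward of $(s^\circ,a^\circ,\hbar)$ is raised to $\mathrm{Bern}(3/4+\xi)$, making $a^\circ$ optimal at $(s^\circ,\hbar)$; since $s^\circ$ is visited by $\pist_\cM$ with some probability $p>0$ and the pre-$\hbar$ gaps far exceed $\xi$, the policy $\pist_\cM$ is $p\xi$-suboptimal in $\cM'$ and hence not $0$-optimal there. As $\cM$ and $\cM'$ differ only in this one reward distribution, the standard divergence decomposition for sequential interaction (Wald's identity for the log-likelihood-ratio martingale, as in \citealp{garivier2016optimal}, adapted to MDPs) gives, with $N_\hbar(s,a)$ counting visits to $(s,a)$ at step $\hbar$ across all episodes,
\begin{align*}
\KL\!\left(\Pr_\cM^{\cF_{K_\delta}}\,\|\,\Pr_{\cM'}^{\cF_{K_\delta}}\right)=\Exp_\cM\!\left[N_\hbar(s^\circ,a^\circ)\right]\cdot\KL\!\left(\mathrm{Bern}(\tfrac34-\gap(s^\circ,a^\circ))\,\|\,\mathrm{Bern}(\tfrac34+\xi)\right).
\end{align*}
Applying the data-processing inequality to the $\cF_{K_\delta}$-measurable event $\{\pihat=\pist_\cM\}$ with $\Pr_\cM[\pihat=\pist_\cM]\ge 1-\delta$ and $\Pr_{\cM'}[\pihat=\pist_\cM]\le\delta$, using $\kl(\delta,1-\delta)\ge\log\frac{1}{2.4\delta}$, then sending $\xi\downarrow 0$ and bounding $\KL(\mathrm{Bern}(x)\,\|\,\mathrm{Bern}(y))\le\frac{(x-y)^2}{y(1-y)}$, I would obtain
\begin{align*}
\Exp_\cM\!\left[N_\hbar(s^\circ,a^\circ)\right]\;\gtrsim\;\frac{\log\frac{1}{2.4\delta}}{\Delta_\hbar(s^\circ,a^\circ)^2}\qquad\text{for every suboptimal }(s^\circ,a^\circ)\text{ at step }\hbar .
\end{align*}

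Finally I would convert these per-pair bounds into the claimed bound on $T:=\Exp_\cM[K_\delta]$. Writing $\pi^k$ for the (possibly history-dependent) policy run in episode $k$, conditioning on the history before each episode gives $\Exp_\cM[N_\hbar(s,a)]=\Exp_\cM\!\big[\sum_{k\le K_\delta} w^{\pi^k}_\hbar(s,a)\big]$; since each episode visits exactly one state-action pair at step $\hbar$, the vector $q(s,a):=\Exp_\cM[N_\hbar(s,a)]/T$ is a probability distribution on $\cS\times\cA$, and it is a convex combination of the step-$\hbar$ visitation vectors $w^{\pi^k}_\hbar$. As the set of achievable step-$\hbar$ state-action visitation vectors is convex and closed, $q=w^{\bar\pi}_\hbar$ for some policy $\bar\pi$ (e.g.\ the policy drawing $k$ from the appropriate distribution at the start of an episode and then following $\pi^k$), so $\Exp_\cM[N_\hbar(s,a)]=T\cdot w^{\bar\pi}_\hbar(s,a)$. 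Combining with the previous display, $T\gtrsim \frac{1}{w^{\bar\pi}_\hbar(s,a)\,\Delta_\hbar(s,a)^2}\log\frac{1}{2.4\delta}$ for every suboptimal $(s,a)$; maximizing over these pairs (equivalently over all $(s,a)$, with zero-gap terms contributing $0$) and then taking the infimum over policies yields the claim.

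I expect the main obstacle to be twofold. First, engineering the pre-$\hbar$ rewards so that $\cM$ realizes the prescribed gaps and kernels \emph{and} every step-$\hbar$ state entering the complexity is reachable with positive probability under $\pist_\cM$: without this, a $(0,\delta)$-PAC algorithm need not explore those bandits, and the lower bound fails — this is delicate precisely because $\{P_h\}_{h<\hbar}$ is arbitrary. Second, the convexity step that collapses the adaptive, random-length policy sequence $\{\pi^k\}_{k\le K_\delta}$ into a single $\bar\pi$, which requires handling the random stopping time $K_\delta$ (Fubini/optional stopping) with care; the change-of-measure core and the $\kl(\delta,1-\delta)$ estimate are, by contrast, routine.
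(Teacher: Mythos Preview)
Your approach is essentially the paper's: the same MDP construction (Bernoulli rewards at step $\hbar$, deterministic $\{0,1\}$ rewards elsewhere, action-independent transitions at and after $\hbar$), the same change-of-measure with alternatives that raise one $(s,a,\hbar)$ reward, the same $\kl(\delta,1-\delta)\ge\log\tfrac{1}{2.4\delta}$ bound, and the same collapse of the adaptive policy sequence into a single mixture via convexity of the step-$\hbar$ visitation polytope (the paper phrases this last step via Carath\'eodory on the visitation set, but the content is identical).

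The one substantive divergence is your handling of the reachability obstacle you correctly flag. You propose to engineer the pre-$\hbar$ rewards so that $\pist$ visits every step-$\hbar$ state with $\Pst_{\hbar}(s)>0$; but this is impossible for arbitrary $\{P_h\}_{h<\hbar}$ (e.g., deterministic branching at step $1$ into disjoint states forces $\pist$ to miss one branch entirely, no matter how rewards are set), so your fix does not go through in general. The paper resolves the issue differently: its proof is written for $\delta$-\emph{correct} algorithms, i.e., those required to output the unique optimal policy $\pist$ as a mapping $\cS\times[H]\to\cA$, not merely some policy with zero value gap. Since the construction makes all gaps strictly positive, $\pist$ is unique, and $\pist(\cM)\ne\pist(\cM')$ as mappings regardless of whether $\pist(\cM)$ ever visits $(s^\circ,\hbar)$; hence $\Pr_{\cM'}[\pihat=\pist(\cM)]\le\delta$ automatically and the data-processing step needs no reachability hypothesis. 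Apart from this point, your argument matches the paper.
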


In this instance, as $\Delta_h(s,a) \ge 1$ for $h \neq \hbar$, assuming $\{ P_h \}_{h=1}^{\hbar-1}$ is chosen such that $\Pst_h(s)$ is not too small for each $s$ and $h \le \hbar$, we will have that $\Compbsolve(\cM) = \cO(\inf_\pi \max_{s,a} \tfrac{1}{\wpi_{\hbar}(s,a) \Delta_{\hbar}(s,a)^2})$, so \Cref{prop:bpi_lb1} implies that we must have $\Exp_{\cM}[K_\delta] \ge \Omega(\Compbsolve(\cM) \cdot \log 1/\delta)$, matching the upper bound given in \Cref{thm:complexity} up to $H$ factors.

The second term in $\cC(\cM,\epsilon)$, $H^2 | \opt(\epsilon) |/\epsilon^2$, captures the complexity of ensuring that, after eliminating $\epsilon/\Pst_h(s)$-suboptimal actions, sufficient exploration is performed to guarantee the returned policy is $\epsilon$-optimal. While this will be no worse than $H^3 SA/\epsilon^2$, it could be much better, if in our MDP the number of $(s,a,h)$ with $\Deltil_h(s,a) \lesssim \epsilon/\Pst_h(s)$ is small (note that in the case when $\cM$ has unique optimal actions, since $\Deltil_h(s,a) \ge \Delmin(s,h)$ by definition for all $(s,a,h)$, $\opt(\epsilon)$ will only contain states for which the minimum \emph{non-zero} gap is less than $\epsilon/\Pst_h(s)$).
We next obtain the following bounds on $\cC(\cM,\epsilon)$, providing an interpretation of $\cC(\cM,\epsilon)$ in terms of the maximum reachability, and illustrating $\cC(\cM,\epsilon)$ is no larger than the minimax optimal complexity. This implies \algname is nearly worst-case optimal, matching the lower bound of $\Omega(\frac{SAH^2}{\epsilon^2} \cdot \log 1/\delta)$ from \cite{dann2015sample} up to $H$ and log factors\footnote{This lower bound is for the stationary setting. As noted in \cite{menard2020fast}, one would expect a lower bound of $\Omega(\frac{SAH^3}{\epsilon^2} \cdot \log 1/\delta)$ in the non-stationary setting, implying \algname is $H^2$ off the lower bound.}. 

\begin{prop}\label{cor:complexity2}
The following bounds hold: 
\begin{enumerate}
\item $\cC(\cM,\epsilon) \le \frac{H^3 SA}{\epsilon^2}$
\item $\cC(\cM,\epsilon) \le \tsum_{h=1}^H  \tsum_{s,a} \min  \{ \tfrac{1}{\Pst_h(s) \Deltil_h(s,a)^2}, \tfrac{\Pst_h(s)}{\epsilon^2}  \} + \tfrac{H^2 | \opt(\epsilon)|}{\epsilon^2}$
\item $\cC(\cM,\epsilon) \le \tsum_{h=1}^H  \tsum_{s,a} \frac{1}{\epsilon \max \{ \Deltil_h(s,a),\epsilon \}} + \tfrac{H^2 | \opt(\epsilon)|}{\epsilon^2}$.
\end{enumerate}
\end{prop}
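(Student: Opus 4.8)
The plan is to prove all three inequalities by exhibiting, for each fixed step $h$, a single stochastic exploration policy whose step-$h$ visitation distribution is suitably spread out, plugging it into the infimum, and then doing elementary algebra; the second bound comes from a weighted choice of that policy, the third is a pointwise consequence of the second, and the first is the crude uniform choice. The one structural ingredient is a \emph{mixture lemma}: for every $h$ and every $\mu \in \simplex(\cS \times \cA)$ there is a policy $\pi$ with $\wpi_h(s,a) \ge \mu_{s,a}\,\Pst_h(s)$ for all $(s,a)$ (states with $\Pst_h(s)=0$ are dropped, consistently with the convention noted after \Cref{defn:gap_visitation_complexity}). I would prove this by recalling that the set of step-$h$ state--action visitation vectors realizable by Markov policies is a convex polytope (the projection onto the step-$h$ coordinates of the occupancy-measure polytope, all of whose points are realized by Markov policies), so it suffices to write the target point as a convex combination of achievable ones: for each $(s,a)$ pick a policy $\pi^{(s,a)}$ with $w^{\pi^{(s,a)}}_h(s) = \Pst_h(s)$ (the sup defining $\Pst_h(s)$ is attained) and $\pi^{(s,a)}_h(a\mid s) = 1$ (the action at step $h$ is free), and mix these with weights $\mu$, so that the mixture has step-$h$ visitation at $(s,a)$ at least $\mu_{s,a}\,w^{\pi^{(s,a)}}_h(s) = \mu_{s,a}\Pst_h(s)$.

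Given the lemma, the first bound uses $\mu_{s,a}=1/(SA)$, so $\wpi_h(s,a) \ge \Pst_h(s)/(SA)$ and
\[
\min\Big\{\tfrac{1}{\wpi_h(s,a)\Deltil_h(s,a)^2},\ \tfrac{\Pst_h(s)^2}{\wpi_h(s,a)\epsilon^2}\Big\} \le \tfrac{\Pst_h(s)^2}{\wpi_h(s,a)\epsilon^2} \le \tfrac{SA\,\Pst_h(s)}{\epsilon^2} \le \tfrac{SA}{\epsilon^2};
\]
summing over $h$ bounds the first part of $\Compb(\cM,\epsilon)$ by $SAH/\epsilon^2$, and $|\opt(\epsilon)| \le SAH$ bounds the second part by $H^3SA/\epsilon^2$, giving the first inequality (the $SAH/\epsilon^2$ piece is lower order). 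For the second bound, write $c_h(s,a) := \min\{\Deltil_h(s,a)^{-2},\ \Pst_h(s)^2/\epsilon^2\}$ so that the bracket equals $c_h(s,a)/\wpi_h(s,a)$, put $Z_h := \sum_{s,a} c_h(s,a)/\Pst_h(s)$, and apply the lemma with $\mu_{s,a} = c_h(s,a)/(\Pst_h(s)Z_h)$ (a valid element of $\simplex(\cS\times\cA)$); then $\wpi_h(s,a) \ge c_h(s,a)/Z_h$, so $\max_{s,a} c_h(s,a)/\wpi_h(s,a) \le Z_h = \sum_{s,a}\min\{\tfrac{1}{\Pst_h(s)\Deltil_h(s,a)^2},\ \tfrac{\Pst_h(s)}{\epsilon^2}\}$, and summing over $h$ and re-adding the $\opt(\epsilon)$ term finishes it.

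The third bound is then obtained termwise from the second: for $p \in (0,1]$ and $\Delta \ge 0$, the inequality $\min\{x,y\}\le\sqrt{xy}$ together with $p\le1$ give
\[
\min\Big\{\tfrac{1}{p\Delta^2},\ \tfrac{p}{\epsilon^2}\Big\} \le \min\Big\{\tfrac{1}{\Delta\epsilon},\ \tfrac{1}{\epsilon^2}\Big\} = \tfrac{1}{\epsilon\,\max\{\Delta,\epsilon\}},
\]
and plugging $p=\Pst_h(s)$, $\Delta=\Deltil_h(s,a)$ into the second bound yields the third. I expect the main obstacle to be the mixture lemma — specifically, justifying that mixing Markov policies (which produces a non-Markov policy) stays within the class over which the infimum is taken, via convexity of the occupancy polytope and the fact that Markov policies realize it, plus the bookkeeping for states with $\Pst_h(s)=0$; everything afterward is just the inequalities $\min\{x,y\}\le y$, $\min\{x,y\}\le\sqrt{xy}$, and $\Pst_h(s)\le1$.
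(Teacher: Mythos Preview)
Your proposal is correct and follows essentially the same approach as the paper: the mixture lemma you identify is exactly the paper's \Cref{prop:state_act_vis} (convexity and realizability of the state--action visitation polytope), which resolves your Markov-vs-non-Markov concern, and your weighted choice $\mu_{s,a}\propto c_h(s,a)/\Pst_h(s)$ matches the paper's explicit mixture over the max-reachability policies $\pi^{sh}$. The only cosmetic difference is that the paper first rewrites $\Compb$ via \Cref{prop:relate_complexities} and mixes at the state level, whereas you mix directly at the $(s,a)$ level; your derivation of bound~3 via $\min\{x,y\}\le\sqrt{xy}$ and of bound~1 via the uniform mixture (the paper instead derives bound~1 from bound~2) are equivalent.
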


\noindent In the special case of multi-armed and contextual bandits, the gap-visitation complexity simplifies considerably.

\begin{prop}\label{prop:complexity_bandit}
If $\cM$ is a multi-armed bandit, then
\begin{align*}
\Compb(\cM,\epsilon) = \sum_a \min \left \{ \frac{1}{\Deltil(a)^2}, \frac{1}{\epsilon^2} \right \}, \quad \Compbsolve(\cM) =  \sum_{a : \Delta(a) > 0} \frac{1}{\Delta(a)^2}.
\end{align*}
Furthermore, if $\cM$ is a contextual bandit, then
\begin{align*}
 \Compbsolve(\cM) = \max_s \frac{1}{\Pst(s)} \sum_{a } \frac{1}{\Delta(s,a)^2} .
\end{align*}
\end{prop}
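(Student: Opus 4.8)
The plan is to exploit the fact that both settings have horizon $H=1$, so the gap--visitation complexity of \Cref{defn:gap_visitation_complexity} collapses to a single per-state optimization over action distributions that admits a closed form. First I would record two structural facts. In a multi-armed bandit there is one state $s$, it is always reached so $\Pst_1(s)=1$, and $\wpi_1(s,a)=\pi(a)$. In a contextual bandit the state is drawn from $P_0$ regardless of the policy, so $\wpi_1(s,a)=P_0(s)\,\pi(a\mid s)$ and $\Pst_1(s)=\sup_\pi \wpi_1(s)=P_0(s)$. Substituting these into $\Compb$ and $\Compbsolve$, and using $\min\{\tfrac{1}{\pi(a)\Deltil(a)^2},\tfrac{1}{\pi(a)\epsilon^2}\}=\tfrac{1}{\pi(a)}\min\{\tfrac{1}{\Deltil(a)^2},\tfrac{1}{\epsilon^2}\}$, reduces everything (apart from the additive $|\opt(\epsilon)|/\epsilon^2$, treated separately) either to an expression $\inf_{\pi\in\simplex(\cA)}\max_a c_a/\pi(a)$ in the bandit case, or to a maximum over states of such expressions in the contextual-bandit case, for suitable strictly positive weights $c_a$.

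The key lemma I would isolate is the water-filling identity: for strictly positive weights $(c_a)_{a\in\cA}$,
\[
\inf_{\pi\in\simplex(\cA)}\ \max_{a\in\cA}\ \frac{c_a}{\pi(a)} \;=\; \sum_{a\in\cA}c_a ,
\]
attained at $\pi(a)=c_a/\sum_b c_b$. The ``$\le$'' direction is immediate from this choice; the ``$\ge$'' direction holds because $\max_a c_a/\pi(a)=M$ forces $\pi(a)\ge c_a/M$ for all $a$, and summing over $a$ gives $1\ge(\sum_a c_a)/M$. The same argument gives the variant needed for $\Compbsolve$: if the maximum is restricted to a subset $\cB\subseteq\cA$ (here $\cB=\{a:\Delta(a)>0\}$, since the optimal action has $\Delta=0$ and its term is excluded from the maximum by the degeneracy convention accompanying \Cref{defn:gap_visitation_complexity}), the value is $\sum_{a\in\cB}c_a$, because placing any mass outside $\cB$ is wasteful.

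With the lemma, the four identities are one line each. For the bandit $\Compbsolve$, take $c_a=1/\Delta(a)^2$ over suboptimal $a$, giving $\sum_{a:\Delta(a)>0}1/\Delta(a)^2$. For the bandit $\Compb(\cM,\epsilon)$, take $c_a=\min\{1/\Deltil(a)^2,1/\epsilon^2\}$, giving $\sum_a\min\{1/\Deltil(a)^2,1/\epsilon^2\}$ for the $\inf$--$\max$ part; for the additive part, $\opt(\epsilon)=\{a:\Deltil(a)\le 3\epsilon\}$ and each such $a$ has $c_a\ge 1/(9\epsilon^2)$, so $|\opt(\epsilon)|/\epsilon^2\le 9\sum_a c_a$, i.e.\ the additive term is dominated by the first up to a universal constant (this is the sense in which ``$=$'' is meant). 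For the contextual bandit $\Compbsolve$, a policy is an independent choice of $\pi(\cdot\mid s)$ per state, and $\max_{s,a}\tfrac{1}{P_0(s)\pi(a\mid s)\Delta(s,a)^2}=\max_s\tfrac{1}{P_0(s)}\max_{a}\tfrac{1}{\pi(a\mid s)\Delta(s,a)^2}$ decouples across states, so the infimum passes inside the outer maximum ($\inf\max\ge\max\inf$ always, and equality since each $\pi(\cdot\mid s)$ may be optimized separately); applying the lemma per state with $\cB_s=\{a:\Delta(s,a)>0\}$ and using $P_0(s)=\Pst(s)$ yields $\max_s\tfrac{1}{\Pst(s)}\sum_{a:\Delta(s,a)>0}1/\Delta(s,a)^2$.

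I do not expect a genuine mathematical obstacle; the work is bookkeeping. The two points to be careful about are: (i) pinning down the convention for terms with $\Delta_h(s,a)=0$ in $\Compbsolve$ so the maximum runs over suboptimal pairs only (otherwise the expression is vacuously $+\infty$, and the contextual-bandit edge case $\Pst(s)=0$ with a suboptimal action correctly gives $+\infty$); and (ii) being explicit that the claimed equality for $\Compb(\cM,\epsilon)$ is only up to the universal constant incurred when folding $|\opt(\epsilon)|/\epsilon^2$ into $\sum_a\min\{1/\Deltil(a)^2,1/\epsilon^2\}$, the two terms being of the same order with the first always within a factor of $9$ of dominating the second.
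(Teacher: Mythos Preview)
Your approach is essentially identical to the paper's: the paper defers to an earlier result (\Cref{prop:relate_complexities}) which proves precisely your water-filling identity $\inf_{\pi\in\simplex(\cA)}\max_a c_a/\pi(a)=\sum_a c_a$ together with the per-state decoupling, and then specializes to $H=1$. Your treatment is in fact slightly more careful than the paper's one-line proof, since you explicitly address the additive $|\opt(\epsilon)|/\epsilon^2$ term (correctly noting that the stated equality for $\Compb(\cM,\epsilon)$ holds only up to a universal constant) and the convention excluding zero-gap actions in $\Compbsolve$; the paper's proof does not discuss either point.
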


\noindent The values given here are known to be the optimal problem-dependent constants for both best arm identification and $(\epsilon,\delta)$-PAC for multi-armed bandits \citep{kaufmann2016complexity,degenne2019pure}. 
To our knowledge, the lower bound for best-policy identification in contextual bandits has never been formally stated, yet it is obvious it will take the form of $\Compbsolve(\cM)$ given here. It follows that in the special cases of multi-armed bandits and contextual bandits, \algname is instance-optimal, up to logarithmic factors and lower-order terms.

Several additional interpretations of the gap-visitation complexity are given in \Cref{sec:non_unique_actions}. The above results show that the gap-visitation complexity cleanly interpolates between the worst-case optimal rate for $(\epsilon,\delta)$-PAC, and, in certain MDPs, the instance-optimal rate for best-policy identification. In between these extremes, it captures an intuitive sense of instance-dependence. As we will show in the following section, this instance-dependence can offer significant improvements over worst-case optimal approaches.

\begin{rem}[Comparison to \cite{marjani2021navigating}]
Our notion of best-policy gap-visitation complexity is closely related to the measure of complexity introduced in \cite{marjani2021navigating}, though they study the infinite-horizon, discounted case. 
Notably, however, their analysis only considers best-policy identification ($\epsilon = 0$) and is purely asymptotic ($\delta \rightarrow 0$), while ours holds for $\delta > 0$ and $\epsilon > 0$. Further, our best-policy gap-visitation complexity offers a non-trivial improvement over their complexity, scaling as $(\min_s \wpi_h(s,a) \Delmin(s,h)^2)^{-1}$ instead of $( \min_s \wpi(s,\pist(s)) \cdot \min_s \Delmin(s)^2)^{-1}$ which \cite{marjani2021navigating} obtains.
\end{rem}

\begin{rem}[Dependence on $\log 1/\delta$]\label{rem:log_delta}
While the leading term in the sample complexity of \algname only scales as $\log 1/\delta$, the lower order term scales as a suboptimal $\log^3 1/\delta$. These additional factors of $\log 1/\delta$ are due to the regret-minimization algorithm used in the exploration procedure we employ. We show in \Cref{rem:logterm} that it can be improved to $\log 1/\delta \cdot \log\log 1/\delta$ and leave completely removing the suboptimal $\delta$ scaling for future work.
\end{rem}

\begin{rem}[Improving $H$ Dependence]
As noted above, \algname attains a worst-case $H$ dependence that is a factor of $H^2$ worse than the lower bound.
Our analysis relies on Hoeffding's inequality to argue about the concentration of our estimate of $\Qpihat_h(s,a)$. Rather than depending on the variance of the next-state value function, our confidence interval therefore depends on $H^2$, an upper bound on the variance. If desired, we could instead employ an empirical Bernstein-style inequality \citep{maurer2009empirical}, which would allow us to replace this $H^2$ scaling with the variance of the reward obtained from playing $a$ at $(s,h)$ and then playing $\pihat$. We believe that this modification may allow us to refine the $H$ dependence of \algname.
As the focus of this work is obtaining an instance-dependent complexity, we leave the details of this for future work.
\end{rem}



\section{Low-Regret Algorithms are Suboptimal for PAC}\label{sec:lowregret_breaks}

Using our instance-dependence complexity, we next show that running a low-regret algorithm and applying an online-to-batch conversion can be very suboptimal for PAC RL. We first define a low-regret algorithm and our learning protocol:


\begin{defn}[Low-Regret Algorithm]\label{def:low_regret}
We say an algorithm $\cR$ is a \emph{low-regret algorithm} if it has expected regret bounded as $\mathrm{Regret}(K) = \sum_{k=1}^K \Exp_{\cR}[\Vst_0 - V_0^{\pi_k}] \le C_1 K^\alpha + C_2$,
for some constants $C_1,C_2$, $\alpha \in (0,1)$, and where $\pi_k$ is the policy $\cR$ plays at episode $k$. 
\end{defn}


\begin{protocol}[Low-Regret to PAC]\label{prot:lr_to_pac}
We consider the following procedure:
\begin{enumerate}
\item Learner runs low-regret algorithm $\cR$ satisfying \Cref{def:low_regret} for $K$ episodes, collects data $\frakD_{\cR}(K)$.
\item Using $\frakD_{\cR}(K)$ any way it wishes, the learner proposes a (possibly stochastic) policy $\pihat$.
\end{enumerate}
\end{protocol}

Note that the setting considered in \Cref{prop:two_state_ex} is precisely that considered here. We now present an additional instance class where any learner following \Cref{prot:lr_to_pac} with a low regret algorithm $\cR$ is provably suboptimal.

\begin{wrapfigure}[22]{L}{0.38\textwidth}
\vspace{-2em}
  \begin{center}
    \includegraphics[width=0.38\textwidth]{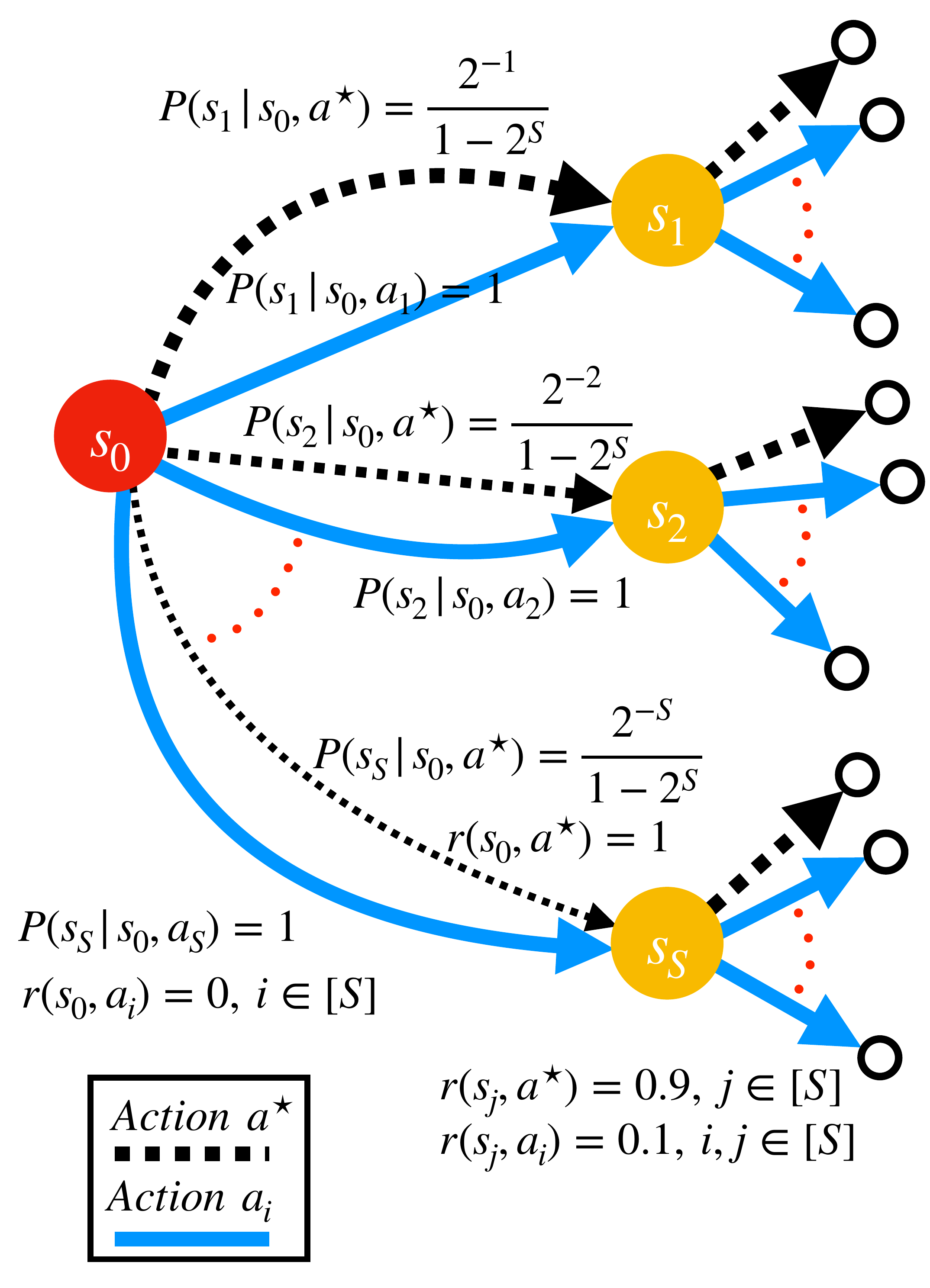}
  \end{center}
  \vspace{-10pt}
  \caption{MDP from \Cref{ex:epsilon}}
  \label{fig:ex2}
\end{wrapfigure}

\begin{inst}\label{ex:epsilon} Given a number of states $S \in \N$, consider an MDP with horizon $H= 2$, $S$ states, and $S+1$ actions, defined as in \Cref{fig:ex2}. 
\end{inst}

Similar to the example considered in \Cref{prop:two_state_ex}, here $\ast$ is the optimal action in every state, yet in state $s_0$, taking action $a_i$ is much more informative. The following result shows that this structure results in poor performance for low-regret algorithms.


\begin{prop}[Informal]\label{prop:epsilon_ex} For the MDP in \Cref{ex:epsilon} with $S$ states and small enough $\epsilon$,
to find an $\epsilon$-optimal policy with probability $1-\delta$ any learner executing \Cref{prot:lr_to_pac} with a low-regret algorithm satisfying \Cref{def:low_regret} must collect at least
$\Omega  (\frac{S \log 1/\delta}{\epsilon}  )$ 
episodes. In contrast, on this example $\Compbsolve(\cM) = \cO(S^2)$ and $\epssolved = 1/3$, so,  for $ \epsilon \le 1/3$, with probability $1-\delta$, \algname terminates and output $\pist$ in $\cOtil(\poly(S))$ episodes.
\end{prop}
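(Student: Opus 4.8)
\emph{Proof sketch.} The statement has two independent halves: the upper bound for \algname and the lower bound against \Cref{prot:lr_to_pac}.

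\emph{Upper bound.} This is a corollary of the second part of \Cref{thm:complexity}. The MDP of \Cref{ex:epsilon} has a unique optimal action at every $(s,h)$ --- the action $\ast$ at $s_0$ (its $Q$-value beats that of each $a_i$ by a $\Theta(1)$ margin) and the unique reward-maximizing action at each step-$2$ state --- every suboptimality gap is $\Theta(1)$, and every state carrying a positive gap is reachable with probability $1$ at its step (the step-$2$ state $s_i$ via the deterministic action $a_i$ at $s_0$); a direct computation from the transitions and rewards gives $\epssolved = 1/3$, so for $\epsilon \le 1/3$ the second part of \Cref{thm:complexity} applies. To bound $\Compbsolve(\cM)$, take the exploration policy $\pi$ that at $s_0$ plays $\ast$ and each $a_i$ with probability $\Theta(1/S)$ and at each step-$2$ state plays each of its $\Theta(S)$ actions with probability $\Theta(1/S)$: then $w^\pi_h(s,a) = \Omega(1/S^2)$ at every $(s,a,h)$ with $\Delta_h(s,a) > 0$, and since all gaps are $\Theta(1)$ this gives $\Compbsolve(\cM) = \BigOh{S^2}$. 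By \Cref{thm:complexity}, \algname then terminates within $\Compbsolve(\cM) H^2 c_{\epssolved} \log\tfrac1\delta + \Clow(\epssolved)/\epssolved = \BigOhTil{\poly(S)}$ episodes and outputs $\pist$.

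\emph{Lower bound.} Fix a low-regret algorithm $\regalg$ (\Cref{def:low_regret}) run for $K$ episodes, playing $\pi_k$ in episode $k$. The standard regret decomposition $\mathrm{Regret}(K) = \sum_k \sum_{s,a,h} w^{\pi_k}_h(s,a) \Delta_h(s,a)$ together with $\mathrm{Regret}(K) \le C_1 K^\alpha + C_2$ gives $\sum_k w^{\pi_k}_h(s,a) \le (C_1 K^\alpha + C_2)/\Delta_h(s,a)$ for every suboptimal $(s,a,h)$. Since $\Delta_1(s_0,a_i) = \Theta(1)$, $\regalg$ plays a shortcut action $a_i$ at $s_0$ on only $o(K)$ episodes in total; on the remaining episodes it plays $\ast$, which lands at any fixed step-$2$ state $s_i$ with probability only $\Theta(\epsilon)$ per episode. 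As one action is sampled per visit, the total number of samples $\regalg$ gathers at the $\Theta(S)$ step-$2$ states is $\BigOh{S\epsilon K} + o(K)$. On the other hand, because $s_i$ is reached with probability $\Theta(\epsilon)$ under $\pist$ and every gap there is $\Theta(1)$, being wrong about the optimal action at even one $s_i$ costs $\Theta(\epsilon) > \epsilon$ in value; hence an $\epsilon$-optimal recommendation must --- regardless of how $\regalg$'s data is post-processed --- identify the optimal action at every $s_i$ with probability at least $1-\delta$. By the usual change-of-measure / transportation argument (cf.\ \citep{kaufmann2016complexity,garivier2016optimal}, using the alternatives obtained by promoting one suboptimal action at one $s_i$ to optimal), this forces $\Omega(\log\tfrac1\delta)$ samples of \emph{each} of the $\Theta(S)$ suboptimal actions at \emph{each} of the $\Theta(S)$ step-$2$ states, i.e.\ $\Omega(S^2 \log\tfrac1\delta)$ samples in total. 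Matching this against $\BigOh{S\epsilon K} + o(K)$ --- the $o(K)$ term being lower order in the relevant regime --- yields $K = \Omega(S\log(1/\delta)/\epsilon)$.

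\emph{The hard part.} The regret decomposition and the reachability/$\Compbsolve$ computations are routine; the crux is the last step of the lower bound --- reconciling the information-theoretic requirement of $\Omega(S^2 \log\tfrac1\delta)$ discriminating samples at the step-$2$ states with the regret constraint, which caps the regret-expensive shortcut route at $o(K)$ uses and leaves only the regret-cheap route through $\ast$ supplying $\Theta(\epsilon)$ per-episode, per-state probability. One has to check that these are compatible only for $K = \Omega(S\log(1/\delta)/\epsilon)$, i.e.\ that no clever use of the shortcuts (which would violate sublinear regret) can rescue the learner. This is exactly the asymmetry \algname exploits: free of any regret constraint, it plays the shortcuts, reaches every step-$2$ state deterministically, and pays only $\Compbsolve(\cM) = \BigOh{S^2}$ plus lower-order terms.
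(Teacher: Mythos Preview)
Your upper-bound argument is fine and matches the paper.

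Your lower-bound argument has a genuine gap: you have misread the transition structure of \Cref{ex:epsilon}. Under action $\ast$ at $s_0$, the step-$2$ states are reached with \emph{geometric} probabilities $P(s_i\mid s_0,\ast)=\frac{2^{-i}}{1-2^{-S}}$, not all $\Theta(\epsilon)$. Consequently several of your claims are false: (i) $s_1$ is reached with probability $\approx 1/2$, not $\Theta(\epsilon)$; (ii) being wrong at $s_i$ costs $\Theta(2^{-i})$, so for $i$ large enough that $2^{-i}\Delta<\epsilon$ you need not identify the optimal action at $s_i$ at all, and your ``must solve every $s_i$'' step fails; (iii) the ``total samples at the step-$2$ states is $\cO(S\epsilon K)+o(K)$'' accounting is incoherent --- every episode visits exactly one step-$2$ state, so that total is exactly $K$.

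The paper's argument hinges on singling out the one critical state $s_{i_\epsilon}$ with $i_\epsilon\approx\log_2(\Delta/\epsilon)$, chosen so that $P(s_{i_\epsilon}\mid s_0,\ast)=\Theta(\epsilon)$ \emph{and} the cost of being wrong there is $\Theta(2^{-i_\epsilon})\Delta>\epsilon$. An $\epsilon$-optimal recommendation must therefore identify the best among the $S{+}1$ actions at $s_{i_\epsilon}$, which by the Kaufmann--Cappé--Garivier lower bound forces $\Exp[N_2(s_{i_\epsilon})]\gtrsim S\log(1/\delta)$. The regret constraint caps total shortcut plays at $s_0$ by $C_1K^\alpha+C_2$, so $N_2(s_{i_\epsilon})\le C_1K^\alpha+C_2+\Theta(\epsilon)K$, and matching these two gives $K=\Omega(S\log(1/\delta)/\epsilon)$. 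The point is that the states with small $i$ are easy to reach via $\ast$ and the states with large $i$ need not be solved; only $s_{i_\epsilon}$ is simultaneously hard-to-reach-cheaply and necessary-to-solve. Your aggregate $\Omega(S^2\log 1/\delta)$-samples-against-$\cO(S\epsilon K)$ balance does not capture this and cannot be made to work as stated.
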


In particular, this example shows that there is an exponential separation between low-regret algorithms and \algname. For exponentially small $\epsilon$, learning the optimal policy following \Cref{prot:lr_to_pac} takes $\widetilde{\Omega}(2^S)$ samples, yet \algname finds the optimal policy in $\cOtil(\poly(S))$ samples. 


\Cref{prop:epsilon_ex}, as well as \Cref{prop:two_state_ex}, imply that the true complexity of finding a good policy is often much smaller than the complexity of finding a good policy \emph{given that we explore to minimize regret}. As noted, the key piece in this example, and the example of \Cref{prop:two_state_ex}, is that the optimal action in the initial state is very uninformative---if we want to learn the optimal action in a \emph{subsequent} state, we should not take the optimal action in the initial state, but should instead take an action that leads us to the subsequent state with high probability. Nearly all existing works rely on algorithms which play policies which \emph{converge to a good policy}. For instance-dependent PAC RL, instead of \emph{playing} good policies, our examples show that an algorithm ought to explore efficiently, possibly taking very suboptimal actions in the process, and ultimately \emph{recommending} a good policy. This shortcoming of greedy algorithms motivates our design of \algname, where we seek to incorporate this insight.

While it is known that low-regret algorithms are minimax optimal for PAC RL, these instances show that running a low-regret algorithm and then an online-to-batch procedure is suboptimal by an arbitrarily large factor for PAC RL. We conclude that minimax optimality is far from being the complete story for PAC RL, and that if our goal is to simply identify a good policy, we can do much better than running a low-regret algorithm.  

\begin{rem}[Performance of Optimistic Algorithms]
Optimistic algorithms that rely on standard bonuses will also achieve low regret. This implies that recent works specifically targeting PAC bounds such as \citep{dann2019policy,menard2020fast}, which rely on optimism, will also fail to hit the optimal instance-dependent rate, or a rate of $\cO(\Compb(\cM,\epsilon))$.
In addition, even works such as \cite{xu2021fine} which do not explicitly rely on the principle of optimism and do not have known $\cO(T^\alpha)$-style regret bounds can also be shown to fail on our examples as they only take actions which may be optimal. 
\end{rem}


\section{Algorithm and Proof Sketch}\label{sec:alg_proof}

We turn now to the definition of our algorithm, \algname, and sketch out the proof of \Cref{thm:complexity}. We first provide some intuition for \algname in \Cref{sec:alg_intuition} before stating the algorithm and giving the proof sketch in \Cref{sec:alg_outline}. A detailed proof is given in \Cref{sec:detailed_proof}.

\subsection{Algorithm Intuition}\label{sec:alg_intuition}
At a high level, \algname operates by treating every state as an individual bandit, and running an action elimination-style algorithm at each state \citep{even2006action}. Unlike low-regret algorithms, \algname aggressively directs its exploration to reach uncertain states as quickly as possible. The sequential structure of an MDP introduces several unique challenges, upon which we expand below. 

\paragraph{Compounding Errors.}
In a standard bandit, from the perspective of the learner, the value of a particular action is determined solely by the environment. However, in an MDP, the value of an action $a$ at state $s$ and time $h$ depends not only on the environment, but also on the actions the learner chooses to play in subsequent steps. If we run some policy $\pihat$ after reaching $(s,h)$, though we may be able to identify the optimal action to play at $(s,h)$ \emph{given that we then play $\pihat$}, if $\pihat$ is suboptimal, this action may also be suboptimal. The following result, a direct consequence of the celebrated performance-difference lemma \citep{kakade2003sample}, is a key piece in our analysis, allowing us to effectively handle the compounding nature of errors, and may be of independent interest.

\begin{prop}\label{lem:local_to_global_subopt}
Assume that for each $h$ and $s$, $\pihat$ plays an action which satisfies $\max_a Q_{h}^{\pihat}(s,a) - \Qpihat_{h}(s,\pihat_{h}(s)) \le \epsilon_{h}(s)$.
Then the suboptimality of $\pihat$ is bounded as:
\begin{align*}
\Vst_0 - \Vpihat_0 \le \sum_{h=1}^H \sup_\pi \sum_s \wpi_h(s) \epsilon_{h}(s).
\end{align*}
\end{prop}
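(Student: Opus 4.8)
The plan is to apply the performance-difference lemma \citep{kakade2003sample} in the following form: for any policy $\pi$ and any $\pihat$,
\begin{align*}
\Vpi_0 - \Vpihat_0 = \sum_{h=1}^H \Exp_{\pi}\big[ Q^{\pihat}_h(s_h, a_h) - V^{\pihat}_h(s_h) \big] = \sum_{h=1}^H \sum_s \wpi_h(s) \Exp_{a \sim \pi_h(s)}\big[ Q^{\pihat}_h(s,a) - V^{\pihat}_h(s) \big].
\end{align*}
First I would specialize this to $\pi = \pist$, so that the left-hand side becomes $\Vst_0 - \Vpihat_0$ (the quantity we want to bound). Then I would bound the inner term pointwise: for every $s$ and every action $a$, $Q^{\pihat}_h(s,a) - V^{\pihat}_h(s) \le \max_{a'} Q^{\pihat}_h(s,a') - \Qpihat_h(s,\pihat_h(s)) \le \epsilon_h(s)$, where the first inequality uses $V^{\pihat}_h(s) = \Qpihat_h(s,\pihat_h(s))$ (valid when $\pihat$ is deterministic; if $\pihat$ is stochastic one uses $V^{\pihat}_h(s) \ge \min_{a \in \mathrm{supp}(\pihat_h(s))} \Qpihat_h(s,a)$, or simply notes the hypothesis should be read as controlling $\max_a Q^{\pihat}_h(s,a) - V^{\pihat}_h(s)$ directly) and the second is the hypothesis of the proposition. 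Since this bound holds for every $a$, it holds in expectation over $a \sim \pist_h(s)$, giving $\Exp_{a \sim \pist_h(s)}[Q^{\pihat}_h(s,a) - V^{\pihat}_h(s)] \le \epsilon_h(s)$.

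Substituting back yields $\Vst_0 - \Vpihat_0 \le \sum_{h=1}^H \sum_s \wst_h(s)\, \epsilon_h(s)$, where $\wst_h = \wpist_h$. Finally, since $\epsilon_h(s) \ge 0$ and $\wst_h(s) = \wpist_h(s) \le \sup_\pi \wpi_h(s) = \Pst_h(s)$, we have $\sum_s \wst_h(s)\epsilon_h(s) \le \sup_\pi \sum_s \wpi_h(s) \epsilon_h(s)$ for each $h$ (indeed the supremum over $\pi$ of the linear functional $\sum_s \wpi_h(s)\epsilon_h(s)$ dominates its value at $\pi = \pist$), which gives exactly the claimed bound. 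One subtlety worth a sentence: each term $\sup_\pi \sum_s \wpi_h(s)\epsilon_h(s)$ is maximized by a possibly different policy for each $h$, but since we only need an \emph{upper} bound, replacing $\pist$ by the per-$h$ maximizer is harmless.

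I do not expect a genuine obstacle here — the result is essentially an unpacking of the performance-difference lemma together with a trivial relaxation from $\pist$ to $\sup_\pi$. The only point requiring mild care is the treatment of stochastic $\pihat$ in the definition of $V^{\pihat}_h(s)$ versus $\Qpihat_h(s,\pihat_h(s))$; stating the hypothesis as a bound on $\max_a Q^{\pihat}_h(s,a) - V^{\pihat}_h(s)$ sidesteps this entirely, and that is the form in which the bound is actually used downstream in the analysis of \algname.
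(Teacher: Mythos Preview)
Your proposal is correct and follows essentially the same route as the paper: apply the performance-difference lemma with $\pi = \pist$, bound $Q^{\pihat}_h(s,a) - V^{\pihat}_h(s) \le \epsilon_h(s)$ using the hypothesis (the paper also restricts to deterministic $\pihat$ so that $V^{\pihat}_h(s) = Q^{\pihat}_h(s,\pihat_h(s))$), and then relax the $\pist$-visitation weights to the supremum over policies. Your extra remarks on the stochastic-$\pihat$ case and the per-$h$ maximizer are sound but not needed for the paper's purposes.
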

In particular, if $\epsilon_h(s) \le \epsilon / H $ for all $s$ and $h$, then we guarantee $\Vst_0 - \Vpihat_0 \le \epsilon$. \Cref{lem:local_to_global_subopt} in fact holds with $w_h^\pi(s)$ replaced by $\wst_h(s)$, the visitation probability under the optimal policy. We choose to work instead with the (looser) bound stated in \Cref{lem:local_to_global_subopt} as we do not in general know $\pist$ and, as we will see, can more easily control the visitations under this ``worst-case'' policy. 

Intuitively, \Cref{lem:local_to_global_subopt} says that it is sufficient to learn an action in each state that performs well \emph{as compared to the best action one could take given that $\pihat$ is played in subsequent steps}. This motivates the basic premise of our algorithm. We proceed backwards, first learning near-optimal actions in every $(s,H)$, which gives us $\pihat_H$. We then continue on to level $H-1$ where, after playing an action $a$, we play $\pihat_H$. This gives us an unbiased estimate of $\Qpihat_{H-1}(s,a)$, and allows us to determine actions that are near-optimal at stage $H-1$ if we play $\pihat$ at stage $H$. We repeat this process backwards: at stage $h$, after playing action $a$, we play $\{ \pihat_{h'} \}_{h' = h+1}^H$, yielding an unbiased estimate of $\Qpihat_{h}(s,a)$, the ``reward'' of action $a$, and allowing us to eliminate actions that are suboptimal, given that we play $\pihat$ in subsequent steps.

Our approach relies on a \emph{Monte Carlo} estimate of the value of a particular action $a$ at a given $(s,h)$. Rather than attempting to compute this value using knowledge of the MDP, or relying on a bootstrapped estimator, we simply play the policy and observe the reward obtained. As the rewards are bounded, concentration applies, allowing us to efficiently estimate $\Qpihat_{h}(s,a)$, and turning the learning problem at a given $(s,h)$ into nothing more than a bandit problem. We note that the Monte Carlo technique has previously proven useful for attaining refined gap-dependent guarantees in the regret setting \citep{xu2021fine}.

\paragraph{Balancing Suboptimality and Reachability.} 
To perform the above procedure efficiently, we must guarantee that we can reach every $(s,h)$ enough times to eliminate suboptimal actions. Bear in mind the weighting of each suboptimality, $\epsilon_h(s)$, in \Cref{lem:local_to_global_subopt}: for a given $(s,h)$, knowing an $\epsilon_h(s)$-optimal action in $(s,h)$ will only add at most $\sup_\pi \wpi_h(s) \epsilon_h(s) =: \Pst_h(s) \epsilon_h(s)$ to the total suboptimality. Thus, we only need to learn good actions in each state in proportion to how easily that state may be reached.

In particular, if we play the policy achieving $\wpi_h(s) = \Pst_h(s)$ for $K$ episodes, we will reach $(s,h)$ $\Pst_h(s) K$ times on average. By standard bandit sample complexities, we would expect it to take on order $\frac{A}{\epsilon_h(s)^2}$ samples to learn an $\epsilon_h(s)$-optimal action at $(s,h)$, so it follows that the total number of episodes we would need to run would be $K \gtrsim \frac{A}{\Pst_h(s) \epsilon_h(s)^2}$. However, if we set $\epsilon_h(s) \sim \beta \epsilon/\Pst_h(s)$, which will ensure that the suboptimality of our policy is proportional to $\epsilon$ and does not scale with the reachability, we will only require $K \gtrsim \frac{A \Pst_h(s)}{ \beta^2 \epsilon^2}$. We see then that the difficulty of reaching a state to explore it is balanced by the fact that such a state does not contribute significantly to the total suboptimality.

\paragraph{Navigating the MDP by Grouping States.} 
Naively performing the above strategy could result in a sample complexity very suboptimal in its dependence on $S$. Indeed, to ensure our final policy is $\epsilon$-suboptimal, we would need to choose $\beta \sim (SH)^{-1}$, since in this case we can only bound the suboptimality term from \Cref{lem:local_to_global_subopt} as
$$  \sum_{h=1}^H \sup_\pi \sum_s \wpi_h(s) \epsilon_{h}(s) \lesssim SH \beta \epsilon.$$
This would give us a sample complexity scaling as a suboptimal $S^3$. To overcome this, we propose an exploration procedure which \emph{groups states}---instead of exploring each state individually, in a given rollout it seeks to reach any number of states which are ``nearby'', in the sense that a single policy may reach any of them with similar probability. 

To make this practical, we take inspiration from the algorithm of \cite{zhang2020nearly}---designed for the so-called ``reward-free'' learning setting \citep{jin2020reward}, where the agent seeks merely to learn policies which traverse all reachable states---which is itself inspired by the classical \textsc{Rmax} algorithm \citep{brafman2002r}. 
We modify the true reward function, giving a reward of ``1'' to any $(s,a,h)$ pair we wish to visit, and otherwise setting the reward to ``0''. We then run a (variance-sensitive) regret minimizing algorithm, \textsc{Euler} \citep{zanette2019tighter}, on this modified reward function to generate a set of policies that can effectively traverse the MDP to visit the desired states. Critically, we show that the complexity of generating these policies amounts to a lower-order term---it is easier to learn to explore an MDP than to learn a good policy on it. Furthermore, grouping states allows us to obtain the optimal worst-case dependence on $S$ and $A$.

\subsection{Detailed Algorithm Description and Proof Sketch}\label{sec:alg_outline}
We next outline how \algname implements the above intuition and provide a proof sketch of \Cref{thm:complexity}. We first describe our core navigation procedure, \sap, in \Cref{sec:sap_overview}, then outline the main algorithm structure in \Cref{sec:mcae_overview} and \Cref{sec:moca_overview}, and finally detail the helper functions employed by \mcae\xspace in \Cref{sec:helper_functions}.

\subsubsection{\sap\xspace Overview}\label{sec:sap_overview}
\begin{algorithm}[h]
\begin{algorithmic}[1]
\Function{\sap}{active set $\cX \subseteq \cS \times \cA$, step $h$, confidence $\delta$, sampling confidence $\delsamp$, tolerance $\epsltoe$}
	\If{$|\cX| = 0$}
		\textbf{return} $\{(\emptyset,\emptyset,0,0)\}_{j=1}^{\lceil \log(1/\epsltoe) \rceil}$
	\EndIf
	\For{$j=1,\ldots,\lceil \log(1/\epsltoe) \rceil$}
		\State $K_j \leftarrow K_j(\delta/\lceil \log(1/\epsltoe) \rceil, \delsamp) $ as defined in \eqref{eq:Kjval}, $M_j \leftarrow | \cX |$, $N_j \leftarrow  K_j/(4 |\cX| \cdot 2^j)$
		\State $\cX_j, \Pi_j \leftarrow$ \ies($\cX,h,\delta,K_j,N_j$)
		\State $\cX \leftarrow \cX \backslash \cX_j$
	\EndFor
	\State \textbf{return} $\{(\cX_j,\Pi_j,N_j,M_j)\}_{j=1}^{\lceil \log(1/\epsltoe) \rceil}$
\EndFunction
\\
\Function{\ies}{active set $\cX \subseteq \cS \times \cA $, step $h$, confidence $\delta$, epochs to run $K$, samples to collect $N$}
	\State Set $r_h^1(s,a) \leftarrow 1$ for $(s,a) \in \cX$ and 0 otherwise, $N(s,a,h) \leftarrow 0$, $\cY \leftarrow \emptyset$, $\Pi \leftarrow \emptyset$, $j \leftarrow 1$ 
	\For{$k=1,2,\ldots,K$}
		\Statex \hspace{2.4em} { \color{blue} \texttt{// \euler is as defined in \cite{zanette2019tighter}}}
		\State Run \euler on reward function $r_h^j$, get trajectory $\{ (s_h^k,a_h^k,h) \}_{h=1}^H$ and policy $\pi_k$
		\State $N(s_h^k,a_h^k) \leftarrow N(s_h^k,a_h^k) + 1$, $\Pi \leftarrow \Pi \cup \pi_k$
		\If{$N(s_h^k,a_h^k) \ge N$, $(s_h^k,a_h^k) \in \cX$, and $(s_h^k,a_h^k) \not\in \cY$}\label{line:restart_euler}
			\State $\cY \leftarrow \cY \cup (s_h^k,a_h^k)$
			\State $r_h^{j+1}(s,a) \leftarrow 1$ for $(s,a) \in \cX \backslash \cY$ and 0 otherwise
			\State $j \leftarrow j + 1$
			\State Restart \euler
		\EndIf
		
	\EndFor
	\State \textbf{return} $\cY,\Pi$
\EndFunction
\end{algorithmic}
\caption{\sap}
\label{alg:partition}
\end{algorithm}

\noindent \sap\xspace implements the navigation procedure described in \Cref{sec:alg_intuition}. In particular, it takes as input a set $\cX \subseteq \cS \times \cA$, and returns a partition $\{\cX_j \}_j, \cX_j \subseteq \cX$, set of policies $\{ \Pi_j \}_j$, and values $\{ N_j \}_j$. These sets satisfy the following property.
\begin{thm}[Performance of \sap, informal]\label{thm:partitioning_works_informal}
With high probability, the partition $\{\cX_j \}_j$ returned by \sap\xspace satisfies
\begin{align*}
\sup_\pi \sum_{(s,a) \in \cX_j} \wpi_h(s,a) \le 2^{-j+1},
\end{align*}
Moreover, the policy classes $\Pi_j$ are such that, by executing a single trajectory of each $\pi \in \Pi_j$ once, we visit every $(s,a) \in \cX_j$  at least $\frac{1}{2} N_j$ times, where 
\begin{align}\label{eq:Kjval}
N_j = \cO \bigg ( \frac{2^{-j} | \Pi_j |}{| \cX \backslash \cup_{j' = 1}^{j-1} \cX_{j'} |} \bigg ), \quad | \Pi_j | = K_j( \tfrac{\delta}{\lceil \log(1/\epsltoe) \rceil}, \delsamp) = \cO(2^j S^3 A^2 H^4 \log^3 1/\delta)
\end{align}
Furthermore, if \sap\xspace is run with tolerance $\epsltoe$, it will terminate after running for at most $\poly(S,A,H, \log 1/\delta, \log 1/\epsltoe) \cdot \frac{1}{\epsltoe}$ episodes.
\end{thm}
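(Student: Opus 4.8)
I would establish the three assertions simultaneously by induction over the rounds $j = 1,\dots,\lceil\log(1/\epsltoe)\rceil$ of \sap, the carried invariant being the maximum reachability of the \emph{surviving} active set $\cX^{(j)} := \cX \setminus \bigcup_{j'<j}\cX_{j'}$ that is passed into round $j$:
\[
p_j \;:=\; \sup_\pi \Pr_\pi\big[(s_h,a_h)\in\cX^{(j)}\big] \;\le\; 2^{-j+1}.
\]
This already yields the first claim: since $\cX_j\subseteq\cX^{(j)}$, we get $\sup_\pi\sum_{(s,a)\in\cX_j}\wpi_h(s,a)\le p_j\le 2^{-j+1}$. The base case is trivial ($p_1\le1$). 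All high-probability statements below would be folded into one failure event of mass $\lesssim\delta+\delsamp$ by a union bound over the $\le SA\cdot\lceil\log(1/\epsltoe)\rceil$ \euler phases and state/round pairs.

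\textbf{Inductive step (the reward sandwich).} Fix $j$ with $p_j\le 2^{-j+1}$. Within round $j$, \ies runs \euler repeatedly: each run (a ``phase'') uses as reward the indicator of the step-$h$ pairs of $\cX^{(j)}$ not yet retired, and a fresh phase begins, with \euler restarted, the first time some pair accumulates $N_j$ visits; hence there are at most $M_j=|\cX^{(j)}|$ phases, and the optimal value of each phase's reward lies in $[\,p_{j+1},\,p_j\,]$ (a pair that is never retired lies in the support of every phase, so every phase value is $\ge p_{j+1}$). \emph{Upper bound on the reward harvested in round $j$:} every pair of $\cX^{(j)}$ can earn reward at most $N_j$ times (retired ones exactly $N_j$, others fewer), so the total is $\le M_jN_j = K_j/(4\cdot2^j)$. \emph{Lower bound:} summing \euler's high-probability regret bound over the phases, and absorbing the martingale slack between realized and expected reward, the total harvested is $\ge K_j\,p_{j+1}-\mathcal E_j$. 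Here I would crucially use that \euler is \emph{variance-sensitive}: by the law of total variance the sum of next-state value variances along all trajectories is controlled by the total reward harvested, which we just bounded by $M_jN_j$; combined with the $\le M_j$ restarts (whose square-root terms combine via Cauchy--Schwarz and whose lower-order terms multiply by $M_j$) this gives $\mathcal E_j = \cOtil\!\big(\sqrt{SAH\cdot M_jN_j}+\poly(S,A,H)\log(1/\delta)\big)$. Because $M_jN_j = K_j/(4\cdot2^j)$ with $K_j$ chosen as in \eqref{eq:Kjval}, we get $\mathcal E_j/K_j\le 2^{-j-2}$, and the two bounds together force $p_{j+1}\le 2^{-j-2}+2^{-j-2}=2^{-j-1}\le 2^{-(j+1)+1}$, closing the induction. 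Running the same sandwich with a single never-retired pair in place of the set $\cX^{(j+1)}$ also shows that every $(s,a)$ with $\sup_\pi\wpi_h(s,a)>2^{-j-1}$ is retired during round $j$.

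\textbf{Replaying $\Pi_j$ (second claim).} $\Pi_j$ is the list of the $K_j$ policies \euler played in round $j$, so $|\Pi_j|=K_j$ and the stated $N_j=\cO(2^{-j}|\Pi_j|/|\cX^{(j)}|)$ is just the definition $N_j=K_j/(4|\cX^{(j)}|2^j)$ rewritten. By construction each $(s,a)\in\cX_j$ was visited \emph{exactly} $N_j$ times at step $h$ over those $K_j$ episodes; a Freedman-type bound on the adapted sequence of step-$h$ visitation indicators then gives $\sum_{\pi\in\Pi_j}\wpi_h(s,a)\ge N_j-\cOtil(\sqrt{N_j\log(1/\delta)})\ge \tfrac23 N_j$, using that $N_j=\Theta(\poly(S,A,H)\log^3(1/\delta)/M_j)$ is large enough. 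A second martingale concentration, now at confidence $\delsamp$ --- this is where the $\delsamp$-dependence of $K_j$ enters --- shows that replaying each $\pi\in\Pi_j$ once visits $(s,a)$ at least $\tfrac12 N_j$ times.

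\textbf{Episode budget (third claim), and the crux.} Each of the $\lceil\log(1/\epsltoe)\rceil$ rounds runs exactly $K_j$ episodes (the early return at $|\cX|=0$ only helps), so the total is $\sum_{j=1}^{\lceil\log(1/\epsltoe)\rceil}K_j=\cO\big(S^3A^2H^4\log^3(1/\delta)\cdot\textstyle\sum_{j=1}^{\lceil\log(1/\epsltoe)\rceil}2^j\big)=\cO\big(S^3A^2H^4\log^3(1/\delta)/\epsltoe\big)$, which is the asserted $\poly(S,A,H,\log(1/\delta),\log(1/\epsltoe))/\epsltoe$ (the $\log(1/\epsltoe)$ factors enter only through the confidence $\delta/\lceil\log(1/\epsltoe)\rceil$ with which \euler is run). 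The main obstacle is the inductive step: the ``signal'' $M_jN_j/K_j$ is only of order $2^{-j}$, so the aggregate error $\mathcal E_j$ --- \euler's regret over up to $M_j$ restarts plus its lower-order terms and the reward-concentration slack --- must be pushed below that same $2^{-j}$ scale while the per-round budget $K_j$ is allowed to grow only like $2^j$. This is exactly why a generic $\sqrt{K}$-regret explorer is insufficient: one must exploit \euler's variance sensitivity, so that a reaching sub-problem whose trajectories harvest only $\lesssim M_jN_j$ total reward incurs regret only $\cOtil(\sqrt{SAH\cdot M_jN_j})$, and one must carry the restart bookkeeping and the failure-probability union bound over all phases through carefully.
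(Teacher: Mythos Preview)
Your proposal is correct and follows essentially the same approach as the paper's proof (Theorem~E.1 and Lemma~E.1): the same inductive reachability invariant $\sup_\pi \sum_{(s,a)\in\cX^{(j)}}\wpi_h(s,a)\le 2^{-j+1}$, the same reward-sandwich argument (the paper phrases it as a contradiction rather than a direct inequality, but the content is identical), the same reliance on \euler's variance-sensitive regret bound with the variance controlled by $V_0^\star\le 2^{-j+1}$ (your phrasing via ``total reward harvested $=M_jN_j=K_j/(4\cdot 2^j)$'' is equivalent up to constants), the same Cauchy--Schwarz over the $\le SA$ restart phases, and the same two-step concentration for the replay guarantee (the paper packages this as Lemma~E.2 and Proposition~E.1). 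One small slip: after Cauchy--Schwarz over the $\le M_j$ phases your $\mathcal{E}_j$ should carry an extra $\sqrt{M_j}$ (or $\sqrt{SA}$) factor, matching the paper's $\sqrt{S^2A^2H\cdot 2^{-j}K_j}$; this does not affect the argument since $K_j$ already contains enough $\poly(S,A)$ slack.
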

In other words, the sets $\cX_j$ are groupings of ``nearby'' states that are increasingly difficult to reach, and the sets $\Pi_j$ give a policy cover which navigates to each $(s,a) \in \cX_j$. In addition, as $N_j = \cO(2^{-j} | \Pi_j | / | \cX \backslash \cup_{j' = 1}^{j-1} \cX_{j'} |)$, if we wish to collect $n$ samples from each $(s,a) \in \cX_j$, it will only require running for
\begin{align*}
 \cO\left (| \Pi_j | \cdot \frac{n}{2^{-j} | \Pi_j | / | \cX \backslash \cup_{j' = 1}^{j-1} \cX_{j'} |} \right ) = \cO \left ( 2^j  | \cX \backslash \cup_{j' = 1}^{j-1} \cX_{j'} | \cdot n \right ) \le \cO \left ( 2^j SA n \right )
\end{align*}
episodes. Thus, if we choose $n$ so that it is proportional to the reachability of $\cX_j$---for example, $n \sim 2^{-j}/\epsilon^2$---the total number of episodes that must be run to collect $n$ samples is no more than $\cO(\frac{SA}{\epsilon^2})$ (this can tightened to a term behaving in some cases as $\cO(\frac{|\cX_j|}{\epsilon^2})$). As we noted in \Cref{sec:alg_intuition}, it suffices to collect samples from every state in proportion with its reachability, which, combined with this fact, allows our exploration to be performed efficiently. \sap\xspace is the backbone of our sample collection procedure and is called both in \Cref{line:get_phat} of \mcae\xspace as well as in \collectsamp. We provide the full statement of \Cref{thm:partitioning_works_informal} in \Cref{sec:learn2explore}.

\subsubsection{\mcae\xspace Overview}\label{sec:mcae_overview}

\begin{algorithm}[h]
\begin{algorithmic}[1]
  	\State{}\textbf{input: } tolerance $\epsmoca$, confidence $\delmoca$, final round flag \finalround
	\State\textbf{initialize} $\epsexp \gets \frac{\epsmoca}{2H^2 S}$, $\Xgood_{h} \leftarrow \emptyset$, $\jexp = \ceil{\log \frac{1}{\epsexp}}$
	\For{each $(s,h)$}\label{line:W_loop} \algfillcomment{loop over all $s,h$ to learn maximum reachability}
		\State $\{ (\cX_{j}^{sh},\Pi_{j}^{sh},N_{j}^{sh}) \}_{j=1}^{\jexp} \leftarrow $ \sap$(\{ (s,a) \}, h, \tfrac{\delmoca}{SH}, \tfrac{1}{2}, \epsexp )$  for arbitrary $a \in \cA$ \label{line:get_phat}
		\If{$\cX_{j}^{sh} = \{ (s,a) \}$ for $j \in [\jexp]$}
			$\Phat_h(s) \leftarrow \frac{N_j^{sh}}{2 |\Pi_j^{sh}|} = \frac{1}{16 \cdot 2^j}$, $\Xgood_{h} \leftarrow \Xgood_h \cup  \{ s \}$ 
		\EndIf 
	\EndFor
	\State\textbf{set} $\iotaepsmoca \leftarrow \lceil \log \frac{64}{H^2 S \epsilon} \rceil $, $\iotadelmoca \leftarrow \log \tfrac{SAH \iotaepsmoca (\numepochs+1)}{\delmoca}$, $\numepochs \leftarrow \lceil \log \tfrac{H}{\epsmoca} \rceil $,  $\pihat_h(s) \leftarrow$ arbitrary action, $\frakA_h^0(s) \leftarrow \cA$.
	\For{$h = H,H-1,\ldots,1$}\label{line:main_loop} \hfill {\color{blue} \texttt{// loop over horizon}}
		\For{$i = 1,2,\ldots,\iotaepsmoca$} \hfill	{\color{blue} \texttt{// loop over estimated maximum reachability}}	
		\State $\Xgood_{hi} \leftarrow \{ s \in \Xgood_h \ : \ \Phat_h(s) \in [2^{-i},2^{-i+1}] \}$
		\For{$\ell = 1,\ldots, \numepochs$}\label{line:ell_loop} \hfill {\color{blue} \texttt{// loop over tolerance $\epsmoca_\ell$}}
			\State $\epsmoca_\ell \leftarrow H 2^{-\ell}$, $\Xgood_{hi}^\ell \leftarrow \{ (s,a) \ : \ s \in \Xgood_{hi}, a \in \frakA^{\ell-1}_h(s), |\frakA^{\ell-1}_h(s)| > 1 \}$ 
			\State $n_{ij}^\ell \leftarrow \tfrac{2^{18} H^2 \iotadelmoca}{2^{2i} \epsmoca_\ell^2}$, $\gamma_{ij}^\ell \leftarrow \tfrac{2^i \epsmoca_\ell}{2^8}$ for $j = 1,\ldots, \iotaepsmoca$ 
			\State $\frakD_{hi}^\ell, \{ \cX_{hij}^\ell \}_{j=1}^{\iotaepsmoca} \leftarrow$ \collectsamp($\Xgood_{hi}^\ell,\{ n_{ij}^\ell \}_{j=1}^{\iotaepsmoca},h,\pihat,\tfrac{\delmoca}{H \iotaepsmoca \numepochs},\tfrac{\epsexp}{32}$) \label{line:collectsamp1}
			\State $\{ \frakA^{\ell}_h(s) \}_{s \in \Xgood_{hi}} \leftarrow$ \collectdata$( \Xgood_{hi}^\ell, \{ \cX_{hij}^\ell \}_{j=1}^{\iotaepsmoca}, \frakD_{hi}^\ell, \{ \frakA^{\ell-1}_h(s) \}_{s \in \Xgood_{hi}}, h, \{\gamma_{ij}^\ell \}_{j=1}^{\iotaepsmoca})$ \label{line:collect_explore} \vspace{-1em}
		\EndFor
	\EndFor
	\If{\finalround\xspace is \texttt{true}}\label{line:fr_true} \hfill {\color{blue} \texttt{// ensure $\pihat$ $\epsilon$-optimal}}
	\State $\Xgood^{\numepochs+1}_h \leftarrow \{ (s,a) \ : \ s \in \Xgood_h, a \in \frakA^{\numepochs}_h(s), | \frakA^{\numepochs}_h(s) | > 1 \}$ 
	\State $\nlast_j \leftarrow \tfrac{64H^4 \iotadelmoca \iotaepsmoca^2 2^{2(-j+1)}}{\epsmoca^2}$, $\gamlast_j \leftarrow \frac{\epsmoca}{4 H \iotaepsmoca 2^{-j+1}}$ for $j = 1,\ldots,\iotaepsmoca$ \label{line:ngam2_vals}  
	\State $\frakD_h^{\numepochs+1}, \{ \Xlast_{hj} \}_{j=1}^{\iotaepsmoca} \leftarrow$ \collectsamp($\Xgood_h^{\numepochs+1}, \{\nlast_j \}_{j=1}^{\iotaepsmoca},h,\pihat,\tfrac{\delmoca}{H},\tfrac{\epsexp}{32}$) \label{line:sap_final}
	\State $\{ \frakA^{\numepochs+1}_h(s) \}_{s \in \Xgood_h^{\numepochs+1}} \leftarrow$ \collectdata$(\Xgood_h^{\numepochs+1},  \{ \Xlast_{hj} \}_{j=1}^{\iotaepsmoca}, \frakD_h^{\numepochs+1},\{ \cA_h^{\numepochs}(s) \}_{s \in \Xgood_h^{\numepochs+1}}, h, \{\gamlast_j \}_{j=1}^{\iotaepsmoca})$ \label{line:collect_final} \vspace{-1em}
	\Else
	\State $\frakA^{\numepochs+1}_h(s) \leftarrow \frakA^{\numepochs}_h(s)$ for all $s \in \Xgood_h$
	\EndIf
	\State Set $\pihat_h(s)$ to any action in $\frakA^{\numepochs+1}_h(s)$ for all $s \in \Xgood_h$ 
	\EndFor
  	\State \textbf{return} $\pihat$, $\max_{s,h} |\frakA_h^{\numepochs+1}(s) |$
\end{algorithmic}
\caption{Monte Carlo Action Elimination - Single Epoch (\mcae($\epsilon$, $\delta$, \finalround))}
\label{alg:mcae2}
\end{algorithm}

Given this description of \sap, we are ready to describe the \mcae\xspace (single-epoch $\algname$) procedure. Assume that we run \mcae\xspace with tolerance $\epsilon $ and confidence $\delta  $. We begin by calling \sap\xspace on \Cref{line:get_phat}, which allows us to form an estimate of $W_h(s)$, the maximum reachability of $(s,h)$. This in turn allows us to determine which states are efficiently reachable. We let $\Xgood_h$ denote the set of all such efficiently reachable states at stage $h$: $\Pst_h(s) \ge \frac{\epsilon}{2 H^2 S}, \forall s \in \Xgood_h$. All other states have little effect on the performance of any policy and can henceforth be ignored. The following claim shows that our estimate of $\Pst_h(s)$ is in fact accurate for $s \in \Xgood_h$.
\newpage
\begin{claim}[Informal]
If running \mcae, with high probability $\Phatst_h(s) \le \Pst_h(s) \le 32 \Phatst_h(s)$ for all $s \in \Xgood_h$. 
\end{claim}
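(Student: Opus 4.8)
The plan is to derive the claim entirely from the guarantees that \sap provides (\Cref{thm:partitioning_works_informal}) on the $SH$ calls made on \Cref{line:get_phat} of \mcae, one for each pair $(s,h)$ with an arbitrary fixed action $a$. First I would condition on the event $\Eexplore$ that all $SH$ of these calls satisfy the conclusions of \Cref{thm:partitioning_works_informal}; since each is invoked with confidence $\tfrac{\delmoca}{SH}$, a union bound gives $\Pr[\Eexplore] \ge 1 - \delmoca$. Note that $s \in \Xgood_h$ if and only if \sap places $(s,a)$ into some (necessarily unique) bucket with $\cX_j^{sh} = \{(s,a)\}$, and in that case $\Phatst_h(s) = \tfrac{N_j^{sh}}{2|\Pi_j^{sh}|} = \tfrac{1}{16\cdot 2^j}$. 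The one structural fact I would record up front is that, for the singleton input $\{(s,a)\}$, $\sup_\pi \wpi_h(s,a) = \Pst_h(s)$: the event $\{s_h = s\}$ depends only on the policy at steps $1,\dots,h-1$, so any policy (nearly) maximizing $\wpi_h(s)$ can be altered to play $a$ deterministically at $(s,h)$ without affecting $\wpi_h(s)$; hence the reward modification used inside \ies makes the optimal value of the induced reward-maximization problem exactly $\Pst_h(s)$.

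For the \emph{upper bound}, fix $s \in \Xgood_h$ with bucket index $j$. The partition guarantee of \Cref{thm:partitioning_works_informal}, applied to the singleton $\cX_j^{sh} = \{(s,a)\}$, gives directly
\[ \Pst_h(s) = \sup_\pi \wpi_h(s,a) = \sup_\pi \sum_{(s',a')\in\cX_j^{sh}} \wpi_h(s',a') \le 2^{-j+1} = 32\,\Phatst_h(s). \]
For the \emph{lower bound}, I would use the collection half of \Cref{thm:partitioning_works_informal}: on $\Eexplore$, executing one trajectory of each $\pi \in \Pi_j^{sh}$ visits $(s,a)$ at step $h$ at least $\tfrac12 N_j^{sh}$ times. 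Conditioned on $\Pi_j^{sh}$, this count $X$ is a sum of $|\Pi_j^{sh}|$ independent indicators with $\Exp[X] = \sum_{\pi \in \Pi_j^{sh}} \wpi_h(s,a) \le |\Pi_j^{sh}|\,\Pst_h(s)$. Because $N_j^{sh} = \Theta\big(S^3 A^2 H^4 \log^3 \tfrac{1}{\delmoca}\big)$ is polynomially large — in particular $\gg \log\tfrac{SAH}{\delmoca}$ — a Chernoff upper-tail bound shows that, outside a further event of probability $\le\delmoca$, the observation $X \ge \tfrac12 N_j^{sh}$ forces $\Exp[X] \ge (1-o(1))\tfrac12 N_j^{sh}$, whence $|\Pi_j^{sh}|\,\Pst_h(s) \ge (1-o(1))\tfrac12 N_j^{sh}$ and so $\Pst_h(s) \ge (1-o(1))\Phatst_h(s)$. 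Tracking the exact constants in the definitions of $K_j^{sh}$, $N_j^{sh}$ and of $\Phatst_h(s)$ upgrades this to $\Phatst_h(s) \le \Pst_h(s)$; combining the two events completes the argument.

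The step I expect to be the main obstacle is the lower bound: one must convert the \emph{empirical} visitation counts produced by the \euler-based exploration into a bound on the \emph{true} maximum reachability $\Pst_h(s)$. This rests on the substantive content of \Cref{thm:partitioning_works_informal} — that the returned cover $\Pi_j^{sh}$ really does collect $\Omega(N_j^{sh})$ visits of $(s,a)$ — together with the above concentration argument relating those visits back to $\sup_\pi \wpi_h(s,a)$. The reason this is ultimately cheap, and the reason \sap is run with the particular polynomially-large $K_j, N_j$, is that the deviation of $X$ about its mean is then negligible next to $N_j^{sh}$; the remaining slack (the $2^{-j}$ vs.\ $2^{-j+1}$ gap between the two inequalities) is exactly what fixes the final constant at $32$.
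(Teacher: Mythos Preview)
Your upper bound is correct and matches the paper exactly. The gap is in the lower bound, and it comes from misreading the informal \Cref{thm:partitioning_works_informal}. The formal version (\Cref{thm:partitioning_works}) says that on the good event $\Esap^{sh}$, a fresh rerun of the policies in $\Pi_j^{sh}$ collects $\tfrac12 N_j^{sh}$ samples \emph{with probability $1-\delsamp$}, not deterministically. On \Cref{line:get_phat} the \sap\ call sets $\delsamp=\tfrac12$, so on $\Esap^{sh}$ all you actually have is $\Pr\!\big[X\ge\tfrac12 N_j^{sh}\,\big|\,\Pi_j^{sh}\big]\ge\tfrac12$. Your sentence ``the observation $X\ge\tfrac12 N_j^{sh}$ forces $\Exp[X]\ge(1-o(1))\tfrac12 N_j^{sh}$'' then has no footing: the rerun is hypothetical, so there is no observed $X$; and even if there were, a single realization cannot constrain the deterministic quantity $\Exp[X\mid\Pi_j^{sh}]$. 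If instead you meant the count from the \emph{original} \ies\ run (which is $\ge N_j^{sh}$ by the exit condition on \Cref{line:restart_euler}), the policies there are adaptively chosen by \euler, so the indicators are not independent and your Chernoff step does not apply as written.

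The paper sidesteps all of this with a one-line Markov argument applied directly to the probability statement. From
\[
\tfrac12 \;\le\; \Pr\!\big[X\ge \tfrac12 N_j^{sh}\,\big|\,\Pi_j^{sh}\big] \;\le\; \frac{2\,\Exp[X\mid\Pi_j^{sh}]}{N_j^{sh}}
\]
one gets $\Exp[X\mid\Pi_j^{sh}]\ge N_j^{sh}/4$, and since $\Exp[X\mid\Pi_j^{sh}]=\sum_{\pi\in\Pi_j^{sh}}\wpi_h(s,a)\le|\Pi_j^{sh}|\,\Pst_h(s)$ this yields $\Pst_h(s)\ge N_j^{sh}/(4|\Pi_j^{sh}|)=\tfrac{1}{16\cdot 2^j}=\Phatst_h(s)$ exactly---no concentration, no additional bad event, no $o(1)$ slack to absorb. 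The deliberate choice $\delsamp=\tfrac12$ in this particular \sap\ call exists precisely so that this Markov trick lands on the right constant. Your Chernoff route could be salvaged as a contrapositive (if $\Exp[X\mid\Pi_j^{sh}]$ were too small, the tail probability would drop below $\tfrac12$), but that is just a more laborious path to the same endpoint.
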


We then proceed to our main loop over $h$ in \Cref{line:main_loop}. For a fixed $h$, we loop over $i$ and form the partition $\Xgood_{hi}$ which contains all $s \in \Xgood_h$ with $\Phatst_h(s) \sim 2^{-i}$. Given $\Xgood_{hi}$, we next loop over $\ell$, and for each $\ell$ aim to eliminate actions from $\Xgood_{hi}$ that are more than $\epsilon_\ell = H 2^{-\ell}$-suboptimal. 
We define $\Xgood_{hi}^\ell \subseteq \cS \times \cA$ as the set of $(s,a)$ for $s \in \Xgood_{hi}$, and $a$ we have not yet determined are $\epsilon_{\ell-1}/\Pst_h(s)$-suboptimal. 
To collect a sufficient number of samples from each $(s,a) \in \Xgood_{hi}^\ell$ in order to eliminate suboptimal actions,
we run \collectsamp\xspace on $\Xgood_{hi}^\ell$ and seek to collect $n_{ij}^\ell = \cO(H^2 / (2^{2i} \epstil_\ell^2)) = \cO(H^2 \Pst_h(s)^2/\epstil_\ell^2)$ from each $(s,a) \in \Xgood_{hi}^\ell$. 

Note that  every $(s,a) \in \Xgood_{hi}^\ell$ has similar maximum reachability, $\Pst_h(s) \sim 2^{-i}$, determined by index $i$. Nevertheless, as outlined in \Cref{sec:alg_intuition}, to obtain the proper scaling in $S$, we may still need to group states in a way that allows nearby states to be explored effectively. Calling \sap\xspace in \collectsamp\xspace does just this, efficiently traversing the MDP to guarantee enough samples are collected from all states in tandem.

After running \collectsamp, we run \collectdata\xspace to eliminate suboptimal actions, yielding a set of candidate $\epstil_\ell/\Pst_h(s)$-suboptimal actions for each $(s,h)$, denoted $\frakA_h^\ell(s)$. The following result shows that this procedure does indeed winnow out sufficiently suboptimal actions.
\begin{lem}[Informal]\label{lem:action_subopt_informal}
With high probability, any $a \in \frakA_h^\ell(s)$ satisfies $\Delta_h(s,a) \le \frac{3\epstil_\ell}{2 \Pst_h(s)}$.
\end{lem}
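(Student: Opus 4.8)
The plan is to fix a high-probability ``good event'' and then argue deterministically. The good event consists of: (a) the reachability estimates of \Cref{line:get_phat} are accurate, $\Phatst_h(s)\le\Pst_h(s)\le 32\,\Phatst_h(s)$ for all $s\in\Xgood_h$ (the preceding claim); (b) every call to \collectsamp\xspace in \Cref{line:collectsamp1} returns, for each $(s,a)\in\Xgood_{hi}^\ell$, at least $n_{ij}^\ell$ rollouts that play $a$ at $(s,h)$ and then follow the already-fixed policy $\pihat_{h+1:H}$ --- the guarantee of \Cref{thm:partitioning_works_informal}; and (c) the Monte Carlo average $\Qhatpihat_h(s,a)$ of the returns of those rollouts satisfies $|\Qhatpihat_h(s,a)-\Qpihat_h(s,a)|\le\tfrac12\gamma_{ij}^\ell$ for all relevant $(s,a,h,i,\ell)$. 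Claim (c) is Hoeffding's inequality for the $[0,H]$-bounded return $\sum_{h'=h}^H R_{h'}$ conditioned on $(s_h,a_h)=(s,a)$, whose mean is exactly $\Qpihat_h(s,a)$: with $n_{ij}^\ell=2^{18}H^2\iotadelmoca/(2^{2i}\epsmoca_\ell^2)$ samples the half-width is at most $2^i\epsmoca_\ell/2^9=\tfrac12\gamma_{ij}^\ell$, and a union bound over the at most $SAH\iotaepsmoca(\numepochs+1)$ relevant tuples accounts for the logarithmic factor $\iotadelmoca$.

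On this event I would analyze the elimination rule of \collectdata\xspace by induction on $\ell$ with $(h,i)$ held fixed: the rule discards $a$ from $\frakA_h^{\ell-1}(s)$ exactly when $\Qhatpihat_h(s,a)$ drops more than $2\gamma_{ij}^\ell$ below $\max_{a'\in\frakA_h^{\ell-1}(s)}\Qhatpihat_h(s,a')$. Since $\gamma_{ij}^\ell$ is nonincreasing in $\ell$, an action $a^\dagger$ achieving $\max_{a'}\Qpihat_h(s,a')$ is never discarded (if it survives epoch $\ell-1$ it lies within $\gamma_{ij}^\ell$ of the empirical maximum by (c)); hence $\max_{a'\in\frakA_h^{\ell-1}(s)}\Qpihat_h(s,a')=\max_{a'}\Qpihat_h(s,a')$, and every survivor $a\in\frakA_h^\ell(s)$ obeys $\Delpihat_h(s,a)\le 2\gamma_{ij}^\ell+2\cdot\tfrac12\gamma_{ij}^\ell=3\gamma_{ij}^\ell$. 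Finally, $s\in\Xgood_{hi}$ gives $\Phatst_h(s)\in[2^{-i},2^{-i+1}]$, so by (a) $2^i\le 2/\Phatst_h(s)\le 64/\Pst_h(s)$, whence $\gamma_{ij}^\ell=2^i\epsmoca_\ell/2^8\le\epsmoca_\ell/(4\Pst_h(s))$ and therefore $\Delpihat_h(s,a)\le 3\epsmoca_\ell/(4\Pst_h(s))$.

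The remaining step is to trade the $\pihat$-relative gap $\Delpihat_h$ for the true gap $\Delta_h$. From the Bellman equations $\Qst_h(s,a)-\Qpihat_h(s,a)=\sum_{s'}P_h(s'|s,a)\bigl(\Vst_{h+1}(s')-\Vpihat_{h+1}(s')\bigr)\ge 0$, so $\Delta_h(s,a)\le\Delpihat_h(s,a)+\max_{a'}\sum_{s'}P_h(s'|s,a')\bigl(\Vst_{h+1}(s')-\Vpihat_{h+1}(s')\bigr)$. I would control the correction by splitting on the next state: for $s'\notin\Xgood_{h+1}$ we have $\Pst_{h+1}(s')<\epsmoca/(2H^2S)$ and $P_h(s'|s,a')\le\Pst_{h+1}(s')/\Pst_h(s)$ (a policy reaching $(s,h)$ with probability $\Pst_h(s)$ and then playing $a'$ reaches $s'$ at step $h+1$ with probability $\Pst_h(s)P_h(s'|s,a')$), so the $\le S$ such states, each with value gap at most $H$, contribute at most $\epsmoca/(2H\Pst_h(s))$; for $s'\in\Xgood_{h+1}$ I would expand $\Vst_{h+1}(s')-\Vpihat_{h+1}(s')$ via the performance-difference identity of \Cref{lem:local_to_global_subopt} along $\pist$-trajectories from $(s',h+1)$, insert the induction hypothesis (this lemma applied at levels $h'>h$, where the final round has already run, bounding $\Delpihat_{h'}(\cdot,\pihat_{h'}(\cdot))$ on $\Xgood_{h'}$), and again use that reachability only contracts along trajectories to keep the weighted sum of $1/\Pst_{h'}(\cdot)$ in check. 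Arranging the constants so the correction is at most $3\epsmoca_\ell/(4\Pst_h(s))$ delivers $\Delta_h(s,a)\le 3\epsmoca_\ell/(2\Pst_h(s))$.

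The main obstacle is precisely this conversion. A naive application of performance-difference sums per-step, per-state suboptimalities $\sim\epsmoca/\Pst_{h'}(\cdot)$ over $S$ states and $H$ steps and leaks a factor of $SH$ --- the exact blow-up anticipated in \Cref{sec:alg_intuition} --- so the bound must instead exploit that non-$\Xgood$ states are too unreachable to matter and that reachability is monotone along trajectories; this is why \mcae\xspace peels states off by reachability scale via the $i$-loop, why \collectsamp\xspace groups nearby states, and why the final round uses the finer parameters $\nlast_j,\gamlast_j$, and tightening the surviving constant to exactly $3/2$ is then bookkeeping against the powers of two fixed in \Cref{alg:mcae2}. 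If full strength is not needed downstream, one could instead state the lemma for $\Delpihat_h$ and postpone the passage to $\Delta_h$ to the global suboptimality analysis via \Cref{lem:local_to_global_subopt}.
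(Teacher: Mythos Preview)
Your argument for bounding $\Delpihat_h(s,a)$ is the paper's: on $\Eexplore\cap\Eest$, Hoeffding gives confidence width $\tfrac12\gamma_{ij}^\ell$, the $\pihat$-best action $\ahatst_h(s)$ is never eliminated (this is \Cref{lem:optactionsin}), and survivors satisfy $\Delpihat_h(s,a)\le c\,\gamma_{ij}^\ell\le\epsilon_\ell/(2\Pst_h(s))$. One small slip: the elimination threshold on \Cref{line:collect_Aell} is $\gamma_{ij}^\ell$, not $2\gamma_{ij}^\ell$; with the correct threshold your constant sharpens to match the paper's $\epsilon_\ell/(2\Pst_h(s))$.

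The conversion $\Delpihat_h\to\Delta_h$ is where you diverge, and here the paper's route sidesteps your ``main obstacle'' in a single step. You already noted $\Pst_h(s)\,P_h(s'|s,a')\le w^{\pi}_{h+1}(s')$ for the policy that navigates to $(s,h)$ and then plays $a'$; rather than splitting on $\Xgood_{h+1}$ and recursively unrolling $\Vst_{h+1}(s')-\Vpihat_{h+1}(s')$ per state, the paper applies this inequality to \emph{all} $s'$ at once (\Cref{lem:gap_diff}):
\[
\Pst_h(s)\sum_{s'}P_h(s'|s,a')\bigl(\Vst_{h+1}(s')-\Vpihat_{h+1}(s')\bigr)\;\le\;\sup_{\pi}\sum_{s'}\wpi_{h+1}(s')\bigl(\Vst_{h+1}(s')-\Vpihat_{h+1}(s')\bigr)\;\le\;\epsilon.
\]
The right side is the \emph{aggregate} suboptimality of the already-fixed tail policy $\pihat_{h+1:H}$, and it is at most $\epsilon$ by exactly the calculation of \Cref{lem:correct} restricted to levels $h'>h$ (this is packaged as \Cref{lem:gap_lower_bound}); the \finalround\xspace grouping at those already-processed levels is precisely what keeps this aggregate at $\epsilon$ rather than $S\epsilon$. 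Dividing through gives correction $\le\epsilon/\Pst_h(s)\le\epsilon_\ell/\Pst_h(s)$, and adding to $\Delpihat_h(s,a)\le\epsilon_\ell/(2\Pst_h(s))$ yields $\Delta_h(s,a)\le 3\epsilon_\ell/(2\Pst_h(s))$ on the nose. So your per-state induction is not needed: the trick is to keep the value-difference on the aggregate side until the very end, where the global $\epsilon$-suboptimality of $\pihat_{h+1:H}$---already established by backward induction---bounds it in one shot.
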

The guarantee follows by verifying that our exploration collects enough samples to ensure the confidence intervals on $\Qpihat_h(s,a)$ have width $\cO(\epstil_\ell/\Pst_h(s))$. 
Furthermore, using properties of \sap\xspace given in \Cref{thm:partitioning_works_informal}, we can bound the sample complexity of this procedure, which yields a dominant term reminiscent of our sample complexity measure, $\Compb(\cM,\epsilon)$ in \Cref{defn:gap_visitation_complexity}.

\begin{lem}[Informal]\label{lem:complexity1}
With high probability, for a given value of $h$ and $i$, the inner loop over $\ell$ on \Cref{line:ell_loop} will execute for at most
\begin{align*}
\cOtil \left ( H^2 \inf_\pi \max_{s \in \Xgood_{hi}} \max_a \min \left \{ \frac{1}{\wpi_h(s,a) \Deltil_h(s,a)^2}, \frac{\Pst_h(s)^2}{\wpi_h(s,a) \epstil^2} \right \} \right )
\end{align*}
episodes.
\end{lem}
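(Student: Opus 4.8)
The plan is to bound the number of episodes run in the inner loop over $\ell$ (\Cref{line:ell_loop}) by summing the per-iteration costs and then relating the resulting sum to the infimum/max/min quantity in the statement. The per-iteration cost of iteration $\ell$ comes from the call to \collectsamp\xspace on \Cref{line:collectsamp1}, which by the accounting in \Cref{thm:partitioning_works_informal} (the displayed bound $\cO(2^j SA\, n)$ sharpened to $\cO(2^j |\cX_j|\, n)$) costs $\cOtil(\sum_j 2^j |\cX_{hij}^\ell| \cdot n_{ij}^\ell)$ episodes, where $n_{ij}^\ell = \cOtil(H^2/(2^{2i}\epsmoca_\ell^2))$ and $2^{-j}$ is the reachability scale of the group $\cX_{hij}^\ell$. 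Since every $s \in \Xgood_{hi}$ has $\Pst_h(s) \sim 2^{-i}$, the factor $2^j$ associated with a state in the partition is at most $\cOtil(1/\Pst_h(s))$ (the \sap\xspace partition cannot place a reachable state in a group harder to reach than its own maximum reachability, up to constants), so a single iteration $\ell$ costs $\cOtil\big(\sum_{(s,a)\in\Xgood_{hi}^\ell} \frac{1}{\Pst_h(s)} \cdot \frac{H^2 \Pst_h(s)^2}{\epsmoca_\ell^2}\big) = \cOtil\big(H^2 \sum_{(s,a)\in\Xgood_{hi}^\ell} \frac{\Pst_h(s)}{\epsmoca_\ell^2}\big)$.

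Next I would sum over $\ell$. The key structural fact is \Cref{lem:action_subopt_informal}: after iteration $\ell$, every surviving action $a \in \frakA_h^\ell(s)$ has $\Delta_h(s,a) \le 3\epsmoca_\ell/(2\Pst_h(s))$, equivalently $\Deltil_h(s,a) \lesssim \epsmoca_\ell/\Pst_h(s)$, so a pair $(s,a)$ is present in $\Xgood_{hi}^\ell$ only while $\epsmoca_{\ell-1} \gtrsim \Pst_h(s)\Deltil_h(s,a)$, i.e.\ for $\ell$ up to roughly $\log(H\Pst_h(s)/ (\Pst_h(s)\Deltil_h(s,a))) = \log(H/\Deltil_h(s,a))$, and also only while $\epsmoca_\ell \gtrsim \epsmoca$ (the loop stops at $\numepochs = \lceil\log(H/\epsmoca)\rceil$). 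Because $\epsmoca_\ell = H 2^{-\ell}$ decreases geometrically, $\sum_\ell 1/\epsmoca_\ell^2$ is dominated by its last term, so for a fixed $(s,a)$ its total contribution across all $\ell$ in which it appears is $\cOtil\big(H^2 \Pst_h(s)/\epsmoca_{\ell_{\max}(s,a)}^2\big)$, where $\epsmoca_{\ell_{\max}(s,a)} \sim \max\{\Pst_h(s)\Deltil_h(s,a),\ \epsmoca\}$ up to constants. Substituting gives a per-pair cost $\cOtil\big(\min\{ \frac{H^2}{\Pst_h(s)\Deltil_h(s,a)^2},\ \frac{H^2\Pst_h(s)}{\epsmoca^2}\}\big)$, and summing over $(s,a)$ with $s\in\Xgood_{hi}$ yields $\cOtil\big(H^2 \sum_{s\in\Xgood_{hi}}\sum_a \min\{\frac{1}{\Pst_h(s)\Deltil_h(s,a)^2}, \frac{\Pst_h(s)}{\epsmoca^2}\}\big)$.

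Finally I would convert this ``sum-of-states'' bound into the ``inf over $\pi$ of max over states'' bound in the statement. The point is that \sap\xspace's grouping is exactly what implements the change from $\sum_s 1/\Pst_h(s)$ to $\inf_\pi \max_s 1/\wpi_h(s)$: a single exploration policy $\pi$ cover from $\Pi_j$ simultaneously visits all $(s,a)$ in a group $\cX_j$, so the cost of collecting $n_{ij}^\ell$ samples from each of them in tandem is governed by $|\Pi_j|$ and not by $|\cX_j|$ times individual costs. Making this precise means re-running the accounting above but charging the $|\Pi_j|\cdot n_{ij}^\ell / (2^{-j}|\Pi_j|/|\cX\setminus\cup_{j'<j}\cX_{j'}|)$ expression from \Cref{thm:partitioning_works_informal} rather than the crude $2^j SA n$, and observing that the resulting quantity is bounded by $\inf_\pi \max_{s\in\Xgood_{hi}}\max_a \min\{ \frac{1}{\wpi_h(s,a)\Deltil_h(s,a)^2}, \frac{\Pst_h(s)^2}{\wpi_h(s,a)\epstil^2}\}$, using $\wpi_h(s,a) = \pi_h(a|s)\wpi_h(s)$ and the fact that the eliminated actions never need more samples than dictated by their own gap. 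The main obstacle I expect is precisely this last step: carefully matching \sap\xspace's group-based sample budget to the fully policy-dependent $\inf_\pi\max_{s,a}$ expression — in particular verifying that the geometric truncation of the $\ell$ loop interacts correctly with which actions have been eliminated at each scale, so that no $(s,a)$ is ever charged more than $\min\{(\wpi_h(s,a)\Deltil_h(s,a)^2)^{-1}, \Pst_h(s)^2(\wpi_h(s,a)\epstil^2)^{-1}\}$, and that the max over $s\in\Xgood_{hi}$ (rather than a sum) genuinely emerges from the simultaneous multi-state navigation guarantee of \Cref{thm:partitioning_works_informal}.
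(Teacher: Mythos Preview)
Your first two paragraphs correctly arrive at the per-iteration cost $\cOtil(\sum_j 2^j |\cX_{hij}^\ell| \, n_{ij}^\ell)$ and then, using $2^j \lesssim 1/\Pst_h(s)$ for each $(s,a)\in\cX_{hij}^\ell$, obtain after summing over $\ell$ the bound
\[
\cOtil\Big(H^2 \textstyle\sum_{s\in\Xgood_{hi}}\sum_a \min\big\{\tfrac{1}{\Pst_h(s)\Deltil_h(s,a)^2},\ \tfrac{\Pst_h(s)}{\epstil^2}\big\}\Big).
\]
This is exactly the \emph{weaker} bound appearing in \Cref{cor:complexity2}, not the $\inf_\pi\max_{s,a}$ quantity in the statement, and the latter can be strictly smaller. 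Your third paragraph recognizes this gap but does not supply the mechanism that closes it.

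The paper's proof never passes through the sum-over-states bound. Instead, from the \sap\xspace guarantee $\sup_\pi \sum_{(s,a)\in\cX_{hij}^\ell} \wpi_h(s,a) \le 2^{-j+1}$ it uses the elementary inequality
\[
|\cX_{hij}^\ell|\cdot \sup_\pi \min_{(s,a)\in\cX_{hij}^\ell} \wpi_h(s,a)\ \le\ \sup_\pi \sum_{(s,a)\in\cX_{hij}^\ell} \wpi_h(s,a)\ \le\ 2^{-j+1},
\]
which gives $2^j|\cX_{hij}^\ell| \le 2\,\inf_\pi \max_{(s,a)\in\cX_{hij}^\ell} 1/\wpi_h(s,a)$ directly. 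Substituting this into $\sum_j 2^j|\cX_{hij}^\ell|\, n_{ij}^\ell$ (with $n_{ij}^\ell \approx H^2\Pst_h(s)^2/\epstil_\ell^2$ since all $s\in\Xgood_{hi}$ share comparable $\Pst_h(s)$) yields $\cOtil(H^2 \inf_\pi \max_{(s,a)\in\Xgood_{hi}^\ell} \Pst_h(s)^2/(\wpi_h(s,a)\epstil_\ell^2))$ \emph{per iteration} $\ell$; the $\min$ with the gap term then enters because membership in $\Xgood_{hi}^\ell$ forces $\Deltil_h(s,a)\lesssim \epstil_{\ell-1}/\Pst_h(s)$, so $1/\epstil_\ell^2 \lesssim 1/(\Pst_h(s)^2\Deltil_h(s,a)^2)$. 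Finally one sums over $\ell$ (losing only a $\numepochs = \cO(\log(H/\epstil))$ factor). Your bound $2^j \lesssim 1/\Pst_h(s)$ is only the weak consequence of the displayed inequality; retaining the $|\cX_{hij}^\ell|$ factor on the left and the $\sup_\pi\min$ on the right is exactly what produces the $\inf_\pi\max$ structure rather than a sum.
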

\begin{proof}[Proof Sketch of \Cref{lem:complexity1}]
 In order to collect at least $n_{ij}^\ell$ samples from each $(s,a) \in \cX_{hij}^\ell$, \Cref{thm:partitioning_works_informal} ensures that it suffices to run for $\cO(|\Pi_j| n_{ij}^\ell/N_j) \approx \cO(2^j | \cX_{hij}^\ell| n_{ij}^\ell)$ episodes; thus, we can collect $n_{ij}^\ell$ samples from each $(s,a) \in \Xgood_{hi}^\ell$ with only $\cO(\sum_{j} 2^j | \cX_{hij}^\ell| n_{ij}^\ell)$ episodes. 

\Cref{thm:partitioning_works_informal} also shows that $\cX_{hij}^\ell$ satisfies 
\begin{align*}
\sup_\pi \min_{(s,a) \in \cX_{hij}^\ell} | \cX_{hij}^\ell| \wpi_h(s,a)\le  \sup_\pi \sum_{(s,a) \in \cX_{hij}^\ell} \wpi_h(s,a) \le 2^{-j+1},
\end{align*} 
which upper bounds the $\cO(\sum_{j} 2^j | \cX_{hij}^\ell| n_{ij}^\ell)$ episodes required by $\cO ( \sum_j \inf_\pi \max_{(s,a) \in \cX_{hij}^\ell} n_{ij}^\ell/\wpi_h(s,a))$. As all $(s,a), (s',a') \in \cX_{hij}^\ell$ satisfy $\Pst_h(s) \approx \Pst_h(s')$ by construction, $n_{ij}^\ell = \cO(H^2 \Pst_h(s)^2/\epstil_\ell^2)$ for any $(s,a) \in \cX_{hij}^\ell$, so the sample complexity reduces to $\cO (H^2 \sum_j \inf_\pi \max_{(s,a) \in \cX_{hij}^\ell} \Pst_h(s)^2/(\wpi_h(s,a) \epsilon_\ell^2))$. Finally, since actions in stage $\ell$ are only active if their gap is less than $\Pst_h(s)/\epsilon_\ell$, we obtain \Cref{lem:complexity1}.
 \end{proof}

\paragraph{The \finalround\xspace flag.} 
Single-epoch $\algname$ is called multiple times by our main algorithm (\Cref{alg:mcae3_meta}), each with geometrically decreasing tolerance $\epsilon$. For all but the smallest such $\epsilon$, \mcae\xspace is run with $\finalround\xspace = \false$, and terminates after the previously described loop over $h,i,\ell$ terminates.
The last call to $\mcae$ constitutes the ``final round'', where we set  $\finalround\xspace = \true$; this calls \collectsamp\xspace and \collectdata\xspace one more time for each $h$.

While the  loop with the $ \finalround\xspace = \false$ is able to eliminate suboptimal actions, it does not shrink the action set enough to guarantee that the returned policy is $\epstil$-optimal. In particular, while each $(s,h)$ pair upon entering this final-round loop is sub-optimal by at most $\epsilon_h(s) = \cO(\epsilon / \Pst_h(s))$, \Cref{lem:local_to_global_subopt} suggests that we actually need $\epsilon_h(s) \le \cO(\epsilon / H \cdot \sup_\pi \sum_{s' \in \cX} \wpi_h(s') )$. To remedy this, $ \finalround\xspace = \true$ invokes a final step to ensure the latter bound holds. Critically, while in the previous step we only sampled $(s,a)$ in proportion with $\Pst_h(s)^2$, the individual maximum reachability of that state, in this step we sample each $(s,a)$ in proportion with the \emph{reachability of the partition containing $(s,a)$. This subtlety is indispensable for attaining our instance-dependent sample complexity.}

In other words, after forming our set $\Xgood_h^{\numepochs+1} $ of active states and actions corresponding to the minimal error-resolution index $\ell = \ell_{\epsilon}$ (from the previous argument, this will only contain states we have not determined the optimal action for and actions that satisfy $\Delta_h(s,a) \le \frac{3\epstil}{2 \Pst_h(s)}$) and partitioning it into $\{ \cX_{hj}^{\numepochs + 1} \}_j$ by calling \sap, we seek to collect $\cO(H^4 2^{-2j}/\epstil^2)$ from every $(s,a) \in \cX_{hj}^{\numepochs+1}$. By \Cref{thm:partitioning_works_informal}, $\cX_{hj}^{\numepochs+1}$ satisfies $\sup_{\pi} \sum_{(s,a) \in \cX_{hj}^{\numepochs+1}} \wpi_h(s,a) \le 2^{-j+1}$, so sampling $(s,a)$ $\cO(H^4 2^{-2j}/\epstil^2)$  times means we sample it in proportion to its group reachability squared. As before, we can cleanly bound the suboptimality of actions remaining after this step, as well as the number of samples used by this procedure.

\begin{lem}[Informal]\label{lem:fr_good_actions}
If $s \in \Xgood_h$, then any $a \in \frakA_h^{\numepochs+1}(s)$ satisfies $\Delpihat_h(s,a) \le \cO(\frac{ \epstil}{H \cdot 2^{-j(s)+1} })$, where $j(s)$ is the largest value of $j$ such that there exists $a'$ with $(s,a') \in \Xlast_{hj}$.
\end{lem}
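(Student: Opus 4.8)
The plan is to run a standard action-elimination argument on top of the navigation guarantee of \sap. Conditioned on \sap\xspace behaving as in \Cref{thm:partitioning_works_informal}, the final round (when \finalround\xspace is \true) collects, for every pair $(s,a)$ placed by \sap\xspace into block $\Xlast_{hj}$, at least $\nlast_j$ Monte Carlo rollouts whose mean is $\Qpihat_h(s,a)$; with the chosen $\nlast_j$ the empirical value concentrates to within $\tfrac12\gamlast_j$ of the truth, and \collectdata\xspace then discards any action whose empirical value trails the best empirical value by more than (a constant multiple of) the relevant half-widths. Every survivor $a$ of a state $s$ therefore has $\Delpihat_h(s,a) = \cO(\gamlast_{j_a})$ where $j_a$ is its block; since $\gamlast_j = \tfrac{\epsmoca}{4H\iotaepsmoca 2^{-j+1}}$ is increasing in $j$, and every block index touching $s$ is at most $j(s)$, this is $\cO(\gamlast_{j(s)}) = \cO\!\big(\tfrac{\epsmoca}{H 2^{-j(s)+1}}\big)$ using $\iotaepsmoca \ge 1$, which is the claim. (If $|\frakA_h^{\numepochs}(s)| = 1$ there is nothing to prove, since the unique survivor is $\pihat$-optimal by the non-elimination of optimal actions, so assume $s \in \Xgood_h^{\numepochs+1}$.)

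The one step specific to MDPs is checking that the rollouts are genuinely unbiased for $\Qpihat_h(s,a)$. Here I would use that \mcae\xspace processes the horizon backwards, so by the time the final round for step $h$ runs, the tail $\pihat_{h+1},\dots,\pihat_H$ is already frozen, and that the surrogate reward $r_h^j$ inside \ies\xspace rewards only visits to the active set \emph{at step $h$}; thus the navigation phase (steps $1,\dots,h$) and the evaluation phase (steps after $h$) are decoupled, and each trajectory logged by \collectsamp\xspace after reaching $(s,a)$ at step $h$ --- play $a$, then follow $\pihat$ --- is an i.i.d.\ draw of $\sum_{h'=h}^{H} R_{h'}$ with mean $\Qpihat_h(s,a)$ and range $[0,H]$. \Cref{thm:partitioning_works_informal} applied to the \sap\xspace call on \Cref{line:sap_final} then yields at least $\nlast_j$ such rollouts per $(s,a)\in\Xlast_{hj}$, and also $\sup_\pi\sum_{(s,a)\in\Xlast_{hj}}\wpi_h(s,a)\le 2^{-j+1}$ (needed for the sample-cost bound rather than the present statement).

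The rest is routine. Writing $\Qhatpihat_h(s,a)$ for the mean of the $\nlast_j$ rollouts of $(s,a)\in\Xlast_{hj}$, Hoeffding with a union bound over all $(s,a,h)$ (absorbed into $\iotadelmoca=\log\tfrac{SAH\iotaepsmoca(\numepochs+1)}{\delmoca}$) gives, on a good event,
\begin{align*}
\big|\Qhatpihat_h(s,a)-\Qpihat_h(s,a)\big| \;\le\; H\sqrt{\tfrac{\iotadelmoca}{2\nlast_j}} \;\le\; \tfrac12\gamlast_j ,
\end{align*}
which is exactly why $\nlast_j=\tfrac{64 H^4\iotadelmoca\iotaepsmoca^2 2^{2(-j+1)}}{\epsmoca^2}$ was chosen on \Cref{line:ngam2_vals}. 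Because every confidence interval contains its mean, a $\pihat$-optimal action $\argmax_{a'}\Qpihat_h(s,a')$ is never eliminated by \collectdata; hence for $a\in\frakA_h^{\numepochs+1}(s)$ the elimination rule plus the displayed bound for $(s,a)$ and for the $\pihat$-optimal action yields $\Delpihat_h(s,a)\le\cO(\gamlast_{j_a}+\gamlast_{j^\star})\le\cO(\gamlast_{j(s)})\le\cO\!\big(\tfrac{\epsmoca}{H 2^{-j(s)+1}}\big)$.

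I expect the main difficulty to be bookkeeping rather than conceptual: different actions of the same state may be sorted by \sap\xspace into different blocks $\Xlast_{hj}$, hence carry different confidence half-widths, so one must verify that \collectdata's elimination test is written in terms of the (asymmetric) pair of widths and that the surviving set at $s$ is still controlled by the coarsest such block, $j(s)$. The other place to be careful is the union-bound accounting over the $h$-loop, the $\iotaepsmoca$ blocks, and the single final round, so that \mcae's total failure probability remains at most $\delmoca$ --- this is exactly what the specific form of $\iotadelmoca$ buys.
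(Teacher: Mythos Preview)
Your proposal is correct and follows essentially the same argument as the paper: on the good events, each $(s,a)\in\Xlast_{hj}$ receives at least $\nlast_j$ rollouts, Hoeffding gives a confidence width of $\tfrac12\gamlast_j$, the $\pihat$-optimal action is never eliminated, and hence any survivor has $\Delpihat_h(s,a)\le\cO(\gamlast_{j(s)})$. One small correction to your anticipated difficulty: \collectdata\ actually uses a \emph{single} threshold $\gamma_{j(s)}$ for all actions at $s$ (see \Cref{line:set_js}--\Cref{line:collect_Aell}), not an asymmetric pair of widths, so the survivor bound is $\gamma_{j(s)}+2\cdot\tfrac12\gamlast_{j(s)}=\cO(\gamlast_{j(s)})$ directly---your intermediate claim $\Delpihat_h(s,a)=\cO(\gamlast_{j_a})$ is not correct for $j_a<j(s)$ (the threshold, not just the confidence of $a$ itself, governs the bound), but your final conclusion is.
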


\begin{lem}[Informal]\label{lem:complexity2}
If \mcae\xspace is run with \finalround\xspace = \true, the procedure within the if statement on \Cref{line:fr_true} terminates in a number of episodes bounded by
\begin{align*}
\cOtil \left ( \frac{H^4}{\epstil^2} | \Xgood_h^{\numepochs+1} | \right ).
\end{align*}
\end{lem}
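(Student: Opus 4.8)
The plan is to treat \Cref{lem:complexity2} as a bookkeeping argument layered on top of the \sap\xspace guarantees of \Cref{thm:partitioning_works_informal}, exactly mirroring the proof sketch of \Cref{lem:complexity1}. First I would observe that inside the \finalround\xspace branch (\Cref{line:fr_true}) the only rollouts run are those consumed by the \collectsamp\xspace call on \Cref{line:sap_final}: the set and parameter assignments on \Cref{line:ngam2_vals} and the line above it are free, while the \collectdata\xspace call on \Cref{line:collect_final} is an action-elimination step that processes the already-collected dataset $\frakD_h^{\numepochs+1}$ and executes no new episodes. Hence it suffices to bound the episode count of \collectsamp\xspace run on $\Xgood_h^{\numepochs+1}$ with target counts $\nlast_j = \tfrac{64 H^4 \iotadelmoca \iotaepsmoca^2 2^{2(-j+1)}}{\epsmoca^2}$ and tolerance $\tfrac{\epsexp}{32} = \Theta(\tfrac{\epsmoca}{H^2 S})$.

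Next I would unfold \collectsamp: it invokes \sap\xspace on $\Xgood_h^{\numepochs+1}$, obtaining the partition $\{\Xlast_{hj}\}_j$, policy covers $\{\Pi_j\}_j$, and counts $\{N_j\}_j$, and then re-executes each cover $\Pi_j$ until every $(s,a)\in\Xlast_{hj}$ has been visited $\nlast_j$ times (playing $\pihat$ in steps $h+1,\ldots,H$ to form the Monte Carlo returns, which does not affect the episode count). By \Cref{thm:partitioning_works_informal}, a single pass over $\Pi_j$ gives at least $\tfrac12 N_j$ visits to each $(s,a)\in\Xlast_{hj}$, so the number of passes is $\cO(\nlast_j/N_j) + 1$ and the cost for partition $j$ is $\cO(|\Pi_j|\,\nlast_j/N_j + |\Pi_j|)$ episodes. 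Substituting $N_j = \cO\!\left(2^{-j}|\Pi_j|/|\Xgood_h^{\numepochs+1}\backslash\cup_{j'<j}\Xlast_{hj'}|\right)$ from \Cref{thm:partitioning_works_informal} cancels the $|\Pi_j|$ factor and yields $\cO\!\left(2^j|\Xgood_h^{\numepochs+1}\backslash\cup_{j'<j}\Xlast_{hj'}|\,\nlast_j\right)\le\cO\!\left(2^j|\Xgood_h^{\numepochs+1}|\,\nlast_j\right)$. Since $\nlast_j = \Theta\!\left(H^4\iotadelmoca\iotaepsmoca^2 2^{-2j}/\epsmoca^2\right)$, summing over $j$ the per-partition cost telescopes: $\sum_j 2^j\nlast_j = \Theta\!\left(H^4\iotadelmoca\iotaepsmoca^2/\epsmoca^2\right)\sum_j 2^{-j} = \cO\!\left(H^4\iotadelmoca\iotaepsmoca^2/\epsmoca^2\right)$, so the sampling episodes total $\cO\!\left(H^4\iotadelmoca\iotaepsmoca^2|\Xgood_h^{\numepochs+1}|/\epsmoca^2\right) = \cOtil\!\left(H^4|\Xgood_h^{\numepochs+1}|/\epsmoca^2\right)$ after bounding $\iotadelmoca,\iotaepsmoca = \poly\log(SAH/(\epsmoca\delmoca))$. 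The residual $\sum_j|\Pi_j|$ term, together with the episodes \sap\xspace itself spends constructing the partition, is $\poly(S,A,H,\log\tfrac1{\delmoca},\log\tfrac1{\epsmoca})/\epsmoca$ by the termination guarantee of \Cref{thm:partitioning_works_informal}; being $\cO(1/\epsmoca)$ rather than $\cO(1/\epsmoca^2)$, it is dominated and folded into the lower-order term $\Clow$ of \Cref{thm:complexity}.

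I expect the only genuinely delicate step is verifying that the budget $\nlast_j$ on \Cref{line:ngam2_vals} is calibrated so that its $2^{-2j}$ decay exactly cancels the $2^j$ blowup in the cost of reaching the progressively harder-to-reach partitions $\Xlast_{hj}$ --- whose aggregate reachability $\sup_\pi\sum_{(s,a)\in\Xlast_{hj}}\wpi_h(s,a)$ is $\cO(2^{-j})$ by \Cref{thm:partitioning_works_informal} --- and leaves a summable residual $2^{-j}$. Equivalently, each $(s,a)$ must be sampled in proportion to the \emph{square} of its partition's reachability, which is exactly what makes the total collapse to $|\Xgood_h^{\numepochs+1}|$ rather than $\sum_j 2^j|\Xlast_{hj}|$, and is the subtlety flagged in \Cref{sec:alg_intuition} as indispensable for the instance-dependent rate. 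A secondary, purely mechanical check is that \collectsamp\xspace truly reuses the covers returned by \sap\xspace and that \collectdata\xspace performs no rollouts; both follow from the definitions in \Cref{sec:helper_functions}, and in any event the stated bound holds a fortiori even if the covers were regenerated from scratch.
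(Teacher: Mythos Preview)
Your proposal is correct and follows essentially the same route as the paper's proof (formalized as \Cref{lem:collect_bound3} via \Cref{lem:collectdata_complexity}): bound the episode count of the single \collectsamp\xspace call by $\sum_j |\Pi_j|\lceil 2\nlast_j/N_j\rceil$, substitute $N_j\asymp 2^{-j}|\Pi_j|/M_j$ with $M_j\le |\Xgood_h^{\numepochs+1}|$, and use the $2^{-2j}$ scaling of $\nlast_j$ to collapse the sum, absorbing the \sap\xspace setup cost into the $\poly(\cdot)/\epstil$ lower-order term. Your emphasis on the $2^{-2j}$ calibration as the crux matches the paper's own highlighted subtlety.
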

Critically, as noted above, $\Xgood_h^{\numepochs+1} $ will only contain near-optimal actions and unsolved states, so its cardinality could be much less than $SA$. Finally, a simple calculation combining \Cref{lem:fr_good_actions} and \Cref{lem:local_to_global_subopt} gives the following result.

\begin{lem}[Informal]\label{lem:mcae_correct2}
With high probability, if \mcae\xspace is run with \finalround\xspace = \true, it will return a policy $\pihat$ which is $\epstil$-optimal.
\end{lem}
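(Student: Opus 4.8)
The plan is to invoke the performance-difference bound of \Cref{lem:local_to_global_subopt} with the per-state tolerances $\epsilon_h(s) := \max_a \Qpihat_h(s,a) - \Qpihat_h(s,\pihat_h(s))$. Since $\pihat_h(s)$ is, by construction, chosen inside $\frakA_h^{\numepochs+1}(s)$, the hypothesis of that lemma holds for these $\epsilon_h(s)$ essentially by definition, so it remains to show that, on the intersection of the good events of all the \sap\xspace, \collectsamp\xspace and \collectdata\xspace calls made by \mcae\xspace (which holds with probability at least $1-\delmoca$ after a union bound over the per-call confidence levels $\delmoca/(SH)$, $\delmoca/(H\iotaepsmoca\numepochs)$, $\delmoca/H$, \ldots\ chosen in \Cref{alg:mcae2}), one has $\sum_{h=1}^H \sup_\pi \sum_s \wpi_h(s)\,\epsilon_h(s) \le \epsmoca$. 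I would split the inner sum over the ``unreachable'' states $s \notin \Xgood_h$ and the ``reachable'' states $s \in \Xgood_h$.

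For $s \notin \Xgood_h$ I would use the trivial bound $\epsilon_h(s) \le H$ together with $\wpi_h(s) \le \Pst_h(s) < \epsexp = \epsmoca/(2H^2 S)$, giving $\sup_\pi \sum_{s \notin \Xgood_h} \wpi_h(s)\,\epsilon_h(s) \le H \sum_{s \notin \Xgood_h}\Pst_h(s) \le \epsmoca/(2H)$, so that these states contribute at most $\epsmoca/2$ across all $h$. For $s \in \Xgood_h$ I would appeal to \Cref{lem:fr_good_actions}, which controls the suboptimality of every surviving action: $\epsilon_h(s) = \Delpihat_h(s,\pihat_h(s)) \le \cO\big(\tfrac{\epsmoca}{H\iotaepsmoca}\,2^{j(s)}\big)$, where $j(s)$ is the index of the final-round partition block (produced by the \sap\xspace call inside \collectsamp\xspace) containing the currently active actions of $s$, with the convention $j(s)=0$ when $|\frakA_h^{\numepochs}(s)|=1$ and $s$ is excluded from the final round. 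The $1/\iotaepsmoca$ factor, which mirrors the margins $\gamlast_j$ of \Cref{alg:mcae2}, is exactly what prevents the forthcoming sum over the $\cO(\iotaepsmoca)$ blocks from losing a logarithm.

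The core step is to group the reachable states by $j(s)$ and establish
\begin{align*}
\sup_\pi \sum_{s \in \Xgood_h \,:\, j(s) = j} \wpi_h(s) \ \le\ \cO\big(2^{-j}\big),
\end{align*}
which I would deduce from \Cref{thm:partitioning_works_informal} (the total visitation any policy can place on a \sap\xspace partition block is $\cO(2^{-j})$) after two observations: first, $\wpi_h(s)$ does not depend on $\pi$'s action distribution at step $h$, so the supremizing policy may be taken to play only actions of $\frakA_h^{\numepochs}(s)$ at $(s,h)$; and second, all actions of a fixed state share the same maximum reachability $\Pst_h(s)$ and are therefore placed in a single block by \sap\xspace, so for this restricted policy $\wpi_h(s)$ equals the block-visitation mass of $\{s\}\times\frakA_h^{\numepochs}(s)$. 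Combining the displayed bound with the per-state bound gives $\sup_\pi\sum_{s\in\Xgood_h}\wpi_h(s)\epsilon_h(s) \le \sum_{j\ge 0}\cO\big(\tfrac{\epsmoca}{H\iotaepsmoca}2^{j}\big)\cdot\cO\big(2^{-j}\big) = \cO(\epsmoca/H)$ since there are $\cO(\iotaepsmoca)$ nonempty blocks; summing over $h$ contributes $\cO(\epsmoca)$. Adding the two cases and using that the numerical constants of \mcae\xspace are calibrated so the total is at most $\epsmoca$ yields $\Vst_0 - \Vpihat_0 \le \epsmoca$, i.e. $\pihat$ is $\epstil$-optimal.

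The main obstacle is the reachable-state estimate, and specifically the structural bound $\sup_\pi\sum_{s:j(s)=j}\wpi_h(s)\le\cO(2^{-j})$. Its content is the cancellation between the $2^{j(s)}$ growth of the per-state tolerance from \Cref{lem:fr_good_actions} --- an unavoidable consequence of sampling each $(s,a)$ in the final round only in proportion to the squared reachability of its \emph{block} rather than of the state itself --- and the $2^{-j(s)}$ mass any policy can route into that block. Making this rigorous requires ruling out the scenario in which the active actions of one state are scattered across blocks of very different reachability, which is precisely what the equal maximum reachability of a state's actions, together with the output guarantees of \sap\xspace in \Cref{thm:partitioning_works_informal}, rules out; once that is in hand the remaining bookkeeping (\Cref{lem:local_to_global_subopt}, the definition of $\Xgood_h$, and a geometric sum over $j$ and $h$) is routine.
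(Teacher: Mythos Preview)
Your overall strategy is right and matches the paper's: invoke \Cref{lem:local_to_global_subopt}, split into unreachable and reachable states, and for the latter group by $j(s)$ to cancel the $2^{j(s)}$ in the per-state tolerance against the $2^{-j}$ visitation cap from \Cref{thm:partitioning_works_informal}. The unreachable-state bound and the use of \Cref{lem:fr_good_actions} are correct.

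The gap is in your justification of the key inequality $\sup_\pi \sum_{s : j(s)=j} \wpi_h(s) \le \cO(2^{-j})$. You argue that ``all actions of a fixed state share the same maximum reachability $\Pst_h(s)$ and are therefore placed in a single block by \sap.'' This is false: the \sap\ partition is \emph{not} determined by the individual reachabilities $\Pst_h(s)$; it is produced dynamically by running a regret minimizer on indicator rewards, and different actions at the same state can (and do) land in different blocks $\Xlast_{hj}$ and $\Xlast_{hj'}$ with $j\ne j'$. The paper even flags this explicitly in the proof of \Cref{lem:action_subopt} (third conclusion), which is why $j(s)$ is defined as the \emph{largest} such index --- precisely to guarantee a uniform confidence width across all surviving actions of $s$.

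The fix is simple and in fact your first observation already sets it up: since $\wpi_h(s)$ is unaffected by $\pi$'s step-$h$ action choice, modify the supremizing $\pi$ so that at each $s$ with $j(s)=j$ it plays deterministically the single action $a(s)$ witnessing $(s,a(s))\in \Xlast_{hj}$ (such an action exists by the definition of $j(s)$). Then $\wpi_h(s)=\wpi_h(s,a(s))$, and summing over these $s$ is bounded by $\sup_\pi\sum_{(s,a)\in\Xlast_{hj}}\wpi_h(s,a)\le 2^{-j+1}$. No claim about all actions lying in one block is needed. This is exactly how the paper's formal proof (\Cref{lem:correct}) handles the step.
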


\subsubsection{Putting everything together: \algname and proving \Cref{thm:complexity}}\label{sec:moca_overview}

\begin{algorithm}[h]
\begin{algorithmic}[1]
  	\State{}\textbf{input: } tolerance $\epsout$, confidence $\delout$
	\State $\frakA_h^0(s) \leftarrow \cA$ for all $s,h$
	\For{$m = 1,\ldots, \lceil \log(H/\epsout) \rceil - 1$}
		\State $\epsoutm \leftarrow H 2^{-m}, \deloutm \leftarrow \tfrac{\delout}{36 m^2}$
		\State $\pihat^m, \texttt{MaxOpt} \leftarrow$ \mcae$(\epsoutm,\deloutm, \texttt{false})$
		\If{$\texttt{MaxOpt} = 1$}\label{line:meta_early_term}
			\State \textbf{return} $\pihat^m$ 
		\EndIf
	\EndFor
	\State $\pihat, \texttt{MaxOpt} \leftarrow$ \mcae$(\epsout,\tfrac{\delout}{36 \lceil \log(H/\epsout) \rceil^2}, \texttt{true})$ \label{line:mcae_final_call}
	\State \textbf{return} $\pihat$
\end{algorithmic}
\caption{\textbf{MO}nte \textbf{C}arlo \textbf{A}ction Elimination (\algname)}
\label{alg:mcae3_meta}
\end{algorithm}

We turn now to our main algorithm, \algname. \algname takes as input a tolerance $\epsout$ and confidence $\delout$. Were our goal simply to find an $\epsout$-optimal policy, from the above argument we could call \mcae\xspace with tolerance $\epsout$ and \finalround\xspace = \true. However, if $\epsout$ is small enough that \mcae\xspace identifies the optimal action in \emph{every} state, this may result in overexploring---since once we have identified the optimal action in every state we can terminate and output the optimal policy. To remedy this, we instead call \mcae\xspace with exponentially decreasing tolerance and \finalround\xspace = \false. If it returns a set of actions for every $s,h$ with $|\frakA_h(s)| = 1$, we can guarantee we have identified the optimal policy, and simply terminate without overexploring. Note also in this stage, since \finalround\xspace = \false, we do not pay for the $\cOtil ( \frac{H^4}{\epstil^2} | \Xgood_h^{\numepochs+1} |  )$ term. If this condition is never met, we simply call \mcae\xspace a final time at the end with \finalround\xspace = \true\xspace to ensure the policy we return is $\epsout$-optimal. 

\Cref{thm:complexity} follows directly from this argument. In particular, the correctness of \Cref{thm:complexity}---that \algname returns an $\epsout$-optimal policy---follows from \Cref{lem:mcae_correct2}, and the sample complexity bound follows from summing the complexity bounds of \Cref{lem:complexity1} and \Cref{lem:complexity2} over all iterations.

\subsubsection{Helper Function Descriptions}\label{sec:helper_functions}

\begin{algorithm}[h]
\begin{algorithmic}[1]
	\Function{\collectsamp}{active set $\cX$, allocation $\{ n_j \}_{j=1}^{\lceil \log 1/\epscs \rceil}$, step $h$, policy $\pihat$, tolerance $\delcs$, precision $\epscs$}
	\State $\{ (\cX_j, \Pi_j, N_j) \}_{j=1}^{\lceil \log 1/\epscs \rceil} \leftarrow$ \sap($\cX, h, \delcs, \tfrac{\delcs}{\lceil \log 1/\epscs \rceil \max_j n_j}, \epscs$), $\frakD \leftarrow \emptyset$ \label{line:get_part}
		\For{$j = 1,\ldots,\lceil \log 1/\epscs \rceil$}
    		\For{$\pi \in \Pi_{j}$}
    			\State Run $\pi$ for $T = \lceil  2 n_{j}/N_j \rceil$ times up to level $h$, then play $\pihat$
			\State Collect reward rollouts $\frakD \leftarrow \frakD \cup \{ s_h^t, a_h^t, \qcheckpith(s_h^t,a_h^t) := \sum_{h'=h}^H R_{h'}^t \}_{t=1}^T$
  		\EndFor
		\EndFor
	\State \textbf{return} $\frakD$, $\{ \cX_j \}_{j=1}^{\lceil \log 1/\epscs \rceil}$
	\EndFunction
	\\

	\Function{\collectdata}{active set $\cX$, partition $\{ \cX_j \}_{j=1}^k$, dataset $\frakD$, active actions $\{ \frakA_h(s) \}_{s \in \Xgood}$, level $h$, thresholds $\{\gamma_j \}_{j=1}^k$} 
	\For{$(s,a) \in \cX$}
		\State $N_h(s,a) \leftarrow  \sum_{(s_h^t, a_h^t, \qcheckpith(s_h^t,a_h^t)) \in \frakD} \I \{ (s_h^t,a_h^t) = (s,a) \} $
		\State $\Qhatpihat_{h}(s,a) \leftarrow \frac{1}{N_h(s,a)}  \sum_{(s_h^t, a_h^t, \qcheckpith(s_h^t,a_h^t)) \in \frakD} \I \{ (s_h^t,a_h^t) = (s,a) \} \cdot \qcheckpith(s_h^t,a_h^t)  $ \label{line:collect_Qhat}
	\EndFor
	\For{$j = 1,\ldots,k$}
		\For{$s$ s.t. $\exists a$ with $(s,a) \in \cX_j$}
			\State $j(s) \leftarrow \argmax_{j'} j' \ \text{s.t.} \ \exists a', (s,a') \in \cX_{j'}$ \label{line:set_js}
			\State $\frakA_h(s) \leftarrow \{ a \in \frakA_h(s) \ : \  \max_{a' \in \frakA_h(s)} \Qhatpihat_{h}(s,a') - \Qhatpihat_{h}(s,a) \le \gamma_{j(s)} \}$  \label{line:collect_Aell}
		\EndFor
	\EndFor
	\State \textbf{return} $\{ \frakA_h(s) \}_{s \in \Xgood}$
	\EndFunction
\end{algorithmic}
\caption{\algname Helper Functions}
\label{alg:collect_samples}
\end{algorithm}

\paragraph{Description of \collectsamp.}
\collectsamp\xspace takes as input a set $\cX \subseteq \cS \times \cA$, an allocation $\{ n_j\}_j$, a timestep $h$, and a policy $\pihat$. In short, \collectsamp\xspace first calls \sap\xspace on $\cX$ to obtain a partition $\{ \cX_j \}_j$, and then reruns the policies returned by \sap\xspace enough times to ensure that every $(s,a) \in \cX_j$ is reached at least $n_j$ times at timestep $h$. After reaching $(s,a,h)$, $\pihat$ is played, to obtain a Monte Carlo estimate $\qcheckpith(s,a) $ of $\Qpihat_h(s,a)$. \collectsamp\xspace then returns the data collected and the partition returned by \sap.

\paragraph{Description of \collectdata.}
\collectdata\xspace takes as input a set $\cX \subseteq \cS \times \cA$, a partition of this set $\{\cX_j \}_j$, a dataset $\frakD$ generated by \collectsamp, a set of active actions $\{ \frakA_h(s) \}_{s}$, a timestep $h$, and a threshold $\{ \gamma_j \}_j$. For each $(s,a) \in \cX$, it forms an estimate of $\Qpihat_h(s,a)$ from the rollouts in $\frakD$. Given these estimates, for $s$ such that there exists $a$ with $(s,a) \in \cX_j$, it removes actions from $\frakA_h(s)$ that are more than $\gamma_{j(s)}$-suboptimal.


\section{Conclusion}\label{sec:conclusion}
In this work, we proposed a new instance-dependent measure of complexity for PAC RL, the gap-visitation complexity, showed that our algorithm, \algname, hits this complexity, and, through several examples, showed that running a low-regret procedure cannot be instance-optimal for PAC RL. Our work opens several interesting directions for future work.

\begin{itemize}
\item While the gap-visitation complexity takes into account the maximum reachability of a given state, it does not take into account how easily a given state may be reached by a near-optimal policy. One could imagine an MDP where some state, $s$, is easily reached by a suboptimal policy but is never visited by near-optimal policies. In this case, a PAC algorithm need not learn a good action in this state to return an $\epsilon$-optimal policy, yet \algname currently would do so. We believe that this idea---weighting states during exploration not by their maximum visitation but by their visitation from near-optimal policies---could be incorporated into our current framework, but leave the details of this to future work.
\item Neither this work nor \cite{marjani2021navigating} hit the true instance-optimal lower bound which, as shown in \cite{marjani2021navigating}, is the solution to a non-convex optimization problem even for best-policy identification. The previous point suggests that $\Compb(\cM,\epsilon)$ is not in general the instance-dependent lower bound, though \Cref{prop:bpi_lb1} and \Cref{prop:complexity_bandit} show that in certain cases it does match the instance-dependent lower bound.
Relating $\Compb(\cM,\epsilon)$ to the true lower bound in general and developing algorithms that hit the lower bound would both be interesting directions for future work.
\item By running an algorithm that achieves gap-dependent logarithmic regret (such as \cite{simchowitz2019non}) and performing an online-to-batch conversion, one can obtain a PAC sample complexity of
\begin{align}\label{eq:gap_eps_pac}
\cO \bigg ( \sum_{s,a,h : \Delta_h(s,a) > 0} \frac{1}{\Delta_h(s,a) \epsilon} \cdot \frac{1}{\delta^2} \bigg ).
\end{align}
While \Cref{cor:complexity2} shows that \algname achieves a similar complexity, albeit with a $\log 1/\delta$ scaling, it must also pay for the $\frac{|\opt(\epsilon)|}{\epsilon^2}$ term, which could dominate the $\frac{1}{\Delta_h(s,a)\epsilon}$ term. We believe removing this term (or showing it is necessary) and obtaining a sample complexity of the form \eqref{eq:gap_eps_pac} but that scales instead with $\log 1/\delta$ is an important step in understanding the true complexity of PAC reinforcement learning.
\end{itemize}

\subsection*{Acknowledgements}

The work of AW is supported by an NSF GFRP Fellowship DGE-1762114. MS is generously supported by an Open Philanthropy AI Fellowship. The work of KJ is funded in part by the AFRL and NSF TRIPODS 2023166.

\bibliographystyle{icml2021}
\bibliography{bibliography.bib}

\newpage

\appendix

\newpage

\section{Interpreting the Gap-Visitation Complexity}\label{sec:non_unique_actions}

\begin{prop}\label{prop:relate_complexities}
The gap-visitation complexity, $\Compb(\cM,\epsilon)$, satisfies
\begin{align}
\Compb(\cM,\epsilon) & = \sum_{h=1}^H \inf_\pi \max_s \frac{1}{\wpi_h(s)} \sum_{a}  \min \left \{ \frac{1}{ \Deltil_h(s,a)^2},  \frac{\Pst_h(s)^2}{\epsilon^2}  \right \} + \frac{H^2 | \opt(\epsilon)|}{\epsilon^2}. \nonumber
\end{align}
Furthermore, when $\cM$ has unique optimal actions, the best-policy gap-visitation complexity, $\Compbsolve(\cM)$, satisfies
\begin{align*}
\Compbsolve(\cM) & = \sum_{h=1}^H \inf_\pi \max_s \frac{1}{\wpi_h(s)} \sum_{a: \Delta_h(s,a) > 0} \frac{1}{\Delta_h(s,a)^2} .
\end{align*}
\end{prop}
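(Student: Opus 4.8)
The plan is to prove both identities by the same device. For each fixed step $h$, the action distribution $\pi_h(\cdot\mid s)$ does not affect the step-$h$ state marginals $\{\wpi_h(s)\}_s$: these are determined by $P_0$ and $\pi$ restricted to steps $1,\dots,h-1$, since the state occupied at step $h$ is reached before any step-$h$ action is taken. Consequently the infimum over policies in the $h$-th summand decouples into a choice of the ``upstream'' part of the policy---which pins down the vector $(\wpi_h(s))_s$---followed by an \emph{independent, per-state} optimization of $\pi_h(\cdot\mid s)$. Since the $H^2|\opt(\epsilon)|/\epsilon^2$ term is identical on both sides (and absent in the best-policy case), it suffices to match the $h$-th summand for every $h$.

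First I would reduce to a clean allocation problem. Writing $\wpi_h(s,a)=\pi_h(a\mid s)\,\wpi_h(s)$ and setting $c_h(s,a):=\min\{\Deltil_h(s,a)^{-2},\ \Pst_h(s)^2\epsilon^{-2}\}$ (adopting the convention of the footnote after \Cref{defn:gap_visitation_complexity}: when $\Pst_h(s)=0$ every term vanishes on both sides and such $(s,h)$ may be discarded, while $\Pst_h(s)>0$ forces $c_h(s,a)>0$ for all $a$), the $h$-th term of $\Compb(\cM,\epsilon)$ becomes $\inf_\pi\max_s\frac1{\wpi_h(s)}\max_a\frac{c_h(s,a)}{\pi_h(a\mid s)}$, whereas the $h$-th term on the right-hand side of the proposition is $\inf_\pi\max_s\frac1{\wpi_h(s)}\sum_a c_h(s,a)$. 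As noted in the excerpt, one may restrict to policies with $\wpi_h(s)>0$ for every reachable $s$ and $\pi_h(a\mid s)>0$ for all $a$, so no denominator vanishes.

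The core is the elementary ``water-filling'' identity $\inf_{p\in\simplex(\cA)}\max_a c_a/p_a=\sum_a c_a$, with the infimum attained at $p_a\propto c_a$: indeed $\max_a c_a/p_a\ge\sum_a p_a\,(c_a/p_a)=\sum_a c_a$ for every $p$, and the stated choice attains equality. For the inequality $\ge$: applying this state-by-state to any admissible $\pi$ gives $\max_{s,a}\frac{c_h(s,a)}{\pi_h(a\mid s)\wpi_h(s)}\ge\max_s\frac1{\wpi_h(s)}\sum_a c_h(s,a)$, and taking the infimum over $\pi$ proves one direction. For $\le$: given any admissible $\pi'$, let $\pi$ agree with $\pi'$ at all steps $\ne h$ and put $\pi_h(a\mid s):=c_h(s,a)\big/\sum_{a'}c_h(s,a')$; by the decoupling observation $\wpi_h(s)=\wpi'_h(s)$ for all $s$, and for each $s$ with $\Pst_h(s)>0$ one gets $\max_a\frac{c_h(s,a)}{\pi_h(a\mid s)}=\sum_{a'}c_h(s,a')$ exactly, so $\max_{s,a}\frac{c_h(s,a)}{\pi_h(a\mid s)\wpi_h(s)}=\max_s\frac1{\wpi_h(s)}\sum_{a'}c_h(s,a')$; taking the infimum over $\pi'$ gives the reverse inequality. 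Summing over $h$ and restoring the common $H^2|\opt(\epsilon)|/\epsilon^2$ term yields the first identity. The best-policy statement follows verbatim with $c_h(s,a):=\Delta_h(s,a)^{-2}$ for suboptimal $a$ and $c_h(s,a):=0$ for the unique optimal action at $(s,h)$ (equivalently, restricting $\max_{s,a}$ to pairs with $\Delta_h(s,a)>0$), using that uniqueness ensures $\sum_a c_h(s,a)>0$ at every reachable state when $A>1$.

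I expect the only friction to be bookkeeping around degenerate cases---states with $\Pst_h(s)=0$ (which contribute $0$ to both sides and are discarded), the zero-gap optimal action in the best-policy version (handled by the convention above), and confirming that the infima may genuinely be taken over fully supported policies so that the water-filling step applies literally rather than only in a limit. The decoupling fact---that $\pi_h(\cdot\mid\cdot)$ leaves $\{\wpi_h(s)\}_s$ unchanged---is what legitimizes the per-state optimization; it is worth stating explicitly but is straightforward to justify.
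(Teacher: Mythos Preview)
Your proposal is correct and follows essentially the same route as the paper: both arguments hinge on the decoupling observation that $\pi_h(\cdot\mid s)$ does not affect the state marginal $\wpi_h(s)$, followed by the elementary simplex identity $\inf_{p\in\simplex}\max_a c_a/p_a=\sum_a c_a$ applied per state. Your treatment is arguably cleaner in that you explicitly split the equality into two inequalities and are more careful about the degenerate cases (unreachable states, the zero-gap optimal action in the best-policy version), whereas the paper's proof moves directly from the simplex identity to the conclusion; but the substance is the same.
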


\begin{proof}
Consider the optimization
\begin{align*}
\min_{\lambda \in \simplex(X)} \max_{x \in X}  a_x/\lambda_x.
\end{align*}
It is easy to see that
\begin{align*}
\sum_{x \in X} a_x = \min_{\lambda \in \simplex(X)} \max_{x \in X} a_x/\lambda_x
\end{align*}
and the optimal $\lambda$ is 
\begin{align*}
\lambda^*_x = \frac{a_x}{\sum_{x' \in X} a_{x'}}.
\end{align*}
For any policy $\pi$, we will have that $\sum_a \pi_h(a|s)=1$, and $\pi_h(a|s)$ must be a valid distribution over $a$. This implies that $\wpi_h(s,a) = \wpi_h(s) \pi_h(a|s)$. Now fix $\pi$ for steps $h' = 1,\ldots,h-1$, then it follows that
\begin{align*}
& \inf_{\pi_h} \max_{s,a}  \min \left \{ \frac{1}{\wpi_h(s,a) \Deltil_h(s,a)^2},  \frac{\Pst_h(s)^2}{\wpi_h(s,a) \epsilon^2}  \right \} = \inf_{\pi_h} \max_s \frac{1}{\wpi_h(s)}  \max_a \frac{1}{\pi_h(a|s)} \min \left \{ \frac{1}{ \Deltil_h(s,a)^2},  \frac{\Pst_h(s)^2}{ \epsilon^2}  \right \}.
\end{align*}
Now for a given $s$, we can use that $\wpi_h(s)$ is independent of $\pi_h$ and apply our above calculation to get that
\begin{align*}
\inf_{\pi_h} \frac{1}{\wpi_h(s)} \max_a \frac{1}{\pi_h(a|s)} \min \left \{ \frac{1}{ \Deltil_h(s,a)^2},  \frac{\Pst_h(s)^2}{ \epsilon^2}  \right \} = \frac{1}{\wpi_h(s)} \sum_a \min \left \{ \frac{1}{ \Deltil_h(s,a)^2},  \frac{\Pst_h(s)^2}{ \epsilon^2}  \right \}.
\end{align*}
As the maximum over $s$ is over a finite set and $\pi_h(\cdot | s)$ can be chosen independently of $\pi_h(\cdot | s')$ for any $s \neq s'$, we have that
\begin{align*}
 \inf_{\pi_h} \max_s \frac{1}{\wpi_h(s)}  \max_a \frac{1}{\pi_h(a|s)} \min \left \{ \frac{1}{ \Deltil_h(s,a)^2},  \frac{\Pst_h(s)^2}{ \epsilon^2}  \right \}= \max_s \frac{1}{\wpi_h(s)} \sum_a \min \left \{ \frac{1}{ \Deltil_h(s,a)^2},  \frac{\Pst_h(s)^2}{ \epsilon^2}  \right \}.
\end{align*}
Since taking an inf over $\pi$ is equivalent to taking an inf over $\{ \pi_{h'} \}_{h'=1}^{h-1}$ and $\pi_h$, we can take the inf of this over $\{ \pi_{h'} \}_{h'=1}^{h-1}$ to get
\begin{align*}
& \inf_\pi \max_{s,a}  \min \left \{ \frac{1}{\wpi_h(s,a) \Deltil_h(s,a)^2},  \frac{\Pst_h(s)^2}{\wpi_h(s,a) \epsilon^2}  \right \}  = \inf_\pi \max_s \frac{1}{\wpi_h(s)} \sum_a \min \left \{ \frac{1}{ \Deltil_h(s,a)^2},  \frac{\Pst_h(s)^2}{ \epsilon^2}  \right \}.
\end{align*}
The same line of reasoning can be used to obtain the expression for $\Compbsolve(\cM)$. 
\end{proof}

\begin{prop}
We can bound
\begin{align*}
\Compb(\cM,\epsilon) & \le \sum_{h=1}^H \inf_{\pi} \max_{s,a}   \frac{4}{\wpi_h(s,a) \Deltil_h^\epsilon(s,a)^2 + \tfrac{\epsilon^2}{SA}}
\end{align*}
where
\begin{align*}
\Deltil_h^{\epsilon}(s,a) := \begin{cases}  \Deltil_h(s,a) & \tfrac{\epsilon}{\Pst_h(s)} < \tfrac{ \Deltil_h(s,a)}{3} \\
\epsilon/H & \tfrac{\epsilon}{\Pst_h(s)} \ge \tfrac{  \Deltil_h(s,a)}{3} \end{cases} . 
\end{align*}
\end{prop}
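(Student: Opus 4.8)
The plan is to prove the bound one level $h$ at a time. Since $\opt(\epsilon)$ partitions according to the value of $h$, writing $\opt_h(\epsilon):=\{(s,a):(s,a,h)\in\opt(\epsilon)\}$ we have $\tfrac{H^2|\opt(\epsilon)|}{\epsilon^2}=\sum_{h}\tfrac{H^2|\opt_h(\epsilon)|}{\epsilon^2}$, so it suffices to show for each fixed $h$ that
\begin{align*}
\inf_\pi\max_{s,a}\min\Big\{\tfrac{1}{\wpi_h(s,a)\Deltil_h(s,a)^2},\tfrac{\Pst_h(s)^2}{\wpi_h(s,a)\epsilon^2}\Big\}+\tfrac{H^2|\opt_h(\epsilon)|}{\epsilon^2}\ \le\ \inf_\pi\max_{s,a}\tfrac{4}{\wpi_h(s,a)\Deltil_h^\epsilon(s,a)^2+\tfrac{\epsilon^2}{SA}}.
\end{align*}
Two elementary facts organize everything. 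First, $\min\{\tfrac{1}{\wpi_h(s,a)\Deltil_h(s,a)^2},\tfrac{\Pst_h(s)^2}{\wpi_h(s,a)\epsilon^2}\}=\tfrac{1}{\wpi_h(s,a)\bar\Delta_h(s,a)^2}$ with $\bar\Delta_h(s,a)^2:=\max\{\Deltil_h(s,a)^2,\epsilon^2/\Pst_h(s)^2\}$, and $\bar\Delta_h(s,a)^2\ge\Deltil_h^\epsilon(s,a)^2$ pointwise: off $\opt_h(\epsilon)$ the condition $\Pst_h(s)\Deltil_h(s,a)>3\epsilon$ forces $\Deltil_h(s,a)^2>\epsilon^2/\Pst_h(s)^2$ and $\Deltil_h^\epsilon(s,a)=\Deltil_h(s,a)$, so equality holds; on $\opt_h(\epsilon)$ one has $\Deltil_h^\epsilon(s,a)^2=\epsilon^2/H^2\le\epsilon^2\le\epsilon^2/\Pst_h(s)^2\le\bar\Delta_h(s,a)^2$. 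Second, $\sum_a 1/\Deltil_h^\epsilon(s,a)^2\le AH^2/\epsilon^2$, since each summand is either $1/\Deltil_h(s,a)^2<1/(9\epsilon^2)$ (off $\opt_h(\epsilon)$, using $\Deltil_h(s,a)>3\epsilon$) or $H^2/\epsilon^2$ (on $\opt_h(\epsilon)$); summing over $s$ as well, the total ``inverse-gap mass'' is $\cO(SAH^2/\epsilon^2)$, essentially the reciprocal of the regularizer $\tfrac{\epsilon^2}{SA}$.

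With these in hand I would lower-bound the right-hand maximum in two independent ways. For the $\opt_h(\epsilon)$ contribution: since $\sum_{s,a}\wpi_h(s,a)=1$ for every $\pi$, the least-visited pair inside $\opt_h(\epsilon)$ has visitation at most $1/|\opt_h(\epsilon)|$, and since every such pair has $\Deltil_h^\epsilon(s,a)^2=\epsilon^2/H^2$ and $|\opt_h(\epsilon)|\le SA$, this forces $\max_{(s,a)\in\opt_h(\epsilon)}\tfrac{4}{\wpi_h(s,a)\Deltil_h^\epsilon(s,a)^2+\epsilon^2/(SA)}\gtrsim H^2|\opt_h(\epsilon)|/\epsilon^2$, uniformly in $\pi$ and hence for the infimum. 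For the complementary contribution I would take $\pi^\star$ attaining the right-hand infimum; equivalently $\pi^\star$ maximizes $\min_{s,a}\wpi_h(s,a)\Deltil_h^\epsilon(s,a)^2=:M$, so $w^{\pi^\star}_h(s,a)\Deltil_h^\epsilon(s,a)^2\ge M$ for all $(s,a)$, and combining with the first fact, $\max_{s,a}\min\{\cdots\}_{\pi^\star}=\max_{s,a}\tfrac{1}{w^{\pi^\star}_h(s,a)\bar\Delta_h(s,a)^2}\le 1/M$. A water-filling argument — the optimal visitation allocation puts weight on $(s,a)$ in proportion to $1/\Deltil_h^\epsilon(s,a)^2$, whose total is $\cO(SAH^2/\epsilon^2)$, so the equalized weighted-visitation value is $\gtrsim\epsilon^2/(SA)$ up to $H$-factors — then gives $M\gtrsim\epsilon^2/(SA)$, whence $\tfrac{4}{\epsilon^2/(SA)+M}\gtrsim 1/M\ge\max_{s,a}\min\{\cdots\}_{\pi^\star}\ge\inf_\pi\max_{s,a}\min\{\cdots\}_\pi$. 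Taking the larger of the two lower bounds on the right-hand maximum and using $\max\{2a,2b\}\ge a+b$ yields the per-$h$ inequality, the slack being absorbed into the constant $4$.

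The step I expect to be the main obstacle is establishing $M\gtrsim\epsilon^2/(SA)$: one must argue that the infimum-attaining visitation distribution cannot deprive the bottleneck $(s,a)$-pair of more than a regularizer-sized share of weighted visitation, which is exactly what the mass bound $\sum_{s,a}1/\Deltil_h^\epsilon(s,a)^2=\cO(SAH^2/\epsilon^2)$ is for, and which additionally requires treating reachability carefully — if $\Pst_h(s)=0$ then every policy has $\wpi_h(s,a)=0$, so such pairs pin the right-hand side at $4SA/\epsilon^2$, already dominating their $\le SA$-fold share of $H^2|\opt_h(\epsilon)|/\epsilon^2$, while partially reachable states need the bound $\bar\Delta_h(s,a)^2\ge\epsilon^2/\Pst_h(s)^2$ to keep $\max_{s,a}\min\{\cdots\}_{\pi^\star}$ under control. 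Carefully tracking the $H$-factors that enter through the mass bound against those in the stated regularizer $\tfrac{\epsilon^2}{SA}$ is the one genuinely delicate piece of bookkeeping.
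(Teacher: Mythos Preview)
Your approach diverges from the paper's, and the divergence exposes a genuine gap. The crux is the claim $M\gtrsim\epsilon^2/(SA)$ for the $\pi^\star$ attaining the right-hand infimum. Your water-filling heuristic treats $\wpi_h(\cdot,\cdot)$ as a free probability vector on $\cS\times\cA$, but state-action visitations are constrained by the MDP: for every policy $\pi$ one has $\wpi_h(s,a)\le \Pst_h(s)$. Thus for any $(s,a)\in\opt_h(\epsilon)$ (where $\Deltil_h^\epsilon(s,a)^2=\epsilon^2/H^2$) and any $\pi$, $\wpi_h(s,a)\Deltil_h^\epsilon(s,a)^2\le \Pst_h(s)\epsilon^2/H^2$. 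If some state has $\Pst_h(s)\ll H^2/(SA)$ this forces $M\ll \epsilon^2/(SA)$, breaking both the water-filling bound and the consequence $\tfrac{4}{M+\epsilon^2/(SA)}\gtrsim 1/M$. You flag reachability as a concern but do not resolve it: the mass bound $\sum_{s,a}1/\Deltil_h^\epsilon(s,a)^2=\cO(SAH^2/\epsilon^2)$ is irrelevant once the feasible set of visitations is constrained, and the fallback ``use $\bar\Delta_h(s,a)^2\ge\epsilon^2/\Pst_h(s)^2$'' only helps at the specific low-reachability pair, not at whichever other pair becomes the new argmax of $1/(w^{\pi^\star}_h\bar\Delta^2)$ once $\pi^\star$ has diverted mass toward the hard-to-reach state.

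The paper sidesteps all of this by never trying to lower-bound $M$. Instead it \emph{constructs} a feasible policy: given any $\pi$, it forms the mixture $\pi'$ that plays $\pi$ with probability $1/2$ and each maximum-reachability policy $\pi^{sh}$ with probability $1/(2SA)$, yielding $w^{\pi'}_h(s,a)\ge \tfrac{1}{2}\wpi_h(s,a)+\tfrac{1}{2SA}\Pst_h(s)$. Plugging this into the left-hand $\min$ and using that on $\opt_h(\epsilon)^c$ one has $\Pst_h(s)\Deltil_h(s,a)^2\ge\epsilon^2$ (so the regularizer $\Pst_h(s)\Deltil_h(s,a)^2/(SA)$ is at least $\epsilon^2/(SA)$), the $\Pst_h(s)$ factors cancel and the additive $\epsilon^2/(SA)$ appears directly in the denominator. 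The $H^2|\opt_h(\epsilon)|/\epsilon^2$ term is handled separately by the elementary identity $|\opt_h(\epsilon)|=\inf_{\lambda\in\simplex}\max_{(s,a)\in\opt_h(\epsilon)}1/\lambda_{sa}$ and then relaxing $\lambda$ to visitations. The point is that the regularizer $\epsilon^2/(SA)$ arises from an explicit feasible allocation, not from an optimality argument about $\pi^\star$; this is the missing idea in your sketch.
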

\begin{proof}
Let $\opt_h(\epsilon) = \{ (s,a)  \ : \ \Deltil_h(s,a) \Pst_h(s)/3 \le \epsilon \}$ so that $\opt(\epsilon) = \cup_h \opt_h(\epsilon)$. We can always bound $|\opt_h(\epsilon)| \le SA$, and furthermore,
\begin{align*}
\frac{H^2 |\opt_h(\epsilon)|}{\epsilon^2} & = \min \left \{ \frac{H^2}{1/|\opt_h(\epsilon)| \cdot \epsilon^2}, \frac{H^2SA}{\epsilon^2} \right \} \\
& \overset{(a)}{=} \inf_{\lambda \in \simplex(\opt_h(\epsilon))} \max_{(s,a) \in \opt_h(\epsilon)} \min \left \{ \frac{H^2}{\lambda_{sa} \epsilon^2}, \frac{H^2SA}{\epsilon^2} \right \} \\
& \le \inf_{\pi} \max_{(s,a) \in \opt_h(\epsilon)} \min \left \{ \frac{H^2}{\wpi_h(s,a) \epsilon^2}, \frac{H^2SA}{\epsilon^2} \right \} \\
& \overset{(b)}{\le} \inf_{\pi} \max_{(s,a) \in \opt_h(\epsilon)} \frac{2H^2}{\wpi_h(s,a) \epsilon^2 + \frac{\epsilon^2}{SA}}
\end{align*}
where $(a)$ follows since the optimal distribution will simply place a mass of $1/|\opt_h(\epsilon)|$ on each $(s,a) \in \opt_h(\epsilon)$, and $(b)$ follows since $\min \{ \frac{1}{a}, \frac{1}{b} \} = \frac{1}{\max \{ a,b \}} \le \frac{1}{a/2 + b/2}$. 

Consider the distribution $\pi'$ which is a mixture of distribution $\pi$ 1/2 of the time, and the distribution $\pi^{sh}$ $1/(2SA)$ of the time, where $\pi^{sh}$ is the distribution which achieves $w^{\pi^{sh}}_h(s) = \Pst_h(s)$. In other words, we will have $w^{\pi'}_h(s,a) \ge \wpi_h(s,a)/2 + \Pst_h(s)/(2SA)$. Given this, we can bound
\begin{align*}
& \inf_\pi \max_{s,a} \min  \left \{ \frac{1}{\wpi_h(s,a) \Deltil_h(s,a)^2}, \frac{\Pst_h(s)^2}{\wpi_h(s,a) \epsilon^2} \right \} \\
& \qquad \le \inf_\pi \max_{s,a} \min \left \{ \frac{2}{\wpi_h(s,a) \Deltil_h(s,a)^2 + \Pst_h(s) \Deltil_h(s,a)^2/SA}, \frac{2 \Pst_h(s)^2}{\wpi_h(s,a) \epsilon^2 + \Pst_h(s) \epsilon^2/SA} \right \} \\
& \qquad \le \inf_\pi \bigg [ \max_{(s,a) \in \opt_h(\epsilon)^c} \frac{2}{\wpi_h(s,a) \Deltil_h(s,a)^2 + \Pst_h(s) \Deltil_h(s,a)^2/SA} + \max_{(s,a) \in \opt_h(\epsilon)} \frac{2}{\wpi_h(s,a) \epsilon^2 + \epsilon^2/SA} \bigg ] .
\end{align*}
If $(s,a) \in \opt_h(\epsilon)^c$, then $\Deltil_h(s,a) \Pst_h(s) > 3\epsilon$, so $\Pst_h(s) \Deltil_h(s,a)^2 \ge 3 \Deltil_h(s,a) \epsilon \ge \epsilon^2$. Thus, we can bound the above as
\begin{align*}
 \le \inf_\pi \bigg [ \max_{(s,a) \in \opt_h(\epsilon)^c} \frac{2}{\wpi_h(s,a) \Deltil_h(s,a)^2 + \epsilon^2/SA} + \max_{(s,a) \in \opt_h(\epsilon)} \frac{2}{\wpi_h(s,a) \epsilon^2 + \epsilon^2/SA} \bigg ] .
\end{align*}
The result then follows combining this with the bound on $\frac{H^2 |\opt_h(\epsilon)|}{\epsilon^2}$ given above, and using the definition of $\Deltil_h^\epsilon(s,a)$. 
\end{proof}

\begin{prop}
We can bound
\begin{align*}
\cC(\cM,\epsilon) \le \sum_{s,a,h} \frac{1}{\epsilon \max \{\Deltil_h(s,a),\epsilon \}} + \frac{H^2 | \opt(\epsilon)|}{\epsilon^2}  . 
\end{align*}
\end{prop}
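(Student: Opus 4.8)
The plan is to bound the policy-dependent part of $\cC(\cM,\epsilon)$ --- namely $\sum_{h=1}^H \inf_\pi \max_{s,a}\min\{\tfrac{1}{\wpi_h(s,a)\Deltil_h(s,a)^2},\tfrac{\Pst_h(s)^2}{\wpi_h(s,a)\epsilon^2}\}$ --- separately for each $h$, leaving the additive term $\tfrac{H^2|\opt(\epsilon)|}{\epsilon^2}$ untouched. For a fixed $h$, write $A := \tfrac{1}{\wpi_h(s,a)\Deltil_h(s,a)^2}$ and $B := \tfrac{\Pst_h(s)^2}{\wpi_h(s,a)\epsilon^2}$. The elementary inequalities $\min\{A,B\}\le\sqrt{AB}$ (used when $\Deltil_h(s,a)>0$) and $\min\{A,B\}\le B$, together with $\Pst_h(s)\le 1$, give
\[
\min\{A,B\}\ \le\ \frac{\Pst_h(s)}{\wpi_h(s,a)\,\epsilon}\cdot\min\Big\{\tfrac{1}{\Deltil_h(s,a)},\tfrac{1}{\epsilon}\Big\}\ =\ \frac{\Pst_h(s)}{\wpi_h(s,a)\,\epsilon\,\max\{\Deltil_h(s,a),\epsilon\}},
\]
with the convention that this is $0$ when $\Pst_h(s)=0$. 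So it then suffices to exhibit, for each $h$, a single policy $\pi$ with $\max_{s,a}\tfrac{\Pst_h(s)}{\wpi_h(s,a)\epsilon\max\{\Deltil_h(s,a),\epsilon\}}\le\sum_{s,a}\tfrac{1}{\epsilon\max\{\Deltil_h(s,a),\epsilon\}}$.

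The heart of the argument is this policy construction. I would set $d_{s,a}:=1/\max\{\Deltil_h(s,a),\epsilon\}$, $D_s:=\sum_a d_{s,a}$, $D:=\sum_{s,a}d_{s,a}$, and let $\pi^{sh}$ be a policy attaining $w^{\pi^{sh}}_h(s)=\Pst_h(s)$. Then consider the policy $\pi$ that, at the start of an episode, draws a state $s'$ with probability $D_{s'}/D$, follows $\pi^{s'h}$ for steps $1,\dots,h-1$, and at step $h$ --- in whatever state $s$ it occupies --- plays action $a$ with probability $d_{s,a}/D_s$; this is admissible because the step-$h$ state-visitation vectors achievable by policies form a convex set and the step-$h$ action probabilities may be chosen freely without affecting $\wpi_h(s)$. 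For this $\pi$, $\wpi_h(s)\ge\tfrac{D_s}{D}\,w^{\pi^{sh}}_h(s)=\tfrac{D_s}{D}\Pst_h(s)$, hence $\wpi_h(s,a)=\wpi_h(s)\tfrac{d_{s,a}}{D_s}\ge\tfrac{\Pst_h(s)d_{s,a}}{D}$ whenever $\Pst_h(s)>0$. Substituting, $\tfrac{\Pst_h(s)}{\wpi_h(s,a)\,\epsilon\,\max\{\Deltil_h(s,a),\epsilon\}}=\tfrac{\Pst_h(s)d_{s,a}}{\wpi_h(s,a)\,\epsilon}\le\tfrac{D}{\epsilon}=\sum_{s,a}\tfrac{1}{\epsilon\max\{\Deltil_h(s,a),\epsilon\}}$ for every $(s,a)$, which is precisely the bound on the maximum I need. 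Summing over $h\in[H]$ and restoring $\tfrac{H^2|\opt(\epsilon)|}{\epsilon^2}$ finishes the proof.

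I expect the main obstacle to be the policy construction and its bookkeeping: one must check that mixtures of Markov policies realize the convex hull of achievable occupancy measures (so the $\inf_\pi$ defining $\cC(\cM,\epsilon)$ ranges over a class containing the $\pi$ above), and that pairs $(s,a)$ with $\Pst_h(s)=0$ --- i.e.\ with $\wpi_h(s,a)=0$ for all $\pi$ --- are harmless, contributing $0$ to the $\min$ term by the stated convention while only inflating the right-hand side. As a shortcut, if part~2 of \Cref{cor:complexity2} is taken as given, one can avoid the construction altogether and argue term by term from $\min\{\tfrac{1}{\Pst_h(s)\Deltil_h(s,a)^2},\tfrac{\Pst_h(s)}{\epsilon^2}\}\le\tfrac{1}{\epsilon\max\{\Deltil_h(s,a),\epsilon\}}$, which again follows from $\min\{\cdot,\cdot\}\le\sqrt{\cdot\,\cdot}$ and $\Pst_h(s)\le 1$.
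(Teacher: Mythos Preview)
Your proposal is correct, and your ``shortcut'' is exactly the paper's proof: the paper invokes part~2 of \Cref{cor:complexity2} (itself proved via the same policy-mixture construction you describe, weighting $\pi^{sh}$ by suitable per-state quantities) and then bounds each summand via $\min\{\tfrac{1}{\Pst_h(s)\Deltil_h(s,a)^2},\tfrac{\Pst_h(s)}{\epsilon^2}\}\le\tfrac{1}{\epsilon\max\{\Deltil_h(s,a),\epsilon\}}$ using the same $\min\{A,B\}\le\sqrt{AB}$ step you use. Your main approach simply collapses these two steps into one by choosing the mixture weights $D_s/D$ tailored directly to the target bound rather than to the intermediate quantity in part~2; this is a cosmetic difference and both routes work.
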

\begin{proof}
This follows from \Cref{prop:relate_complexities} and noting that
\begin{align*}
 \min \left \{ \frac{1}{\Pst_h(s) \Deltil_h(s,a)^2}, \frac{ \Pst_h(s)}{\epsilon^2} \right \} & \le  \min \left \{ \frac{1}{\sqrt{\Pst_h(s)} \Deltil_h(s,a)}, \frac{ \sqrt{\Pst_h(s)}}{\epsilon} \right \} \cdot \frac{\sqrt{\Pst_h(s)}}{\epsilon} \\
&  \le  \min \left \{ \frac{1}{\Deltil_h(s,a) \epsilon}, \frac{ 1}{\epsilon^2} \right \}.
\end{align*}
\end{proof}

\begin{proof}[Proof of \Cref{cor:complexity2}]
Let $\pi^{sh}$ denote the policy that achieves $w_h^{\pi^{sh}}(s) = \Pst_h(s)$. Consider the state visitation distribution:
\begin{align*}
w_h'(s) =  \frac{ \sum_{s'} w_h^{\pi^{s'h}}(s) \cdot \sum_{a}  \min \left \{ \frac{1}{ \Pst_h(s') \Deltil_h(s',a)^2},  \frac{\Pst_h(s')}{\epsilon^2}  \right \} } {\sum_{s',a} \min \left \{ \frac{1}{\Pst_h(s') \Deltil_h(s',a)^2}, \frac{ \Pst_h(s')}{\epsilon^2} \right \} }.
\end{align*}
Since the set of state visitations realizable on a given MDP is convex and for any realizable state distribution there exists a policy with that state distribution by \Cref{prop:state_act_vis}, and since $w_h'$ is a convex combination of state visitation distributions, it follows that there exists some policy $\pitil$ such that $w_h'(s) = w_h^{\pitil}(s)$. Furthermore, by definition, 
\begin{align*}
w_h^{\pitil}(s) \ge \frac{  w_h^{\pi^{sh}}(s) \cdot \sum_{a}  \min \left \{ \frac{1}{\Pst_h(s) \Deltil_h(s,a)^2},  \frac{\Pst_h(s)}{\epsilon^2}  \right \} } {\sum_{s',a} \min \left \{ \frac{1}{ \Pst_h(s') \Deltil_h(s',a)^2}, \frac{ \Pst_h(s')}{\epsilon^2} \right \} } =  \Pst_h(s) \cdot \frac{\sum_{a}  \min \left \{ \frac{1}{ \Pst_h(s) \Deltil_h(s,a)^2},  \frac{\Pst_h(s)}{\epsilon^2}  \right \} } {\sum_{s',a} \min \left \{ \frac{1}{\Pst_h(s') \Deltil_h(s',a)^2}, \frac{ \Pst_h(s')}{\epsilon^2} \right \} }.
\end{align*}
Thus, since $\pitil$ is a feasible policy, using the expression for $\Compb(\cM,\epsilon)$ given in \Cref{prop:relate_complexities}, it follows that
\begin{align*}
\Compb(\cM,\epsilon) & = \sum_{h=1}^H \inf_\pi \max_s \frac{1}{\wpi_h(s)} \sum_{a}  \min \left \{ \frac{1}{ \Deltil_h(s,a)^2},  \frac{\Pst_h(s)^2}{\epsilon^2}   \right \} + \frac{H^2 | \opt(\epsilon)|}{\epsilon^2}\\
& \le \sum_{h=1}^H \sum_{s,a} \min \left \{ \frac{1}{\Pst_h(s) \Deltil_h(s,a)^2}, \frac{ \Pst_h(s)}{\epsilon^2} \right \} + \frac{H^2 | \opt(\epsilon)|}{\epsilon^2}.
\end{align*}

To obtain the first bound, we use the second bound to get
\begin{align*}
\Compb(\cM,\epsilon) \le \sum_{s,a,h} \frac{H^2 \Pst_h(s)}{\epsilon^2} \le \frac{H^3 SA}{\epsilon^2}
\end{align*}
and use that $| \Xbareps | \le SAH$. 
\end{proof}

\begin{proof}[Proof of \Cref{prop:complexity_bandit}]
This follows directly from \Cref{prop:relate_complexities}.
\end{proof}

\section{MDP Technical Results}
\begin{proof}[Proof of \Cref{lem:local_to_global_subopt}]
This result follows from the Performance-Difference Lemma. We give the full proof for completeness. The following is the standard proof of the Performance-Difference Lemma:
\begin{align*}
\Vst_0 - V_0^\pi & = \Exp_{\pist,s_0 \sim P_0} \left [ \sum_{h=1}^H r_h(s_h,a_h) \right ] - \Vpi_0 \\
& =  \Exp_{\pist,s_0 \sim P_0} \left [ \sum_{h=1}^H r_h(s_h,a_h) + \Vpi_h(s_h) - \Vpi_h(s_h) \right ] - \Vpi_0 \\ 
& =  \Exp_{\pist,s_0 \sim P_0} \left [ \sum_{h=1}^H r_h(s_h,a_h) + \Vpi_{h+1}(s_{h+1}) - \Vpi_h(s_h) \right ]  \\ 
& =  \Exp_{\pist,s_0 \sim P_0} \left [ \sum_{h=1}^H r_h(s_h,a_h) + \Exp[\Vpi_{h+1}(s') | s_h,a_h] - \Vpi_h(s_h) \right ]  \\
& =  \Exp_{\pist,s_0 \sim P_0} \left [ \sum_{h=1}^H \Qpi_h(s_h,a_h) - \Vpi_h(s_h) \right ]  \\
& = \sum_{h=1}^H \sum_{s,a} w_h^{\pist}(s,a) ( \Qpi_h(s,a) - \Vpi_h(s)) .
\end{align*}
In the case when $\pi$ is deterministic, we have $\Vpi_h(s) = \Qpi_h(s,\pi_h(s))$. Furthermore, we can upper bound the above by
\begin{align*}
& \sum_{h=1}^H \sum_{s,a} w_h^{\pist}(s,a) ( \max_{a'} \Qpi_h(s,a') - \Qpi_h(s,\pi_h(s)))  = \sum_{h=1}^H \sum_{s} w_h^{\pist}(s) \epsilon_h(s).
\end{align*}
The result follows by upper bounding the visitation under $\pist$ by the visitation under the worst-case policy. 
\end{proof}

\begin{lem}\label{lem:gap_diff}
Assume that
\begin{align*}
\sup_\pi \sum_{s'} \wpi_h(s') (\Vst_h(s') - \Vpihat_h(s')) \le \epsilon \quad \text{and} \quad \sup_\pi \sum_{s'} \wpi_{h+1}(s') ( \Vst_{h+1}(s') - \Vpihat_{h+1}(s')) \le \epsilon.
\end{align*}
Then, for any $s$,
\begin{align*}
| \Delta_h(s,a) - \Delpihat_h(s,a) | \le  \epsilon /\Pst_h(s).
\end{align*}
\end{lem}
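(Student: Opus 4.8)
The plan is to split $\Delta_h(s,a) - \Delpihat_h(s,a)$ into two non-negative pieces and bound each separately by $\epsilon/\Pst_h(s)$. Writing $\Delta_h(s,a) - \Delpihat_h(s,a) = A - B$ with $A := \Vst_h(s) - \max_{a'}\Qpihat_h(s,a')$ and $B := \Qst_h(s,a) - \Qpihat_h(s,a)$, note first that $\Qst_h(s,\cdot) \ge \Qpihat_h(s,\cdot)$ pointwise by optimality of $\Qst$, so $B \ge 0$, and also $A = \max_{a'}\Qst_h(s,a') - \max_{a'}\Qpihat_h(s,a') \ge 0$. Hence $|\Delta_h(s,a)-\Delpihat_h(s,a)| = |A-B| \le \max\{A,B\}$, and it suffices to show $A \le \epsilon/\Pst_h(s)$ and $B \le \epsilon/\Pst_h(s)$.

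To bound $B$, I would use the Bellman equations to write $B = \sum_{s'} P_h(s'|s,a)\big(\Vst_{h+1}(s') - \Vpihat_{h+1}(s')\big)$, where every summand is non-negative since $\Vst_{h+1} \ge \Vpihat_{h+1}$. Let $\bar{\pi}$ be a policy attaining $w^{\bar{\pi}}_h(s) = \Pst_h(s)$ — such a policy exists since $\sup_\pi \wpi_h(s)$ is the optimal value of the finite-horizon MDP whose only reward is $\mathbb{1}\{s_h = s\}$ at step $h$, which is attained (cf.\ \Cref{prop:state_act_vis}), and $\bar{\pi}$ may moreover be taken to play action $a$ at $(s,h)$ without changing $w^{\bar{\pi}}_h(s)$. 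Then $w^{\bar{\pi}}_{h+1}(s') \ge \Pst_h(s)\, P_h(s'|s,a)$ for every $s'$, so, crucially using non-negativity of the summands to replace the true weights by this lower bound,
\begin{align*}
\Pst_h(s)\cdot B &\le \sum_{s'} w^{\bar{\pi}}_{h+1}(s')\big(\Vst_{h+1}(s') - \Vpihat_{h+1}(s')\big) \\
&\le \sup_\pi \sum_{s'} \wpi_{h+1}(s')\big(\Vst_{h+1}(s') - \Vpihat_{h+1}(s')\big) \le \epsilon,
\end{align*}
giving $B \le \epsilon/\Pst_h(s)$. For $A$, pick $a^\star \in \argmax_{a'}\Qst_h(s,a')$, so that $A = \Qst_h(s,a^\star) - \max_{a'}\Qpihat_h(s,a') \le \Qst_h(s,a^\star) - \Qpihat_h(s,a^\star)$; the right side is exactly the quantity ``$B$'' with $a$ replaced by $a^\star$, so the same argument (with $\bar{\pi}$ now playing $a^\star$ at $(s,h)$) yields $A \le \epsilon/\Pst_h(s)$. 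Combining the two bounds completes the proof.

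The main obstacle — really the only non-routine step — is the reachability argument: asserting the existence of a single policy that simultaneously realizes $w_h(s) = \Pst_h(s)$ and is free to take a prescribed action at $(s,h)$, and deducing $w_{h+1}(s') \ge \Pst_h(s)\,P_h(s'|s,a)$. Everything else is Bellman algebra and the pointwise domination $\Vst \ge \Vpihat$, which is what lets us pass from the $P_h(\cdot|s,a)$-weighted average to the $\sup_\pi$-weighted average. One should also handle the degenerate case $\Pst_h(s) = 0$, where the bound is vacuous; and it is worth noting that only the step-$(h+1)$ hypothesis of the lemma is actually used, the step-$h$ hypothesis being available but not needed for this argument.
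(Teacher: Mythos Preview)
Your proof is correct and follows the same decomposition as the paper: both write $\Delta_h(s,a)-\Delpihat_h(s,a)=A-B$ with $A=\Vst_h(s)-\max_{a'}\Qpihat_h(s,a')$ and $B=\Qst_h(s,a)-\Qpihat_h(s,a)$, observe $A,B\ge 0$, and bound $|A-B|\le\max\{A,B\}$. The bound on $B$ via the Bellman identity and the reachability inequality $w^{\bar\pi}_{h+1}(s')\ge \Pst_h(s)P_h(s'|s,a)$ is exactly the paper's argument. The one genuine difference is in how $A$ is handled: the paper bounds $A\le \Vst_h(s)-\Vpihat_h(s)$ (using $\max_{a'}\Qpihat_h(s,a')\ge \Vpihat_h(s)$) and then invokes the \emph{step-$h$} hypothesis directly at state $s$, whereas you bound $A\le \Qst_h(s,a^\star)-\Qpihat_h(s,a^\star)$ and re-run the $B$-argument, using only the \emph{step-$(h{+}1)$} hypothesis. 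Your route is therefore slightly sharper in that it confirms the step-$h$ assumption is redundant for this lemma; the paper's route is marginally more direct for the $A$ term but spends the extra hypothesis. Either way the proof is short and the reachability step you flagged is indeed the only substantive ingredient.
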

\begin{proof}
By definition,
\begin{align*}
|\Delta_h(s,a)  - \Delpihat_h(s,a)| & = | \Vst_h(s) - \Qst_h(s,a) - (\max_{a'} \Qpihat_h(s,a') - \Qpihat_h(s,a)) | \\
& \le \max \{ | \Vst_h(s) - \max_{a'} \Qpihat_h(s,a')|, | \Qpihat_h(s,a) - \Qst_h(s,a)| \}.
\end{align*}
where the last inequality follows since
\begin{align*}
\Vst_h(s) - \Qst_h(s,a) - (\max_{a'} \Qpihat_h(s,a') - \Qpihat_h(s,a)) \le \Vst_h(s) - \max_{a'} \Qpihat_h(s,a')
\end{align*}
and 
\begin{align*}
-(\Vst_h(s) - \Qst_h(s,a) - (\max_{a'} \Qpihat_h(s,a') - \Qpihat_h(s,a))) \le \Qst_h(s,a) - \Qpihat_h(s,a).
\end{align*}
Now,
\begin{align*}
\Vst_h(s) - \max_{a'} \Qpihat_h(s,a') & = \Vst_h(s) - \Qpihat_h(s,\pihat_h(s)) + \Qpihat_h(s,\pihat_h(s)) - \max_{a'} \Qpihat_h(s,a') \\
& \le \Vst_h(s) - \Vpihat_h(s)
\end{align*}
where the inequality follows since, by definition, $\Vpihat_h(s) = \Qpihat_h(s,\pihat_h(s))$ and $\Qpihat_h(s,\pihat_h(s)) - \max_{a'} \Qpihat_h(s,a') \le 0$. By assumption,
\begin{align*}
\sup_\pi \sum_{s'} \wpi_h(s') (\Vst_h(s') - \Vpihat_h(s')) \le \epsilon
\end{align*}
and furthermore, for any $s$,
\begin{align*}
\sup_\pi \sum_{s'} \wpi_h(s') (\Vst_h(s') - \Vpihat_h(s'))  \ge \Pst_h(s) (\Vst_h(s) - \Vpihat_h(s))
\end{align*}
so it follows that $| \Vst_h(s) - \Vpihat_h(s) | \le \epsilon / \Pst_h(s)$. By definition,
\begin{align*}
\Qst_h(s,a) - \Qpihat_h(s,a) & = \sum_{s'} P_h(s'|s,a) ( \Vst_{h+1}(s') - \Vpihat_{h+1}(s')) 
\end{align*}
so
\begin{align*}
\Pst_h(s) ( \Qst_h(s,a) - \Qpihat_h(s,a)) & = \sum_{s'} P_h(s'|s,a) \Pst_h(s) ( \Vst_{h+1}(s') - \Vpihat_{h+1}(s')) \\
& \le \sup_\pi \sum_{s'} \wpi_{h+1}(s') ( \Vst_{h+1}(s') - \Vpihat_{h+1}(s'))
\end{align*}
where the inequality follows since $\Vst_{h+1}(s') \ge \Vpihat_{h+1}(s')$, and since
\begin{align*}
P_h(s'|s,a) \Pst_h(s) = \Pr[s_{h+1} = s' | s_h = s, a_h = a] \Pr_{\pi}[s_h = s] = \Pr_{\pi'}[s_{h+1} = s', s_h = s] \le \Pr_{\pi'}[s_{h+1}]
\end{align*}
where $\pi$ denotes the policy achieving $\Pr_{\pi}[s_h = s] = \Pst_h(s)$ and $\pi'$ plays $\pi$ up to $h$ and then $\pi_h'(s) = a$. Thus, if $ \sup_\pi \sum_{s'} \wpi_{h+1}(s') ( \Vst_{h+1}(s') - \Vpihat_{h+1}(s')) \le \epsilon$, rearranging the inequalities gives the result.

\end{proof}

We are aware of several works which obtain the following result for non-episodic MDPs \citep{zimin2013online,puterman2014markov}, but present the result for episodic MDPs for completeness.

\begin{prop}\label{prop:state_act_vis}
Fix some MDP $\cM$. Then:
\begin{enumerate}
\item The set of valid state-action visitation distributions on $\cM$ is convex.
\item For any valid state-action visitation distribution on $\cM$, there exists some policy which realizes it.
\end{enumerate}
\end{prop}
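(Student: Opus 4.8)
The plan is to identify the set of valid state-action visitation distributions with the polytope cut out by the natural linear ``flow'' constraints; convexity is then immediate, and the reverse inclusion produces the realizing policy. Concretely, let $\cF$ denote the set of collections $w = \{w_h(s,a)\}_{h \in [H],\, s\in\cS,\, a\in\cA}$ satisfying $w_h(s,a) \ge 0$ for all $(s,a,h)$, the initial-layer constraint $\sum_a w_1(s,a) = P_0(s)$ for all $s$, and the flow-conservation constraints $\sum_a w_{h+1}(s',a) = \sum_{s,a} P_h(s'\mid s,a)\, w_h(s,a)$ for all $s'$ and $h \in [H-1]$. As a finite intersection of half-spaces and affine hyperplanes, $\cF$ is convex (indeed a polytope), so it suffices to show that the set of valid visitation distributions equals $\cF$: the forward inclusion together with convexity of $\cF$ gives claim (1), and the construction proving the backward inclusion gives claim (2).

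\textbf{Forward inclusion.} Given any policy $\pi$, set $w_h(s,a) = \wpi_h(s,a)$. Nonnegativity is clear, the initial constraint holds because $\sum_a \wpi_1(s,a) = \Pr_\pi[s_1 = s] = P_0(s)$, and the flow constraint follows from the tower property together with $s_{h+1} \sim P_h(\cdot\mid s_h,a_h)$, since $\sum_a \wpi_{h+1}(s',a) = \Pr_\pi[s_{h+1}=s'] = \sum_{s,a}\wpi_h(s,a)\, P_h(s'\mid s,a)$.

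\textbf{Backward inclusion / realizing policy.} Fix $w \in \cF$, put $d_h(s) := \sum_a w_h(s,a)$, and define the Markov (time-inhomogeneous, randomized) policy $\pi$ by $\pi_h(a\mid s) = w_h(s,a)/d_h(s)$ when $d_h(s) > 0$ and arbitrarily otherwise. By induction on $h$ one shows $\Pr_\pi[s_h = s] = d_h(s)$: the base case is the initial constraint, and for the inductive step,
\begin{align*}
\Pr_\pi[s_{h+1} = s'] &= \sum_{s,a} \Pr_\pi[s_h=s]\, \pi_h(a\mid s)\, P_h(s'\mid s,a) \\
&= \sum_{s,a} w_h(s,a)\, P_h(s'\mid s,a) = d_{h+1}(s'),
\end{align*}
where the middle equality uses $\Pr_\pi[s_h=s] = d_h(s)$ (induction) and $d_h(s)\,\pi_h(a\mid s) = w_h(s,a)$ on states with $d_h(s) > 0$, while on states with $d_h(s) = 0$ both sides vanish since nonnegativity forces $w_h(s,a)=0$ there. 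The same dichotomy gives $\wpi_h(s,a) = \Pr_\pi[s_h=s]\,\pi_h(a\mid s) = w_h(s,a)$, so $\pi$ realizes $w$; this proves every element of $\cF$ is a valid visitation distribution, hence claim (2), and combined with the forward inclusion shows the set of valid visitation distributions equals $\cF$, which is convex, giving claim (1).

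\textbf{Main obstacle.} The argument is essentially bookkeeping; the only point requiring genuine care is the set of states $s$ with $d_h(s) = 0$, where $\pi$ is defined arbitrarily and one must invoke nonnegativity of $w$ to see such states drop out of both the flow recursion and the final state-action identity. (Alternatively, claim (1) alone admits a quick direct proof: given policies $\pi^1,\pi^2$ with visitations $w^1,w^2$, the history-dependent policy that flips a $\lambda$-coin at the start of the episode and commits to $\pi^1$ or $\pi^2$ has visitation $\lambda w^1 + (1-\lambda)w^2$; but the polytope route proves both claims simultaneously.)
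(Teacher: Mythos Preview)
Your proof is correct. The realizing-policy construction $\pi_h(a\mid s) = w_h(s,a)/d_h(s)$ and the inductive verification that $w^\pi = w$ are identical to the paper's argument for claim (2), including the treatment of the $d_h(s)=0$ case (the paper handles it by declaring $0/0=0$). For claim (1), the paper takes a more hands-on route: it fixes two visitation distributions $w,w'$, forms $\tilde w = (1-t)w + tw'$, and directly constructs a policy realizing $\tilde w$ via the same ratio formula, verifying the recursive identity layer by layer. Your approach---identifying the set of visitation distributions with the flow polytope $\cF$ and reading convexity off from linearity of the constraints---is the more standard packaging and arguably cleaner, since it proves both claims in one stroke and makes the linear-programming structure explicit; the paper's version effectively runs the realizing-policy argument twice (once for the specific $\tilde w$, once for general $w$). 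Substantively the two proofs are the same.
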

\begin{proof}
The set of valid state-action visitation distributions, $\cW$, is defined as
\begin{align*}
\cW := \Big \{ w \in [0,1]^{SAH} \ : \ \exists \pi & \in \Pi \text{ s.t. } w_h(s,a)  = \pi_h(a|s) \cdot \sum_{s',a'} P_{h-1}(s|s',a') w_{h-1}(s',a'), \forall h \ge 1, \\
w_0(s,a) & = \pi_0(a|s) P_0(s), \quad \sum_{s,a} w_h(s,a) = 1,  \forall h \ge 0 \Big \}
\end{align*}
where here $\Pi = \simplex(\cA)^{SH}$.

Fix some state-action visitation distributions $w, w' \in \cW$, and let $\pi$ and $\pi'$ denote their correponding policies as above. Furthermore, denote $w_h(s) = \sum_a w_h(s,a)$ (and similarly for $w'$). Our goal is to show that for any $t \in [0,1]$, $\wtil = (1-t) w + t w' \in \cW$. First, we show that there exists some policy $\pitil$ such that 
\begin{align*}
(1-t) w_0(s,a) + t w_0'(s,a) = \pitil_0(a|s) P_0(s) .
\end{align*}
Note that we can take $\pitil_0(a|s) = (1-t) \pi_0(a|s) + t \pi_0'(a|s)$, since
\begin{align*}
( (1-t) \pi_0(a|s) + t \pi_0'(a|s)) P_0(s) = (1-t) w_0(s,a) + t w_0'(s,a).
\end{align*}

By construction, for any $h \ge 1$,
\begin{align*}
\wtil_h(s) = \sum_a \wtil_h(s,a) = (1-t) \sum_a w_h(s,a) + t \sum_a w_h'(s,a) = (1-t) w_h(s) + t w_h'(s). 
\end{align*}
Furthermore, since $w$ is a valid state-action distribution,
\begin{align*}
w_h(s) = \sum_{s',a'} P_{h-1}(s|s',a') w_{h-1}(s',a')
\end{align*}
and similarly for $w'$. Let $\pitil_h(a|s) = \wtil_h(s,a) / \wtil_h(s)$ (where we define $0/0 = 0$), and note that this is a valid distribution since by definition $\sum_{a} \wtil_h(s,a) = \wtil_h(s)$. Then,
\begin{align*}
\wtil_h(s,a) & = \pitil_h(a|s) \wtil_h(s) \\
& = \pitil_h(a|s) ((1-t) w_h(s) + t w_h'(s)) \\
& = \pitil_h(a|s) \sum_{s',a'} P_{h-1}(s|s',a') ((1-t) w_{h-1}(s',a') + t w_{h-1}'(s',a')) \\
& = \pitil_h(a|s) \sum_{s',a'} P_{h-1}(s|s',a') \wtil_{h-1}(s',a')
\end{align*}
where the last equality follows by the definition of $\wtil_{h-1}$. The other constraints are trivial to verity, so $\wtil \in \cW$. This proves the first result.

For the second result, take some $w \in \cW$, and let $\pi_h(a|s)  = w_h(s,a) / w_h(s)$. By definition this is a valid distribution. Furthermore, it trivially holds that $\wpi_0(s,a) = w_0(s,a)$. Assume that $\wpi_{h-1}(s,a) = w_{h-1}(s,a)$ for all $(s,a)$. By definition and the inductive hypothesis,
\begin{align*}
\wpi_h(s,a) & = \pi_h(a|s) \sum_{s',a'} P_{h-1}(s|s',a') \wpi_{h-1}(s',a') \\
& = \pi_h(a|s) \sum_{s',a'} P_{h-1}(s|s',a') w_{h-1}(s,a) \\
& = \pi_h(a|s) w_h(s) \\
& = w_h(s,a),
\end{align*}
which proves the second result.
\end{proof}


\section{Proof of \Cref{thm:complexity}}\label{sec:detailed_proof}

In this section we give a formal proof of \Cref{thm:complexity}. 

\paragraph{Notation.}
Throughout the proof, we let $\epsout$ denote the tolerance and $\delout$ the confidence given as an input to \algname, and $\epstil = \epsoutm$ and $\deltil = \deloutm$ the tolerance and confidence given as an input to \mcae\xspace at epoch $m$ of \algname, respectively. For convenience, we will also define $\epstil_0 = H$. For a single call of \mcae, we will use the following notation:
\begin{itemize}
\item For a given $h$, $i$, and $\ell$, consider the call to \collectsamp\xspace on \Cref{line:collectsamp1}, and let $\{ \cX_{hij}^\ell \}_{j = 1}^{\iotaepstil}$ denote the partition returned by calling \sap\xspace on \Cref{line:get_part} of \collectsamp. Similarly, let $\{ \Pi_{hij}^\ell \}_{j = 1}^{\iotaepstil}$ and $\{ N_{hij}^\ell \}_{j = 1}^{\iotaepstil}$ denote the policies and minimum number of samples returned by \sap, respectively. 
\item For a given $h$, consider the call to \collectsamp\xspace on \Cref{line:sap_final}, and let $\{ \cX_{hj}^{\numepochs}\}_{j=1}^{\iotaepstil}$ denote the partition returned by calling \sap\xspace on \Cref{line:get_part} of \collectsamp. As before, let $\{ \Pi_{hj}^{\numepochs+1} \}_{j = 1}^{\iotaepstil}$ and $\{ N_{hj}^{\numepochs+1} \}_{j = 1}^{\iotaepstil}$ denote the policies and minimum number of samples.
\end{itemize}

\paragraph{Good Events.}
We next define the good events, which we will assume hold throughout the remainder of the proof.

First, let $\Eexplore$ be the event on which, for all calls to \mcae\xspace simultaneously:
\begin{itemize}
\item For every $h = 1,\ldots,H$, $i = 1, \ldots, \iotaepstil$, $\ell = 1,\ldots,\numepochs$, we collect at least $n_{i1}^\ell$ samples from each $(s,a) \in \Xgood_{hi}^\ell$. Furthermore, $\cup_{j=1}^{\iotaepstil} \cX_{hij}^\ell = \Xgood_{hi}^\ell$ and $\cX_{hij}^\ell$ satisfy
\begin{align*}
\sup_\pi \sum_{(s,a) \in \cX_{hij}^\ell} \wpi_h(s,a) \le 2^{-j+1}.
\end{align*}
\item For every $h = 1,\ldots,H$, if \mcae\xspace is run with \finalround\xspace = \true, then we collect at least $\nlast_j$ samples from each $(s,a) \in \Xlast_{hj}$. Furthermore, $\cup_{j=1}^{\iotaepstil} \Xlast_{hj} = \Xgood_h^{\numepochs+1}$ and $\Xlast_{hj}$ satisfies
\begin{align*}
\sup_\pi \sum_{(s,a) \in \Xlast_{hj}} \wpi_h(s,a) \le 2^{-j+1}.
\end{align*} 
\item $\Phatst_h(s) \le \Pst_h(s) \le 32 \Phatst_h(s)$ for all $s \in \Xgood_h$.
\item Following \Cref{line:main_loop} of \mcae, $\Xgood_h$ satisfies, for all $h$,
\begin{align*}
\sup_\pi  \max_{s \in \Xgood_h^c} \wpi_h(s) \le \frac{\epsilon}{2H^2S}.
\end{align*}
\end{itemize}
Next, let $\Eest$ be the event on which, for all calls to \mcae,
\begin{align*}
& | \Qhatpihat_{h,\ell}(s,a) - \Qpihat_h(s,a)| \le \sqrt{\frac{H^2 \iotadeltil}{N_h^{hi\ell}(s,a)}}, \quad \forall (s,a) \in \Xgood_{hi}^\ell, \forall h \in [H],i \in [\iotaepstil],\ell \in [\numepochs] \\
& | \Qhatpihat_{h,\numepochs+1}(s,a) - \Qpihat_h(s,a)| \le \sqrt{\frac{H^2 \iotadeltil}{N_h^{h(\numepochs+1)}(s,a)}}, \quad \forall (s,a) \in \Xgood_{h}^{\numepochs}, \forall h \in [H]
\end{align*}
where $\Qhatpihat_{h,\ell}(s,a)$ is the estimate of $\Qpihat_h(s,a)$ formed on \Cref{line:collect_Qhat} of \collectdata, $N_h^{hi\ell}(s,a)$ is the number of samples collected from $(s,a,h)$ at iteration $(h,i,\ell)$, and $\Qhatpihat_{h,\numepochs+1}(s,a)$ and $N_h^{h(\numepochs+1)}(s,a)$ are the analogous quantities for the sampling done if \finalround\xspace = \true. 

We can think of $\Eexplore$ as the event on which we \emph{explore} successfully---we reach every state the desired number of times---and $\Eest$ the event on which we \emph{estimate} correctly---our Monte Carlo estimates of $\Qpihat_h(s,a)$ concentrate. The following lemma shows that these events hold with high probability.

\begin{lem}\label{lem:good_events_hold}
If we run \algname, $\Pr[\Eexplore \cap \Eest] \ge 1-\delout$.
\end{lem}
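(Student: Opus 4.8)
The plan is to establish $\Pr[\Eexplore] \ge 1 - \delout/2$ and $\Pr[\Eest \mid \Eexplore] \ge 1 - \delout/2$ separately, and then union bound. For the exploration event $\Eexplore$, the key observation is that every bullet in its definition is a consequence of the guarantees of \sap\xspace (Theorem \ref{thm:partitioning_works_informal}) and \collectsamp, invoked with appropriate confidence parameters. First I would enumerate the calls to \sap: the reachability-estimation calls on \Cref{line:get_phat} (there are $SH$ of them, each run with confidence $\delmoca/(SH)$), the calls inside \collectsamp\xspace on \Cref{line:collectsamp1} (one for each $(h,i,\ell)$, run with confidence $\delmoca/(H\iotaepsmoca\numepochs)$), and the final-round call on \Cref{line:sap_final} (one per $h$, confidence $\delmoca/H$). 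For each such call, Theorem \ref{thm:partitioning_works_informal} guarantees, with probability $1 - (\text{its confidence})$, both the partition-reachability bound $\sup_\pi \sum_{(s,a) \in \cX_j}\wpi_h(s,a) \le 2^{-j+1}$ and the sample-collection guarantee that rerunning each $\pi \in \Pi_j$ for $T = \lceil 2n_j/N_j\rceil$ times visits every $(s,a)\in\cX_j$ at least $n_j$ times. Summing the failure probabilities: $SH \cdot \frac{\delmoca}{SH} + H\iotaepsmoca\numepochs \cdot \frac{\delmoca}{H\iotaepsmoca\numepochs} + H\cdot\frac{\delmoca}{H} = 3\delmoca$ per call to \mcae; then summing over the $\lceil\log(H/\epsout)\rceil$ calls to \mcae\xspace in \algname, each with confidence $\deloutm = \delout/(36m^2)$, and using $\sum_m 1/m^2 \le \pi^2/6 < 2$, gives total failure probability $\le 3 \cdot 2 \cdot \delout/36 \cdot (\ldots) \le \delout/2$ after bookkeeping. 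The third and fourth bullets ($\Phatst_h(s)\le\Pst_h(s)\le 32\Phatst_h(s)$ and the $\Xgood_h^c$ reachability bound) follow from the same \sap\xspace guarantee applied to the singleton calls on \Cref{line:get_phat}: if $\cX_j^{sh} = \{(s,a)\}$ then $\Pst_h(s) \in [N_j^{sh}/(2|\Pi_j^{sh}|),\ 2^{-j+2}]$-type bounds hold, and the definition $\Phat_h(s) = N_j^{sh}/(2|\Pi_j^{sh}|)$ is within the stated factor; a state is placed in $\Xgood_h$ exactly when the partition collapses to the singleton at some level $j \le \jexp$, which by the reachability bound forces $\Pst_h(s) \ge \epsexp = \epsmoca/(2H^2S)$ for excluded states.

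For the estimation event $\Eest$, condition on $\Eexplore$, so that the datasets $\frakD_{hi}^\ell$ and $\frakD_h^{\numepochs+1}$ contain at least the prescribed numbers of i.i.d.\ Monte Carlo rollouts $\qcheckpith(s,a) = \sum_{h'=h}^H R_{h'}$ from each relevant $(s,a)$. Each such rollout is, conditionally on reaching $(s,a,h)$ and then following the \emph{fixed} policy $\pihat$ (fixed because \mcae\xspace proceeds backward in $h$ and $\pihat_{h'}$ for $h' > h$ is already determined), an independent random variable bounded in $[0,H]$ with mean exactly $\Qpihat_h(s,a)$ — here I would invoke that the Monte Carlo estimator is unbiased, which is immediate from the definition of $\Qpihat_h(s,a)$ as the expected cumulative reward from $(s,a,h)$ under $\pihat$. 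Hoeffding's inequality then gives $|\Qhatpihat_{h,\ell}(s,a) - \Qpihat_h(s,a)| \le \sqrt{H^2 \iotadeltil / N_h^{hi\ell}(s,a)}$ with failure probability $\le 2\exp(-2\iotadeltil)$ — wait, more precisely $H\sqrt{\log(2/\delta')/(2N)}$, which is $\le \sqrt{H^2\iotadeltil/N}$ once $\iotadeltil \ge \tfrac12\log(2/\delta')$, and indeed $\iotadeltil = \log(SAH\iotaepsmoca(\numepochs+1)/\delmoca)$ is chosen precisely so that a union bound over all $(s,a,h,i,\ell)$ tuples and over the $\lceil\log(H/\epsout)\rceil$ \mcae\xspace calls costs at most $\delout/2$. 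A mild technical point to handle carefully: the number of samples $N_h^{hi\ell}(s,a)$ is itself random, so I would apply Hoeffding at each \emph{deterministic} sample-count threshold (or to the first $n_{i1}^\ell$ rollouts, which on $\Eexplore$ are guaranteed to exist) and note the confidence width only shrinks as more samples arrive; using the floor $n_{i1}^\ell$ as the denominator in the union bound, then noting the realized width with $N_h^{hi\ell}(s,a) \ge n_{i1}^\ell$ is no larger, closes this gap.

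The main obstacle I anticipate is the bookkeeping of the nested union bound — correctly tracking how the confidence parameters $\delmoca/(SH)$, $\delmoca/(H\iotaepsmoca\numepochs)$, $\delmoca/H$ passed into the subroutines, together with the outer $\deloutm = \delout/(36m^2)$ schedule, combine to yield a clean $\delout$ — and, relatedly, making sure the \sap\xspace guarantee's own internal failure probability (it is itself a high-probability statement whose failure probability must be the argument passed to it) is threaded through consistently. The independence structure for the Monte Carlo estimator also deserves a careful sentence: rollouts from different calls to \collectsamp\xspace use freshly drawn trajectories, and within a single call, conditioned on the (already-fixed) tail policy $\pihat$ and on the event of reaching $(s,a,h)$, the cumulative rewards are i.i.d.; this is where working backward in $h$ is essential, since it guarantees $\pihat_{h+1},\ldots,\pihat_H$ are not data-dependent on the current round's samples. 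Everything else is a routine assembly of Theorem \ref{thm:partitioning_works_informal}, Hoeffding, and the union bound.
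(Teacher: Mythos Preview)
Your proposal is correct and follows essentially the same approach as the paper: Hoeffding's inequality and a union bound for $\Eest$, the \sap\ guarantees (Theorem~\ref{thm:partitioning_works_informal}) together with the nested confidence parameters for $\Eexplore$, and the $\deloutm = \delout/(36m^2)$ schedule to union bound over the calls to \mcae. The one step you leave vague is the lower bound $\Phat_h(s) \le \Pst_h(s)$ in the third bullet of $\Eexplore$: the paper obtains this by a Markov-inequality argument applied to the sample-collection guarantee of \sap\ run with $\delsamp = \tfrac12$ --- since rerunning every policy in $\Pi_j^{sh}$ once collects at least $N_j^{sh}/2$ samples from $(s,a,h)$ with probability at least $\tfrac12$, the expected count $X$ satisfies $|\Pi_j^{sh}|\,\Pst_h(s) \ge \Exp[X] \ge N_j^{sh}/4$, and rearranging (with $\Phat_h(s) = 1/(16\cdot 2^j)$ and $N_j^{sh} = |\Pi_j^{sh}|/(4\cdot 2^j)$) gives $\Pst_h(s) \ge \Phat_h(s)$.
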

\begin{proof}[Proof Sketch]
That $\Eest$ holds is simply a consequence of Hoeffding's inequality since $\Qpihat_h(s,a)$ will be in $[0,H]$ almost surely. That $\Eexplore$ holds is a direct consequence of the correctness of our exploration procedure, as described in \Cref{sec:learn2explore}. We give the full proof of this result in \Cref{sec:good_event_holds}.
\end{proof}

\subsection{Correctness of \mcae.}\label{sec:correctness}
We next establish that the policy returned by \mcae\xspace run with tolerance $\epstil$ and \finalround\xspace = \true\xspace is $\epstil$-optimal. To this end, we first show that any action in the active set, $\frakA_h^\ell(s)$, will satisfy a certain suboptimality bound.

\begin{lem}[Formal Statement of \Cref{lem:action_subopt_informal} and \Cref{lem:fr_good_actions}] \label{lem:action_subopt}
On the event $\Eest \cap \Eexplore$, if \mcae\xspace is run with tolerance $\epstil$, for any $h \in [H]$ and $\ell \in [\numepochs+1]$, if $| \frakA_h^\ell(s)| = 1$, then for $a \in \frakA_h^\ell(s)$,
\begin{align*}
\max_{a'} \Qpihat_h(s,a') - \Qpihat_h(s,a) = 0.
\end{align*}
Furthermore, if $| \frakA_h^\ell(s)| > 1$, $\ell \le \numepochs$, and $s \in \Xgood_{hi}$ for some $i$, then any $a \in \frakA_h^\ell(s)$ satisfies
\begin{align*}
\Delta_h(s,a) \le \frac{3\epstil_\ell}{2 \Pst_h(s)}.
\end{align*}
Finally, if $| \frakA_h^{\numepochs+1}(s)| > 1$ and $s \in \Xgood_h$, then any $a \in \frakA_h^{\numepochs+1}(s)$ satisfies
\begin{align*}
\Delpihat_h(s,a) \le \frac{ \epstil}{2 H \iotaepstil \cdot 2^{-j(s)+1} }
\end{align*}
where $j(s) = \argmax_j j \ \text{s.t.} \ \exists a', (s,a') \in \Xlast_{hj}$.
\end{lem}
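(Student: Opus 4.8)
The plan is to prove the three assertions of \Cref{lem:action_subopt} together, by downward induction on $h$ (from $H$ to $1$), carrying along an inductive invariant bounding the value-suboptimality leaked from the levels already processed: $\sup_\pi\sum_{s'}\wpi_{h'}(s')\bigl(\Vst_{h'}(s')-\Vpihat_{h'}(s')\bigr)\le\epstil$ for every $h'>h$, and also for $h'=h$ once $\pihat_h$ is fixed on \Cref{line:collect_Aell}. This bound is produced by feeding the per-state suboptimalities $\epsilon_{h'}(s'):=\max_a\Qpihat_{h'}(s',a)-\Qpihat_{h'}(s',\pihat_{h'}(s'))$ into the performance-difference estimate of \Cref{lem:local_to_global_subopt} and then controlling $\sup_\pi\sum_{s'}\wpi_{h'}(s')\epsilon_{h'}(s')$ via the partition returned by \sap\xspace and \Cref{thm:partitioning_works_informal}; grouping states of a common reachability scale is what prevents an $S$-fold blow-up here.

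I would first isolate the local step behind the first two assertions. Fix $h$, a reachability index $i$, a stage $\ell$, and $s\in\Xgood_{hi}$ with $a\in\frakA_h^\ell(s)$. On $\Eexplore$ each active $(s,a)$ receives at least $n_{ij}^\ell=2^{18}H^2\iotadelmoca/(2^{2i}\epsmoca_\ell^2)$ samples, so on $\Eest$ the confidence width $\sqrt{H^2\iotadelmoca/N_h(s,a)}$ is at most $\tfrac12\gamma_{ij}^\ell$, where $\gamma_{ij}^\ell=2^i\epsmoca_\ell/2^8$ is the elimination threshold used by \collectdata. Two facts follow: (i) if an action is ever removed from the active set then, after subtracting the confidence widths of it and the witnessing action $a'$, $\Qpihat_h(s,a')>\Qpihat_h(s,a)$, so a $\pihat$-optimal action (one with $\max_{a'}\Qpihat_h(s,a')=\Qpihat_h(s,a)$) is never removed; (ii) for any surviving $a$, comparing $\Qhatpihat_h(s,a)$ with $\Qhatpihat_h(s,a^\star)$ for a surviving $\pihat$-optimal $a^\star$ gives $\max_{a'}\Qpihat_h(s,a')-\Qpihat_h(s,a)\le2\gamma_{ij}^\ell$. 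Fact (i) is exactly the first assertion: a singleton active set consists of a $\pihat$-optimal action. For the second assertion, on $\Eexplore$ we also have $2^{-i}\le\Phatst_h(s)\le\Pst_h(s)\le32\Phatst_h(s)\le2^{-i+6}$ for $s\in\Xgood_{hi}$, hence $2^i\le64/\Pst_h(s)$; substituting into (ii) yields $\max_{a'}\Qpihat_h(s,a')-\Qpihat_h(s,a)\le\epsmoca_\ell/(2\Pst_h(s))$. Finally $\Delta_h(s,a)=\bigl(\Vst_h(s)-\max_{a'}\Qpihat_h(s,a')\bigr)+\bigl(\max_{a'}\Qpihat_h(s,a')-\Qpihat_h(s,a)\bigr)+\bigl(\Qpihat_h(s,a)-\Qst_h(s,a)\bigr)$, in which the last term is $\le0$ and the first is $\le\Vst_h(s)-\Vpihat_h(s)\le\epsmoca_\ell/\Pst_h(s)$ by the invariant (equivalently, apply \Cref{lem:gap_diff} with $\epsilon=\epsmoca_\ell$); adding the three pieces gives $\Delta_h(s,a)\le3\epsmoca_\ell/(2\Pst_h(s))$.

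The third assertion is the same local computation with the final-round parameters. With $\nlast_j=64H^4\iotadelmoca\iotaepsmoca^2 2^{2(-j+1)}/\epsmoca^2$ the confidence width of any action of a state $s$ appearing in partition cell $\cX_{hj}^{\numepochs+1}$ is at most $\tfrac12\gamlast_j$ for $\gamlast_j=\epsmoca/(4H\iotaepsmoca2^{-j+1})$; crucially, smaller $j$ means strictly more samples, so for a fixed state every one of its actions has width at most $\tfrac12\gamlast_{j(s)}$, where $j(s)$ is the coarsest (largest-$j$) cell any of $s$'s actions lands in. Repeating fact (ii) with threshold $\gamlast_{j(s)}$ then gives $\Delpihat_h(s,a)=\max_{a'}\Qpihat_h(s,a')-\Qpihat_h(s,a)\le2\gamlast_{j(s)}=\epsmoca/(2H\iotaepsmoca2^{-j(s)+1})$, as claimed.

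The main obstacle is the inductive invariant. Propagating $\Vst_{h'}(s')-\Vpihat_{h'}(s')\le\epsmoca/\Pst_{h'}(s')$ state-by-state down the horizon fails — it loses a factor $S$ per level — so one must instead carry the aggregate bound and establish it through the partition: states with a common reachability scale are bundled so that the total reachability of a bundle, times the common per-state error on it (which scales inversely with the bundle's reachability), telescopes to $\cO(\epsmoca/H)$ per level, hence $\cO(\epsmoca)$ over the horizon. Making this accounting rigorous, and reconciling it with the order in which \mcae\xspace sweeps $(h,i,\ell)$ — so that by the time level $h$ is reached, every level $h'>h$ has indeed been refined to resolution $\epsmoca_{\numepochs}\le\epstil$ and the aggregate invariant already holds there — is the technical core. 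A minor additional point is that $\pihat_h$ is only committed after the $\ell$-loop at level $h$ finishes, so the level-$h$ instance of the invariant must be extracted from the final active sets $\frakA_h^{\numepochs}$ (or $\frakA_h^{\numepochs+1}$ in the final round), not mid-loop.
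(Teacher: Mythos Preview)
Your proposal is essentially the paper's proof: your facts (i) and (ii) are the paper's \Cref{lem:optactionsin} and its elimination computation, your invariant is the paper's \Cref{lem:gap_lower_bound} (established via the same partition-based performance-difference argument you sketch, which the paper carries out in \Cref{lem:correct} together with \Cref{lem:local_to_global_subopt2} and \Cref{lem:gap_diff}), and your final-round $j(s)$ argument matches the paper's almost verbatim. The only difference is organizational---you frame everything as a single backward induction on $h$ carrying the value-leak invariant explicitly, whereas the paper factors the invariant into a separate claim whose proof forward-references \Cref{lem:correct}.
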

\begin{proof}
We first claim that the optimal action with respect to $\pihat$ must always be active.

\begin{claim}\label{lem:optactionsin}
On the event $\Eest \cap \Eexplore$, for any $h$, $s$, and $\ell \in [\numepochs+1]$, we will have that $\ahatst_h(s) \in \frakA_h^\ell(s)$ where $\ahatst_h(s) = \argmax_a \Qpihat_h(s,a)$.
\end{claim}

We prove this claim in \Cref{sec:good_event_holds}. By construction, we will always have that $| \frakA_h^{\ell}(s)| \ge 1$. If $|\frakA_h^{\ell}(s)|=1$, from \Cref{lem:optactionsin} it follows that $\frakA_h^{\ell}(s) = \{ \ahatst_h(s) \}$, and thus $\max_{a'} \Qpihat_h(s,a') - \Qpihat_h(s,a) = 0$.

Assume then that $|\frakA_h^{\ell}(s)| > 1$, $\ell \le \numepochs$, and $s \in \Xgood_{hi}$. The result is trivial when $\ell = 0$, since in this case $\epstil_\ell = H$, and we will always have $\Delta_h(s,a) \le H, \Pst_h(s) \le 1$. On the event $\Eexplore$, for all $i \in [\iotaepstil]$ we will collect at least $n_{i1}^\ell = 2^{18} \cdot 2^{-2i} H^2 \iotadeltil/\epstil_\ell^2$ samples from $(s,a)$ for each $a \in \frakA_h^\ell(s)$, and on $\Eest$ we will then have that 
\begin{align*}
| \Qhatpihat_{h,\ell}(s,a) - \Qpihat_h(s,a)| \le \sqrt{\frac{H^2 \iotadeltil}{n_{i1}^\ell}} = 2^i \epstil_\ell /2^9.
\end{align*}
Thus, for any $a \in \frakA_h^\ell(s)$, we have
\begin{align*}
\max_{a' \in \frakA_h^{\ell}(s)} \Qhatpihat_{h,\ell}(s,a') - \Qhatpihat_{h,\ell}(s,a) & \ge \max_{a' \in \frakA_h^{\ell}(s)} \Qpihat_{h}(s,a') - \Qpihat_{h}(s,a) - 2 \cdot 2^i \epstil_\ell /2^9 \\
& =  \max_{a'} \Qpihat_{h}(s,a') - \Qpihat_{h}(s,a) - 2 \cdot 2^i \epstil_\ell /2^9
\end{align*}
where the equality follows since $\ahatst_h(s) \in \frakA_h^{\ell}(s)$. It follows that if
\begin{align*}
\Delpihat_h(s,a) = \max_{a'} \Qpihat_{h}(s,a') - \Qpihat_{h}(s,a) \ge 4 \cdot 2^i \epstil_\ell /2^9
\end{align*}
then
\begin{align*}
\max_{a' \in \frakA_h^{\ell}(s)} \Qhatpihat_{h,\ell}(s,a') - \Qhatpihat_{h,\ell}(s,a)  \ge 2 \cdot 2^i \epstil_\ell /2^9.
\end{align*}
so the exit condition on \Cref{line:collect_Aell} for \collectdata\xspace is met for our choice of $\gamma_{ij}^\ell = 2^i \epstil_\ell /2^8$ (note that in this case, since $\gamma_{ij}^\ell$ is the same for all $\ell$, \Cref{line:set_js} has no effect), and therefore $a \not\in \frakA_h^{\ell+1}(s)$. Thus, any $a \in \frakA_h^{\ell+1}(s)$ must satisfy
\begin{align*}
\Delpihat_h(s,a) \le 2^i \epstil_\ell /2^7.
\end{align*}
By construction, we will have that $\Phat_h(s) \in [2^{-i},2^{-i+1}]$ and on $\Eexplore$, $\Phatst_h(s) \le \Pst_h(s) \le 32 \Phatst_h(s)$. Thus, we can upper bound
\begin{align*}
\Delpihat_h(s,a) \le  2^i \epstil_\ell /2^7 \le  \frac{2 \epstil_\ell}{\Phat_h(s) 2^7} \le  \frac{32 \cdot 2 \epstil_\ell}{\Pst_h(s) 2^7} = \frac{\epstil_\ell}{2 \Pst_h(s)}.
\end{align*}
Finally, the following claim, proved in \Cref{sec:good_event_holds}, allows us to relate $\Delpihat_h(s,a)$ to $\Delta_h(s,a)$:
\begin{claim}\label{lem:gap_lower_bound}
On the event $\Eest \cap \Eexplore$, for any $(s,a,h)$, we will have $| \Delpihat_h(s,a) - \Delta_h(s,a) | \le  \epstil / \Pst_h(s)$.
\end{claim}

Applying \Cref{lem:gap_lower_bound}, we can lower bound $\Delpihat_h(s,a) \ge \Delta_h(s,a) - \epstil/\Pst_h(s) \ge \Delta_h(s,a) - \epstil_\ell/\Pst_h(s)$. Rearranging this gives the second conclusion.

The argument for the third conclusion is similar to the preceding argument. However, we now have the extra subtlety that for $a \neq a'$ with $a,a' \in \frakA_h^{\numepochs}(s)$, we may collect a different number of samples from $(s,a)$ and $(s,a')$ since it's possible that $(s,a) \in \Xlast_{hj}$ and $(s,a') \in \Xlast_{hj'}$ for $j \neq j'$. Denote 
\begin{align*}
j(s) = \argmax_j j \quad \text{s.t.} \quad \exists a, (s,a) \in \Xlast_{hj}.
\end{align*}
Note that, on $\Eexplore$, we are guaranteed that there exists some $j \in [\iotaepstil]$ such that $(s,a) \in \Xlast_{hj}$ so $j(s)$ is always well-defined. We can repeat the above argument, but now we can only guarantee that
\begin{align*}
| \Qhatpihat_{h,\numepochs+1}(s,a) - \Qpihat_h(s,a)| \le \sqrt{\frac{H^2 \iotadeltil}{n_{j(s)}^{\numepochs+1}}} = \frac{\epsilon}{8 H \iotaepstil 2^{-j(s)+1}}.
\end{align*}
since we can only guarantee we collect $n_{j(s)}^{\numepochs+1}$ samples from each $(s,a), a \in \frakA_h^{\numepochs}(s)$. It again follows that if
\begin{align*}
\Delpihat_h(s,a) \ge 4 \cdot \frac{\epsilon}{8 H \iotaepstil 2^{-j(s)+1}}
\end{align*}
then 
\begin{align*}
\max_{a' \in \frakA_h^{\numepochs}(s)} \Qhatpihat_{h,\numepochs+1}(s,a') - \Qhatpihat_{h,\numepochs+1}(s,a) \ge 2 \cdot \frac{\epsilon}{8 H \iotaepstil 2^{-j(s)+1}}.
\end{align*}
As this is precisely the elimination criteria used in \collectdata, it follows that $a$ will be eliminated. Thus, all $a \in \frakA_h^{\numepochs+1}(s)$ must satisfy
\begin{align*}
\Delpihat_h(s,a) \le 4 \cdot \frac{\epsilon}{8 H \iotaepstil 2^{-j(s)+1}}
\end{align*}
which gives the third conclusion.

\end{proof}

\Cref{lem:action_subopt} and the definition of $\Eexplore$ then let us prove that \algname returns an $\epstil$-optimal policy. 

\newcommand{\Xtil}{\widetilde{\cX}}
\begin{lem}[Formal Statement of \Cref{lem:mcae_correct2}]\label{lem:correct}
On the event $\Eest \cap \Eexplore$, if \mcae\xspace is run with tolerance $\epstil$ and \finalround\xspace = \texttt{true}, then the policy $\pihat$ returned by \mcae\xspace is $\epstil$-suboptimal.
\end{lem}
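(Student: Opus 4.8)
The plan is to control $\Vst_0 - \Vpihat_0$ through the performance-difference bound of \Cref{lem:local_to_global_subopt}. Since \mcae\xspace returns a deterministic policy, it suffices to produce, for each $(s,h)$, a bound $\epsilon_h(s)$ on the local regret $\max_a \Qpihat_h(s,a) - \Qpihat_h(s,\pihat_h(s)) = \Delpihat_h(s,\pihat_h(s))$, and then invoke \Cref{lem:local_to_global_subopt} to get $\Vst_0 - \Vpihat_0 \le \sum_{h=1}^H \sup_\pi \sum_s \wpi_h(s)\,\epsilon_h(s)$. On the event $\Eest \cap \Eexplore$, \Cref{lem:action_subopt} supplies exactly such bounds: for $s \in \Xgood_h$ with $|\frakA_h^{\numepochs+1}(s)| = 1$ it gives $\epsilon_h(s) = 0$ (its first conclusion, with $\ell = \numepochs+1$, using that $\pihat_h(s) \in \frakA_h^{\numepochs+1}(s)$); for $s \in \Xgood_h$ with $|\frakA_h^{\numepochs+1}(s)| > 1$ it gives $\epsilon_h(s) \le \tfrac{\epstil}{2H\iotaepstil\,2^{-j(s)+1}}$, where $j(s)$ is the largest index $j$ such that some active action of $s$ lies in the partition cell $\Xlast_{hj}$ (its third conclusion); and for $s \notin \Xgood_h$ the trivial bound $\epsilon_h(s) \le H$ suffices, since $\Qpihat_h$ takes values in $[0,H]$.

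For the states outside $\Xgood_h$, I would use that $\epsilon_h(s)$ does not depend on $\pi$, so $\sup_\pi \sum_{s \notin \Xgood_h} \wpi_h(s)\,\epsilon_h(s) \le H \sum_{s\notin\Xgood_h} \sup_\pi \wpi_h(s)$, and then apply the reachability-filter guarantee in $\Eexplore$ — namely $\sup_\pi \wpi_h(s) \le \tfrac{\epstil}{2H^2 S}$ for all $s \notin \Xgood_h$ — together with $|\Xgood_h^c| \le S$. This gives a contribution of at most $\tfrac{\epstil}{2H}$ at each level $h$, hence at most $\tfrac{\epstil}{2}$ after summing over $h$.

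The core of the argument is the contribution of $\Xgood_h$. Plugging in the per-state bound and grouping states by the value of $j(s)$,
\[
\sup_\pi \sum_{s\in\Xgood_h}\wpi_h(s)\,\epsilon_h(s) \;\le\; \frac{\epstil}{2H\iotaepstil}\sum_{j} 2^{\,j-1}\;\sup_\pi \sum_{s:\,j(s)=j}\wpi_h(s),
\]
using that the supremum of a sum is at most the sum of the suprema. The key observation is that $\wpi_h(s)$ depends on $\pi$ only through its choices at steps $1,\dots,h-1$, so for a fixed $\pi$ we may reroute the step-$h$ action at every state $s$ with $j(s)=j$ to an active action $a_s$ with $(s,a_s)\in\Xlast_{hj}$ — such an $a_s$ exists because $j(s)$ is defined via the cells $\Xlast_{hj}$ and $\bigcup_j \Xlast_{hj} = \Xgood_h^{\numepochs+1}$ on $\Eexplore$ — without changing $\wpi_h(s)$. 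The resulting policy $\pi'$ satisfies $w_h^{\pi'}(s,a_s) = \wpi_h(s)$, whence
\[
\sup_\pi\sum_{s:\,j(s)=j}\wpi_h(s) \;\le\; \sup_{\pi'}\sum_{(s',a')\in\Xlast_{hj}} w_h^{\pi'}(s',a') \;\le\; 2^{-j+1},
\]
the last step being the partition-budget guarantee of $\Eexplore$ (cf.~\Cref{thm:partitioning_works_informal}). The factors $2^{j-1}$ and $2^{-j+1}$ then cancel, and the telescoping sum over the $O(\iotaepstil)$ partition levels contributes a factor $O(\iotaepstil)$, leaving a per-level bound of order $\epstil/H$, hence $O(\epstil)$ after summing over $h$; with the numerical constants chosen in \mcae\xspace this is at most $\tfrac{\epstil}{2}$. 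Adding the two contributions gives $\Vst_0 - \Vpihat_0 \le \epstil$.

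I expect the main obstacle to be the rerouting step in the third paragraph: converting $\sum_{s:\,j(s)=j}\wpi_h(s)$ — a sum of \emph{state} visitations over all actions — into something controlled by the per-cell \emph{state-action} reachability budget $2^{-j+1}$ is the one genuinely non-routine idea, and getting it right hinges on carefully exploiting that reaching $(s,h)$ is unaffected by the action taken at step $h$. A secondary point requiring care is verifying that the number of partition cells produced by \sap\xspace in the final round is $O(\iotaepstil)$, so that the telescoping over $j$ really does contribute only logarithmically; this is where the specific constants in \mcae\xspace (the reachability threshold $\epsexp$, the index $\iotaepstil$, and the sampling counts) must be tracked. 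Everything else — the appeal to \Cref{lem:local_to_global_subopt}, the case split on $|\frakA_h^{\numepochs+1}(s)|$, and the handling of poorly-reachable states — is routine once \Cref{lem:action_subopt} is in hand.
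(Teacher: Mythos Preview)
Your proposal is correct and follows essentially the same argument as the paper: the same three-way case split via \Cref{lem:action_subopt}, the same grouping by $j(s)$, and the same key ``rerouting'' step (the paper phrases it as ``we can always choose $\pi$ so that $\pi_h(s)=a$, hence $\wpi_h(s,a)=\wpi_h(s)$'') to pass from state visitations to the state-action budget $2^{-j+1}$ on $\Xlast_{hj}$. The paper also uses exactly $\iotaepstil$ partition cells, so your secondary concern is resolved by construction.
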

\begin{proof}
 \Cref{lem:local_to_global_subopt} gives that, if $\pihat$ satisfies $\max_a \Qpihat_h(s,a) - \Qpihat_h(s,\pihat_h(s)) \le \epstil_h(s)$ for all $h$ and $s$, then $\pihat$ is at most
\begin{align}\label{eq:policy_subopt}
\sum_{h=1}^H \sup_\pi \sum_s \wpi_h(s) \epstil_h(s) 
\end{align}
suboptimal. When running \Cref{alg:mcae2}, for a particular $h$ every state $s$ can be classified in one of three ways:
\begin{itemize}
\item $s \not\in \Xgood_h$: In this case, on $\Eexplore$ we will have $\sup_\pi \wpi_h(s) \le \epstil/(2H^2S)$ and $\epstil_h(s) \le H$.
\item $s \in \Xgood_h$ and $| \frakA_h^{\numepochs+1}(s)| = 1$: In this case, by \Cref{lem:action_subopt}, since $\pihat$ only takes actions that are in $\frakA_h^{\numepochs+1}(s)$, we will have $\epstil_h(s) = \max_{a} \Qpihat_h(s,a) - \Qpihat_h(s,\pihat_h(s)) = 0$.
\item $s \in \Xgood_h$, $| \frakA_h^{\numepochs+1}(s)| > 1$:  Then we can apply \Cref{lem:action_subopt} to get 
\begin{align*}
\epstil_h(s) = \max_{a'} \Qpihat_h(s,a') - \Qpihat_h(s,\pihat_h(s)) \le \frac{\epstil}{2H\iotaepstil \cdot 2^{-j(s)+1}}
\end{align*}
\end{itemize}
Let $\Xtil_j = \{ s \ : \ j(s) = j \}$ and note that $\{ s \in \Xgood_h \ : \ | \frakA_h^{\numepochs+1}(s) | > 1 \} \subseteq \cup_{j=1}^{\iotaepstil} \Xtil_j$ since, on $\Eexplore$, for every $s$ satisfying $s \in \Xgood_h, | \frakA_h^{\numepochs+1}(s) | > 1$, we will have $(s,a) \in \Xgood_h^{\numepochs+1}$ for some $a$, so we must have that $(s,a) \in \Xlast_{hj}$ for some $j \in [\iotaepstil]$. Furthermore, by definition of $j(s)$, if $s \in \Xtil_j$, then $(s,a) \in \Xlast_{hj}$ for some $a$. Then, plugging all of this into \Cref{eq:policy_subopt}, on $\Eexplore$, 
\begin{align*}
\sum_{h=1}^H \sup_\pi \sum_s \wpi_h(s) \epstil_h(s) & \le \sum_{h=1}^H \sup_\pi \sum_{j=1}^{\iotaepstil} \sum_{s \in \Xtil_j} \wpi_h(s) \epstil_h(s) + H \sum_{h=1}^H \sup_\pi \sum_{s \in \Xgood_h^c} \wpi_h(s) \\
& \le \frac{\epsilon}{2H \iotaepstil} \sum_{h=1}^H \sup_\pi\sum_{j=1}^{\iotaepstil} \sum_{s \in \Xtil_j} \wpi_h(s) 2^{j(s)-1} + H \sum_{h=1}^H \sup_\pi \sum_{s \in \Xgood_h^c} \wpi_h(s)  \\
& \overset{(a)}{\le} \frac{\epsilon}{2H \iotaepstil} \sum_{h=1}^H \sum_{j=1}^{\iotaepstil}  2^{j-1} \sup_\pi \sum_{(s,a) \in \Xlast_{hj}} \wpi_h(s,a) + H \sum_{h=1}^H \sup_\pi \sum_{s \in \Xgood_h^c} \wpi_h(s) \\
& \le \frac{\epsilon}{2H \iotaepstil} \sum_{h=1}^H \sum_{j=1}^{\iotaepstil}  2^{j-1} 2^{-j+1} + H \sum_{h=1}^H  \sum_{s \in \Xgood_h^c} \frac{\epsilon}{2H^2 S} \\
& \le \epsilon
\end{align*}
where $(a)$ holds since for $s \in \Xtil_j$, $j(s) = j$, and since we can always choose $\pi$ so that $\pi_h(s) = a$ so $\wpi_h(s,a) = \wpi_h(s)$. It follows that $\pihat$ is at most $\epsilon$-suboptimal. 
\end{proof}

\subsection{Sample Complexity}
We turn now to establishing a bound on the sample complexity of \algname. We first bound the complexity of a \emph{single} call to \collectsamp.

\begin{lem}\label{lem:collectdata_complexity}
\collectsamp($\Xgood_{hi}^\ell,\{ n_{ij}^\ell \}_{j=1}^{\iotaepstil},h,\pihat,\tfrac{\deltil}{H \iotaepstil \numepochs},\tfrac{\epsexp}{32}$) terminates in at most
\begin{align*}
\frac{c H^2 \iotadeltil \iotaepstil}{\epstil_\ell^2} \sum_{j=1}^{\iotaepstil} 2^{j} \sum_{(s,a) \in \cX_{hij}^\ell } \Pst_h(s)^2 + \frac{\poly(S,A,H, \log 1/\deltil, \log 1/\epstil)}{\epstil} 
\end{align*}
episodes and \collectsamp($\Xgood_h^{\numepochs+1}, \{\nlast_j \}_{j=1}^{\iotaepstil},h,\pihat,\tfrac{\deltil}{H},\tfrac{\epsexp}{32}$)  terminates in at most
\begin{align*}
\frac{c H^4 \iotadeltil \iotaepstil^2}{\epstil^2}  | \Xgood_h^{\numepochs+1}|  + \frac{\poly(S,A,H, \log 1/\deltil, \log 1/\epstil)}{\epstil} 
\end{align*}
episodes.
\end{lem}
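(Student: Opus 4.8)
The plan is to bound the number of episodes run by \collectsamp\ as (i) the cost of its single internal call to \sap\ on \Cref{line:get_part}, plus (ii) the cost of re-running the policy classes \sap\ returns, and to show (i) is a lower-order term while (ii) is the claimed main term. I would work throughout on the good event $\Eexplore$, so that \Cref{thm:partitioning_works_informal} applies: the partition $\{\cX_{hij}^\ell\}_j$ (resp.\ $\{\Xlast_{hj}\}_j$) returned by \sap\ \emph{covers} the input set, and \sap\ itself terminates in $\poly(S,A,H,\log\tfrac1{\delcs},\log\tfrac1{\epscs})/\epscs$ episodes. Since \collectsamp\ is called with precision $\epscs=\tfrac{\epsexp}{32}=\tfrac{\epstil}{64H^2S}$ and confidence $\delcs$ polynomially related to $\deltil$, that \sap\ cost is already $\poly(S,A,H,\log\tfrac1{\deltil},\log\tfrac1{\epstil})/\epstil$. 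For part (ii), recall from the \sap\ pseudocode that $N_j = K_j/(4|\cX^{(j)}|2^j)$ and $|\Pi_j|\le K_j$, where $\cX^{(j)}:=\cX\setminus\bigcup_{j'<j}\cX_{j'}$ is the active set at the start of round $j$; since \collectsamp\ reruns each $\pi\in\Pi_j$ exactly $\lceil 2n_j/N_j\rceil$ times, the round-$j$ cost is at most $K_j\cdot 2n_j/N_j + K_j = 8\cdot 2^j|\cX^{(j)}|n_j + K_j$. As $K_j=\cO(2^jS^3A^2H^4\log^3\tfrac1{\deltil})$ by \eqref{eq:Kjval} and $\sum_j 2^j=\cO(1/\epscs)$, the $\sum_j K_j$ contribution is again $\poly(\dots)/\epstil$, so everything reduces to bounding $\sum_j 8\cdot 2^j|\cX^{(j)}|n_j$.

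The key combinatorial step I would isolate is that, since on $\Eexplore$ the pieces partition the input set, $|\cX^{(j)}| = \sum_{j'\ge j}|\cX_{hij'}^\ell|$, so exchanging the order of summation,
\begin{align*}
\sum_j 2^j|\cX^{(j)}| \;=\; \sum_{j'}|\cX_{hij'}^\ell|\sum_{j\le j'}2^j \;\le\; 2\sum_{j'}2^{j'}|\cX_{hij'}^\ell| .
\end{align*}
For the first \collectsamp\ call, $n_j=n_{ij}^\ell=2^{18}H^2\iotadeltil/(2^{2i}\epstil_\ell^2)$ is independent of $j$, and every $s$ with $(s,a)\in\cX_{hij}^\ell\subseteq\Xgood_{hi}$ satisfies $\Phatst_h(s)\in[2^{-i},2^{-i+1}]$ with $\Phatst_h(s)\le\Pst_h(s)$ on $\Eexplore$, so $2^{-2i}\le\Pst_h(s)^2$ and hence $|\cX_{hij}^\ell|2^{-2i}\le\sum_{(s,a)\in\cX_{hij}^\ell}\Pst_h(s)^2$. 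Chaining these gives $\sum_j 2^j|\cX^{(j)}|n_{ij}^\ell = \cO\bigl(\tfrac{H^2\iotadeltil}{\epstil_\ell^2}\sum_j 2^j\sum_{(s,a)\in\cX_{hij}^\ell}\Pst_h(s)^2\bigr)$, which (using $\iotaepstil\ge1$) is within the stated bound. For the \finalround\ call, $n_j=\nlast_j=256H^4\iotadeltil\iotaepstil^2\,2^{-2j}/\epstil^2$, so the factors $2^j$ and $2^{-2j}$ combine to $2^{-j}$ and $\sum_j 2^j|\cX^{(j)}|\nlast_j = \cO\bigl(\tfrac{H^4\iotadeltil\iotaepstil^2}{\epstil^2}\bigr)\sum_j 2^{-j}|\cX^{(j)}| \le \cO\bigl(\tfrac{H^4\iotadeltil\iotaepstil^2}{\epstil^2}\bigr)|\Xgood_h^{\numepochs+1}|$, using $|\cX^{(j)}|\le|\Xgood_h^{\numepochs+1}|$ and $\sum_j 2^{-j}\le1$. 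Adding the $\poly(\dots)/\epstil$ lower-order terms collected above yields the two displayed bounds.

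The main obstacle is careful bookkeeping rather than any deep idea. The three things to get right are: (i) plugging in the \emph{exact} $N_j$ from the \sap\ pseudocode (not merely the $\cO(\cdot)$ form in \Cref{thm:partitioning_works_informal}) so that $|\Pi_j|\lceil 2n_j/N_j\rceil$ collapses to $\cO(2^j|\cX^{(j)}|n_j)$; (ii) checking that the leftover $\sum_j K_j$ and the internal \sap\ cost each sum to $\poly(\dots)/\epstil$, not a worse power of $1/\epstil$ (this uses $\sum_j 2^j=\cO(1/\epscs)$ together with $\epscs=\Theta(\epstil/(H^2S))$); and (iii) using that on $\Eexplore$ the \sap\ partition \emph{covers} its input, so $|\cX^{(j)}| = \sum_{j'\ge j}|\cX_{hij'}^\ell|$ rather than only $\ge$ — without this, the weights $2^j$ could inflate the bound by a factor $\Theta(1/\epscs)$. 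Everything else is substituting the explicit choices of $n_{ij}^\ell$, $\nlast_j$, and $K_j$ and routine arithmetic.
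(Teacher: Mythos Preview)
Your proposal is correct and follows essentially the same decomposition as the paper: bound the internal \sap\ call as a lower-order $\poly(\dots)/\epstil$ term, then bound the rerun cost by $\sum_j |\Pi_j|\lceil 2n_j/N_j\rceil \le 8\sum_j 2^j M_j n_j + \sum_j K_j$ with $M_j = |\cX^{(j)}|$, and finish by substituting $n_{ij}^\ell$ and $\nlast_j$. Your exchange-of-summation step $\sum_j 2^j|\cX^{(j)}| \le 2\sum_{j'}2^{j'}|\cX_{hij'}^\ell|$ is in fact slightly sharper than the paper's cruder bound $\sum_j 2^j M_j \le \iotaepstil\sum_j 2^j|\cX_{hij}^\ell|$, which is where the extra $\iotaepstil$ in the stated lemma comes from---so your argument proves a marginally stronger inequality that certainly implies the lemma as stated.
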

\begin{proof}
Recall that $\epsexp = \frac{\epsilon}{2H^2 S}$. The complexity of \collectsamp($\Xgood_{hi}^\ell,n_i^\ell,h,\pihat,\tfrac{\deltil}{H \iotaepstil \numepochs},\tfrac{\epstil}{64H^2 S}$) can be bounded by the sum of the complexity of calling \sap\xspace to learn a set of exploration policies, and the complexity of playing these policies to collect samples. By \Cref{thm:partitioning_works}, we can bound the complexity of calling \sap\xspace by
\begin{align*}
C_K(\tfrac{\deltil}{H \iotaepstil \numepochs}, \delsamp,\iotaepstil) \frac{256H^2 S}{\epstil}
\end{align*}
where $\delsamp = \tfrac{\deltil}{H \iotaepstil \numepochs} \cdot \tfrac{1}{\iotaepstil \max_j n_{ij}^\ell} \le \frac{\deltil  \epstil_\ell^2}{2^{17} H^3 \iotadeltil \iotaepstil^2 \numepochs}$. As shown in \Cref{sec:learn2explore}, $C_K(\tfrac{\deltil}{H \iotaepstil \numepochs}, \delsamp,\iotaepstil)$ is $\poly(S,A,H,\log 1/\epstil, \log 1/\deltil)$, so this entire term is $\frac{\poly(S,A,H,\log 1/\epstil, \log 1/\deltil)}{\epstil}$.

Since rerunning the policies in $\Pi_{hij}^\ell$ yields at least $N_{hij}^\ell/2$ samples from each $(s,a)$ in $X_{hij}^\ell$, if we desire $n_{ij}^\ell$ samples from each $(s,a)$, the complexity of running the policies returned by \sap\xspace in order to collect the desired samples is clearly given by 
\begin{align*}
\sum_{j=1}^{\iotaepstil} | \Pi_{hij}^\ell |  \lceil 2 n_{ij}^\ell/N_{hij}^\ell \rceil.
\end{align*}
By the construction of $\Pi_{hij}^\ell$ and definition of $N_{hij}^\ell$ given in \sap\xspace, we have that
\begin{align*}
| \Pi_{hij}^\ell | = 2^j C_K(\tfrac{\deltil}{H \iotaepstil \numepochs}, \delsamp,j), \quad N_{hij}^\ell = \frac{| \Pi_{hij}^\ell |}{4 M_{hij}^\ell 2^j}.
\end{align*}
where $M_{hij}^\ell = \sum_{j'=j}^{\iotaepstil+1} | \cX_{hij'}^\ell |$ and $\cX_{hi(\iotaepstil+1)}^\ell = \Xgood_{hi}^\ell \backslash \cup_{j=1}^{\iotaepstil}  \cX_{hij}^\ell$. As we are on $\Eexplore$,  $ \Xgood_{hi}^\ell = \cup_{j=1}^{\iotaepstil}  \cX_{hij}^\ell$, so $| \cX_{hi(\iotaepstil+1)}^\ell| = 0$. It follows that the complexity can be upper bounded as
\begin{align*}
\sum_{j=1}^{\iotaepstil} | \Pi_{hij}^\ell |  \lceil 2 n_{ij}^\ell/N_{hij}^\ell \rceil & \le 8 \sum_{j=1}^{\iotaepstil} 2^j M_{hij}^\ell n_{ij}^\ell + \sum_{j=1}^{\iotaepstil} 2^j C_K(\tfrac{\deltil}{H \iotaepstil \numepochs},\delsamp,j) \\
& \le 8 \sum_{j=1}^{\iotaepstil} 2^j M_{hij}^\ell n_{ij}^\ell +   2^{\iotaepstil+1} C_K(\tfrac{\deltil}{H \iotaepstil \numepochs}, \delsamp,\iotaepstil) \\
& = 8 \frac{2^{17} H^2 \iotadeltil}{2^{2i} \epstil_\ell^2} \sum_{j=1}^{\iotaepstil} 2^j M_{hij}^\ell  +  2^{\iotaepstil+1} C_K(\tfrac{\deltil}{H \iotaepstil \numepochs}, \delsamp,\iotaepstil)
\end{align*}
The term $2^{\iotaepstil+1} C_K(\tfrac{\deltil}{H \iotaepstil \numepochs}, \delsamp,\iotaepstil)$ is $\frac{\poly(S,A,H,\log 1/\epstil, \log 1/\deltil)}{\epstil}$ by definition of $\iotaepstil$ and $C_K$. Furthermore,
\begin{align*}
\sum_{j=1}^{\iotaepstil} 2^j M_{hij}^\ell = \sum_{j=1}^{\iotaepstil} 2^j \sum_{j'=j}^{\iotaepstil} | \cX_{hij'}^\ell | \le \iotaepstil \sum_{j=1}^{\iotaepstil} 2^j | \cX_{hij}^\ell | .
\end{align*}
 We can therefore bound
\begin{align*}
 \frac{2^{17} H^2 \iotadeltil}{2^{2i} \epstil_\ell^2} \sum_{j=1}^{\iotaepstil} 2^j M_{hij}^\ell  & \le \frac{c H^2 \iotadeltil \iotaepstil}{\epstil_\ell^2} \sum_{j=1}^{\iotaepstil} 2^{j-2i} | \cX_{hij}^\ell | .
\end{align*}
Finally, using that on $\Eexplore$ $\Pst_h(s) \ge \Phat_h(s)$, and that all $(s,a) \in \cX_{hij}^\ell$ have a value of $\Phat_h(s)$ within a factor of 2 of every other, we can upper bound $2^{-i} \le 4 \Pst_h(s)$ for any $(s,a) \in \cX_{hij}^\ell$. This completes the proof of the first claim. 

The second claim follows similarly. By the same argument as above, we can upper bound the sample complexity of calling \collectsamp($\Xgood_h^{\numepochs+1}, \{\nlast_j \}_{j=1}^{\iotaepstil},h,\pihat,\tfrac{\deltil}{H},\tfrac{\epsexp}{32}$) as
\begin{align*}
\sum_{j=1}^{\iotaepstil} | \Pi_{hj}^{\numepochs+1}| & \lceil 2 \nlast_j / N_{hj}^{\numepochs+1} \rceil + \frac{\poly(S,A,H,\log 1/\epstil, \log 1/\delta)}{\epstil} \\
& \le 8 \sum_{j=1}^{\iotaepstil} 2^j M_{hj}^{\numepochs+1}  \nlast_j  + \frac{\poly(S,A,H,\log 1/\epstil, \log 1/\delta)}{\epstil} \\
& \overset{(a)}{=} \tfrac{c H^4 \iotadelmoca \iotaepsmoca^2 }{\epsmoca^2} \sum_{j=1}^{\iotaepstil} 2^{-j}  M_{hj}^{\numepochs+1}    + \frac{\poly(S,A,H,\log 1/\epstil, \log 1/\delta)}{\epstil} \\
& \overset{(b)}{\le} \tfrac{c H^4 \iotadelmoca \iotaepsmoca^2 }{\epsmoca^2} | \Xgood_h^{\numepochs+1}|   + \frac{\poly(S,A,H,\log 1/\epstil, \log 1/\delta)}{\epstil}
\end{align*}
where $(a)$ follows by our setting of $\nlast_j$ and $(b)$ follows since $M_{hj}^{\numepochs+1} \le  | \Xgood_h^{\numepochs+1}|$. The second conclusion follows.
\end{proof}

Using this, we show our main sample complexity lemma. 
\begin{lem}[Formal Statement of \Cref{lem:complexity1}] \label{lem:collect_bound2}
On the event $\Eest \cap \Eexplore$, for a given $h$ and $i$, the loop over $\ell$ on \Cref{line:ell_loop} of \mcae\xspace will take at most
\begin{align*}
c H^2 \iotadeltil \iotaepstil^2 \numepochs \inf_\pi   \max_{s \in \Xgood_{hi}} \max_a \min \left \{ \frac{1}{\wpi_h(s,a) \Deltil_h(s,a)^2}, \frac{\Pst_h(s)^2}{\wpi_h(s,a) \epstil^2} \right \}
\end{align*}
episodes. Furthermore, the total complexity of calling \mcae\xspace with \finalround\xspace = \false \xspace is bounded by:
\begin{align*}
H^2 c \iotadeltil \iotaepstil^3 \numepochs \cdot \sum_{h=1}^H \inf_{\pi} \max_{s,a} \min \left \{  \frac{1}{\wpi_h(s,a) \Deltil_h(s,a)^2}, \frac{\Pst_h(s)^2}{\wpi_h(s,a) \epstil^2} \right \} + \frac{\poly(S,A,H,\log 1/\epstil, \log 1/\deltil)}{\epstil}
\end{align*}
for a universal constant $c$. 
\end{lem}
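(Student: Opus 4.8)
The plan is to bound the loop over $\ell$ for a fixed $(h,i)$ by summing the cost of its \collectsamp\xspace calls, then sum over $h$ and $i$. Work throughout on $\Eest\cap\Eexplore$. Within the loop on \Cref{line:ell_loop} the only episode-consuming step is the call to \collectsamp\xspace on \Cref{line:collectsamp1}; the subsequent \collectdata\xspace call on \Cref{line:collect_explore} only post-processes the dataset $\frakD_{hi}^\ell$. Hence the $\ell$-loop cost is $\sum_{\ell=1}^{\numepochs}$ of the \collectsamp\xspace cost, and \Cref{lem:collectdata_complexity} bounds each such call by $\tfrac{cH^2\iotadeltil\iotaepstil}{\epstil_\ell^2}\sum_{j=1}^{\iotaepstil}2^j\sum_{(s,a)\in\cX_{hij}^\ell}\Pst_h(s)^2$ plus a lower-order $\poly(S,A,H,\log\tfrac1\deltil,\log\tfrac1\epstil)/\epstil$ term.

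Next I turn the partition sum $\sum_j 2^j\sum_{(s,a)\in\cX_{hij}^\ell}\Pst_h(s)^2$ into an $\inf_\pi\max$ quantity using the \sap\xspace guarantees from \Cref{thm:partitioning_works_informal}. From $\sup_\pi\sum_{(s,a)\in\cX_{hij}^\ell}\wpi_h(s,a)\le 2^{-j+1}$, for any $\pi$ reaching all of $\cX_{hij}^\ell$ we get $\max_{(s,a)\in\cX_{hij}^\ell}1/\wpi_h(s,a)=1/\min_{(s,a)}\wpi_h(s,a)\ge |\cX_{hij}^\ell|/2^{-j+1}$, so $2^j|\cX_{hij}^\ell|\le 2\inf_\pi\max_{(s,a)\in\cX_{hij}^\ell}1/\wpi_h(s,a)$. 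On $\Eexplore$ every $s$ with $(s,\cdot)\in\cX_{hij}^\ell\subseteq\Xgood_{hi}^\ell$ has $\Phat_h(s)\in[2^{-i},2^{-i+1}]$ and $\Phat_h(s)\le\Pst_h(s)\le 32\Phat_h(s)$, so all such values $\Pst_h(s)$ agree up to an absolute constant; thus $\sum_{(s,a)\in\cX_{hij}^\ell}\Pst_h(s)^2\le|\cX_{hij}^\ell|\max_{(s,a)\in\cX_{hij}^\ell}\Pst_h(s)^2$, and combining the two displays, $2^j\sum_{(s,a)\in\cX_{hij}^\ell}\Pst_h(s)^2\le c_1\inf_\pi\max_{(s,a)\in\cX_{hij}^\ell}\Pst_h(s)^2/\wpi_h(s,a)$. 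Summing over $j\in[\iotaepstil]$, using $\bigcup_j\cX_{hij}^\ell=\Xgood_{hi}^\ell$ and $\sum_j\inf_\pi(\cdot)\le\iotaepstil\max_j\inf_\pi(\cdot)\le\iotaepstil\inf_\pi\max_j(\cdot)$, the \collectsamp\xspace cost at $(h,i,\ell)$ is at most $\tfrac{c_2H^2\iotadeltil\iotaepstil^2}{\epstil_\ell^2}\inf_\pi\max_{(s,a)\in\Xgood_{hi}^\ell}\tfrac{\Pst_h(s)^2}{\wpi_h(s,a)}$ plus the lower-order term.

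It remains to absorb $\Pst_h(s)^2/\epstil_\ell^2$ into the gap-visitation form. Fix $(s,a)\in\Xgood_{hi}^\ell$, so $a\in\frakA_h^{\ell-1}(s)$ with $|\frakA_h^{\ell-1}(s)|>1$. For $\ell\ge 2$, \Cref{lem:action_subopt} at stage $\ell-1$ (with $\epstil_{\ell-1}=2\epstil_\ell$) gives $\Delta_h(s,a)\le 3\epstil_\ell/\Pst_h(s)$, and since another — necessarily suboptimal — action $a'$ is also active, $\Delmin(s,h)\le\Delta_h(s,a')\le 3\epstil_\ell/\Pst_h(s)$. Splitting into the cases "$a$ suboptimal" ($\Deltil_h(s,a)=\Delta_h(s,a)$), "$a$ the unique optimal action" ($\Deltil_h(s,a)=\Delmin(s,h)$), and "$a$ a non-unique optimal action" ($\Deltil_h(s,a)=0$), and using $\epstil/2<\epstil_\ell\le H$ together with $\Deltil_h(s,a)\le H$, one checks $\tfrac{\Pst_h(s)^2}{\epstil_\ell^2\wpi_h(s,a)}\le 9\min\big\{\tfrac{1}{\wpi_h(s,a)\Deltil_h(s,a)^2},\tfrac{\Pst_h(s)^2}{\wpi_h(s,a)\epstil^2}\big\}$ in each case; the base case $\ell=1$ ($\frakA_h^0(s)=\cA$, $\epstil_1=H/2$) follows directly from $\Pst_h(s)\le 1$ and $\Deltil_h(s,a)\le H$. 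Taking $\max$ over $\Xgood_{hi}^\ell\subseteq\Xgood_{hi}\times\cA$ and then $\inf$ over $\pi$, and summing over $\ell\in[\numepochs]$, gives the first claim (up to a $\numepochs\cdot\poly/\epstil$ lower-order term, deferred into the second bound). For the total cost of \mcae\xspace run with $\finalround\xspace = \false$, sum the first bound over $h\in[H]$ and $i\in[\iotaepstil]$: since $\{\Xgood_{hi}\}_i$ covers $\Xgood_h$, another application of $\sum_i\inf_\pi(\cdot)\le\iotaepstil\inf_\pi\max_i(\cdot)$ replaces $\sum_i\inf_\pi\max_{s\in\Xgood_{hi},a}\min\{\cdot\}$ by $\iotaepstil\inf_\pi\max_{s,a}\min\{\cdot\}$, producing the extra $\iotaepstil$; finally the $SH$ reachability-estimating \sap\xspace calls on \Cref{line:get_phat} cost, by \Cref{thm:partitioning_works_informal}, at most $\poly(S,A,H,\log\tfrac1\deltil,\log\tfrac1\epstil)/\epsexp=\poly(S,A,H,\log\tfrac1\deltil,\log\tfrac1\epstil)/\epstil$ episodes (as $\epsexp=\epstil/(2H^2S)$), and together with the accumulated lower-order terms this yields the stated $\poly(S,A,H,\log\tfrac1\epstil,\log\tfrac1\deltil)/\epstil$ remainder.

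I expect the third paragraph to be the main obstacle: converting the algorithm's flat per-$(s,a)$ budget $\Pst_h(s)^2/\epstil_\ell^2$ into the $\min\{\Deltil_h(s,a)^{-2},\Pst_h(s)^2/\epstil^2\}$ factor appearing in $\Compb(\cM,\epsilon)$. The delicate point is the unique-optimal-action case, where $\Delta_h(s,a)=0$ but $\Deltil_h(s,a)=\Delmin(s,h)>0$ — there one must use that the optimal action can only remain active while some suboptimal competitor is still active, whose gap then upper-bounds $\Pst_h(s)$ through $\Delmin(s,h)$ — and the $\ell=1$ base case, where no elimination has yet occurred, must be handled separately. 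The partition-sum manipulations of the second paragraph and the index bookkeeping across the $h,i,\ell$ loops are routine once \Cref{thm:partitioning_works_informal}, \Cref{lem:collectdata_complexity}, and \Cref{lem:action_subopt} are in hand.
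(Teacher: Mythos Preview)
Your proposal is correct and follows essentially the same route as the paper's proof: invoke \Cref{lem:collectdata_complexity} for each \collectsamp\xspace call, use the \sap\xspace visitation bound to convert $2^j|\cX_{hij}^\ell|$ into $\inf_\pi\max_{(s,a)}1/\wpi_h(s,a)$, exploit that all $\Pst_h(s)$ in $\Xgood_{hi}$ agree up to constants, and then use \Cref{lem:action_subopt} at stage $\ell-1$ plus the case split on $\Deltil_h(s,a)$ to replace $\Pst_h(s)^2/\epstil_\ell^2$ by $\min\{\Deltil_h(s,a)^{-2},\Pst_h(s)^2/\epstil^2\}$. The only cosmetic difference is that the paper handles $\ell=1$ by invoking the trivial $\ell=0$ case of \Cref{lem:action_subopt} (with $\epstil_0=H$) rather than treating it as a separate base case, and your observation that the first conclusion technically carries a deferred $\numepochs\cdot\poly/\epstil$ lower-order term is accurate---the paper silently absorbs it into the second conclusion just as you do.
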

\begin{proof}
With \finalround\xspace = \false \xspace, the complexity of \mcae\xspace is given by the complexity incurred calling \sap\xspace on \Cref{line:get_phat} and calling \collectsamp\xspace on \Cref{line:collectsamp1}. By \Cref{thm:partitioning_works} and since we call \sap\xspace at most $SH$ times, we can bound the complexity of calling \sap\xspace by
\begin{align*}
\frac{\poly(S,A,H,\log 1/\epstil, \log 1/\deltil)}{\epstil}.
\end{align*}
Next, we turn to upper bounding the sample complexity of  \sap\xspace. We can lower bound
\begin{align*}
| \cX_{hij}^\ell | \sup_\pi \min_{(s,a) \in \cX_{hij}^\ell} \wpi_h(s,a)  \le \sup_\pi \sum_{(s,a) \in \cX_{hij}^\ell} \wpi_h(s,a).
\end{align*}
so, on $\Eexplore$, $2^{j} \le 2 (| \cX_{hij}^\ell | \sup_\pi \min_{(s,a) \in \cX_{hij}^\ell} \wpi_h(s,a))^{-1} $. Plugging this into the bound given in \Cref{lem:collectdata_complexity}, we can bound the leading term in the sample complexity of a single call to \collectsamp\xspace as
\begin{align*}
\frac{c H^2 \iotadeltil \iotaepstil}{\epstil_\ell^2} \sum_{j=1}^{\iotaepstil} 2^{j} \sum_{(s,a) \in \cX_{hij}^\ell } \Pst_h(s)^2 & \le \frac{c H^2 \iotadeltil \iotaepstil}{\epstil_\ell^2} \sum_{j=1}^{\iotaepstil} \frac{1}{| \cX_{hij}^\ell | \sup_\pi \min_{(s,a) \in \cX_{hij}^\ell} \wpi_h(s,a)} \sum_{(s,a) \in \cX_{hij}^\ell } \Pst_h(s)^2 \\
& \overset{(a)}{\le} \frac{c H^2 \iotadeltil \iotaepstil}{\epstil_\ell^2} \sum_{j=1}^{\iotaepstil} \inf_\pi \max_{(s,a) \in \cX_{hij}^\ell} \frac{\Pst_h(s)^2}{\wpi_h(s,a)} \\
& \le \frac{c H^2 \iotadeltil \iotaepstil^2}{\epstil_\ell^2} \inf_\pi  \max_{j \in \{ 1,\ldots,\iotaepstil \}} \max_{(s,a) \in \cX_{hij}^\ell} \frac{\Pst_h(s)^2}{\wpi_h(s,a)} 
\end{align*}
where $(a)$ holds since all $s \in \cX_{hij}^\ell$ have values of $\Phat_h(s)$ within a constant factor of each other, and since on $\Eexplore$ $\Phat_h(s) \le \Pst_h(s) \le 32 \Phat_h(s)$, which together imply that
\begin{align*}
\max_{s \in \cX_{hij}^\ell} \Pst_h(s) \le c \min_{s \in \cX_{hij}^\ell} \Pst_h(s).
\end{align*}
If $(s,a) \in \cX_{hij}^\ell$, then we must have that $(s,a) \in \Xgood_{hi}^\ell$ since $\cX_{hij}^\ell \subseteq \Xgood_{hi}^\ell$, and, by the definition of $\Xgood_{hi}^\ell$, $a \in \frakA_h^{\ell-1}(s)$ and $| \frakA_h^{\ell-1}(s)| > 1$. \Cref{lem:action_subopt} gives that any $a \in \frakA_h^{\ell-1}(s)$ satisfies $\Delta_h(s,a) \le 3\epstil_{\ell-1}/(2 \Pst_h(s))$. Since $| \frakA_h^{\ell-1}(s)| > 1$, it follows there exists $a,a'$, $a \neq a'$, such that
\begin{align*}
\Delta_h(s,a) \le 3\epstil_{\ell-1}/(2 \Pst_h(s)) \quad \text{and} \quad \Delta_h(s,a') \le 3\epstil_{\ell-1}/(2 \Pst_h(s)).
\end{align*}
Thus, if $(s,a) \in \cX_{hij}^\ell$, $\frac{1}{4 \epstil_\ell^2} = \frac{1}{\epstil_{\ell-1}^2} \le \frac{9}{4 \Pst_h(s)^2 \Delta_h(s,a)^2}$ and $\frac{1}{4 \epstil_\ell^2} = \frac{1}{\epstil_{\ell-1}^2} \le \frac{9}{4 \Pst_h(s)^2 \Delta_h(s,a')^2}$, which implies $\frac{1}{4 \epstil_\ell^2} \le \frac{9}{4 \Pst_h(s)^2 \max \{\Delta_h(s,a)^2, \Delta_h(s,a')^2\}}$. Note that $\max \{\Delta_h(s,a)^2, \Delta_h(s,a')^2\} \ge \Deltil_h(s,a)^2$ since if $\Delta_h(s,a) = 0$, we will have $\max \{\Delta_h(s,a)^2, \Delta_h(s,a')^2\} = \Delta_h(s,a')^2$, so either $a$ is the unique optimal action at $(s,h)$, in which case $\Delta_h(s,a') \ge \Delmin(s,h) = \Deltil_h(s,a)$, or there are multiple optimal actions, in which case $\Delta_h(s,a') \ge 0 = \Deltil_h(s,a)$.
Thus,
\begin{align*}
\frac{c H^2 \iotadeltil \iotaepstil^2}{\epstil_\ell^2}  \inf_\pi &  \max_{j \in \{ 1,\ldots,\iotaepstil \}} \max_{(s,a) \in \cX_{hij}^\ell} \frac{\Pst_h(s)^2}{\wpi_h(s,a)} \\
& \le c H^2 \iotadeltil \iotaepstil^2  \inf_\pi  \max_{j \in \{ 1,\ldots,\iotaepstil \}} \max_{(s,a) \in \cX_{hij}^\ell} \min \left \{ \frac{1}{\wpi_h(s,a) \Deltil_h(s,a)^2}, \frac{\Pst_h(s)^2}{\wpi_h(s,a) \epsilon_\ell^2} \right \}  \\
& \le c H^2 \iotadeltil \iotaepstil^2 \inf_\pi   \max_{(s,a) \in \Xgood_{hi}^\ell} \min \left \{  \frac{1}{\wpi_h(s,a) \Deltil_h(s,a)^2} , \frac{\Pst_h(s)^2}{\wpi_h(s,a) \epstil_\ell^2}\right \}.
\end{align*}
Summing over $\ell$ and using that for all $(s,a) \in \Xgood_{hi}^\ell$, $s \in \Xgood_{hi}$, proves the first conclusion.
Summing over $i$, and $h$ gives
\begin{align*}
& \sum_{h=1}^H \sum_{i=1}^{\iotaepstil}  c H^2 \iotadeltil \iotaepstil^2 \numepochs \inf_\pi   \max_{s \in \Xgood_{hi}, a} \min \left \{  \frac{1}{\wpi_h(s,a) \Deltil_h(s,a)^2}, \frac{\Pst_h(s)^2}{\wpi_h(s,a) \epstil^2} \right \} \\
& \qquad \le c H^2 \iotadeltil \iotaepstil^3  \numepochs \sum_{h=1}^H \inf_{\pi} \max_{s,a} \min \left \{  \frac{1}{\wpi_h(s,a) \Deltil_h(s,a)^2}, \frac{\Pst_h(s)^2}{\wpi_h(s,a) \epstil^2} \right \}.
\end{align*}
This proves the result.
\end{proof}

Finally, we bound the complexity of calling \mcae\xspace with \finalround\xspace = \true \xspace.
\begin{lem}[Formal Statement of \Cref{lem:complexity2}] \label{lem:collect_bound3}
On the event $\Eest \cap \Eexplore$, if \mcae\xspace is called with \finalround\xspace = \true,  the procedure within the if statement on \Cref{line:fr_true} will terminate after collecting at most
\begin{align*}
\frac{c H^4 \iotadeltil \iotaepstil^2}{\epstil^2} |\Xgood_h^{\numepochs+1}| + \frac{\poly(S,A,H, \log 1/\deltil, \log 1/\epstil)}{\epstil}
\end{align*}
episodes. Furthermore, the total complexity of calling \mcae\xspace with \finalround\xspace = \true \xspace is bounded by:
\begin{align*}
H^2  c \iotadeltil \iotaepstil^3 \numepochs \cdot \Compb(\cM,\epstil)  + \frac{\poly(S,A,H,\log 1/\epstil, \log 1/\deltil)}{\epstil}
\end{align*}
for a universal constant $c$.
\end{lem}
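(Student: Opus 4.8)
I would prove the two assertions by assembling already-established per-call bounds. For the first, observe that inside the \texttt{if} block on \Cref{line:fr_true} the only step that plays episodes is the call to \collectsamp\xspace on \Cref{line:sap_final}; the subsequent \collectdata\xspace on \Cref{line:collect_final} merely post-processes the dataset $\frakD_h^{\numepochs+1}$ and plays nothing. So I would bound the episode count of the block directly by the second conclusion of \Cref{lem:collectdata_complexity} applied to \collectsamp($\Xgood_h^{\numepochs+1},\{\nlast_j\}_{j=1}^{\iotaepstil},h,\pihat,\tfrac{\deltil}{H},\tfrac{\epsexp}{32}$), which is exactly the claimed $\tfrac{cH^4\iotadeltil\iotaepstil^2}{\epstil^2}|\Xgood_h^{\numepochs+1}|+\poly(S,A,H,\log 1/\deltil,\log 1/\epstil)/\epstil$.

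\textbf{The total complexity.} Running \mcae\xspace with \finalround\xspace = \true executes the same $h,i,\ell$ loop (\Cref{line:main_loop}--\Cref{line:ell_loop}) as running it with \finalround\xspace = \false, and then additionally runs the \texttt{if} block once per $h$. Hence I would write the total complexity as the \finalround\xspace = \false bound from \Cref{lem:collect_bound2} plus $\sum_{h=1}^H$ of the per-$h$ bound just obtained, namely
\[ H^2c\iotadeltil\iotaepstil^3\numepochs\sum_{h=1}^H\inf_\pi\max_{s,a}\min\Big\{\tfrac{1}{\wpi_h(s,a)\Deltil_h(s,a)^2},\tfrac{\Pst_h(s)^2}{\wpi_h(s,a)\epstil^2}\Big\}+\tfrac{cH^4\iotadeltil\iotaepstil^2}{\epstil^2}\sum_{h=1}^H|\Xgood_h^{\numepochs+1}|+\tfrac{\poly(\cdot)}{\epstil}, \]
after merging the two universal constants into one $c$ and the $H$ copies of the lower-order term into a single $\poly(\cdot)/\epstil$. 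Recalling $\Compb(\cM,\epstil)=\sum_h\inf_\pi\max_{s,a}\min\{\cdots\}+H^2|\opt(\epstil)|/\epstil^2$, it then remains only to show $\sum_{h=1}^H|\Xgood_h^{\numepochs+1}|\le|\opt(\epstil)|$: given this, the middle term is at most $H^2c\iotadeltil\iotaepstil^3\numepochs\cdot H^2|\opt(\epstil)|/\epstil^2$ (using $\iotaepstil\numepochs\ge1$), and the two leading terms merge into $H^2c\iotadeltil\iotaepstil^3\numepochs\,\Compb(\cM,\epstil)$, giving the claim.

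\textbf{The key step: $\Xgood_h^{\numepochs+1}$ consists of near-optimal triples.} I would show $(s,a)\in\Xgood_h^{\numepochs+1}\Rightarrow(s,a,h)\in\opt(\epstil)$; since the triples are distinct, summing yields $\sum_h|\Xgood_h^{\numepochs+1}|\le|\opt(\epstil)|$. Fix such an $(s,a)$: then $s\in\Xgood_h$, $a\in\frakA_h^{\numepochs}(s)$, $|\frakA_h^{\numepochs}(s)|>1$, and $s$ lies in $\Xgood_{hi}$ for some $i$. Since $\numepochs=\lceil\log(H/\epsmoca)\rceil$ we have $\epstil_{\numepochs}=H2^{-\numepochs}\le\epstil$, so the second conclusion of \Cref{lem:action_subopt} (applied with $\ell=\numepochs$) gives $\Delta_h(s,a')\le 3\epstil/(2\Pst_h(s))$ for \emph{every} $a'\in\frakA_h^{\numepochs}(s)$. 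A short case split on $\Deltil_h(s,a)$ finishes: if $\Delta_h(s,a)>0$ then $\Deltil_h(s,a)=\Delta_h(s,a)\le 3\epstil/(2\Pst_h(s))$; if $a$ is the unique optimal action at $(s,h)$ then, since $|\frakA_h^{\numepochs}(s)|>1$, some $a'\ne a$ lies in $\frakA_h^{\numepochs}(s)$ and has $\Delta_h(s,a')>0$, so $\Deltil_h(s,a)=\Delmin(s,h)\le\Delta_h(s,a')\le 3\epstil/(2\Pst_h(s))$; and if $a$ is a non-unique optimal action then $\Deltil_h(s,a)=0$. In all cases $\Pst_h(s)\Deltil_h(s,a)/3\le\epstil$, i.e.\ $(s,a,h)\in\opt(\epstil)$.

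\textbf{Main obstacle.} The only part beyond bookkeeping is this last case split, and within it the unique-optimal-action case: it is crucial to use $|\frakA_h^{\numepochs}(s)|>1$ to extract a \emph{second} active action, whose necessarily positive gap lower-bounds $\Delmin(s,h)=\Deltil_h(s,a)$ — on its own \Cref{lem:action_subopt} only controls $\Delta_h(s,a)$, which is zero for the optimal action and hence uninformative about the effective gap. Everything else (aligning the per-call bounds from \Cref{lem:collect_bound2} and \Cref{lem:collectdata_complexity}, unifying the constants, and folding $\iotaepstil^2$ into $\iotaepstil^3\numepochs$) is routine.
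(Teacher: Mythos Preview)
Your proposal is correct and follows essentially the same route as the paper: both invoke \Cref{lem:collectdata_complexity} for the per-$h$ bound, add the \finalround\xspace = \false bound from \Cref{lem:collect_bound2}, and then argue that every $(s,a)\in\Xgood_h^{\numepochs+1}$ has $\Deltil_h(s,a)\le 3\epstil/\Pst_h(s)$ by exploiting $|\frakA_h^{\numepochs}(s)|>1$ to find a second active action whose gap bounds $\Deltil_h(s,a)$. The paper packages this last step via an auxiliary set $\cW_h$ and the observation that $\exists a'\neq a$ with $\max\{\Delta_h(s,a),\Delta_h(s,a')\}$ small implies $\Deltil_h(s,a)$ small, whereas you write out the three-way case split explicitly; the paper also applies \Cref{lem:action_subopt} at level $\numepochs-1$ (obtaining $\epstil_{\numepochs-1}\le 2\epstil$) while you apply it at level $\numepochs$ (obtaining $\epstil_{\numepochs}\le\epstil$), but this is a harmless indexing difference.
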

\begin{proof}
The only additional samples taken when running \mcae\xspace with \finalround\xspace = \true \xspace as compared to running it with \finalround\xspace = \false \xspace is incurred by calling \collectsamp\xspace on \Cref{line:sap_final} of \mcae. Thus, the total complexity can be bounded by adding the complexity bound from \Cref{lem:collect_bound2} to this additional cost.  

In particular, by \Cref{lem:collectdata_complexity}, this additional call of \collectsamp\xspace will require at most
\begin{align*}
 \frac{c H^4 \iotadeltil \iotaepstil^2}{\epstil^2}| \Xgood_h^{\numepochs+1}| + \frac{\poly(S,A,H, \log 1/\deltil, \log 1/\epstil)}{\epstil}
\end{align*}
episodes to terminate, from which the first conclusion follows.
We can repeat the argument from the proof of \Cref{lem:collect_bound2} to get that $\Xgood_h^{\numepochs+1} \subseteq \cW_h^{\numepochs+1}$, where we define $\cW_h^{\numepochs} := \{ (s,a) \ : \ s \in \Xgood_h, \exists a'\neq a, \max \{ \Delta_h(s,a), \Delta_h(s,a') \} \le 3 \epstil_{\numepochs -1} / (2 \Pst_h(s)) \}$. However, note that $\epstil_{\numepochs-1} \le 2\epstil$, and the condition $\exists a'\neq a, \max \{ \Delta_h(s,a), \Delta_h(s,a') \} \le 3 \epstil_{\numepochs -1} / (2 \Pst_h(s))$ implies $\Deltil_h(s,a) \le 3 \epstil_{\numepochs -1} / (2 \Pst_h(s))$. It follows that 
\begin{align*}
\cW_h^{\numepochs+1} \subseteq \Big \{ (s,a) \ : \ \Deltil_h(s,a) \le 3 \epstil /  \Pst_h(s) \Big \} =: \opt(\epstil,h)
\end{align*}
Summing over $h$ gives the result. 

\end{proof}

\subsection{Proof of \Cref{thm:complexity}}
We are finally ready to complete the proof of \Cref{thm:complexity}.

\begin{proof}[Proof of \Cref{thm:complexity}]
Note that $\Pr[\Eest \cap \Eexplore] \ge 1-\delta$ by \Cref{lem:good_events_hold}. We will assume for the remainder of the proof that this event holds. 

\paragraph{Case 1: $\epsout \ge  \min \{ \min_{s,a,h} \Pst_h(s) \Deltil_h(s,a)/3, 2 H^2 S \min_{s,h} \Pst_h(s) \}$.} 
In this case, that the policy returned is $\epsout$-optimal is guaranteed by \Cref{lem:correct} since the final call to \mcae\xspace is run with \finalround\xspace = \texttt{true}. To bound the sample complexity, we can then simply combine \Cref{lem:collect_bound2} and \Cref{lem:collect_bound3}, which gives that the total sample complexity is bounded as (using that $\epsoutm \ge \epsout$ and that $\deloutm \ge \delout / (36 \lceil \log H/\epsout \rceil^2) =: \delta' $):
\begin{align*}
& \sum_{m=1}^{\lceil \log H/\epsout \rceil - 1} H^2 c \iota_{\deloutm} \iota_{\epsoutm}^3 \ell_{\epsoutm} \cdot \sum_{h=1}^H \inf_{\pi} \max_{s,a} \min \left \{  \frac{1}{\wpi_h(s,a) \Deltil_h(s,a)^2}, \frac{\Pst_h(s)^2}{\wpi_h(s,a) \epsoutm^2} \right \}  \\
& \qquad \qquad + H^2 c \iota_{\deloutm} \iota_{\epsout}^3 \ell_{\epsout} \cdot \sum_{h=1}^H \inf_{\pi} \max_{s,a} \min \left \{  \frac{1}{\wpi_h(s,a) \Deltil_h(s,a)^2}, \frac{\Pst_h(s)^2}{\wpi_h(s,a) \epsout^2} \right \}  \\
& \qquad \qquad + \frac{c H^4 \iota_{\delout} \iota_{\epsout}^2 | \Xbareps|}{\epsout^2} + \frac{\lceil \log H/\epsout \rceil \cdot \poly(S,A,H,\log 1/\epsout, \log 1/\delout)}{\epsout} .
\end{align*}
This can be upper bounded as
\begin{align*}
& \lceil \log H/\epsout \rceil \cdot H^2 c \iota_{\delta'} \iota_{\epsout}^3 \ell_{\epsout} \cdot \sum_{h=1}^H \inf_{\pi} \max_{s,a} \min \left \{  \frac{1}{\wpi_h(s,a) \Deltil_h(s,a)^2}, \frac{\Pst_h(s)^2}{\wpi_h(s,a) \epsout^2} \right \}  \\
& \qquad + \frac{c H^4 \iota_{\delta'} \iota_{\epsout}^2 | \Xbareps|}{\epsout^2}  + \frac{\poly(S,A,H,\log 1/\epsout, \log 1/\delout)}{\epsout}.
\end{align*}
This and the definition of $\Compb(\cM,\epsilon)$ gives the first conclusion of \Cref{thm:complexity}.

\newcommand{\mstop}{\bar{m}}
\paragraph{Case 2: $\epsout < \min \{ \min_{s,a,h} \Pst_h(s) \Deltil_h(s,a)/3, 2 H^2 S \min_{s,h} \Pst_h(s) \}$.} 
As we showed in the proof of \Cref{lem:collect_bound3}, we will have that $\Xgood_h^{\numepochs+1} \subseteq \opt(\epsilon,h)$. Therefore, if for all $(s,a)$, $\Deltil_h(s,a) > 3 \epstil /  \Pst_h(s)$, we will have that $|\Xgood_h^{\numepochs+1}| = 0$, which implies that for every $s \in \Xgood_h$, $|\frakA_h^{\numepochs}(s)| = 1$. Furthermore, on $\Eexplore$, we will have that $\Xgood_h = \cS \times \cA$ if $\frac{\epstil}{2 H^2 S} <  \min_{s} \Pst_h(s)$. If each of these conditions hold for all $h$, then the returned sets $\frakA_h^{\numepochs+1}(s)$ will satisfy $|\frakA_h^{\numepochs+1}(s)|$ for all $s$ and $h$. 

It follows then that if $\epsout < \min \{ \min_{s,a,h} \Pst_h(s) \Deltil_h(s,a)/3, 2 H^2 S \min_{s,h} \Pst_h(s) \}$, either $\epsoutm < \min \{ \min_{s,a,h} \Pst_h(s) \Deltil_h(s,a)/3, 2 H^2 S \min_{s,h} \Pst_h(s) \}$ for some $m$, in which case the above condition will be met, and the termination criteria on \Cref{line:meta_early_term} of \algname will be satisfied, or 
$$\epsoutm \ge  \min \{ \min_{s,a,h} \Pst_h(s) \Deltil_h(s,a)/3, 2 H^2 S \min_{s,h} \Pst_h(s) \},$$
and \algname will reach the final call of \mcae\xspace with \finalround\xspace = \true. In the former case, letting $\mstop$ denote the value of $m$ at which \algname terminates, the total sample complexity will be bounded as, using the same argument as in Case 1,
\begin{align*}
& \sum_{m=1}^{\mstop} H^2 c \iota_{\deloutmstop} \iota_{\epsoutmstop}^3 \ell_{\epsoutmstop} \cdot  \sum_{h=1}^H \inf_{\pi} \max_{s,a} \min \left \{  \frac{1}{\wpi_h(s,a) \Deltil_h(s,a)^2}, \frac{\Pst_h(s)^2}{\wpi_h(s,a) \epsoutm^2} \right \} \\
& \qquad \qquad + \frac{\mstop \cdot \poly(S,A,H,\log 1/\epsoutmstop, \log 1/\deloutmstop)}{\epsoutmstop} \\
& \qquad \le \mstop H^2 c \iota_{\deloutmstop} \iota_{\epsoutmstop}^3 \ell_{\epsoutmstop}  \cdot  \sum_{h=1}^H \inf_{\pi} \max_{s,a} \min \left \{  \frac{1}{\wpi_h(s,a) \Deltil_h(s,a)^2}, \frac{\Pst_h(s)^2}{\wpi_h(s,a) \epsoutmstop^2} \right \} \\
& \qquad \qquad + \frac{ \poly(S,A,H,\log 1/\epsoutmstop, \log 1/\deloutmstop)}{\epsoutmstop}
\end{align*}
and note that $\epsoutmstopb \ge  \min \{ \min_{s,a,h} \Pst_h(s) \Deltil_h(s,a)/3, 2 H^2 S \min_{s,h} \Pst_h(s) \}$, since we did not terminate at round $\mstop - 1$, implying that $\epsoutmstop \ge 2 \min \{ \min_{s,a,h} \Pst_h(s) \Deltil_h(s,a)/3, 2 H^2 S \min_{s,h} \Pst_h(s) \}$. Note also that $\deloutmstop = \frac{\delta}{36 \log^2 \epsoutmstop}$, so we can also bound 
$$\log 1/\deloutmstop \le \cO(\log 1/\delout + \log \log (2 \min \{ \min_{s,a,h} \Pst_h(s) \Deltil_h(s,a)/3, 2 H^2 S \min_{s,h} \Pst_h(s) \})).$$ 
Together these give the bound stated in \Cref{thm:complexity}.

In the latter case, when we do not terminate early at \Cref{line:meta_early_term}, the same sample complexity bound applies but with $\epsoutmstop$ replaced by $\epsout$, since if $|\Xgood_h^{\numepochseps+1}| = 0$, the final call to \collectsamp\xspace in \Cref{line:sap_final} of \mcae\xspace will not collect any samples. As before, in this case we can lower bound 
$$\epsout \ge 2 \min \{ \min_{s,a,h} \Pst_h(s) \Deltil_h(s,a)/3, 2 H^2 S \min_{s,h} \Pst_h(s) \}$$ 
from which the bound follows.

It remains to show that $\pihat = \pist$. This follows inductively from \Cref{lem:action_subopt} since if $| \frakA_H^\ell(s) | = 1$, this implies that for $a \in \frakA_H^\ell(s)$, $a = \pist_H(s)$. Then if we assume that $\pihat_{h'}(s) = \pist_{h'}(s)$ for all $s$ and $h' > h$, if $| \frakA_h^\ell(s) | = 1$ this implies that for $a \in \frakA_h^\ell(s)$, $a = \pist_h(s)$ since, by \Cref{lem:action_subopt}, in this case
\begin{align*}
\max_{a'} \Qpihat_h(s,a') - \Qpihat_h(s,a) = 0
\end{align*}
but $\Qpihat_h(s,a'') = \Qst_h(s,a'')$. Thus, it follows that $\pihat = \pist$, which completes the proof.

\end{proof}

\subsection{Proofs of Additional Lemmas and Claims}\label{sec:good_event_holds}

\begin{proof}[Proof of \Cref{lem:good_events_hold}]
\textbf{$\Eest$ holds.} That $\Eest$ holds with probability $1-\delout/2$ follows directly from Hoeffding's inequality and a union bound, since $\qcheckpith(s_h^t,a_h^t) \le H$ almost surely. In particular, note that for any given call to \mcae\xspace, we will form at most $SAH \iotaepstil (\numepochs + 1)$ estimates of $\Qpihat_h(s,a)$. By Hoeffding's inequality and our choice of $\iotadeltil$, that each of these estimates concentrates as given on $\Eest$ then holds with probability
\begin{align*}
1 - SAH \iotaepstil (\numepochs+1) \cdot \frac{\deltil}{SAH \iotaepstil (\numepochs + 1) } = 1 - \deltil.
\end{align*}
With our choice of $\deloutm = \frac{\delout}{36m^2}$, union bounding over this holding for each call to \mcae, we then have that $\Eest$ holds with probability at least
\begin{align*}
1 - \sum_{m=1}^{\lceil \log H/\epsilon \rceil} \frac{\delout}{36 m^2} \ge 1 - \frac{\delout}{2},
\end{align*}
which is the desired result.

\paragraph{$\Eexplore$ holds.}
We show that the desired events hold for a single call of \mcae, then union bound over all calls to \mcae\xspace to get the final result. Let $\Eexplore^m$ denote the event on which all conditions of $\Eexplore$ hold for the $m$th call to \mcae.

Assume that we run \mcae\xspace with tolerance $\epsoutm$ and confidence $\deloutm$. Let $\Esap^{sh}$ denote the success event of calling \sap\xspace on \Cref{line:get_phat}, $\Esap^{h i \ell}$ denote the success event of calling \sap\xspace in the call to \collectsamp\xspace at iteration $(h,i,\ell)$ on \Cref{line:collectsamp1}, and $\Esap^h$ the success event of calling \sap\xspace in the call to \collectsamp\xspace on \Cref{line:sap_final}. By \Cref{thm:partitioning_works} and the confidence with which we call \sap, we have that $\Pr[\Esap^{sh}] \ge 1 - \deloutm/SH$, $\Pr[\Esap^{hi\ell}] \ge 1 - \deloutm/(H\iotaepstil \numepochs)$, and $\Pr[\Esap^h] \ge 1 - \deloutm/H$. Union bounding over these events, and using that there are at most $H \iotaepstil \numepochs$ indices $(h,i,\ell)$, we get that the event
\begin{align*}
(\cap_{s,h} \Esap^{sh}) \cap ( \cap_{h=1}^H \cap_{i=1}^{\iotaepstil} \cap_{\ell=1}^{\numepochs} \Esap^{hi\ell}) \cap ( \cap_{h=1}^H \Esap^h)
\end{align*}
holds with probability at least $1-3\deloutm$. 

That
\begin{align*}
\sup_\pi \sum_{(s,a) \in \cX_{hij}^\ell} \wpi_h(s,a) \le 2^{-j+1}
\end{align*}
for $j \in [\iotaepstil]$, is a direct consequence of $\Esap^{hi\ell}$ holding, and similarly that
\begin{align*}
\sup_\pi \sum_{(s,a) \in \Xlast_{hj}} \wpi_h(s,a) \le 2^{-j+1}
\end{align*}
holds for $j \in [\iotaepstil]$, is a direct consequence of $\Esap^h$. In addition, that 
\begin{align*}
\sup_\pi  \max_{s \in \Xgood_h^c} \wpi_h(s) \le \frac{\epsilon}{2H^2S}
\end{align*}
holds for all $h$ is immediate on $\cap_{s,h} \Esap^{sh}$.

On the event $\Esap^{h i \ell}$, if we run the policies returned by \sap\xspace for some $j \in \{ 1,\ldots, \iotaepstil \}$, $\Pi_{hij}^\ell$, \Cref{thm:partitioning_works} and our choice of $\delsamp$ gives that we will collect at least $\frac{1}{2} N_{hij}^\ell$ samples from each $(s,a) \in \cX_{hij}^\ell$ with probability at least $1 - \deloutm/(H \iotaepstil^2 \numepochs n_{i1}^\ell)$. As \collectsamp\xspace runs each policy $\lceil 2 n_{i1}^\ell/N_{hij}^\ell \rceil$ times, it follows that we will collect at least $\lceil 2 n_{i1}^\ell/N_{hij}^\ell \rceil \cdot \frac{1}{2} N_{hij}^\ell \ge n_{i1}^\ell$ samples from each $(s,a) \in \cX_{hij}^\ell$ with probability at least $1 - \deloutm/(H \iotaepstil^2 \numepochs n_{i1}^\ell) \cdot \lceil 2 n_{1i}^\ell/N_{hij}^\ell \rceil \ge 1 - 3 \deloutm/(H \iotaepstil^2 \numepochs)$. Union bounding over this for each $h,i,\ell$ and $j \in [\iotaepstil]$ gives that with probability at least $1-3\deloutm$, we collect at least $n_{i1}^\ell$ samples from each $(s,a) \in \cX_{hij}^\ell$. The same argument gives that with probability at least $1-3\deloutm$ we collect at least $\nlast_j$ samples from each $(s,a) \in \Xlast_{hj}$, $j = 1,\ldots,\iotaepstil, h \in [H]$.

\paragraph{Relating $\Phatst_h(s)$ to $\Pst_h(s)$.} 
It remains to show that $\Phatst_h(s) \le \Pst_h(s) \le 32 \Phatst_h(s)$ for all $s \in \Xgood_h$, $\cup_{j=1}^{\iotaepstil} \cX_{hij}^\ell = \Xgood_{hi}^\ell$, and $\cup_{j=1}^{\iotaepstil} \Xlast_{hj} = \Xgood_h^{\numepochs+1}$.

We first show $\Phatst_h(s) \le \Pst_h(s) \le 32 \Phatst_h(s)$. Consider running \sap\xspace with $\cX = \{ (s,a) \}$ for arbitrary $a$ and assume that $\cX_j^{sh}$ is the returned partition containing $(s,a)$. By \Cref{thm:partitioning_works}, on $\Esap^{sh}$ we will have that
\begin{align*}
\Pst_h(s)  \le 2^{-j+1}
\end{align*}
and, furthermore, that with probability at least $1/2$, if we rerun all policies in $\Pi_j^{sh}$ returned by \sap\xspace, we will obtain at least $N_j^{sh}/2 =  |\Pi_j^{sh}|/(8 |\cX| 2^j) =   |\Pi_j^{sh}|/(8 \cdot 2^j)$ samples from $(s,a,h)$. 

Let $X$ be a random variable which is the count of total samples collected in $(s,a,h)$ when running $\pi_k \in \Pi_j^{sh}$. Then Markov's inequality and the above property of $\Pi_j^{sh}$ gives
\begin{align*}
\frac{1}{2} \le \Pr[ X \ge N_j^{sh}/2] \le \frac{2 \Exp[X]}{N_j^{sh}} = \frac{2}{N_j^{sh}} \sum_{\pi \in \Pi_j^{sh}} \wpi_h(s,a) \le \frac{2 |\Pi_j^{sh}|}{N_j^{sh}} \Pst_h(s) = 8 \cdot 2^j \Pst_h(s).
\end{align*}
Rearranging this and recalling that we set $\Phatst_h(s) = \frac{1}{16 \cdot 2^j}$, we have that $\Phatst_h(s) \le \Pst_h(s)$. However, we also have
\begin{align*}
\Pst_h(s) \le 2^{-j+1} = 32 \Phatst_h(s).
\end{align*}
This proves that $\Phatst_h(s) \le \Pst_h(s) \le 32 \Phatst_h(s)$. 

Now note that any $s \in \Xgood_h$ has $\Phatst_h(s) \ge \frac{\epsoutm}{32 H^2 S}$, which, combined with the above, implies that $\Pst_h(s) \ge \frac{\epsoutm}{32 H^2 S}$. Fix $(h,i,\ell)$, and note that the call to \sap\xspace in the call to \collectsamp\xspace for index $(h,i,\ell)$ uses input tolerance $\frac{\epsoutm}{64H^2S}$. \Cref{thm:partitioning_works} then gives that, on $\Esap^{hi\ell}$, we will have
\begin{align*}
\sup_\pi \sum_{(s,a) \in \Xgood_{hi}^\ell \backslash (\cup_{j=1}^{\iotaepstil}  \cX_{hij}^\ell)} \wpi_h(s,a) \le \frac{\epsoutm}{64 H^2 S}.
\end{align*}
However, as $\Pst_h(s') \le \sup_\pi \sum_{(s,a) \in \Xgood_{hi}^\ell \backslash (\cup_{j=1}^{\iotaepstil}  \cX_{hij}^\ell)} \wpi_h(s,a)$ for any $(s',a)  \in  \Xgood_{hi}^\ell \backslash (\cup_{j=1}^{\iotaepstil}  \cX_{hij}^\ell)$, we will have that any $(s,a) \in \Xgood_{hi}^\ell \backslash (\cup_{j=1}^{\iotaepstil}  \cX_{hij}^\ell)$ has $\Pst_h(s) \le \frac{\epsoutm}{64 H^2 S}$. This is a contradiction since we know $\Pst_h(s) \ge \frac{\epsoutm}{32 H^2 S}$ for any $(s,a) \in \Xgood_{hi}^\ell$. Thus, we must have that $ \Xgood_{hi}^\ell \backslash (\cup_{j=1}^{\iotaepstil}  \cX_{hij}^\ell) = \emptyset$ so $\cup_{j=1}^{\iotaepstil} \cX_{hij}^\ell = \Xgood_{hi}^\ell$. The same argument shows that $\cup_{j=1}^{\iotaepstil} \Xlast_{hj} = \Xgood_h^{\numepochs+1}$.

\paragraph{Completing the proof.}
We have therefore shown that $\Pr[\Eexplore^m] \ge 1 - 9 \deloutm$. Union bounding over all $m$, by our choice of $\deloutm = \frac{\delout}{36 m^2}$, we have that 
$$\Pr[\Eexplore] = \Pr[\cap_{m=1}^{\lceil \log H/\epsilon \rceil} \Eexplore^m] \ge 1 - \sum_{m=1}^{\lceil \log H/\epsilon \rceil } 9 \frac{\delout}{36 m^2} \ge 1 - \delout/2. $$ 
Union bounding over $\Eexplore$ and $\Eest$ then gives the result.
\end{proof}

\begin{proof}[Proof of \Cref{lem:optactionsin}]
We proceed by induction. Consider some $s \in \Xgood_{hi}$. The base case is trivial as $\frakA_h^0(s) = \cA$. Fix some $\ell \le \numepochs$ and assume that $\ahatst_h(s) \in \frakA_h^{\ell-1}(s)$ and $|\frakA_h^{\ell-1}(s)| > 1$. Then, on $\Eexplore$, we can guarantee that we will collect at least $\frac{2^{18} H^2 \iotadeltil}{2^{2i} \epsilon_{\ell}^2}$ samples from $(s,a)$ for each $a \in \frakA_h^{\ell-1}$. On the event $\Eest$, it then follows that for each $a \in \frakA_h^{\ell-1}(s)$,
\begin{align*}
| \Qhatpihat_{h,\ell}(s,a) - \Qpihat_h(s,a)|  \le 2^{i}\epsilon_{\ell}/2^9.
\end{align*}
Thus, since by assumption $\ahatst_h(s) \in \frakA_h^{\ell-1}(s)$,
\begin{align*}
\max_{a \in \frakA_h^{\ell-1}(s)} \Qhatpihat_{h,\ell}(s,a) - \Qhatpihat_{h,\ell}(s,\ahatst_h(s)) & \le \max_{a \in \frakA_h^{\ell-1}(s)} \Qpihat_{h}(s,a) - \Qpihat_{h}(s,\ahatst_h(s)) + 2 \cdot 2^{i}\epsilon_{\ell}/2^9 \\
& \le 2 \cdot 2^{i}\epsilon_{\ell}/2^9 \\
& = \gamma_{ij}^\ell
\end{align*} 
for any $j$, so the exit condition on \Cref{line:collect_Aell} of \collectdata\xspace is not met for $\ahatst_h(s)$, and thus $\ahatst_h(s) \in \frakA_h^{\ell}(s)$. The result follows analogously if $\ell = \numepochs+1$, in which case we simply use the different values of $n$ and $\gamma$.

Now if $(s,a) \not\in \Xgood_{hi}^\ell$ for all $a$, that means we will never remove arms from $\frakA_h^\ell(s)$ again. However, by the above inductive argument, if $\ell'$ is the last round such that $(s,a) \in \Xgood_{hi}^{\ell'}$ for some $a$, we will have that $\ahatst_h(s) \in \frakA_h^{\ell'}(s)$, so it follows that $s \in \frakA_h^\ell(s)$. 

Finally, if $s \not\in \Xgood_{h}$, then we will never remove an arm from $\frakA_h^{0}(s)$, and since $\frakA_h^{0}(s) = \cA$, the conclusion follows trivially. 
\end{proof}

\begin{proof}[Proof of \Cref{lem:gap_lower_bound}]
In \Cref{lem:correct}, we showed that the local suboptimality bounds of $\pihat$, $\epstil_h(s)$, satisfy
\begin{align*}
\sum_{h=1}^H \sup_\pi \sum_s \wpi_h(s) \epstil_h(s) \le \epsilon.
\end{align*}
By \Cref{lem:local_to_global_subopt2}, it follows that for any $\pi'$ and any $h$,
\begin{align*}
\sum_s w^{\pi'}_h(s) (\Vst_h(s) - \Vpihat_h(s)) \le \sum_{h'=h}^H \sup_\pi \sum_{s} \wpi_{h'}(s) \epstil_{h'}(s) \le \epsilon.
\end{align*}
The result then follows from \Cref{lem:gap_diff}.
\end{proof}

\begin{lem}\label{lem:local_to_global_subopt2}
Assume that for each $h$ and $s$, $\pihat$ plays an action which satisfies 
\begin{align}\label{eq:local_subopt}
\max_a Q_{h}^{\pihat}(s,a) - \Qpihat_{h}(s,\pihat_{h}(s)) \le \epsilon_{h}(s).
\end{align}
Then for any $h$ and $\pi'$,
\begin{align*}
\sum_s w_{h}^{\pi'}(s) ( \Vst_{h}(s) - \Vpihat_{h}(s)) \le \sum_{h'=h}^H \sup_\pi \sum_s \wpi_{h'}(s) \epsilon_{h'}(s).
\end{align*}
\end{lem}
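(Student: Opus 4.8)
The plan is to prove \Cref{lem:local_to_global_subopt2} by a backward induction on $h$, using a one-step expansion of $\Vst_h - \Vpihat_h$ together with the local suboptimality hypothesis \eqref{eq:local_subopt}. The base case $h = H+1$ is trivial since $\Vst_{H+1} \equiv \Vpihat_{H+1} \equiv 0$. For the inductive step, fix $h$ and an arbitrary policy $\pi'$. The key identity is that for any state $s$,
\begin{align*}
\Vst_h(s) - \Vpihat_h(s) &= \Vst_h(s) - \Qpihat_h(s,\pihat_h(s)) \\
&= \big(\Vst_h(s) - \max_a \Qpihat_h(s,a)\big) + \big(\max_a \Qpihat_h(s,a) - \Qpihat_h(s,\pihat_h(s))\big).
\end{align*}
The second bracket is at most $\epsilon_h(s)$ by hypothesis. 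For the first bracket, let $\ast = \pist_h(s)$ be an optimal action at $(s,h)$, so that $\Vst_h(s) = \Qst_h(s,\ast) \le \Qpihat_h(s,\ast) + (\Qst_h(s,\ast) - \Qpihat_h(s,\ast))$; since $\Qpihat_h(s,\ast) \le \max_a \Qpihat_h(s,a)$, the first bracket is bounded by $\Qst_h(s,\ast) - \Qpihat_h(s,\ast) = \sum_{s'} P_h(s'|s,\ast)(\Vst_{h+1}(s') - \Vpihat_{h+1}(s'))$, using the Bellman equation and the fact that the rewards cancel. Since $\Vst_{h+1} \ge \Vpihat_{h+1}$ pointwise, this is nonnegative, so we may upper bound it by $\sup_{a} \sum_{s'} P_h(s'|s,a)(\Vst_{h+1}(s') - \Vpihat_{h+1}(s'))$.

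Next I would multiply through by $w_h^{\pi'}(s)$ and sum over $s$. This gives
\begin{align*}
\sum_s w_h^{\pi'}(s)(\Vst_h(s) - \Vpihat_h(s)) \le \sum_s w_h^{\pi'}(s) \epsilon_h(s) + \sum_s w_h^{\pi'}(s) \max_a \sum_{s'} P_h(s'|s,a)(\Vst_{h+1}(s') - \Vpihat_{h+1}(s')).
\end{align*}
The first term is at most $\sup_\pi \sum_s \wpi_h(s)\epsilon_h(s)$. For the second term, the idea is that choosing the maximizing action $a$ at each $s$ visited by $\pi'$ at step $h$ and then following $\pi'$ afterwards is itself realizable by some policy $\pi''$ (modify $\pi'$ at step $h$ only); then $\sum_s w_h^{\pi'}(s)\max_a \sum_{s'} P_h(s'|s,a) f(s') = \sum_{s'} w_{h+1}^{\pi''}(s') f(s')$ for $f = \Vst_{h+1} - \Vpihat_{h+1} \ge 0$. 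Applying the inductive hypothesis to $\pi''$ at step $h+1$ bounds this by $\sum_{h'=h+1}^H \sup_\pi \sum_s \wpi_{h'}(s)\epsilon_{h'}(s)$. Combining the two terms completes the induction.

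The main obstacle is the bookkeeping around the $\max_a$ inside the sum: one must carefully argue that replacing $\pi'$'s action at step $h$ by the state-dependent maximizer yields a bona fide policy $\pi''$ with $w_{h+1}^{\pi''}(s') = \sum_s w_h^{\pi'}(s)\max_a P_h(s'|s,a)$ — or, more cleanly, simply bound $w_h^{\pi'}(s)\max_a \sum_{s'} P_h(s'|s,a)f(s') \le \sup_\pi \sum_{s,a} \wpi_h(s,a)\sum_{s'}P_h(s'|s,a)f(s') = \sup_\pi \sum_{s'}\wpi_{h+1}(s')f(s')$ and then note the inductive hypothesis is stated with a $\sup_\pi$ already, so no explicit construction of $\pi''$ is even needed. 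A secondary subtlety is ensuring the nonnegativity $\Vst_{h+1} \ge \Vpihat_{h+1}$ is invoked at the right place so that all the sup-over-policy relaxations go in the correct direction; this is immediate since $\Vst$ is the optimal value function. Everything else is a routine telescoping over $h' = h, \dots, H$.
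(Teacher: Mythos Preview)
Your proof is correct and follows essentially the same backward-induction argument as the paper: the same add-and-subtract decomposition $\Vst_h(s) - \Vpihat_h(s) \le \epsilon_h(s) + \sum_{s'} P_h(s'|s,\pist_h(s))(\Vst_{h+1}(s') - \Vpihat_{h+1}(s'))$, followed by constructing a policy $\pi''$ that agrees with $\pi'$ up to step $h$ so that the next-step term becomes $\sum_{s'} w_{h+1}^{\pi''}(s')(\Vst_{h+1}(s') - \Vpihat_{h+1}(s'))$ and the inductive hypothesis applies. The only cosmetic difference is that the paper fixes the action at step $h$ to be $\pist_h(s)$ rather than the $\argmax_a$ you use, and there is a small notational slip in your ``more cleanly'' alternative (the displayed inequality should have $\sum_s$ on the left), but the argument is the same.
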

\begin{proof}
We proceed by backwards induction. The base case, $h=H$, is trivial. Assume that at level $h$, for any $\pi$,
\begin{align*}
\sum_s \wpi_h(s)( \Vst_h(s) - \Vpihat_h(s)) \le  \sum_{h'=h}^H \sup_{\pi'} \sum_{s'} w_{h'}^{\pi'}(s') \epsilon_{h'}(s')
\end{align*}
and that at level $h-1$, for each $s$ \eqref{eq:local_subopt} holds. By definition,
\begin{align*}
\Vst_{h-1}(s) - \Vpihat_{h-1}(s)  & = \Qst_{h-1}(s,\pist_{h-1}(s)) - \Qpihat_{h-1}(s,\pihat_{h-1}(s)) \\
& = \Qst_{h-1}(s,\pist_{h-1}(s)) - \Qpihat_{h-1}(s,\pist_{h-1}(s)) + \Qpihat_{h-1}(s,\pist(s)) - \max_a \Qpihat_{h-1}(s,a) \\
& \qquad \qquad + \max_a \Qpihat_{h-1}(s,a) - \Qpihat_{h-1}(s,\pihat_{h-1}(s)).
\end{align*}
Clearly, $\Qpihat_{h-1}(s,\pist(s)) - \max_a \Qpihat_{h-1}(s,a) \le 0$ and by assumption $\max_a \Qpihat_{h-1}(s,a) - \Qpihat_{h-1}(s,\pihat_{h-1}(s)) \le \epsilon_{h-1}(s)$. Furthermore,
\begin{align*}
\Qst_{h-1}(s,\pist_{h-1}(s)) - \Qpihat_{h-1}(s,\pist_{h-1}(s)) & = \sum_{s'} P_{h-1}(s'|s,\pist_{h-1}(s)) (\Vst_h(s') - \Vpihat_h(s')) .
\end{align*}
Then, for any $\pi$,
\begin{align*}
\sum_s \wpi_{h-1}(s)( \Vst_{h-1}(s) - \Vpihat_{h-1}(s)) & \le \sum_s \wpi_{h-1}(s) \epsilon_{h-1}(s) + \sum_s \sum_{s'} \wpi_{h-1}(s) P_{h-1}(s'|s,\pist_{h-1}(s)) (\Vst_h(s') - \Vpihat_h(s')) \\
& =  \sum_s \wpi_{h-1}(s) \epsilon_{h-1}(s) + \sum_s  w^{\pi'}_{h}(s) (\Vst_h(s) - \Vpihat_h(s)) \\
& \le  \sum_{h'=h-1}^H \sup_{\pi'} \sum_{s'} w_{h'}^{\pi'}(s') \epsilon_{h'}(s')
\end{align*}
where the last inequality follows by the inductive hypothesis and we have used that
\begin{align*}
\sum_s \wpi_{h-1}(s) P_{h-1}(s'|s,\pist_{h-1}(s)) = w_h^{\pi'}(s').
\end{align*}
where $\pi_{h'}'(s) = \pi_{h'}(s)$ for all $h' \leq h-2$ and $\pi_{h'}'(s) = \pi_{h'}^\star(s)$ for $h' \geq h-1$.
The conclusion then follows.
\end{proof}


\section{Learning to Explore}\label{sec:learn2explore}

Define the following value: 
\begin{align}
K_i(\delta,\delsamp) & = \Bigg \lceil 2^i  \max \Bigg \{  288 \ceuler^{2} S^{2} A^{2} H (i + 3) \log ( 576 \ceuler S A H(i+3)) ,  288 \ceuler^2 S^2 A^2 H \log \frac{2SAH }{\delta}, \nonumber \\
& \qquad 2048 S^2 A^2 \log \frac{4SAH}{\delsamp},  256 \ceuler S^3 A^2 H^4 (i+9)^3 \log^3 \left ( 512 \ceuler S A H (i+9) \right ),\label{eq:ki_defn} \\
& \qquad 128 \ceuler S^3 A^2 H^4 \log^3 \frac{2SAH}{\delta} +  8 H  \log \frac{4}{\delta} \Bigg \} \Bigg \rceil  \nonumber \\
& =: 2^i C_K(\delta,\delsamp,i) \nonumber
\end{align}
and note that $C_K(\delta,\delsamp,i) = \poly(S,A,H, \log 1/\delta,\log 1/\delsamp, i)$.

\begin{rem}
The exploration procedure of \ies\xspace is potentially quite wasteful as we restart \euler every time the desired number of samples for a given state is collected. This could likely be improved on by instead running a regret-minimization algorithm that is able to handle time-varying rewards, such as the algorithm presented in \cite{zhang2020nearly}. As the focus of this work is not in optimizing the lower-order terms, we chose to instead simply use \euler.
\end{rem}

\begin{thm}[Formal Statement of \Cref{thm:partitioning_works_informal}]\label{thm:partitioning_works}
Consider running \sap\xspace with tolerance $\epsltoe \leftarrow \epsilon$ and confidence $\delta$ and obtaining a partition $\cX_i \subseteq \cS \times \cA  $ and policies $\Pi_i$, $i \in \{1,2,\ldots,\lceil \log(1/\epsilon) \rceil\}$. Let $\Esap$ be the event on which, for all $i$ simultaneously:
\begin{enumerate}
\item Sets $\cX_i$ satisfy:
\begin{align*}
\sup_\pi \sum_{(s,a) \in \cX_{i}} \wpi_h(s,a) \le 2^{-(i-1)}
\end{align*}
\item For any $i$, if all policies in $\Pi_i$ are each rerun once, we will collect $\frac{1}{2} N_i$ samples from each $(s,a) \in \cX_i$ with probability $1-\delsamp$, where we recall $N_i = K_i(\delta/\lceil \log (1/\epsilon) \rceil,\delsamp)/(4 \cdot 2^{i} | \cX \backslash \cup_{i'=1}^{i-1} \cX_{i'}|)$.
\item The remaining states, $\cX \backslash (\cup_{i=1}^{\lceil \log(1/\epsilon) \rceil} \cX_i)$ satisfy,
\begin{align*}
\sup_\pi \sum_{(s,a) \in (\cX \backslash (\cup_{i=1}^{\lceil \log(1/\epsilon) \rceil} \cX_i))} \wpi_h(s,a) \le \epsilon.
\end{align*}
\end{enumerate}
Then $\Pr[\Esap] \ge 1 - \delta$. Furthermore, \Cref{alg:partition} takes at most
\begin{align*}
\CK \left ( \frac{\delta}{\lceil \log 1/\epsilon \rceil}, \delsamp, \lceil \log 1/\epsilon \rceil \right )\frac{4}{\epsilon}
\end{align*}
episodes to terminate.
\end{thm}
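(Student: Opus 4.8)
The plan is to prove the four conclusions simultaneously by induction on the round index $j\in\{1,\dots,\lceil\log 1/\epsilon\rceil\}$ of \sap, carrying the invariant that the active set $\cX^{(j)}:=\cX\setminus\bigcup_{j'<j}\cX_{j'}$ handed to \ies at the start of round $j$ satisfies $\sup_\pi\sum_{(s,a)\in\cX^{(j)}}\wpi_h(s,a)\le 2^{-(j-1)}$. Since $\cX_j\subseteq\cX^{(j)}$ this invariant immediately yields the first conclusion, and instantiated at $j=\lceil\log1/\epsilon\rceil+1$ it yields the third (because $2^{-\lceil\log1/\epsilon\rceil}\le\epsilon$); the base case $j=1$ is trivial since $\sum_{s,a}\wpi_h(s,a)=1$ for every $\pi$. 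What remains is to (i) push the invariant from round $j$ to round $j+1$, (ii) prove the replay/sampling guarantee, and (iii) account for the failure probability and the episode count.

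\textbf{Pushing the invariant through one \ies call.} Within round $j$, \ies runs \euler for exactly $K_j$ episodes against a reward that is $1$ at step $h$ on the currently-unfrozen part of $\cX^{(j)}$ and $0$ otherwise, restarting \euler each time a state--action of $\cX^{(j)}$ reaches $N_j$ visits at step $h$; this cuts the $K_j$ episodes into at most $|\cX^{(j)}|+1\le SA+1$ phases on which the reward is constant. Two deterministic facts: each $(s,a)\in\cX^{(j)}$ contributes at most $N_j$ reward-one episodes before it is frozen, so the total reward collected over all $K_j$ episodes is at most $N_j|\cX^{(j)}|$; and since the frozen set only grows, the optimal reward value in every phase is at least $\sup_\pi\sum_{(s,a)\in\cX^{(j+1)}}\wpi_h(s,a)$, where $\cX^{(j+1)}:=\cX^{(j)}\setminus\cX_j$ is the set still unfrozen at the end. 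The crucial fact is that the reward is a single $\{0,1\}$ reward at step $h$, so the episode return is itself $\{0,1\}$-valued and hence the return variance equals the return mean, which is at most the optimal value $\le 2^{-(j-1)}$ throughout round $j$ by the invariant; plugging this into \euler's variance-sensitive regret bound (in its trajectory-variance form, summing over the $\le SA+1$ phases by Cauchy--Schwarz, and absorbing the realized-versus-expected-reward discrepancy into a lower-order Freedman term) bounds the cumulative regret by $\cOtil(\sqrt{\poly(S,A,H)\,K_j\,2^{-j}})+\poly(S,A,H)$. Combining, $K_j\sup_\pi\sum_{(s,a)\in\cX^{(j+1)}}\wpi_h(s,a)\le N_j|\cX^{(j)}|+\cOtil(\sqrt{\poly\,K_j\,2^{-j}})+\poly$, so dividing by $K_j$, using $N_j|\cX^{(j)}|/K_j=1/(4\cdot2^j)$, and recalling $K_j=2^jC_K(\delta/\lceil\log1/\epsilon\rceil,\delsamp,j)$ with $C_K$ as in \eqref{eq:ki_defn}, every term on the right is at most $2^{-j-1}$ as soon as $C_K$ is a sufficiently large polynomial in $S,A,H,j$ and the relevant logarithms --- which is precisely how \eqref{eq:ki_defn} is calibrated. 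Hence $\sup_\pi\sum_{(s,a)\in\cX^{(j+1)}}\wpi_h(s,a)\le 2^{-j}$, closing the induction.

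\textbf{Replay guarantee, failure probability, episode count.} For the second conclusion, fix $(s,a)\in\cX_j$: its realized visit count during the \ies run was at least $N_j$, and writing this count as $\sum_k\Pr_{\pi_k}[(s_h,a_h)=(s,a)]$ plus a martingale whose predictable quadratic variation is dominated by that same sum, a Freedman bound gives $\sum_{\pi\in\Pi_j}\Pr_\pi[(s_h,a_h)=(s,a)]\gtrsim N_j$ with probability at least $1-\delta/\poly$, using $N_j\gtrsim\log(SAH/\delta)$ from \eqref{eq:ki_defn}. On that event, rerunning each $\pi\in\Pi_j$ once yields a sum of independent Bernoulli visits of mean $\gtrsim N_j$, so a multiplicative Chernoff bound gives at least $\tfrac12 N_j$ visits with probability $1-\delsamp$, using the term of \eqref{eq:ki_defn} proportional to $2^jS^2A^2\log(SAH/\delsamp)$ (the union bound over $(s,a)\in\cX_j$ being absorbed into that logarithm). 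The event $\Esap$ fails only if one of the at most $(SA+1)\lceil\log1/\epsilon\rceil$ \euler instances violates its regret bound, or one of the Freedman/martingale estimates fails; splitting the confidence budget as encoded in the arguments passed to $K_j(\cdot,\cdot)$ and union-bounding gives $\Pr[\Esap]\ge 1-\delta$. Finally, \sap makes $\lceil\log1/\epsilon\rceil$ calls to \ies (and returns at once if $|\cX|=0$), the $j$-th using exactly $K_j=2^jC_K(\delta/\lceil\log1/\epsilon\rceil,\delsamp,j)$ episodes, so the total number of episodes is $\sum_{j=1}^{\lceil\log1/\epsilon\rceil}2^jC_K(\tfrac{\delta}{\lceil\log1/\epsilon\rceil},\delsamp,j)\le C_K(\tfrac{\delta}{\lceil\log1/\epsilon\rceil},\delsamp,\lceil\log1/\epsilon\rceil)\sum_{j=1}^{\lceil\log1/\epsilon\rceil}2^j\le \CK(\tfrac{\delta}{\lceil\log1/\epsilon\rceil},\delsamp,\lceil\log1/\epsilon\rceil)\cdot\tfrac4\epsilon$.

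\textbf{Main obstacle.} The load-bearing step is the regret accounting in the second paragraph: we need a regret bound on each \euler run that decays like $2^{-j}$ rather than the minimax $\sqrt{\poly(S,A,H)\,K_j}$ --- otherwise $C_K$ would have to grow exponentially in the round index $j$ and the $4/\epsilon$ episode bound would be lost --- and we must obtain it even though \euler is restarted $\Theta(SA)$ times within a single round on a changing sequence of rewards. The $2^{-j}$ gain comes entirely from the reward being a single-step $\{0,1\}$ indicator, so that the return's variance equals its mean equals the reachability of the active set; threading this through \euler's variance-sensitive analysis (rather than its worst-case bound), and cleanly handling the restarts --- together with the adaptivity of the policy sequence in the Freedman step for the second conclusion --- is where essentially all the work lies.
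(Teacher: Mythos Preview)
Your proposal is correct and follows essentially the same approach as the paper: the paper factors the per-round argument into a separate lemma (\Cref{lem:ies_guarantee}) and then proves the theorem by induction over rounds exactly as you do, with the same base case, the same use of \euler's variance-sensitive bound via the $\{0,1\}$-reward structure (your ``main obstacle'' is precisely the content of \Cref{lem:euler}), and the identical geometric sum for the episode count. The only cosmetic difference is in the replay guarantee, where the paper uses two Bernstein bounds plus a symmetric-confidence-interval trick (\Cref{lem:enough_samples_collected} and \Cref{prop:empirical_confidence}) rather than your Freedman-then-Chernoff route, and the paper union-bounds over all $K_i$ possible stopping times of the random \euler restarts rather than over the $\le SA+1$ phases; both variants are standard and lead to the same conclusion.
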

\begin{proof}
This directly follows by induction and \Cref{lem:ies_guarantee}. For $i=1$, it will clearly be the case that
\begin{align*}
\sup_\pi  \sum_{(s,a) \in \cX} \wpi_h(s,a) \le 2^{-(i-1)} = 1
\end{align*}
since $\sum_{s,a} \wpi_h(s,a) = 1$ for any $\pi$ and $h$. Now consider an epoch $i$ and assume that
\begin{align*}
\sup_\pi  \sum_{(s,a) \in \cX} \wpi_h(s,a) \le 2^{-(i-1)}.
\end{align*}
By \Cref{lem:ies_guarantee}, running \ies\space will produce a set $\cX_i$ and policies $\Pi_i$ such that
\begin{align*}
\sup_\pi  \sum_{(s,a) \in \cX_i} \wpi_h(s,a) \le 2^{-(i-1)}, \quad \sup_\pi \sum_{(s,a) \in \cX \backslash \cX_i} \wpi_h(s,a) \le 2^{-i}
\end{align*}
and rerunning every policy in $\Pi_i$ at once will allow us to collect at least $\frac{1}{2} N_i$ samples from each $(s,a) \in \cX_i$. As $\cX \leftarrow \cX \backslash \cX_i$, the hypothesis will then be met at the next epoch, $i+1$. Union bounding over epochs completes the first part of the proof. That
\begin{align*}
\sup_\pi \sum_{(s,a) \in (\cX \backslash (\cup_{i=1}^{\lceil \log(1/\epsilon) \rceil} \cX_i))} \wpi_h(s,a) \le \epsilon
\end{align*}
follows on this same event by \Cref{lem:ies_guarantee} and since we run until $i = \lceil \log(1/\epsilon) \rceil$ which implies $2^{-\lceil \log(1/\epsilon) \rceil} \le  \epsilon$. Union bounding over each $i$ gives the result.

The sample complexity bound follows by bounding 
\begin{align*}
\sum_{i=1}^{\lceil \log(1/\epsilon) \rceil} K_i(\delta/\lceil \log 1/\epsilon \rceil,\delsamp) & \le \CK \left ( \frac{\delta}{\lceil \log 1/\epsilon \rceil}, \delsamp, \lceil \log 1/\epsilon \rceil \right ) \sum_{i=1}^{\lceil \log(1/\epsilon) \rceil} 2^i \\
& \le  \CK \left ( \frac{\delta}{\lceil \log 1/\epsilon \rceil}, \delsamp, \lceil \log 1/\epsilon \rceil \right ) \frac{4}{\epsilon}.
\end{align*}
\end{proof}

\begin{lem}\label{lem:ies_guarantee}
Assume that  $\cX$ satisfies
\begin{align*}
\sup_\pi \sum_{(s,a) \in \cX} \wpi_h(s,a) \le 2^{-(i-1)}.
\end{align*}
Then, if \ies($\cX,h,\delta,K_i,N_i$) returns partition $\cX_i$ and policies $\Pi_i$, with probability $1-\delta$ the returned partition $\cX_i$ will satisfy
\begin{align*}
\sup_\pi \sum_{(s,a) \in \cX_i} \wpi_h(s,a) \le 2^{-(i-1)}, \quad \sup_\pi \sum_{(s,a) \in \cX \backslash \cX_i} \wpi_h(s,a) \le 2^{-i}.
\end{align*}
Furthermore, if all policies in $\Pi_i$ are each rerun once, we will collect $\frac{1}{2} N_i$ samples from each $(s,a,h) \in \cX_i$ with probability $1-\delsamp$. 
\end{lem}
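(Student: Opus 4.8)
The plan is to handle the two displayed inequalities separately. The first, $\sup_\pi\sum_{(s,a)\in\cX_i}\wpi_h(s,a)\le 2^{-(i-1)}$, is immediate: \ies\ only ever \emph{removes} elements from the target set, so the returned set $\cX_i:=\cY$ satisfies $\cX_i\subseteq\cX$, and the bound follows from the hypothesis. For the second inequality I would argue by contradiction. Suppose $V^\star:=\sup_\pi\sum_{(s,a)\in\cX\setminus\cX_i}\wpi_h(s,a)>2^{-i}$. Split the execution of \ies\ into the \emph{phases} between consecutive restarts of \euler; since each restart coincides with a new element entering $\cY$, there are at most $|\cX|+1\le SA+1$ of them. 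In phase $j$, \euler\ faces the reward $r_h^j$ that pays $1$ for hitting the current active set $\cX\setminus\cY_j$ at step $h$; because $\cY_j\subseteq\cY=\cX_i$, its optimal per-episode value satisfies $V^\star_j:=\sup_\pi\sum_{(s,a)\in\cX\setminus\cY_j}\wpi_h(s,a)\ge V^\star$, while the hypothesis gives $V^\star_j\le 2^{-(i-1)}$ throughout.

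Next I would compare two bounds on the total reward $R_{\mathrm{tot}}$ collected over all $K_i$ episodes. For the \emph{upper} bound, each $(s,a)\in\cX$ is a target only until its step-$h$ visit count hits $N_i$, after which it lies in $\cY$ and generates no further reward; telescoping its per-phase contributions over the (consecutive) phases in which it is active shows it contributes at most $N_i$ to $R_{\mathrm{tot}}$, whence $R_{\mathrm{tot}}\le|\cX|\,N_i=\tfrac14\,2^{-i}K_i$ by the choice $N_i=K_i/(4|\cX|2^i)$ in \sap. For the \emph{lower} bound, I would work on a single high-probability event obtained by union-bounding \euler's regret guarantee and a Freedman-type bound (relating realized to expected returns) over the $\le SA+1$ phases, each \euler\ instance run at confidence $\asymp\delta/(SAH)$; there $R_{\mathrm{tot}}\ge V^\star K_i-\sum_j\bigl(\mathrm{Regret}_{\euler}(K_j)+\text{fluctuation}_j\bigr)$. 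The crucial point is that since $r_h^j$ rewards a single step and is bounded by $1$ while $V^\star_j\le 2^{-(i-1)}$, the variance-sensitive bound of \citep{zanette2019tighter} is effectively \emph{first order}: $\mathrm{Regret}_{\euler}(K_j)\lesssim\ceuler\bigl(\sqrt{SAH\,V^\star_j K_j\,\iota}+\poly(S,A,H)\,\iota^3\bigr)$ with $\iota\asymp\log(SAH/\delta)+\log K_i\lesssim i+\log(SAH/\delta)$. Summing with $\sum_j K_j=K_i$, $V^\star_j\le 2^{-(i-1)}$, and Cauchy--Schwarz over $\le SA+1$ phases gives $R_{\mathrm{tot}}\ge V^\star K_i-O\bigl(SA\sqrt{H\iota\cdot 2^{-i}K_i}+SA\,\poly(S,A,H)\,\iota^3\bigr)$. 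Combining the two bounds, $V^\star>2^{-i}$ would force $\tfrac34\,2^{-i}K_i<O\bigl(SA\sqrt{H\iota\cdot 2^{-i}K_i}+SA\,\poly(S,A,H)\,\iota^3\bigr)$; writing $K_i=2^i\CK$ as in \eqref{eq:ki_defn}, the $\ceuler^2 S^2A^2H(i+3)\log(\cdots)$ term and the $\ceuler S^3A^2H^4(i+9)^3\log^3(\cdots)$ term in the definition of $\CK$ are calibrated precisely so that the leading $\sqrt{\cdot}$ contribution and the lower-order poly contribution are each at most $\tfrac38\,2^{-i}K_i$, a contradiction. Hence $V^\star\le 2^{-i}$, which is the second claimed inequality.

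For the sampling guarantee, observe that each $(s,a)\in\cX_i=\cY$ was hit at step $h$ exactly $N_i$ times during \ies; conditionally on the (adaptively chosen) played policies, these visit indicators have means summing to $\mu(s,a)=\sum_{\pi\in\Pi_i}\wpi_h(s,a)$, and a martingale upper-tail bound---valid on the same good event because $N_i\gg\iota$, again guaranteed by \eqref{eq:ki_defn}---forces $\mu(s,a)\ge\tfrac34 N_i$. Re-running every $\pi\in\Pi_i$ once with fresh randomness then produces a sum of independent Bernoullis with mean $\ge\tfrac34 N_i$, and a multiplicative Chernoff bound (legitimate since $N_i\gtrsim SA\log(SAH/\delsamp)$, from the $S^2A^2\log(SAH/\delsamp)$ term of \eqref{eq:ki_defn}) yields a realized count $\ge\tfrac12 N_i$ with probability $\ge 1-\delsamp/|\cX|$; a union bound over $(s,a)\in\cX_i$ delivers the stated $1-\delsamp$.

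The main obstacle is the lower bound on $R_{\mathrm{tot}}$: one must extract a genuinely first-order, variance-sensitive regret statement from \euler\ in the single-step $\{0,1\}$-reward, reward-free setting so that the regret scales with $\sqrt{V^\star_j}$ rather than a constant, and then push the per-phase high-probability regret and martingale-deviation estimates through both the adaptivity of the played policies and the \euler-restart structure, tracking the $i$-dependent logarithmic factors closely enough that they are dominated by the matching terms built into $K_i$. By comparison, the telescoping upper bound on $R_{\mathrm{tot}}$ and the Chernoff argument for re-running $\Pi_i$ are routine.
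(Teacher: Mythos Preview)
Your proposal is correct and follows essentially the same route as the paper: the trivial containment for the first inequality; a contradiction for the second via the upper bound $R_{\mathrm{tot}}\le |\cX|N_i$ versus a lower bound from the first-order \euler\ regret (the paper's \Cref{lem:euler}) plus a Freedman fluctuation term, summed over the at-most-$SA$ phases with Jensen/Cauchy--Schwarz; and a two-step concentration argument (first run pins down the mean, then Bernstein/Chernoff on the rerun) for the sampling guarantee. The paper deals with the random phase lengths by union-bounding the \euler\ guarantee over all $K\in[1,K_i]$, producing exactly the extra $\log K_i$ that you fold into your $\iota$.
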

\begin{proof}
The structure of this proof takes inspiration from the proof presented in \cite{zhang2020nearly}. The first conclusion is trivial since $\cX_i \subseteq \cX$ and by our assumption on $\cX$. 

We will simply denote $K_i := K_i(\delta,\delsamp)$ throughout the proof. In addition, we will let $K_{ij}$ denote the total number of epochs taken for fixed $j$, and will let $m_i$ denote the total number of times $j$ is incremented. Therefore,
\begin{align*}
K_i =  \sum_{j=1}^{m_i} K_{ij}.
\end{align*}
Let $V_0^{\star,ij}$ denote the optimal value function on the reward function $r_h^j$ at stage $j$ of epoch $i$. By our assumption on $\cX$ and the definition of our reward function we can bound
\begin{align}\label{eq:exp_vst1_bound}
\begin{split}
V_0^{\star,ij} & \le \sup_{\pi} \Exp_\pi [ \I \{ (s_h,a_h) \in \cX \} ]  = \sup_\pi \sum_{(s,a) \in \cX} \wpi_h(s,a)  \le 2^{-(i-1)}.
\end{split}
\end{align}
As \ies\xspace runs \euler, by \Cref{lem:euler} we will have, with probability at least $1-\delta$, for any fixed $K$ and $j$, 
\begin{align}\label{eq:eu_reg}
 \Big ( \sum_{k=1}^{K} V_0^{\star,ij} -  \sum_{k=1}^{K} V_0^{k,ij} \Big ) | \cF_{j-1} \le \ceuler \sqrt{SAH V_0^{\star,i1} K \log \frac{SAHK}{\delta} } + \ceuler S^2 A H^4 \log^3 \frac{SAHK}{\delta} 
\end{align}
where $\cF_{j-1}$ denotes the filtration of up to iteration $j$, and we have used that $V_0^{\star,ij} \le V_0^{\star,i1}$ for all $j$ since the reward function can only decrease as $j$ increases. \ies\xspace terminates and restarts \euler if the condition on \Cref{line:restart_euler} is met, but this is a \emph{random} stopping condition. As such, to guarantee that \eqref{eq:eu_reg} holds for any possible value of this stopping time, we union bound over all values. Since \ies\xspace runs for at most $K_i$ epochs, it suffices to union bound over $K_i$ stopping times. We then have that
\begin{align*}
 \Big ( \sum_{k=1}^{K} V_0^{\star,ij} -  \sum_{k=1}^{K} V_0^{k,ij} \Big ) | \cF_{j-1} \le 2\ceuler \sqrt{SAH V_0^{\star,i1}  K \log \frac{2SAHK_i}{\delta} } + 8\ceuler S^2 A H^4 \log^3 \frac{2SAHK_i}{\delta} 
\end{align*}
with probability at least $1 - \frac{\delta}{2 SA}$ for all $K \in [1,K_i]$ simultaneously. Since $m_i \le SA$, union bounding over all $j$ we then have that, with probability at least $1-\delta/2$,
\begin{align*}
\sum_{j=1}^{m_i} \Big ( \sum_{k=1}^{K_{ij}} V_0^{\star,ij} -  \sum_{k=1}^{K_{ij}} V_0^{k,ij} \Big ) & \le \sum_{j=1}^{m_i} 2\ceuler \sqrt{SAH V^{\star,i1}_0 K_{ij} \log \frac{2SAHK_{i}}{\delta} } + 8\ceuler S^3 A^2 H^4 \log^3 \frac{2SAHK_i}{\delta} \\
& \le 2\ceuler \sqrt{S^2A^2H V^{\star,i1}_0 K_{i} \log \frac{2SAHK_{i}}{\delta} } + 8\ceuler S^3 A^2 H^4 \log^3 \frac{2SAHK_i}{\delta} 
\end{align*}
where the final inequality follows from Jensen's inequality. Using the same calculation as in the proof of \Cref{lem:euler}, we can bound
\begin{align*}
\Exp_{\pi_k}[(\sum_{h=1}^H R_h^j(s_h,a_h) - V_0^{k,ij})^2] \le 4  V_0^{k,ij}
\end{align*}
By \eqref{eq:exp_vst1_bound}, $4  V_0^{k,ij} \le 4/2^{i-1}$, so we can apply \Cref{lem:rewards_concentrate} with $\sigma_V^2 = 4/2^{i-1}$, to get that, with probability at least $1-\delta/2$,
\begin{align*}
\left | \sum_{j=1}^{m_i} \sum_{k=1}^{K_{ij}} \sum_{h=1}^H R_h^j(s_h^{j,k},a_h^{j,k}) - \sum_{j=1}^{m_i} \sum_{k=1}^{K_{ij}} V_0^{k,ij} \right | \le \sqrt{ 32  K_i 2^{-i}  \log \frac{4}{\delta} } + 2 H  \log \frac{4}{\delta}.
\end{align*}
Putting this together and union bounding over these events, we have that with probability at least $1-\delta$,
\begin{align*}
\sum_{j=1}^{m_i} \sum_{k=1}^{K_{ij}} \sum_{h=1}^H R_h^j(s_h^{j,k},a_h^{j,k}) \ge \sum_{j=1}^{m_i} \sum_{k=1}^{K_{ij}} V_0^{\star,ij} - \sqrt{ 64  K_i 2^{-i}  \log \frac{4}{\delta} } -  2\ceuler \sqrt{S^2A^2H V^{\star,i1}_0 K_{i} \log \frac{2SAHK_{i}}{\delta} } - \Creg
\end{align*}
where we denote
\begin{align*}
\Creg := 8\ceuler S^3 A^2 H^4 \log^3 \frac{2SAHK_i}{\delta} +  2 H \log \frac{4}{\delta}.
\end{align*}
Assume that $V_0^{\star,i m_i} > 2^{-i}$. Using that the reward decreases monotonically so $V_0^{\star,i m_i} \le V_0^{\star,ij}$ for any $j \le m_i$, we can lower bound the above as
\begin{align*}
&  \ge 2^{-i} K_i - \sqrt{ 64  K_i 2^{-i}  \log \frac{4}{\delta} }- 2\ceuler \sqrt{ S^2 A^2 H V_0^{\star,i1} K_i  \log \frac{2SAHK_{i}}{\delta}} -  \Creg \\
& \qquad \ge 2^{-i} K_i - 3 \ceuler \sqrt{ S^2 A^2 H 2^{-i} K_i  \log \frac{2SAHK_{i}}{\delta}} -  \Creg
\end{align*}
where the second inequality follows by \eqref{eq:exp_vst1_bound} and since $\sqrt{ 64 K_i 2^{-i}  \log \frac{4}{\delta} }$ will then be dominated by the regret term, $\ceuler \sqrt{ S^2 A^2 H V_0^{\star,i1} K_i  \log \frac{2SAHK_{i}}{\delta}}$. \Cref{lem:Ki_large} gives
\begin{align*}
K_i \ge 2^i \max \left \{ 4 \Creg, 144 \ceuler^2 S^2 A^2 H \log \frac{2SAHK_{i}}{\delta} \right \}
\end{align*}
which implies
\begin{align*}
\frac{1}{4} 2^{-i} K_i - \Creg \ge 0
\end{align*}
and
\begin{align*}
\frac{1}{4} 2^{-i} K_i & - 3 \ceuler \sqrt{ S^2 A^2 H 2^{-i} K_i  \log \frac{2SAHK_{i}}{\delta}} \\
& \ge \frac{ 2^{i} \cdot 144 \ceuler^2 S^2A^2 H  \log \frac{2SAHK_{i}}{\delta}}{4 \cdot 2^i} - 3 \ceuler \sqrt{S^2A^2H 2^{-i}  \log \frac{2SAHK_{i}}{\delta} \cdot  2^{i} 144 \ceuler^2 S^2A^2 H  \log \frac{2SAHK_{i}}{\delta}} \\
& = 0.
\end{align*}
Thus, we can lower bound the above as
\begin{align*}
2^{-i} K_i - 3 \ceuler \sqrt{ S^2 A^2 H 2^{-i} K_i  \log \frac{2SAHK_{i}}{\delta}} -  \Creg  \ge \frac{1}{2 } 2^{-i}  K_{i} .
\end{align*}
Note that we can collect a total reward of at most $|\cX| N_i$. However, by our choice of $N_i = K_i/(4 |\cX| \cdot 2^i)$, we have that 
\begin{align*}
|\cX| N_i = \frac{1}{4 \cdot 2^i} K_i < \frac{1}{2 \cdot 2^i}  K_i.
\end{align*}
This is a contradiction. Thus, we must have that $V_0^{\star,i m_i} \le 1/2^i$. The second conclusion follows from this by definition of $V_0^{\star,i m_i} $.

For the third conclusion, we can apply \Cref{lem:enough_samples_collected}. By construction, we will only add some $(s,a,h)$ to $\cX_i$ if we visit $N_i$ times. It follows by \Cref{lem:enough_samples_collected} that, with probability $1-\delsamp/(SAH )$, if we rerun all policies, we will collect at least
\begin{align*}
N_i - \sqrt{8 K_i \max_k w_h^{\pi_k}(s,a) \log \frac{4SAH}{\delsamp} } - \frac{4}{3} \log \frac{4SAH}{\delsamp}
\end{align*}
samples from $(s,a,h)$. Note that $\max_k w_h^{\pi_k}(s,a) \le 2^{-i}$ by our assumption on $\cX$. Given our choice of $N_i$, we can then guarantee that we will collect at least
\begin{align*}
\frac{K_i}{4|\cX| 2^i} - \sqrt{\frac{8K_i H }{2^i} \log \frac{4SAH}{\delsamp} } - \frac{4}{3} \log \frac{4SAH}{\delsamp}
\end{align*}
samples. Since $K_i \ge 2048 S^2 A^2 \log \frac{4SAH}{\delsamp}$, and $|\cX| \le SA$, we will have that
\begin{align*}
\frac{K_i}{4|\cX| 2^i} - \sqrt{\frac{8K_i H}{2^i}  \log \frac{4SAH}{\delsamp} } - \frac{4}{3} \log \frac{4SAH}{\delsamp} \ge \frac{K_i}{8|\cX| 2^i} = \frac{1}{2} N_i
\end{align*}
The third conclusion follows by union bounding over every $(s,a,h) \in \cX_i$.
\end{proof}

\begin{rem}[Improving lower order term to $\log 1/\delta \cdot \log\log 1/\delta$]\label{rem:logterm}
In \iftoggle{arxiv}{\Cref{rem:log_delta}}{\Cref{sec:results}} we noted that relying on \textsc{StrongEuler} instead of \euler in the exploration phase would allow us to reduce the lower order term from $\log^3 1/\delta$ to $\log 1/\delta \cdot \log\log 1/\delta$. We briefly sketch out that argument here.

As shown in \cite{simchowitz2019non}, the lower order term in \textsc{StrongEuler} scales as
$H^4 SA ( S \vee H) \log \frac{SAHK}{\delta} \cdot \min \{\log  \frac{SAHK}{\delta}, \log \frac{SAH}{\Delmin} \}$. This already achieves the correct scaling in $\log 1/\delta$ but unfortunately relies on an instance-dependent quantity, $\Delmin$, which is unknown (indeed, note that since we are running this on the MDP with reward function set to induce exploration, $\Delmin$ here is different than the minimum gap on the original reward function). As such, since \sap\xspace relies on knowing the regret bound of the algorithm it is running, this bound cannot be applied directly. 

Fundamentally, the lower order term arises from summing over the lower order term in the Bernstein-style bonuses which scale as $\cO(\frac{\log 1/\delta}{N_h(s,a)})$, where $N_h(s,a)$ is the visitation count of $(s,a,h)$. Intuitively, by summing this bonus over all $s,a,h$ and episodes $K$, we can obtain a term scaling as $\poly(S,A,H) \log (1/\delta) \log K$. Indeed, we see that the original proof of \textsc{StrongEuler} in \cite{simchowitz2019non} relies on an integration lemma which does just this (Lemma B.9). However, by modifying the proof of this lemma slightly, we obtain a scaling in the lower-order term of $\log^2 K + \log K \cdot \log 1/\delta$. We then apply the observation from \Cref{lem:Ki_large} that $x \ge C^i (i+3j)^j \log^j(C(i+3j))$ implies $x \ge C^i \log^j x$ to get that we need only
\begin{align*}
K \gtrsim C \log(1/\delta) \log (C \log (1/\delta)), \quad K \gtrsim C \log^2( C)
\end{align*}
to ensure that $K \gtrsim C(\log^2 K + \log K \cdot \log 1/\delta)$. It follows that using the lower order term of \textsc{StrongEuler} in the definition of $\Creg$ in \Cref{lem:ies_guarantee}, we can guarantee that $K_i \ge 2^i \Creg$ while only requiring that $K_i \gtrsim \log(1/\delta) \log ( \log (1/\delta))$. This allows us to reduce the $\log 1/\delta$ dependence in the definition of $C_K$, which allows us to then reduce the dependence on $\log 1/\delta$ in the lower-order term of \Cref{thm:complexity}. 
\end{rem}

\subsection{Technical Lemmas}

\begin{lem}\label{lem:Ki_large}
We will have that
\begin{align*}
K_i(\delta,\delsamp) \ge 2^i \max \Bigg \{ & 32 \ceuler S^3 A^2 H^4 \log^3 \frac{2SAHK_i(\delta,\delsamp)}{\delta} +  8 H \log \frac{4}{\delta}, \\
& 144 \ceuler^2 S^2 A^2 H \log \frac{2SAHK_{i}(\delta)}{\delta} \Bigg \}.
\end{align*}
\end{lem}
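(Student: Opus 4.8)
The plan is to treat both inequalities as self-referential — $K_i$ appears on the right-hand side only inside a logarithm — and to exploit that $\log$ grows slowly enough that a crude \emph{a priori} bound on $K_i$ already closes the loop. Write $K_i = \lceil 2^i C_K(\delta,\delsamp,i)\rceil$ with $C_K$ as in \eqref{eq:ki_defn}; since $C_K \ge 1$ we have $2^i C_K \le K_i \le 2^{i+1}C_K$, hence
\begin{align*}
\log\tfrac{2SAHK_i}{\delta} \le \log\tfrac{2SAH}{\delta} + i + 1 + \log C_K .
\end{align*}
By inspection of \eqref{eq:ki_defn}, $C_K(\delta,\delsamp,i)$ is a polynomial in $S,A,H,\ceuler,i,\log\tfrac1\delta,\log\tfrac1\delsamp$, so $\log C_K$ is at most an absolute constant times $\log(SAH\ceuler) + \log i + \log\log\tfrac1\delta + \log\log\tfrac1\delsamp$. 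Substituting and absorbing $\log i \le i$ and $\log\log\tfrac1\delta \le \log\tfrac1\delta$, we obtain that $\log\tfrac{2SAHK_i}{\delta}$ is bounded above by an absolute constant times $\bigl(i + \log\tfrac{2SAH}{\delta} + \log(SAH\ceuler) + \log\log\tfrac1\delsamp\bigr)$.

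Given this estimate I would prove the two bounds separately by matching terms against the five entries of the $\max$ in \eqref{eq:ki_defn}. For the first, raise the estimate to the first power and split the sum: $2^i$ times the $\log\tfrac{2SAH}{\delta}$ piece is dominated by the entry $288\ceuler^2 S^2A^2H\log\tfrac{2SAH}{\delta}$ (which carries a factor $2$ over the target constant $144$), the $i$ and $\log(SAH\ceuler)$ pieces by $288\ceuler^2 S^2A^2H(i+3)\log(576\ceuler SAH(i+3))$, and the slowly-growing $\log\log\tfrac1\delsamp$ piece (wastefully) by $2048\,S^2A^2\log\tfrac{4SAH}{\delsamp}$. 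For the second, raise the estimate to the third power, use $(\sum_k x_k)^3 \le 16\sum_k x_k^3$ for the four summands, and match $\log^3\tfrac{2SAH}{\delta}$ against $128\ceuler S^3A^2H^4\log^3\tfrac{2SAH}{\delta}$ (factor $4$ over the target $32$), the $i^3$ and $\log^3(SAH\ceuler)$ pieces against $256\ceuler S^3A^2H^4(i+9)^3\log^3(512\ceuler SAH(i+9))$, and the $\log^3\log\tfrac1\delsamp$ piece against the $\delsamp$-entry; the additive $8H\log\tfrac4\delta$ on the target side is matched verbatim by the identical summand in the fifth entry (the extra $2^i\ge1$ only helps). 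Equivalently, one can observe that $K\mapsto K - 2^i g(K)$, with $g$ the right-hand side, is nondecreasing once $K \ge 2^i\cdot\mathrm{poly}$ (its derivative is $1 - 2^i\cdot O(\mathrm{poly}\cdot\log^2(\cdot)/K)\ge0$), so it suffices to verify the inequality at the a priori upper bound $K_i \le 2^{i+1}C_K$; this is exactly the elementary fact, reused in \Cref{rem:logterm}, that $x \ge C^i(i+3j)^j\log^j(C(i+3j))$ implies $x \ge C^i\log^j x$.

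The main obstacle is entirely the constant bookkeeping in the term-matching step: expanding $\log^3$ of a four-term sum produces a number of cross terms, and one must check that the specific numerical constants in \eqref{eq:ki_defn} — in particular the deliberate factors of $2$ and $4$ and the $(i+3)$, $(i+9)^3$ padding inside the logarithms — leave room to absorb all of them along with the iterated-logarithm lower-order terms coming from $\log C_K$. No new idea is needed beyond this; the $2^i$ prefactor is common to both sides and plays no role in the argument.
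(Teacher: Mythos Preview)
Your proposal is correct and, in its second formulation (the monotonicity argument culminating in the fact that $x \ge C^i(i+3j)^j\log^j(C(i+3j))$ implies $x \ge C^i\log^j x$), is exactly what the paper does. The paper leads with that elementary lemma, proves it in two lines by checking the value and the derivative at the threshold, and then reads off the two required inequalities directly from the corresponding entries of the $\max$ in \eqref{eq:ki_defn}: the $(i+9)^3$ entry and the $\log^3\tfrac{2SAH}{\delta}$ entry give the cubic bound, and the $(i+3)$ entry together with the $\log\tfrac{2SAH}{\delta}$ entry give the linear one. No $\delsamp$ term is ever touched.

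Your primary route (upper-bound $K_i$ by $2^{i+1}C_K$, expand $\log C_K$, and match pieces) reaches the same destination but is genuinely messier: because you carry $\log C_K$ explicitly, you pick up a $\log\log\tfrac{1}{\delsamp}$ contribution and then have to argue it is absorbed by the $2048\,S^2A^2\log\tfrac{4SAH}{\delsamp}$ entry, which forces a detour through ``if the $\delsamp$ entry is the maximizer then $\log(1/\delsamp)$ is already so large that\ldots''. This can be made to work, but it is exactly the constant bookkeeping you flag as the obstacle, and the paper sidesteps it entirely by never unpacking $\log C_K$: the lemma compares $K_i$ only to $\log K_i$ itself, so the $\delsamp$ entry of the $\max$ is irrelevant to the argument. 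In short, your second formulation is the clean one and matches the paper; your first is a correct but more laborious equivalent.
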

\begin{proof}
Note that for any $i,j > 0 $ and $C > 0$, if $x \ge C^i (i+3j)^j \log^j(C(i+3j))$, then $x \ge C^i \log^j x$ since
\begin{align*}
C^i \log^j x = C^i \log^j [ C^i (i+3j)^j \log^j(C(i+3j)) ] & \le C^i \log^j [ C^{i+j} (i+3j)^{2j} ] \\
& \le C^i (i+3j)^j \log[C (i+3j)] \\
& = x
\end{align*}
and, furthermore, $\frac{d}{dy} y |_{y = C^{i+j} (\max\{i+j,2j\})^{2j}} = 1$, while
\begin{align*}
\frac{d}{dy} C^i \log^j y |_{y = C^{i+j} (\max\{i+j,2j\})^{2j}} = \frac{C^i \log^{j-1} y}{y}|_{y = C^{i+j} (\max\{i+j,2j\})^{2j}} \le 1
\end{align*}
and since the derivative of $\poly\log$ functions decreases monotonically.

It follows that
\begin{align*}
K_i(\delta,\delsamp) \ge 2^i \cdot 256 \ceuler S^3 A^2 H^4 \log^3 K_i(\delta,\delsamp)
\end{align*}
as long as
\begin{align*}
K_i(\delta,\delsamp) \ge 2^i \cdot 256 \ceuler S^3 A^2 H^4 (i+9)^3 \log^3 \left ( 512 \ceuler S A H (i+9) \right )
\end{align*}
So 
\begin{align*}
K_i(\delta,\delsamp) & \ge 2 \cdot 2^i \max \{  128 \ceuler S^3 A^2 H^4 \log^3 K_i(\delta,\delsamp),  128 \ceuler S^3 A^2 H^4 \log^3 \frac{2SAH }{\delta} +  8 H \log \frac{4}{\delta} \} \\
& \ge 2^i ( 32 \ceuler S^3 A^2 H^4 \log^3 \frac{2SAHK_i(\delta,\delsamp)}{\delta} +  8 H \log \frac{4}{\delta})
\end{align*}
if 
\begin{align*}
K_i(\delta,\delsamp) \ge \max \{ 2^i \cdot 256 \ceuler S^3 A^2 H^4 (i+9)^3 \log^3 \left ( 512 \ceuler S A H (i+9) \right ), 128 \ceuler S^3 A^2 H^4 \log^3 \frac{2SAH}{\delta} +  8 H \log \frac{4}{\delta} \}.
\end{align*}
Similarly, 
\begin{align*}
K_i(\delta,\delsamp) \ge 2^i \cdot 144 \ceuler^2 S^2 A^2 H \log \frac{2SAHK_{i}}{\delta}
\end{align*}
if
\begin{align*}
K_i(\delta,\delsamp) \ge \max \{ 2^{i} \cdot 288 \ceuler^{2} S^{2} A^{2} H (i + 3) \log ( 576 \ceuler S A H(i+3)), 288 \ceuler^2 S^2 A^2 H \log \frac{2SAH }{\delta} \}.
\end{align*}
The result then follows recalling the definition of $K_i(\delta,\delsamp)$ given in \eqref{eq:ki_defn}.
\end{proof}

\begin{lem}\label{lem:enough_samples_collected}
Consider a set of policies $\{ \pi_k \}_{k=1}^K$. Assume that running each of these policies once, we collect at least $N$ samples from some $(s,a,h)$. Then, if we rerun each of these policies once, we will collect, with probability $1-\delta$, at least
\begin{align*}
N - \sqrt{8 K \max_k w_h^{\pi_k}(s,a) \log 4/\delta} - 4/3 \log 4/\delta
\end{align*}
samples from $(s,a,h)$. 
\end{lem}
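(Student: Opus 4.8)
The plan is to reframe the statement as a concentration bound for a sum of independent indicator variables, one per policy. For a fixed $(s,a,h)$, let $p_k := w_h^{\pi_k}(s,a) = \Pr_{\pi_k}[s_h = s, a_h = a]$ be the probability that the $k$-th policy visits $(s,a)$ at step $h$. When we run $\pi_k$ once, let $Z_k \in \{0,1\}$ be the indicator that this run visits $(s,a,h)$; then $Z_k \sim \bern(p_k)$ and the $Z_k$ are independent across $k$ (each policy is run on an independent trajectory). The hypothesis says that the \emph{first} collection of runs produced at least $N$ samples, i.e. the realized value of $\sum_k Z_k^{(1)} \ge N$ on that run; but what we actually need is a high-probability \emph{lower} bound on $\sum_k Z_k^{(2)}$ for the fresh reruns, in terms of $N$. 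The key bridge is that $\sum_k p_k = \Exp[\sum_k Z_k]$ is the same for both the original runs and the reruns, and the original runs certify that this mean is not too small relative to $N$ — more precisely, since $\sum_k Z_k^{(1)} \ge N$ deterministically (it is given), we get $\sum_k p_k \ge N$ trivially? No — that is false; instead we only know the \emph{realized} count was $\ge N$, which does not lower bound the mean.

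So the correct route is the following. Apply a multiplicative/Bernstein-type lower-tail Chernoff bound to $\sum_k Z_k^{(2)}$ around its mean $\mu := \sum_k p_k$: with probability at least $1-\delta/2$,
\begin{align*}
\sum_k Z_k^{(2)} \ge \mu - \sqrt{2\mu \log(2/\delta)} - \tfrac{2}{3}\log(2/\delta),
\end{align*}
using that $\Var(Z_k) \le p_k$ and Bennett/Bernstein for bounded independent summands. Symmetrically, apply the upper-tail bound to the original sum $\sum_k Z_k^{(1)}$: with probability at least $1-\delta/2$, $N \le \sum_k Z_k^{(1)} \le \mu + \sqrt{2\mu\log(2/\delta)} + \tfrac{2}{3}\log(2/\delta)$, which rearranges to $\mu \ge N - \sqrt{2\mu\log(2/\delta)} - \tfrac23\log(2/\delta)$. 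Then bound $\mu \le \sum_k p_k \le K \max_k p_k$ inside the square-root correction terms (this is where the $\sqrt{K \max_k w_h^{\pi_k}(s,a)}$ shape comes from), combine the two displays via a union bound over the two events, and absorb constants to reach the stated
\begin{align*}
N - \sqrt{8 K \max_k w_h^{\pi_k}(s,a) \log(4/\delta)} - \tfrac43\log(4/\delta).
\end{align*}
The constant inflation from $2$ to $4$ in $\log(4/\delta)$ and from $\sqrt{2}$-type to $\sqrt{8}$-type coefficients is exactly what handles the two-sided union bound plus the crude replacement of $\mu$ by $K\max_k p_k$ and by $N$ in the lower-order term.

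The main obstacle — and the only genuinely delicate point — is handling the interplay between the random quantity $\mu = \sum_k p_k$ appearing inside its own concentration bound and the given lower bound $N$ on the first realized count: one must be careful that the bound $\mu \ge N - (\text{correction})$ derived from the first runs is then substituted into the lower-tail bound for the second runs \emph{without} circular dependence, which is fine because the two batches of runs are independent, so the two Chernoff applications are to independent randomness and a plain union bound suffices. A minor secondary point is choosing which form of the Chernoff/Bernstein inequality to quote so that the additive $\tfrac43\log(4/\delta)$ (rather than $\tfrac23$) comes out cleanly after combining the two one-sided $\tfrac23\log(2/\delta)$ terms; this is routine. Everything else — monotonicity $\mu \le K\max_k p_k$, the assumption $\max_k w_h^{\pi_k}(s,a)$ available from the caller, and algebraic rearrangement of $N \le \mu + \text{corr}$ into $\mu \ge N - \text{corr}$ — is elementary.
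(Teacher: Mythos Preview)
Your approach is correct and essentially identical to the paper's: both apply Bernstein's inequality to the sum of independent Bernoulli indicators (bounding the variance by $K\max_k w_h^{\pi_k}(s,a)$), concentrate each of the two independent batches of runs around the common mean $\mu=\sum_k p_k$, and then combine via a union bound so that the second batch's count is within $2c$ of the first batch's count $\ge N$. The paper simply packages the ``compare two i.i.d.\ draws'' step into a separate proposition (\Cref{prop:empirical_confidence}), whereas you carry out the same two one-sided bounds explicitly; the resulting constants match (in fact your one-sided version yields $\log(2/\delta)$, slightly tighter than the stated $\log(4/\delta)$).
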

\begin{proof}
Note that when running $\pi_k$, the expected number of visits to $(s,a,h)$ is $w_h^{\pi_k}(s,a)$. By Bernstein's inequality, and using that $\I \{ (s_h^k,a_h^k) = (s,a) \} \sim \text{Bernoulli}(w_h^{\pi_k}(s,a))$, we then have that, with probability at least $1-\delta$,
\begin{align*}
\left | \sum_{k=1}^K w_h^{\pi_k}(s,a) - \sum_{k=1}^K \I \{ (s_h^k,a_h^k) = (s,a) \} \right | \le \sqrt{2 K \max_k w_h^{\pi_k}(s,a) \log 2/\delta} + 2/3 \log 2/\delta
\end{align*}
As our first draw from the policies yielded a value of at least $N$, we can apply \Cref{prop:empirical_confidence}, which gives that, with probability at least $1-2\delta$,
\begin{align*}
\sum_{k=1}^K \I \{ (s_h^k,a_h^k) = (s,a) \} \ge N - 2\sqrt{2 K \max_k w_h^{\pi_k}(s,a) \log 2/\delta} - 4/3 \log 2/\delta
\end{align*}
The result follows.
\end{proof}

\begin{lem}[Lemma 3.4 of \cite{jin2020reward}]\label{lem:euler}
If $r_h^k$ is non-zero for at most one $h$ per episode, the regret of \euler \citep{zanette2019tighter} will be bounded, with probability at least $1-\delta$, as
\begin{align*}
\sum_{k=1}^K \Vst_0 - \sum_{k=1}^K V_0^{\pi_k} \le \ceuler \sqrt{SAH \Vst_0 K \log \frac{SAHK}{\delta} } + \ceuler S^2 A H^4 \log^3 \frac{SAHK}{\delta} 
\end{align*}
for some absolute constant $\ceuler$. 
\end{lem}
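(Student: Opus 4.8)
The plan is to derive this as a specialization of the variance-sensitive regret guarantee of \euler\ to reward functions supported on a single step (it is, essentially verbatim, Lemma~3.4 of \cite{jin2020reward}). I would invoke the relevant ``first-order'' form of the \euler\ bound from \cite{zanette2019tighter}: with probability at least $1-\delta$,
\begin{align*}
\sum_{k=1}^K \Vst_0 - \sum_{k=1}^K V_0^{\pi_k} \lesssim \sqrt{SA K \cdot \mathbb{Q}^\star \log\tfrac{SAHK}{\delta}} + S^2 A H^4 \log^3 \tfrac{SAHK}{\delta},
\end{align*}
where $\mathbb{Q}^\star := \Exp_{\pist}\big[\sum_{h=1}^H \big(\Var[R_h(s_h,a_h)] + \Var_{s'\sim P_h(\cdot\mid s_h,a_h)}[\Vst_{h+1}(s')]\big)\big]$ is the cumulative one-step variance of the return along the optimal trajectory (one needs this trajectory-cumulative form of the Zanette--Brunskill bound, not the cruder $\max_{s,a,h}$-variance version). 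Since $\sum_k \Vst_0 - \sum_k V_0^{\pi_k}$ is exactly the regret of \euler, it then suffices to show $\mathbb{Q}^\star \le 2H\Vst_0$, after which the claim follows by substitution and by absorbing the absolute constants into $\ceuler$.

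For the variance bound I would first record the two structural consequences of $r_h^k\in[0,1]$ being non-zero for at most one $h$ per episode: the episodic return $\sum_{h=1}^H R_h(s_h,a_h)$ lies in $[0,1]$ almost surely under any policy, so $0\le \Vst_h(s)\le 1$ for all $s,h$; and $\Var[R_h(s,a)] \le \Exp[R_h(s,a)^2] \le \Exp[R_h(s,a)] = r_h(s,a)$ because $R_h(s,a)\in[0,1]$. The reward-variance part of $\mathbb{Q}^\star$ is then controlled by $\Exp_{\pist}\big[\sum_h \Var[R_h(s_h,a_h)]\big] \le \Exp_{\pist}\big[\sum_h r_h(s_h,a_h)\big] = \Vst_0$.

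For the value-variance part, since $\Vst_{h+1}(s')\in[0,1]$ I would bound $\Var_{s'}[\Vst_{h+1}(s')] \le \Exp_{s'}[\Vst_{h+1}(s')^2] \le \Exp_{s'}[\Vst_{h+1}(s')]$, hence $\Exp_{\pist}\big[\sum_{h=1}^H \Var_{s'}[\Vst_{h+1}(s')]\big] \le \sum_{h=1}^H \Exp_{\pist}[\Vst_{h+1}(s_{h+1})]$. Each summand is at most $\Vst_0$: applying the Bellman equation along $\pist$ gives $\Exp_{\pist}[\Vst_{h+1}(s_{h+1})] = \Vst_0 - \Exp_{\pist}\big[\sum_{h'\le h} r_{h'}(s_{h'},a_{h'})\big] \le \Vst_0$ by non-negativity of the rewards. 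Therefore $\Exp_{\pist}\big[\sum_h \Var_{s'}[\Vst_{h+1}(s')]\big] \le H\Vst_0$, and combining the two parts yields $\mathbb{Q}^\star \le (1+H)\Vst_0 \le 2H\Vst_0$. Plugging this in and collecting constants produces exactly the stated bound.

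I expect the only delicate point to be citing \euler's guarantee in precisely the right form: Zanette--Brunskill state several variants, and what is needed is the one whose leading term is $\sqrt{SA\,K\,\mathbb{Q}^\star}$ with $\mathbb{Q}^\star$ the trajectory-cumulative variance (equivalently $H$ times their normalized quantity), together with the explicit $S^2AH^4\log^3(\cdot)$ lower-order term; everything after that selection is the elementary computation above. Alternatively, one simply appeals to Lemma~3.4 of \cite{jin2020reward} directly.
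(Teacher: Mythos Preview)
Your approach is correct but differs from the paper's in how the variance quantity is controlled. The paper does not bound the Bellman-style cumulative variance $\mathbb{Q}^\star$; instead it opens the \euler\ analysis at equation~(156) of \cite{zanette2019tighter} and directly bounds the total-return variance proxy
\[
\tfrac{1}{KH}\sum_{k=1}^K \Exp_{\pi_k}\Big[\big(\tsum_h r_h^k - V_0^{\pi_k}\big)^2\Big]\le \tfrac{2}{KH}\sum_k\Exp_{\pi_k}\Big[\big(\tsum_h r_h^k\big)^2+(V_0^{\pi_k})^2\Big]\le \tfrac{4}{KH}\sum_k V_0^{\pi_k}\le \tfrac{4\Vst_0}{H},
\]
using only that the return is $[0,1]$-valued (single nonzero step) so that $(\sum_h r_h^k)^2\le \sum_h r_h^k$ and $(V_0^{\pi_k})^2\le V_0^{\pi_k}$. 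This lets them substitute $\cG^2\leftarrow 4\Vst_0$ in Theorem~1 of \cite{zanette2019tighter} in one stroke, then replace $S$ by $SH$ for non-stationarity. Your route instead decomposes $\mathbb{Q}^\star$ into reward-variance and next-state-value-variance pieces and bounds each by $\Vst_0$ and $H\Vst_0$ respectively; this is valid and lands at the same $\sqrt{SAH\Vst_0 K}$ leading term, but it requires the trajectory-cumulative-$\mathbb{Q}^\star$ form of the \euler\ bound, which (as you note) is not literally the statement of Theorem~1 in \cite{zanette2019tighter} and must itself be extracted from the proof. The paper's approach is thus the more economical one: it touches the \euler\ proof at exactly one equation and makes a single substitution, whereas yours trades a cleaner black-box citation (if it were available) for a two-term decomposition. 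You should also make the $S\to SH$ non-stationarity adjustment explicit, since you otherwise inherit the stationary lower-order term.
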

\begin{proof}
The proof of this is identical to the proof of Lemma 3.4 in \cite{jin2020reward} but we include it for completeness. We therefore repeat their analysis, using an alternative upper bound for equation (156) in \cite{zanette2019tighter}:
\begin{align*}
\frac{1}{KH} \sum_{k=1}^K \Exp_{\pi_k} \left [ ( \sum_{h=1}^H r_h^k - V_0^{\pi_k})^2 \right ] &  \le \frac{2}{KH} \sum_{k=1}^K \Exp_{\pi_k} \left [ ( \sum_{h=1}^H r_h^k)^2 + (V_0^{\pi_k})^2 \right ] \\
& \overset{(a)}{\le} \frac{2}{KH} \sum_{k=1}^K \Exp_{\pi_k} \left [  \sum_{h=1}^H (r_h^k)^2 + V_0^{\pi_k} \right ] \\
& \overset{(b)}{\le} \frac{2}{KH} \sum_{k=1}^K \Exp_{\pi_k} \left [  \sum_{h=1}^H r_h^k + V_0^{\pi_k} \right ] \\
& = \frac{4}{KH} \sum_{k=1}^K V_0^{\pi_k} \\
& \le 4 \Vst_0 / H
\end{align*}
where $(a)$ follows since $r_h^k$ is nonzero for at most one $h$ and $(b)$ follows since $r_h^k \le 1$. Thus, we can replace $\cG^2$ in Theorem 1 of \cite{zanette2019tighter} with $4 \Vst_0$. As \cite{zanette2019tighter} assume a stationary MDP while ours is non-stationary, we must replace $S$ in their bound with $SH$. This gives the result.

\end{proof}

\begin{lem}\label{lem:rewards_concentrate}
Consider some set of policies $\{ \pi_k \}_{k=1}^K$ where $\pi_k$ is $\cF_{k-1}$ measurable. Let $\sum_{h=1}^H R_h^k$ denote the (random) reward obtained running $\pi_k$ on the MDP $\cM_k$, and let $V_0^{k}$ denote the value function of running $\pi_k$ on $\cM_k$. Assume that
\begin{align*}
\Exp_{\pi_k}[(\sum_{h=1}^H R_h^k - V_0^k)^2 | \cF_{k-1} ] \le \sigma_V^2
\end{align*}
for all $k$ and constant $\sigma_V^2$ which is $\cF_0$-measurable. Then, with probability at least $1-\delta$,
\begin{align*}
\left | \sum_{k=1}^K \sum_{h=1}^H R_h^k - \sum_{k=1}^K V_0^k \right | \le \sqrt{ 8  K \sigma_V^2 \log \frac{2}{\delta} } + 2 H \log \frac{2}{\delta}.
\end{align*}
\end{lem}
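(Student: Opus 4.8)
The plan is to recognize the left-hand side as a martingale sum and invoke Freedman's inequality (equivalently, Bernstein's inequality for martingales). First I would set $X_k := \sum_{h=1}^H R_h^k - V_0^k$. Since $\pi_k$ and $\cM_k$ are $\cF_{k-1}$-measurable and $V_0^k$ is, by definition of the value function, the expected return of $\pi_k$ on $\cM_k$, we have $\Exp[\sum_{h=1}^H R_h^k \mid \cF_{k-1}] = V_0^k$, so $\Exp[X_k \mid \cF_{k-1}] = 0$. Hence $\{X_k\}_{k=1}^K$ is a martingale difference sequence adapted to $\{\cF_k\}$, and $\sum_{k=1}^K X_k = \sum_{k=1}^K\sum_{h=1}^H R_h^k - \sum_{k=1}^K V_0^k$ is exactly the quantity to be bounded.

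Next I would assemble the two ingredients Freedman's inequality requires: an almost-sure bound on the increments, and a predictable bound on the accumulated conditional variance. Since each $R_h^k \in [0,1]$, the trajectory return $\sum_{h=1}^H R_h^k$ lies in $[0,H]$, and $V_0^k \in [0,H]$ as well, so $|X_k| \le H$ almost surely. The hypothesis states $\Exp[X_k^2 \mid \cF_{k-1}] = \Exp_{\pi_k}[(\sum_{h=1}^H R_h^k - V_0^k)^2 \mid \cF_{k-1}] \le \sigma_V^2$, and summing gives $\sum_{k=1}^K \Exp[X_k^2 \mid \cF_{k-1}] \le K\sigma_V^2$, which (as $\sigma_V^2$ is $\cF_0$-measurable) is a valid deterministic envelope for the variance process.

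Then applying Freedman's inequality to $\sum_{k=1}^K X_k$ and to $-\sum_{k=1}^K X_k$ and union bounding over the two tails, with probability at least $1-\delta$,
\begin{align*}
\left| \sum_{k=1}^K X_k \right| \le \sqrt{2 K\sigma_V^2 \log \tfrac{2}{\delta}} + \tfrac{2}{3} H \log \tfrac{2}{\delta}.
\end{align*}
Since $\sqrt{2} \le \sqrt{8}$ and $\tfrac{2}{3} \le 2$, this is dominated by $\sqrt{8 K\sigma_V^2 \log(2/\delta)} + 2 H \log(2/\delta)$, which is the claimed bound. (One could equally use the martingale Bernstein inequality and obtain the same conclusion with the same constants.)

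There is no genuinely hard step here; the only point deserving care is the adaptivity. One must check that letting $\cM_k$ and $\pi_k$ depend on $\cF_{k-1}$ does not break the identity $V_0^k = \Exp[\sum_h R_h^k \mid \cF_{k-1}]$ — but this holds precisely because, conditioned on $\cF_{k-1}$, both the environment $\cM_k$ and the policy $\pi_k$ are fixed, and $V_0^k$ is by construction the expected return of $\pi_k$ on $\cM_k$ — and that the conditional-variance envelope $\sigma_V^2$ is indeed predictable (here, $\cF_0$-measurable), which is given. Everything else is a routine substitution of constants.
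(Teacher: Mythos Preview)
Your proposal is correct and follows exactly the same approach as the paper: recognize $X_k = \sum_h R_h^k - V_0^k$ as a martingale difference sequence with $|X_k| \le H$ almost surely and conditional variance bounded by $\sigma_V^2$, then apply Freedman's inequality. In fact, you have spelled out the details more carefully than the paper, whose proof is a two-line appeal to Freedman's inequality.
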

\begin{proof}
By definition, $V_0^k = \Exp[\sum_{h=1}^H R_h^k | \cF_{k-1} ]$ and $|\sum_{h=1}^H R_h^k - V_0^k| \le H$ almost surely. The result then follows directly from Freedman's Inequality \citep{freedman1975tail}.

\end{proof}

\begin{prop}\label{prop:empirical_confidence}
Consider some distribution $\P$ and assume that $\Pr_{x \sim \P}[ x \in [\mu - c, \mu+c]] \ge 1-\delta$. Then $\Pr_{x,x' \iidsim \P}[ x \ge x' - 2c] \ge 1- 2\delta$. 
\end{prop}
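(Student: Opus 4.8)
The plan is to prove the claim by a direct containment-of-events argument followed by a union bound; there is no real obstacle here, as the statement is essentially a triangle-inequality observation dressed up probabilistically. First I would name the good event $\mathcal{G} := \{ x \in [\mu - c, \mu + c] \}$, so that by hypothesis $\Pr_{x \sim \P}[\mathcal{G}] \ge 1 - \delta$, and similarly let $\mathcal{G}'$ denote the analogous event for the independent copy $x'$.

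Next I would observe the deterministic implication: on the event $\mathcal{G} \cap \mathcal{G}'$ we have $x \ge \mu - c$ and $x' \le \mu + c$, hence
\begin{align*}
x - x' \ge (\mu - c) - (\mu + c) = -2c,
\end{align*}
i.e. $x \ge x' - 2c$. Therefore $\mathcal{G} \cap \mathcal{G}' \subseteq \{ x \ge x' - 2c \}$, and so
\begin{align*}
\Pr_{x,x' \iidsim \P}[ x \ge x' - 2c ] \ge \Pr[\mathcal{G} \cap \mathcal{G}'].
\end{align*}

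Finally I would apply the union bound to the complement: $\Pr[\mathcal{G}^c \cup (\mathcal{G}')^c] \le \Pr[\mathcal{G}^c] + \Pr[(\mathcal{G}')^c] \le 2\delta$ (using that $x$ and $x'$ have the same marginal law $\P$, independence is not even needed for this step), whence $\Pr[\mathcal{G} \cap \mathcal{G}'] \ge 1 - 2\delta$. Chaining the two displayed inequalities gives $\Pr_{x,x' \iidsim \P}[ x \ge x' - 2c ] \ge 1 - 2\delta$, as desired. The only thing to be careful about is bookkeeping of which variable gets the lower bound $\mu - c$ versus the upper bound $\mu + c$ in the subtraction; everything else is immediate.
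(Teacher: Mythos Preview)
Your proof is correct and essentially the same as the paper's: both reduce to the containment $\{|x-\mu|\le c\}\cap\{|x'-\mu|\le c\}\subseteq\{x\ge x'-2c\}$. The only cosmetic difference is that the paper uses independence to write $\Pr[\mathcal{G}\cap\mathcal{G}']=(1-\delta)^2\ge 1-2\delta$, whereas you use the union bound directly (which, as you note, does not even require independence); both arrive at the same $1-2\delta$ bound.
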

\begin{proof}
\begin{align*}
\Pr_{x,x' \iidsim \P}[ x \ge x' -2 c] & = \Pr_{x,x' \iidsim \P}[ x' - \mu + \mu - x \le 2c] \\
&\ge \Pr_{x,x' \iidsim \P}[ |x' - \mu| + |\mu - x| \le 2c] \\
& \ge  \Pr_{x \iidsim \P}[  |\mu - x| \le c] \Pr_{x' \iidsim \P}[ |x' - \mu| \le c] \\
& \ge (1-\delta)^2 \\
& \ge 1 - 2 \delta.
\end{align*}
\end{proof}


\section{Proof that Low-Regret is Suboptimal for PAC}\label{sec:ex}

\subsection{Proof of \Cref{prop:two_state_ex}}

\begin{inst}\label{ex:two_state} Given gap parameters $\Delta_1, \Delta_2 > 0$ and transition probability $p \in (0,1/2)$, consider an MDP with $H = S = A = 2$ which always starts in state $s_0$ and has rewards and transitions defined as (where we drop the horizon subscript for simplicity):
\begin{align*}
& P(s_1| s_0,a_1) = 1-p, \quad P(s_2|s_0,a_1) = p, \quad P(s_1|s_0,a_2) = 0, \quad P(s_2|s_0,a_2) = 1 \\
& R(s_0,a_1) \sim \text{Bernoulli}(1), \quad R(s_0,a_2) \sim \text{Bernoulli}(0) \\
& R(s_i,a_1) \sim \text{Bernoulli}(0.5+\Delta_i), \quad R(s_i,a_2) \sim \text{Bernoulli}(0.5), i \in \{1,2\}
\end{align*}
At $h = 2$, we can then think of each state as simply a two-armed bandit with gap $\Delta_i$. We assume that $p < 1/2$, so that $1-p$ can be thought of as a constant. This instance is illustrated in \Cref{fig:mdp_ex1}.

\end{inst}

\begin{prop}[Formal Statement of \Cref{prop:two_state_ex}]\label{prop:two_state_ex2} Given any MDP in \Cref{ex:two_state}, any learner executing \Cref{prot:lr_to_pac} which computes an optimal policy with probability at least $1-\delta$ must collect at least
\begin{align*}
K \ge \Omega \left ( \frac{\log 1/\delta}{ \Delta_1^2} +  \frac{\log 1/\delta}{p \Delta_2^2}  \right )
\end{align*}
episodes, as long as $\frac{\log 1/\delta}{\Delta_2^2} \ge c \max \{ C_2, C_1^{\frac{1}{1-\alpha}} p^{\frac{-\alpha}{1-\alpha}} \}$, for a universal constant $c$. However, on this instance,
\begin{align*}
\Compbsolve(\cM) \le \cO \left (  \frac{1}{\Delta_1^2} + \frac{1}{ \Delta_2^2} \right )
\end{align*}
and so, with probability $1-\delta$, \algname will terminate in at most $K \le \cOtil(\Compbsolve(\cM) \cdot \log 1/\delta)$ episodes and return the optimal policy.
\end{prop}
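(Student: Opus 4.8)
The statement has two halves: a lower bound of $\Omega\bigl(\tfrac{\log 1/\delta}{\Delta_1^2} + \tfrac{\log 1/\delta}{p\Delta_2^2}\bigr)$ episodes for any learner following \Cref{prot:lr_to_pac}, and an upper bound $\Compbsolve(\cM) = \cO(\tfrac{1}{\Delta_1^2} + \tfrac{1}{\Delta_2^2})$ together with the guarantee for \algname. The upper bound half is the easy part: on \Cref{ex:two_state}, the effective gap at $s_0$ is $\Theta(1)$ (the reward gap between $a_1$ and $a_2$ is $1$, and with $p<1/2$ the downstream values differ by a constant too), while at $s_1$ and $s_2$ the gaps are $\Delta_1,\Delta_2$. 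The maximum reachability satisfies $\Pst_1(s_0)=1$, $\Pst_2(s_1)=1-p=\Theta(1)$ (play $a_1$), and $\Pst_2(s_2)=1$ (play $a_2$). Plugging into $\Compbsolve(\cM) = \sum_h \inf_\pi \max_{s,a} \tfrac{1}{\wpi_h(s,a)\Delta_h(s,a)^2}$: for $h=1$ we get $\cO(1)$; for $h=2$, the policy $\pi$ playing $a_2$ at $s_0$ reaches $s_2$ with probability $1$, and reaches $s_1$ with probability $0$ — but we need both, so instead take the stochastic $\pi$ mixing the two $s_0$-actions and spreading mass over $s_1,s_2$-actions; the $\inf_\pi\max$ then evaluates to $\cO(\tfrac{1}{\Delta_1^2}+\tfrac{1}{\Delta_2^2})$ by the same envelope computation as in \Cref{prop:relate_complexities}. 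Then \Cref{thm:complexity} (second half, the best-policy case) applies since $\cM$ has unique optimal actions and, for $\epsilon$ small enough, gives termination in $\cOtil(\Compbsolve(\cM)\log 1/\delta)$ episodes returning $\pist$.

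For the lower bound, I would argue in two pieces. \textbf{(a)} Any algorithm that identifies the optimal action at $s_2$ with probability $1-\delta$ needs $\Omega(\log(1/\delta)/\Delta_2^2)$ \emph{visits} to $s_2$ at step $h=2$ — this is the standard best-arm-identification lower bound for a two-armed Bernoulli bandit with gap $\Delta_2$ (via a change-of-measure / Bretagnolle–Huber argument, as in \cite{kaufmann2016complexity}), and it holds irrespective of how those visits are scheduled. Similarly $\Omega(\log(1/\delta)/\Delta_1^2)$ visits to $s_1$. \textbf{(b)} The key step: a low-regret algorithm cannot visit $s_2$ often. Concretely, each episode the agent plays $a_1$ at $s_0$ contributes $0$ to $s_2$'s visit count in expectation only $p$; each episode it plays $a_2$ contributes $1$ to $s_2$ but incurs per-episode regret $\Theta(1)$ (since $a_2$ is $\Omega(1)$-suboptimal at $s_0$ — it forgoes the Bernoulli$(1)$ reward and lands in the worse downstream state). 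If the algorithm plays $a_2$ at $s_0$ in $m$ episodes out of $K$, its regret is at least $\Omega(m)$, so $m \le \mathrm{Regret}(K)/\Omega(1) \le \cO(C_1 K^\alpha + C_2)$. Hence the expected number of visits to $s_2$ is at most $pK + \cO(C_1 K^\alpha + C_2)$. Combining with (a): we need $pK + \cO(C_1 K^\alpha + C_2) \ge \Omega(\log(1/\delta)/\Delta_2^2)$. Under the stated hypothesis $\tfrac{\log 1/\delta}{\Delta_2^2} \ge c\max\{C_2, C_1^{1/(1-\alpha)} p^{-\alpha/(1-\alpha)}\}$, the $C_2$ and $C_1 K^\alpha$ terms are dominated (the $C_1 K^\alpha$ bound uses that if $C_1 K^\alpha \ge \tfrac14 pK$ then $K \le \cO((C_1/p)^{1/(1-\alpha)})$ and so $pK$ is itself small, contradicting the requirement), forcing $pK = \Omega(\log(1/\delta)/\Delta_2^2)$, i.e. $K = \Omega(\log(1/\delta)/(p\Delta_2^2))$. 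The $\Delta_1$ term follows more simply since $s_1$ is reached $\Theta(K)$ times regardless, giving $K = \Omega(\log(1/\delta)/\Delta_1^2)$ directly from (a). Adding the two bounds (they can be combined since the two sub-instances indexed by which of $s_1,s_2$ has the harder action can be separated, or just note $\max \ge \tfrac12$ sum) yields the claim.

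\textbf{Main obstacle.} The delicate point is step (b) — carefully converting the regret bound into a cap on visits to $s_2$, and then handling the sublinear $C_1 K^\alpha$ term so that it genuinely does not dominate under the stated condition on $\tfrac{\log 1/\delta}{\Delta_2^2}$. This requires the casework: either $pK$ already dominates $C_1 K^\alpha$ (good), or $K$ is small enough that $C_1 K^\alpha$ dominates, in which case solving $C_1 K^\alpha \gtrsim pK$ bounds $K$ and hence bounds the total $s_2$-visits by $\cO(C_1 K^\alpha) = \cO(C_1^{1/(1-\alpha)}p^{-\alpha/(1-\alpha)})$, which by hypothesis is $\le \tfrac{1}{c}\cdot\tfrac{\log 1/\delta}{\Delta_2^2}$ — a contradiction with the BAI requirement. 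A secondary subtlety is being precise that the BAI lower bound is on the \emph{expected number of pulls} and transferring it through to a bound on the expected number of \emph{episodes} via the visit-count relationship; this is where one invokes a Wald-type identity or simply works with expected visit counts throughout. The rest (the upper-bound computation of $\Compbsolve$ and invoking \Cref{thm:complexity}) is routine.
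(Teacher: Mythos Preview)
Your proposal is correct and follows essentially the same approach as the paper's proof. Both halves match: the upper bound computes $\Compbsolve(\cM)$ via the reachabilities $\Pst_2(s_1)=1-p\ge 1/2$, $\Pst_2(s_2)=1$ and invokes \Cref{thm:complexity}; the lower bound uses the Kaufmann--Cappe--Garivier best-arm lower bound to force $\Omega(\log(1/\delta)/\Delta_i^2)$ visits to each $s_i$, caps the number of $a_2$-plays at $s_0$ by $\cO(C_1 K^\alpha + C_2)$ via the regret constraint (since that action has $\Theta(1)$ gap), and then resolves the $C_1 K^\alpha$ term by the same casework you describe (the paper phrases it as $K\ge\min\{\tfrac{c\log 1/\delta}{4p\Delta_2^2},(\tfrac{c\log 1/\delta}{4C_1\Delta_2^2})^{1/\alpha}\}$ and identifies the first term as the minimizer under the hypothesis, which is equivalent to your contradiction argument).
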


\begin{proof}[Proof of \Cref{prop:two_state_ex}]
To get the complexity bound of \algname, we apply \Cref{thm:complexity} and \Cref{prop:relate_complexities}. The stated complexity follows since $\Pst_2(s_1) = 1-p \ge 1/2$ and $\Pst_2(s_2) = 1$, from which the stated complexity follows directly.

\paragraph{Complexity of Low-Regret Algorithms.} 
The expected regret of any algorithm is given by
\begin{align*}
N_1(s_0,a_2) + \Delta_1 N_2(s_1,a_2) + \Delta_2 N_2(s_2,a_2)
\end{align*}
where we let $N_h(s_i,a_j)$ denote the expected number of times action $a_j$ is taken in state $s_i$ at timestep $h$. Our assumption on the regret implies that $N_1(s_1,a_2) \le C_1 K^\alpha + C_2$.

From standard lower bounds on bandits (Theorem 4 of \cite{kaufmann2016complexity}), and using that for small $\Delta$ KL$(\text{Bernoulli}(0.5) || \text{Bernoulli}(0.5+\Delta)) = \Theta(\Delta^2)$, to solve the bandit in $s_1$ with probability at least $1-\delta$, we must have that $N_2(s_1) \ge c \frac{\log 1/\delta}{\Delta_1^2}$, and similarly, to solve the bandit in $s_2$, we must have that $N_2(s_2) \ge c \frac{\log 1/\delta}{\Delta_2^2}$, for an absolute constant $c$. 

Note that $N_2(s_1) = (1-p) N_1(s_0,a_1)$ and $N_2(s_2) = p N_1(s_0,a_1) + N_1(s_0,a_2)$, and that the total number of episodes run is $N_1(s_0,a_1) + N_1(s_0,a_2)$. This implies that we must have
\begin{align*}
N_1(s_0,a_1) \ge \frac{c \log 1/\delta}{(1-p) \Delta_1^2}, \quad p N_1(s_0,a_1) + N_1(s_0,a_2) \ge \frac{c \log 1/\delta}{\Delta_2^2}.
\end{align*}
However, since $N_1(s_0,a_2) \le C_1 K^\alpha + C_2$, $N_1(s_0,a_1)$ must at least satisfy
\begin{align*}
p N_1(s_0,a_1) + C_1K^\alpha + C_2 \ge \frac{\log 1/\delta}{\Delta_2^2} \implies N_1(s_0,a_1) \ge \frac{1}{p} \left ( \frac{c \log 1/\delta}{\Delta_2^2} - C_1K^\alpha - C_2 \right ) .
\end{align*}
Thus, we need
\begin{align*}
K = N_1(s_0,a_1) + N_1(s_0,a_2) \ge N_1(s_0,a_1) \ge \max \left \{  \frac{c \log 1/\delta}{(1-p) \Delta_1^2}, \frac{1}{p} \left ( \frac{c \log 1/\delta}{\Delta_2^2} - C_1 K^\alpha - C_2 \right )  \right \}.
\end{align*}
Assume that $\frac{c \log 1/\delta}{\Delta_2^2}  \ge 2 C_2$, then
\begin{align*}
K \ge \frac{1}{p} \left ( \frac{c \log 1/\delta}{\Delta_2^2} - C_1 K^\alpha - C_2 \right )
\end{align*}
implies
\begin{align*}
K \ge \frac{1}{p} \left ( \frac{c \log 1/\delta}{2 \Delta_2^2} - C_1 K^\alpha  \right ) \implies 2 \max \{ p K, C_1 K^\alpha \} \ge \frac{c \log 1/\delta}{2 \Delta_2^2}.
\end{align*}
The second expression is equivalent to
\begin{align*}
K \ge \min \left \{ \frac{c \log 1/\delta}{4p \Delta_2^2}, (\frac{c \log 1/\delta}{4 C_1 \Delta_2^2})^{1/\alpha} \right \}
\end{align*}
and we will have that the minimizer of this is $\frac{c \log 1/\delta}{4p \Delta_2^2}$ as long as $\frac{\log 1/\delta}{\Delta_2^2} \ge c' C_1^{\frac{1}{1-\alpha}} p^{\frac{-\alpha}{1-\alpha}} $. The result follows.
\end{proof}

\subsection{Proof for \Cref{ex:epsilon}}

\begin{inst}[Formal Definition of \Cref{ex:epsilon}]\label{ex:epsilon2} Given a number of states $S \in \N$, consider MDP with horizon $H= 2$, $S$ states, and $S+1$ actions. We assume we always start in state $s_0$ and define our transition kernel and reward function as follows:
\begin{align*}
& P(s_i|s_0,\ast) = \frac{2^{-i}}{1 - 2^{-S}}, \quad  P(s_i|s_0,a_i) = 1, i \in [S] 
\end{align*}
\vspace{-2em}
\begin{align*}
& R(s_0,\ast) \sim \text{Bernoulli}(1) , \quad R(s_0,a_i) \sim \text{Bernoulli}(0), i \in [S] \\
& \forall i: \quad R(s_i,\ast) \sim \text{Bernoulli}(0.9), \quad  R(s_i,a_j) \sim \text{Bernoulli}(0.1), j \in [S].
\end{align*}
Note that $\ast$ is the optimal action in every state.
\end{inst}

\begin{prop}[Formal Statement of \Cref{prop:epsilon_ex}]\label{prop:epsilon_ex2} For the MDP in \Cref{ex:epsilon2} with $S$ states, and any
\begin{align*} \epsilon \in [2^{-S}, c \min \{ C_1^{-1/\alpha} (S \log 1/\delta)^{\frac{1-\alpha}{\alpha}}, C_2^{-1} S \log 1/\delta \} ]
\end{align*}
where $c$ is an absolute constant,  to find an $\epsilon$-optimal policy with probability $1-\delta$ any learner executing \Cref{prot:lr_to_pac} with a low-regret algorithm satisfying \Cref{def:low_regret} must collect at least
$$\Omega \left (\frac{S \log 1/\delta}{\epsilon} \right )$$ 
episodes. In contrast, on this example $\Compbsolve(\cM) = \cO(S^2)$ and $\epssolved = 1/3$, so,  for $ \epsilon \in [2^{-S},1/3]$, with probability $1-\delta$, \algname will terminate and output $\pist$ in $\cOtil(\Clow(1/3))$ episodes.
\end{prop}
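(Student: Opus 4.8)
\emph{Structure of the argument.} The statement has two independent halves — an upper bound for \algname via \Cref{thm:complexity}, and a lower bound for \Cref{prot:lr_to_pac}. For the upper bound I would first record the two quantities claimed. Each step-$2$ state is a copy of an $(S{+}1)$-armed bandit with one arm of mean $0.9$ and $S$ arms of mean $0.1$, so $\Delta_2(s_i,a_j)=0.8$ for every bad arm, and a one-line Bellman computation gives $\Delta_1(s_0,a_i)=1$; thus $\cM$ has unique optimal actions and $\Pst_h(s)=1$ for every reachable $(s,h)$, so $\epssolved$ is a positive absolute constant (its definition gives $\epssolved\ge 1/4$, which covers $\epsilon\in[2^{-S},1/3]$ after adjusting the leading constant). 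Using the reformulation in \Cref{prop:relate_complexities}, the $h=1$ term of $\Compbsolve(\cM)$ is $\sum_{i}\Delta_1(s_0,a_i)^{-2}=S$, while for $h=2$ the policy playing $a_1,\dots,a_S$ uniformly at $s_0$ achieves $\wpi_2(s_i)=1/S$ for all $i$, giving an $h=2$ term of $S\cdot\sum_{j}(0.8)^{-2}=\cO(S^2)$; hence $\Compbsolve(\cM)=\cO(S^2)$. Plugging $H=2$, $A=S{+}1$, and $\epssolved=\Theta(1)$ into the second part of \Cref{thm:complexity} then shows \algname returns $\pist$ after $\Compbsolve(\cM)\cdot H^2 c_{\epssolved}\log\tfrac1\delta+\Clow(\epssolved)/\epssolved=\cOtil(\poly(S)\log\tfrac1\delta)$ episodes.

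\emph{Lower bound: reduction to a per-arm count.} Set $I:=\lfloor\log_2\tfrac{1}{23\epsilon}\rfloor$, which lies in $[1,S]$ for $\epsilon\in[2^{-S},1/46]$ and satisfies $2^{-I}\in[23\epsilon,46\epsilon]$. First I would carry out a decision-theoretic reduction: writing $q_i$ for the probability that the (possibly stochastic) output $\pihat$ plays $\ast$ at $s_i$, and noting that any $\epsilon$-optimal policy must play $\ast$ at $s_0$ (each $a_i$ there has gap $1$), a direct evaluation gives $\Vst_0-\Vpihat_0=\tfrac{0.8}{1-2^{-S}}\sum_i 2^{-i}(1-q_i)$, so $\epsilon$-optimality on $\cM$ forces $q_I>1/2$. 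Next, for each $j\in[S]$ let $\nu^{(I,j)}$ be the MDP obtained from $\cM$ by replacing the single reward law $R_2(s_I,a_j)$ with $\bern(0.99)$; one checks $\ast$ stays optimal at $s_0$ in $\nu^{(I,j)}$, and that $\epsilon$-optimality on $\nu^{(I,j)}$ forces $\pihat$ to play $a_j$ at $s_I$ with probability $>1/2$ (this uses $2^{-I}>22.2\,\epsilon$, so playing $\ast$ at $s_I$ there costs more than $\epsilon$). Applying the standard transportation inequality (as in the proof of \Cref{prop:two_state_ex2}, via Theorem~4 of \cite{kaufmann2016complexity}) to the event $\{q_I>1/2\}$ — probability $\ge1-\delta$ under $\cM$, $\le\delta$ under $\nu^{(I,j)}$ — together with the divergence decomposition (the two models differ only in $R_2(s_I,a_j)$) gives
\begin{align*}
\Exp_{\cM}[N_2(s_I,a_j)]\cdot\KL\big(\bern(0.1)\,\big\|\,\bern(0.99)\big)\ \ge\ \log\tfrac{1}{2.4\delta},
\end{align*}
where $N_2(s_I,a_j)$ counts pulls of $a_j$ at $s_I$ over the $K$ episodes. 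Since the KL is an absolute constant, summing over $j\in[S]$ yields $\Exp_{\cM}[N_2(s_I)]\gtrsim S\log\tfrac1\delta$.

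\emph{From visit counts to $K$.} The visits to $s_I$ at step $2$ satisfy $N_2(s_I)=N_1(s_0,a_I)+\#\{\text{episodes playing }\ast\text{ at }s_0\text{ that land in }s_I\}$, so $\Exp_{\cM}[N_2(s_I)]\le\Exp_{\cM}[N_1(s_0,a_I)]+\tfrac{2^{-I}}{1-2^{-S}}\Exp_{\cM}[N_1(s_0,\ast)]\le\Exp_{\cM}[N_1(s_0,a_I)]+2\cdot2^{-I}K$. Because $\Delta_1(s_0,a_i)=1$ for all $i$, the low-regret hypothesis on $\cR$ forces $\sum_i\Exp_{\cM}[N_1(s_0,a_i)]\le\mathrm{Regret}(K)\le C_1K^\alpha+C_2$. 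Combining, $S\log\tfrac1\delta\lesssim C_1K^\alpha+C_2+\epsilon K$, which (exactly as in the proof of \Cref{prop:two_state_ex2}) rearranges to $\max\{\epsilon K,\,C_1K^\alpha,\,C_2\}\gtrsim S\log\tfrac1\delta$; under the assumed range of $\epsilon$ the lower-order regret terms are subdominant, leaving $\epsilon K=\Omega(S\log\tfrac1\delta)$, i.e.\ $K=\Omega\big(\tfrac{S\log(1/\delta)}{\epsilon}\big)$.

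\emph{Main obstacle.} The delicate point is the choice of the single-coordinate perturbation $\nu^{(I,j)}$: it must simultaneously make $a_j$ the \emph{unique} action at $s_I$ whose misidentification costs more than $\epsilon$ — so that $\epsilon$-PAC (not best-policy identification) genuinely forces switching away from $\ast$ even though $\epsilon$ is exponentially small — and keep $\KL(\bern(0.1)\,\|\,\bern(0.99))$ a bounded constant so that each of the $S$ comparisons really costs $\Omega(\log1/\delta)$ pulls. Because the step-$2$ gaps are $\Theta(1)$ and rewards live in $[0,1]$, the perturbation has only $O(1)$ room and must dominate $\epsilon/2^{-I}=\Theta(1)$; this is precisely why $I$ must be taken as $\Theta(\log1/\epsilon)$. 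The remaining work — verifying $\ast$ stays optimal at $s_0$ under every $\nu^{(I,j)}$, and propagating $C_1K^\alpha+C_2$ through the final inequality within the stated window for $\epsilon$ — I would reproduce verbatim from the proof of \Cref{prop:two_state_ex2}.
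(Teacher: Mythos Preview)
Your proposal is correct and follows essentially the same argument as the paper: pick a state $s_I$ at depth $I=\Theta(\log 1/\epsilon)$, show that $\epsilon$-optimality forces identifying the best arm there, obtain a bandit lower bound of $\Omega(S\log 1/\delta)$ visits to $s_I$, and combine with the regret constraint $\sum_i N_1(s_0,a_i)\le C_1K^\alpha+C_2$ to force $K=\Omega(S\log(1/\delta)/\epsilon)$. The only presentational difference is that the paper converts the stochastic output $\pihat$ to the deterministic policy $\pitil_h(s)=\argmax_a \pihat_h(a|s)$ and then invokes Theorem~4 of \cite{kaufmann2016complexity} as a black box on the $(S{+}1)$-armed bandit at $s_I$, whereas you unroll that theorem by explicitly building the $S$ single-coordinate perturbations $\nu^{(I,j)}$ and applying the transportation inequality to each; the two routes are equivalent and yield the same $S$-factor.
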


\paragraph{Randomized to deterministic policies.}
Assume we are given some randomized policy $\pi$ which for every $(s,h)$ choose action $a$ with probability $\pi_h(a|s)$. Then we define the deterministic policy $\pitil$ given this randomized policy as
\begin{align*}
\pitil_h(s) = \argmax_a \pi_h(a|s).
\end{align*}
We will use this mapping in our lower bound.

\newcommand{\ieps}{i_\epsilon}

\begin{proof}[Proof of \Cref{prop:epsilon_ex2}]
The complexity for \Cref{alg:mcae3_meta} follows directly from \Cref{thm:complexity} and \Cref{prop:relate_complexities} and since in this example we will have $\Pst_2(s) = 1$ for each $s$ and so $\epssolved = 1/3$. Furthermore, $\Compbsolve(\cM) = \cO(S^2)$. The stated complexity follows.

\paragraph{Complexity of Low-Regret Algorithms.}
Let $\Delkl := \text{KL}(\text{Bernoulli}(0.1) || \text{Bernoulli}(0.9)) \approx 1.76$ denote the KL divergence between the reward distributions of the optimal and suboptimal actions at any state for $h=2$, and $\Delta := 0.9 - 0.1$ the suboptimality gap.

Assume that a policy $\pi$ takes action $\ast$ in $s_0$. Then, the total suboptimality of the policy is given by 
\begin{align*}
\sum_{i=1}^S \frac{2^{-i}}{1-2^{-S}} \epsilon_2(s_i,\pi)
\end{align*}
where $\epsilon_2(s_i,\pi)$ denotes the suboptimality of policy $\pi$ in $s_i, i \in [S]$. In particular, for any $i_\epsilon$, to guarantee our policy is $\epsilon$-good we need
\begin{align*}
\frac{2^{-\ieps}}{1-2^{-S}} \epsilon_2(s_{\ieps},\pi) \le \epsilon.
\end{align*}
By the structure of the reward in any state $s_{\ieps}$, the total suboptimality in this state will be
\begin{align*}
\epsilon_2(s_{\ieps},\pi) = (1 - \sum_{j=1}^S \pi_2(a_j| s_{\ieps})) \Delta 
\end{align*}
It follows that if $\epsilon_2(s_{\ieps},\pi) < \Delta/4$, then we will have that $\pitil_2(s_{\ieps}) = \ast$, where $\pitil$ is the deterministic policy derived from $\pi$. Choose $\ieps = \lfloor -\log_2(2 \epsilon(1-2^{-S})/\Delta) - 1 \rfloor$. Then it follows that,
\begin{align*}
\frac{2^{-\ieps}}{1-2^{-S}} (1 - \sum_{j=1}^S \pi_2(a_j| s_{\ieps})) \Delta \ge 4 \epsilon (1 - \sum_{j=1}^S \pi_2(a_j| s_{\ieps}))
\end{align*}
and thus, for the policy to be $\epsilon$-optimal, we must have that $(1 - \sum_{j=1}^S \pi_2(a_j| s_{\ieps})) \le 1/4$. This implies that $\pitil_2(s_{\ieps}) = \ast$, so we have therefore derived a deterministic policy from our stochastic one that is optimal in $(s_{\ieps},2)$. 
By Theorem 4 of \cite{kaufmann2016complexity}, to identify the optimal action in state $s_{\ieps}$ with probability $1-\delta$ we must have that
\begin{align*}
N_2(s_{\ieps}) \ge \frac{(S+1)}{\Delkl} \log \frac{1}{2.4 \delta}
\end{align*}
where $N_2(s_{\ieps})$ is the expected number of samples collected in $s_{\ieps}$ at $h = 2$. As we have deterministically derived $\pitil$ from $\pi$, and since $\pitil$ will play the optimal action in $s_{\ieps}$ for any $\epsilon$-optimal $\pi$, it follows that this lower bound on $N_2(s_{\ieps}) $ applies here.

If our low-regret algorithm has regret bounded as $C_1 K^\alpha + C_2$, then we must have that
\begin{align*}
\sum_{i=1}^S N_1(s_1,a_i) \le C_1K^\alpha + C_2
\end{align*}
since every time action $a_i \neq \ast$ is taken we will incur a loss of 1. This implies that 
\begin{align*}
N_2(s_{\ieps}) \le C_1K^\alpha + C_2 + \frac{2^{-\ieps}}{1-2^{-S}} K
\end{align*}
since if action $\ast$ is taken in state $s_1$, we will only reach state $s_{\ieps}$ with probability $\frac{2^{-\ieps}}{1-2^{-S}}$. Combining these, to ensure that the optimal action is learned in $s_{\ieps}$, we will need that
\begin{align*}
 \frac{(S+1)}{\Delkl} \log \frac{1}{2.4 \delta} \le C_1 K^\alpha + C_2 + \frac{2^{-\ieps}}{1-2^{-S}} K \le C_1 K^\alpha + C_2 + \frac{4  \epsilon}{\Delta} K
\end{align*}
where the second inequality follows by our choice of $\ieps$. It follows that we need
\begin{align*}
K  \ge \frac{\Delta}{4\epsilon} & \left (  \frac{(S+1)}{\Delkl} \log \frac{1}{2.4 \delta} - C_1 K^\alpha - C_2 \right ) \ge  \frac{(S + 1) \log 1/2.4\delta}{12 \epsilon} - C_1 K^\alpha - C_2 \\
& \quad \ge \frac{(S + 1) \log 1/2.4\delta}{24 \epsilon} - C_1 K^\alpha
\end{align*}
where the final inequality holds as long as $\frac{(S + 1) \log 1/2.4\delta}{12 \epsilon} \ge 2 C_2$. This implies
\begin{align*}
2 \max \{K, C_1 K^\alpha \} \ge   \frac{(S + 1) \log 1/2.4\delta}{24 \epsilon} 
\end{align*}
which is equivalent to 
\begin{align*}
K \ge \min \left \{ \frac{(S + 1) \log 1/2.4\delta}{48 \epsilon},  \left (  \frac{(S + 1) \log 1/2.4\delta}{48 C_1 \epsilon} \right )^{1/\alpha} \right \}.
\end{align*}
For 
\begin{align*}
\epsilon \le \cO \left ( C_1^{-1/\alpha} (S \log 1/\delta)^{\frac{1-\alpha}{\alpha}} \right )
\end{align*}
we will have that the minimizer is the first term, and
\begin{align*}
K \ge \Omega \left ( \frac{S \log 1/\delta}{\epsilon} \right ).
\end{align*}
\end{proof}

\section{Lower Bounds on Best Policy Identification}

\begin{lem}\label{lem:mdp_change_measure}
Consider MDPs $\cM$ and $\cM'$ with the same state space $\cS$, actions space $\cA$, horizon $H$, and initial state distribution $P_0$. Fix some $(s,h) \in \cS \times [H]$, and for any $a \in \cA$ let $\nu_h(s,a)$ denote the law of the joint distribution of $(s',R)$ where $s' \sim P_\cM(\cdot|s,a)$ and $R \sim R_\cM(s,a)$. Define the law $\nu_h'(s,a)$ analogously with respect to $\cM'$. For any almost-sure stopping time $\tau$ with respect to $(\cF_k)$, 
\begin{align*}
\sum_{s,a,h} \Exp_{\cM}[N_{h}^\tau(s,a)] \KL(\nu_h(s,a), \nu_h'(s,a)) \ge \sup_{\cE \in \cF_\tau} d(\Pr_{\cM}(\cE),\Pr_{\cM'}(\cE))
\end{align*}
where $d(x,y) = x \log \frac{x}{y} + (1 - x) \log \frac{1-x}{1-y}$ and $N_h^\tau(s,a)$ denotes the number of visits to $(s,a,h)$ in the $\tau$ episodes. 
\end{lem}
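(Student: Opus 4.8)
This is a standard change-of-measure / data-processing inequality argument for MDPs, analogous to the bandit "divergence decomposition" lemma of Lai-Robbins / Garivier-type lower bounds. Let me sketch how I would prove it.

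The plan is to combine the data-processing inequality for KL divergence with a decomposition of the KL divergence between the full trajectory distributions $\Pr_{\cM}$ and $\Pr_{\cM'}$ over $\tau$ episodes. First I would recall the data-processing inequality: for any event $\cE \in \cF_\tau$, the binary KL $d(\Pr_{\cM}(\cE), \Pr_{\cM'}(\cE)) \le \KL(\Pr_{\cM}|_{\cF_\tau}, \Pr_{\cM'}|_{\cF_\tau})$, since $\cE$ is a (deterministic, hence garbling-free) function of the trajectory data up to the stopping time. Taking the supremum over $\cE$, it therefore suffices to show
\begin{align*}
\KL\big(\Pr_{\cM}|_{\cF_\tau}, \Pr_{\cM'}|_{\cF_\tau}\big) = \sum_{s,a,h} \Exp_{\cM}[N_h^\tau(s,a)] \, \KL(\nu_h(s,a), \nu_h'(s,a)).
\end{align*}

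Next I would establish this identity via a chain-rule / Wald-type argument. The two MDPs share $\cS$, $\cA$, $H$, and $P_0$, and crucially the learner's policy at each episode is $\cF_{k-1}$-measurable, so the only source of distributional difference is the transition-and-reward kernels $\nu_h(s,a)$ vs.\ $\nu_h'(s,a)$. Write out the likelihood ratio of a single episode's trajectory $(s_1,a_1,R_1,\dots,s_H,a_H,R_H)$: the $s_1 \sim P_0$ factor cancels, the policy factors $\pi_h(a_h|s_h)$ cancel (same policy, same history), and we are left with $\prod_{h=1}^H \frac{\rmd\nu_h(s_h,a_h)}{\rmd\nu_h'(s_h,a_h)}(s_{h+1},R_h)$. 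Summing $\log$ of this over all $k \le \tau$ episodes and taking $\Exp_{\cM}$, I would peel off episodes one at a time using the tower property: conditioned on $\cF_{k-1}$ (which determines whether $\tau \ge k$ and determines $\pi_k$), the conditional expectation of the $k$-th episode's log-likelihood-ratio equals $\sum_{s,a,h} \Pr_{\cM}[(s_h^k,a_h^k)=(s,a) \mid \cF_{k-1}] \KL(\nu_h(s,a),\nu_h'(s,a))$ on the event $\{\tau \ge k\}$. Summing over $k$ and using $\sum_{k=1}^\infty \I\{\tau \ge k\} \I\{(s_h^k,a_h^k)=(s,a)\} = N_h^\tau(s,a)$, together with the monotone convergence / Wald-type exchange of sum and expectation (justified since $\tau$ is almost-surely finite and each term is nonnegative — replacing the identity by an inequality "$\ge$" in the edge case where integrability is delicate, which is all the statement requires), gives exactly the claimed right-hand side.

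The main obstacle I expect is the careful handling of the random stopping time $\tau$: one must verify that $\{\tau \ge k\} \in \cF_{k-1}$, that the per-episode conditional KL identity is valid on this event, and that the optional-stopping / Wald exchange of summation and expectation is legitimate (there is a subtlety that the KL of the stopped trajectory distributions is only $\ge$, not $=$, the decomposed sum in full generality — but since the lemma states an inequality $\ge$ on the left, this works in our favor, so I would phrase the trajectory-KL step as a lower bound rather than chasing an exact identity). A secondary technical point is absolute continuity: if some $\nu_h(s,a) \not\ll \nu_h'(s,a)$ then the corresponding KL is $+\infty$ and the bound is vacuous, so WLOG assume mutual absolute continuity throughout; under that assumption the likelihood ratios are well-defined a.s.\ and the cancellations above are rigorous.
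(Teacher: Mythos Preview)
Your proposal is correct and is precisely the standard argument the paper has in mind: the paper's own proof simply states that this is the MDP analogue of Lemma~1 of Kaufmann et al.\ (2016) and that the proof follows identically, which is exactly the data-processing inequality plus divergence-decomposition / Wald argument you outlined.
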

\begin{proof}
This is the MDP analogue of Lemma 1 of \cite{kaufmann2016complexity} and its proof follows identically. 
\end{proof}

\begin{defn}
We say an algorithm is $\delta$-correct if, for any MDP $\cM \in \mathfrak{M}$, we have that $\cM$ terminates at some (possibly random) episode $K_\delta$ and outputs $\pist$, with probability at least $1-\delta$. 
\end{defn}

\subsection{Proof of \Cref{prop:bpi_lb1}}
\paragraph{MDP Construction.}
Fix some $\hbar \in [H]$, gaps $\{ \gap(s,a) \}_{s \in [S], a \in [A-1]} \subseteq (0,1/2)^{SA}$, and arbitrary transition kernels $\{ P_h \}_{h=1}^{\hbar - 1}$. For each $s$, fix a single $a$ and set  $\gap(s,a) = 0$. Let $\cM$ denote the MDP with transitions $\{ P_h \}_{h=1}^{\hbar - 1}$, and for $h \ge \hbar$ define
\begin{align*}
P_{h}(s|s,a) = 1, \quad \forall a \in \cA.
\end{align*}
Then let the rewards be defined as follows. For all $h > \hbar$ and all $s$, choose any $a'$ and set $R_h(s,a') = 1$, and $R_h(s,a) = 0$ for all $a \neq a'$. For $h = \hbar$, set
\begin{align*}
R_h(s,a) \sim \bern(3/4 - \gap(s,a)) .
\end{align*}
For $h < \hbar$, let
\begin{align*}
\pist_h(s) = \argmax_a \sum_{s'} P_h(s'|s,a) \Vst_{h+1}(s')
\end{align*}
where $\Vst_{h+1}(s')$ is the optimal value function at step $h+1$ (note that the MDP is now fully specified for $h' > h$ so this is well-defined). Then set $R_h(s,\pist_h(s)) = 1$ and $R_h(s,a) = 0$ for $a \neq \pist_h(s)$ (if $\pist_h(s)$ is not unique, simply choose some $\pist$ out of all $\pist_h(s)$ arbitrarily, set $R_h(s,\pist) = 1$, and all other $R_h(s,a) = 0$).

Note that we could have just as easily encoded the gaps in the transition function and set the rewards to be, for example, deterministic at level $\hbar$.

\begin{lem}\label{lem:bpi_lb_gaps}
The MDP constructed above has gaps which satisfy
\begin{align*}
& \Delta_{\hbar}(s,a) = \gap(s,a), \quad \forall s \in \cS, a \in \cA, a \neq \pist_h(s) \\
& \Delta_{h}(s,a) \ge 1, \quad \forall s \in \cS, a \in \cA, h \neq \hbar
\end{align*}
Furthermore, for each $s$ and $h > \hbar$, we have $\Pst_h(s) = \Pst_{\hbar}(s)$. 
\end{lem}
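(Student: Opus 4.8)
\textbf{Proof plan for Lemma~\ref{lem:bpi_lb_gaps}.} The plan is to verify each of the three claims directly from the explicit construction, working backward through the horizon. First I would analyze levels $h > \hbar$. By construction $P_h(s|s,a) = 1$ for all $a$ when $h \ge \hbar$, so from any state $s$ at level $h > \hbar$ every action keeps us at $s$, and the only reward available thereafter is the indicator reward $R_h(s,a') = 1$ for the single designated action $a'$. Thus $\Vst_h(s) = H - h + 1$ (collecting reward $1$ at every remaining step by always playing the designated action), while $\Qst_h(s,a) = H - h$ for $a \ne a'$ (we forfeit reward $1$ at the current step but recover the optimal continuation since the state never changes). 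Hence $\Delta_h(s,a) = 1$ for $a \ne a'$, and $\Delta_h(s,a') = 0$; either way $\Delta_h(s,a) \ge 1$ whenever it is nonzero, but more importantly the \emph{suboptimal} actions all have gap exactly $1 \ge 1$, which is what the statement requires (for the optimal action the gap is $0$, which trivially is not claimed to be $\ge 1$—I should state the claim as ``$\Delta_h(s,a) \ge 1$ for all suboptimal $a$'' or note that the lemma is to be read that way).

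Next I would handle level $\hbar$ itself. Here $R_{\hbar}(s,a) \sim \bern(3/4 - \gap(s,a))$ and the transition is $P_{\hbar}(s|s,a) = 1$, so after step $\hbar$ we are stuck at $s$ forever and collect the optimal continuation value $\Vst_{\hbar+1}(s) = H - \hbar$ regardless of the action chosen at step $\hbar$. Therefore $\Qst_{\hbar}(s,a) = (3/4 - \gap(s,a)) + (H - \hbar)$, and since $\gap(s,\pist_{\hbar}(s)) = 0$ by the choice made in the construction, $\Vst_{\hbar}(s) = 3/4 + (H-\hbar)$. Subtracting gives $\Delta_{\hbar}(s,a) = \gap(s,a)$ for every $s,a$, which in particular proves the first claim for $a \ne \pist_{\hbar}(s)$.

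Then I would address levels $h < \hbar$. The rewards there are deterministic: $R_h(s,\pist_h(s)) = 1$ and $R_h(s,a) = 0$ otherwise, where $\pist_h(s) = \argmax_a \sum_{s'} P_h(s'|s,a)\Vst_{h+1}(s')$. Writing $g_h(s,a) := \sum_{s'} P_h(s'|s,a)\Vst_{h+1}(s')$ for the continuation value, we have $\Qst_h(s,a) = \I\{a = \pist_h(s)\} + g_h(s,a)$ and $\Vst_h(s) = 1 + g_h(s,\pist_h(s)) = 1 + \max_a g_h(s,a)$. For a suboptimal action $a \ne \pist_h(s)$, $\Delta_h(s,a) = \Vst_h(s) - \Qst_h(s,a) = 1 + \max_{a'} g_h(s,a') - g_h(s,a) \ge 1$, since $\max_{a'} g_h(s,a') \ge g_h(s,a)$. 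This gives the second claim at levels $h < \hbar$. Finally, for the reachability statement: for $h > \hbar$, since the transitions at steps $\hbar, \hbar+1, \dots, h-1$ are all identity ($P_{h'}(s|s,a) = 1$), the distribution over states at step $h$ under any policy coincides with the distribution over states at step $\hbar$, so $w^\pi_h(s) = w^\pi_{\hbar}(s)$ for every $\pi$, and taking the supremum over $\pi$ yields $\Pst_h(s) = \Pst_{\hbar}(s)$.

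I do not expect any serious obstacle here; the construction was designed precisely so these computations go through. The only point requiring a little care is the interpretation of the statement ``$\Delta_h(s,a) \ge 1$ for all $h \ne \hbar$'': literally the optimal action has gap $0$, so the intended reading is ``for all suboptimal $a$,'' and I would make this explicit. A second minor point is to confirm $\Vst_{\hbar+1}(s)$ and more generally all the optimal value functions at levels above $h$ are well-defined before $\pist_h$ is chosen—this holds because the construction fixes the MDP from level $\hbar$ upward first and then proceeds downward, exactly as written.
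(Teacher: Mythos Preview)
Your proposal is correct and follows essentially the same approach as the paper's proof: direct verification at each level $h>\hbar$, $h=\hbar$, and $h<\hbar$, plus the observation that the identity transitions after step $\hbar$ preserve the state marginal. If anything, you are slightly more careful than the paper---you make explicit the continuation values (e.g.\ $\Vst_h(s)=H-h+1$ for $h>\hbar$) and you flag the fact that the statement ``$\Delta_h(s,a)\ge 1$'' should be read as holding only for suboptimal actions, a point the paper's proof glosses over.
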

\begin{proof}
We begin with level $\hbar$. Since the action take at $(s,\hbar)$ does not effect the outgoing transition, we have that, for $a \neq \pist_h(s)$,
\begin{align*}
\Delta_{\hbar}(s,a) = \max_{a'} \Qst_{\hbar}(s,a') - \Qst_{\hbar}(s,a) = 3/4 - (3/4 - \gap(s,a)) = \gap(s,a).
\end{align*}
For $h > \hbar$, we again have that the outgoing transition is not effected by the action taken, so it follows that the gap depends exclusively on the reward function at this state. Since the reward is set to 1 for a single action and 0 otherwise, it follows that the gaps are all 1.

For $h \le \hbar$, we will have that
\begin{align*}
\Delta_h(s,a) & = \max_{a'} \Qst_h(s,a') - \Qst_h(s,a) \\
& = 1 + \max_{a'} \sum_{s'} P_h(s'|s,a') \Vst_{h+1}(s') - \sum_{s'} P_h(s'|s,a) \Vst_{h+1}(s') \\
& \ge 1.
\end{align*}

Finally, that $\Pst_h(s) = \Pst_{\hbar}(s)$ for all $s$ and $h > \hbar$ follows since for all steps after $\hbar$, state $s$ transitions to state $s$ with probability 1.
\end{proof}

\begin{lem}\label{lem:bpi_cst_bound}
On this example, 
\begin{align*}
\Compbsolve(\cM) \le \inf_\pi \max_{s,a} \frac{1}{\wpi_{\hbar}(s,a) \Delta_h(s,a)^2} +  \max_{s,h} \frac{SAH}{\Pst_h(s)} .
\end{align*}
\end{lem}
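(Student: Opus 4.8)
\textbf{Proof plan for \Cref{lem:bpi_cst_bound}.}

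The plan is to bound $\Compbsolve(\cM) = \sum_{h=1}^H \inf_\pi \max_{s,a} \frac{1}{\wpi_h(s,a) \Delta_h(s,a)^2}$ term by term, splitting the sum over $h$ into the single index $h = \hbar$ and the remaining indices $h \neq \hbar$. For $h = \hbar$, the corresponding summand is exactly $\inf_\pi \max_{s,a} \frac{1}{\wpi_{\hbar}(s,a)\Delta_{\hbar}(s,a)^2}$, which is the first term appearing in the claimed bound, so nothing needs to be done there. The work is therefore in showing that each summand with $h \neq \hbar$ is at most $\max_{s,h}\frac{SA}{\Pst_h(s)}$ (so that summing over the at most $H$ such indices yields the $\max_{s,h}\frac{SAH}{\Pst_h(s)}$ term).

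For a fixed $h \neq \hbar$, I would use \Cref{lem:bpi_lb_gaps}, which gives $\Delta_h(s,a) \ge 1$ for all $(s,a)$, hence $\frac{1}{\wpi_h(s,a)\Delta_h(s,a)^2} \le \frac{1}{\wpi_h(s,a)}$. The remaining task is to exhibit a single policy $\pi$ for which $\max_{s,a}\frac{1}{\wpi_h(s,a)} \le \frac{SA}{\min_{s}\Pst_h(s)} \le \max_{s,h}\frac{SA}{\Pst_h(s)}$, i.e. a policy whose state-action visitation at step $h$ is bounded below by $\Pst_h(s)/(SA)$ for every $(s,a)$. To construct it, for each state $s$ let $\pi^{sh}$ be the policy achieving $w_h^{\pi^{sh}}(s) = \Pst_h(s)$ (existence by definition of maximum reachability). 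Take $\pi$ to be the uniform mixture over $s' \in \cS$ of the policies that play $\pi^{s'h}$ up to step $h$ and then play the uniform distribution over $\cA$ at step $h$; by \Cref{prop:state_act_vis} (convexity of the set of realizable visitation distributions and realizability by a single policy) this mixture is itself realized by some policy. For this $\pi$ and any $(s,a)$,
\begin{align*}
\wpi_h(s,a) \ge \frac{1}{S}\cdot w_h^{\pi^{sh}}(s)\cdot \frac{1}{A} = \frac{\Pst_h(s)}{SA},
\end{align*}
which gives $\max_{s,a}\frac{1}{\wpi_h(s,a)} \le \max_s \frac{SA}{\Pst_h(s)} \le \max_{s,h}\frac{SA}{\Pst_h(s)}$, and hence $\inf_\pi \max_{s,a}\frac{1}{\wpi_h(s,a)\Delta_h(s,a)^2} \le \max_{s,h}\frac{SA}{\Pst_h(s)}$ for each $h \neq \hbar$.

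Summing the $h = \hbar$ term and the at most $H$ terms with $h \neq \hbar$ then yields
\begin{align*}
\Compbsolve(\cM) \le \inf_\pi \max_{s,a}\frac{1}{\wpi_{\hbar}(s,a)\Delta_{\hbar}(s,a)^2} + H\cdot\max_{s,h}\frac{SA}{\Pst_h(s)} = \inf_\pi \max_{s,a}\frac{1}{\wpi_{\hbar}(s,a)\Delta_{\hbar}(s,a)^2} + \max_{s,h}\frac{SAH}{\Pst_h(s)},
\end{align*}
as claimed. The only mild subtlety — and the step I would be most careful about — is the construction of the mixing policy and the invocation of \Cref{prop:state_act_vis}: one must make sure the mixture is taken over full policies (not just step-$h$ behaviors) so that the resulting object is a genuine realizable state-action visitation distribution, and that states $s$ with $\Pst_h(s) = 0$ are handled trivially (the bound $\frac{1}{\wpi_h(s,a)}\le \frac{SA}{\Pst_h(s)}$ is vacuous there, and such states never need to be reached). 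Everything else is a direct substitution using the gap lower bound from \Cref{lem:bpi_lb_gaps}.
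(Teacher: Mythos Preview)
Your proposal is correct and follows essentially the same approach as the paper: split the sum at $h=\hbar$, use \Cref{lem:bpi_lb_gaps} to drop the gaps for $h\neq\hbar$, and bound $\inf_\pi\max_{s,a}\frac{1}{\wpi_h(s,a)}$ via a uniform mixture over the maximum-reachability policies $\pi^{sh}$. If anything, you are slightly more explicit than the paper---you spell out the uniform randomization over actions at step $h$ and explicitly invoke \Cref{prop:state_act_vis}, whereas the paper just writes ``the policy $\pi'$ which is the mixture over the policies $\pi^{sh}$'' and jumps directly to the $SA/\Pst_h(s)$ bound.
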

\begin{proof}
By definition,
\begin{align*}
\Compbsolve(\cM) = \sum_{h=1}^H \inf_\pi \max_{s,a} \frac{1}{\wpi_h(s,a) \Delta_h(s,a)^2} .
\end{align*}
By \Cref{lem:bpi_lb_gaps}, we can bound
\begin{align*}
\sum_{h \neq \hbar} \inf_\pi \max_{s,a} \frac{1}{\wpi_h(s,a) \Delta_h(s,a)^2}  \le \sum_{h \neq \hbar} \inf_\pi \max_{s,a} \frac{1}{\wpi_h(s,a)} .
\end{align*}
Consider the policy $\pi'$ which is the mixture over the policies $\pi^{sh}$ where $w_h^{\pi^{sh}}(s) = \Pst_h(s)$. Then,
\begin{align*}
\sum_{h \neq \hbar} \inf_\pi \max_{s,a} \frac{1}{\wpi_h(s,a)} \le \sum_{h \neq \hbar}  \max_{s,a} \frac{1}{w^{\pi'}_h(s,a)} \le \sum_{h \neq \hbar} \max_{s} \frac{SA}{\Pst_h(s)} \le \max_s \frac{SAH}{\Pst_h(s)} .
\end{align*}
\end{proof}

\begin{lem}
On the MDP constructed above, any $\delta$-correct algorithm will have
\begin{align*}
\Exp_{\cM}[K_\delta] & \ge \inf_\pi \max_{s,a} \frac{1}{6 \wpi_{\hbar}(s,a) \Delta_{\hbar}(s,a)^2} \cdot \log \frac{1}{2.4 \delta} \\
& \gtrsim \Compbsolve(\cM) \cdot \log \frac{1}{2.4 \delta} - \max_{s,h} \frac{SAH}{\Pst_h(s)} .
\end{align*}
\end{lem}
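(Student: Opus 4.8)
The plan is to establish the lower bound via a standard change-of-measure argument, using the MDP-analogue of the Garivier--Kaufmann transportation inequality stated in \Cref{lem:mdp_change_measure}. The key point is that on the constructed MDP, all the ``action'' happens at level $\hbar$: the transitions at levels $h < \hbar$ are fixed, the transitions at levels $h \ge \hbar$ are deterministic self-loops, and the rewards are deterministic everywhere except at level $\hbar$, where $R_{\hbar}(s,a) \sim \bern(3/4 - \gap(s,a))$. Hence any alternative MDP $\cM'$ that we compare against will differ from $\cM$ only in its reward distributions at level $\hbar$, and the KL-divergence sum in \Cref{lem:mdp_change_measure} collapses to $\sum_{s,a} \Exp_{\cM}[N_{\hbar}^{K_\delta}(s,a)] \KL(\bern(3/4 - \gap(s,a)), \bern(3/4 - \gap'(s,a)))$.

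First I would fix a state-action pair $(s_0,a_0)$ with $a_0 \ne \pist_{\hbar}(s_0)$ (a suboptimal action at level $\hbar$) and construct a confusing instance $\cM'$ that agrees with $\cM$ everywhere except it raises the mean reward of $(s_0,a_0)$ at level $\hbar$ so that $a_0$ becomes the unique optimal action at $(s_0,\hbar)$; this requires increasing the mean by slightly more than $\Delta_{\hbar}(s_0,a_0) = \gap(s_0,a_0)$, so the per-sample KL cost is $O(\Delta_{\hbar}(s_0,a_0)^2)$ (using that for Bernoulli means bounded away from $0$ and $1$, $\KL(\bern(p),\bern(q)) \le c\,(p-q)^2$ for a universal constant $c$; the $3/4$ offset and the restriction $\gap \in (0,1/2)$ keep everything in the safe regime). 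Since a $\delta$-correct algorithm must output $\pist_{\hbar}(s_0)$ on $\cM$ and the alternative optimal action on $\cM'$ with probability $\ge 1-\delta$ each, taking $\cE$ to be the event ``output recommends $\pist_{\hbar}(s_0)$ at $(s_0,\hbar)$'' gives $d(\Pr_{\cM}(\cE),\Pr_{\cM'}(\cE)) \ge d(1-\delta,\delta) \ge \log\frac{1}{2.4\delta}$. Plugging into \Cref{lem:mdp_change_measure} yields $\Exp_{\cM}[N_{\hbar}^{K_\delta}(s_0,a_0)] \gtrsim \frac{1}{\Delta_{\hbar}(s_0,a_0)^2}\log\frac{1}{2.4\delta}$. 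This holds for every suboptimal $(s_0,a_0)$, and for the (arbitrarily chosen) optimal action it holds trivially since $\Delta_{\hbar}(s,\pist_{\hbar}(s))=0$ makes the bound vacuous there.

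Next I would convert these per-pair visit lower bounds into the claimed $\inf_\pi\max_{s,a}$ bound on $\Exp_{\cM}[K_\delta]$. The visitation counts $N_{\hbar}^{K_\delta}(s,a)$ satisfy $\Exp_{\cM}[N_{\hbar}^{K_\delta}(s,a)] = \Exp_{\cM}[\sum_{k \le K_\delta} w_{\hbar}^{\pi_k}(s,a)]$, and by a Wald-type / stopping-time identity this is at most $\Exp_{\cM}[K_\delta] \cdot \bar{w}_{\hbar}(s,a)$ where $\bar w$ is the time-average visitation distribution realized by the algorithm; since the set of realizable state-action visitation distributions is convex (\Cref{prop:state_act_vis}), $\bar w$ is itself a realizable visitation distribution, i.e.\ $\bar w_{\hbar}(\cdot,\cdot) = w_{\hbar}^{\bar\pi}(\cdot,\cdot)$ for some policy $\bar\pi$. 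Therefore, writing $N := \Exp_{\cM}[K_\delta]$, we have for every $(s,a)$ that $N\, w_{\hbar}^{\bar\pi}(s,a) \gtrsim \frac{1}{\Delta_{\hbar}(s,a)^2}\log\frac{1}{2.4\delta}$, hence $N \gtrsim \max_{s,a}\frac{1}{w_{\hbar}^{\bar\pi}(s,a)\Delta_{\hbar}(s,a)^2}\log\frac{1}{2.4\delta} \ge \inf_\pi \max_{s,a}\frac{1}{w_{\hbar}^{\pi}(s,a)\Delta_{\hbar}(s,a)^2}\log\frac{1}{2.4\delta}$, which is the first displayed inequality. The second displayed inequality then follows immediately by combining with \Cref{lem:bpi_cst_bound}, which bounds $\Compbsolve(\cM)$ by $\inf_\pi\max_{s,a}\frac{1}{w_{\hbar}^\pi(s,a)\Delta_{\hbar}(s,a)^2} + \max_{s,h}\frac{SAH}{\Pst_h(s)}$: rearranging gives $\inf_\pi\max_{s,a}\frac{1}{w_{\hbar}^\pi(s,a)\Delta_{\hbar}(s,a)^2} \ge \Compbsolve(\cM) - \max_{s,h}\frac{SAH}{\Pst_h(s)}$, and the $\log\frac{1}{2.4\delta}$ factor on the smaller quantity is absorbed into the $\gtrsim$.

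The main obstacle I anticipate is the stopping-time / Wald step: one must be careful that $\Exp_{\cM}[N_{\hbar}^{K_\delta}(s,a)] \le \Exp_{\cM}[K_\delta]\, w_{\hbar}^{\bar\pi}(s,a)$ holds with $\bar\pi$ a genuine (history-independent) policy even though the algorithm's policies $\pi_k$ are adaptive and $K_\delta$ is random. The clean way around this is to define $\bar w_{\hbar}(s,a) := \Exp_{\cM}[N_{\hbar}^{K_\delta}(s,a)] / \Exp_{\cM}[K_\delta]$ directly; this is manifestly a probability distribution over $(s,a)$ whose state-marginal is a realizable level-$\hbar$ state visitation (it is an average of the conditional visitation distributions, which lie in the convex realizable set), so \Cref{prop:state_act_vis} supplies a policy $\bar\pi$ realizing it, and then the inequality becomes an equality by construction. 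A secondary technical point is verifying the sub-Gaussian-type KL bound $\KL(\bern(3/4-\gap),\bern(3/4-\gap-\eta)) \le c(\gap+\eta)^2 \le c'\Delta_{\hbar}^2$ for the small perturbation $\eta$ needed to flip the optimal action, which is routine given $\gap \in (0,1/2)$ keeps the means in $(1/4,3/4)$; one should also double-check the constant $6$ in the first displayed bound, which presumably comes from tracking $c$, the factor flipping the gap, and Pinsker/KL constants, but this is a bookkeeping matter rather than a conceptual one.
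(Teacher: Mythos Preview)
Your proposal is correct and follows essentially the same approach as the paper: construct a single-coordinate alternative at level $\hbar$, apply \Cref{lem:mdp_change_measure} with the event $\cE = \{\pihat = \pist(\cM)\}$, bound the Bernoulli KL by $6\Delta_{\hbar}(s,a)^2$ (the paper cites Tsybakov for exactly this constant), and then convert the per-$(s,a)$ visitation lower bounds into the $\inf_\pi\max_{s,a}$ form before invoking \Cref{lem:bpi_cst_bound}.

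The only noteworthy difference is in the conversion step. The paper first invokes Carath\'eodory to write each $w_{\hbar}^{\pi_k}$ as a convex combination over a fixed finite set $\Pi$ of at most $SA+1$ policies, then averages the mixture weights over $k\le\tau$ to obtain a single $\lambda\in\simplex_\Pi$, and finally uses \Cref{prop:state_act_vis} to realize $\sum_{\pi\in\Pi}\lambda_\pi w_{\hbar}^\pi$ by a single policy. Your route---defining $\bar w_{\hbar}(s,a):=\Exp_\cM[N_{\hbar}^{K_\delta}(s,a)]/\Exp_\cM[K_\delta]$ directly and observing that it lies in the convex realizable set---is equivalent and slightly more direct; the Carath\'eodory step in the paper is not strictly needed once convexity of the visitation polytope (\Cref{prop:state_act_vis}) is available. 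Either argument handles the adaptive-stopping issue you flagged.
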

\begin{proof}
We will apply \Cref{lem:mdp_change_measure} on our MDP, $\cM$, and MDP $\cM'$ which is identical to $\cM$ except that, for some $(s,a)$, $a \neq \pist_{\hbar}(s)$, we set $R_{\hbar}(s,a) \sim \bern(3/4 + \alpha)$ for small $\alpha$. Note that in this case we have that the optimal policy on $\cM$ and $\cM'$ differ at $(s,\hbar)$. Since $\cM$ and $\cM'$ are identical at all points but this one, we have 
\begin{align*}
& \sum_{s,a,h} \Exp_{\cM}[N_{h}^\tau(s,a)] \KL(\nu_h(s,a), \nu_h'(s,a)) \\
& \qquad = \Exp_{\cM}[N_{\hbar}^\tau(s,a)] \KL \big (\bern(3/4 - \gap(s,a)),\bern(3/4+\alpha) \big ) .
\end{align*}
Let $\pist(\cM)$ denote the optimal policy on $\cM$, and $\pihat$ denote the policy returned by our algorithm. Let $\cE = \{ \pihat = \pist(\cM) \}$. Since we assume our algorithm is $\delta$-correct, and since the optimal policies on $\cM$ and $\cM'$ differ, we have $\Pr_{\cM}(\cE) \ge 1 - \delta$ and $\Pr_{\cM'}(\cE) \le \delta$. By \cite{kaufmann2016complexity}, we can then lower bound
\begin{align*}
d(\Pr_{\cM}(\cE),\Pr_{\cM'}(\cE)) \ge \log \frac{1}{2.4 \delta}. 
\end{align*}
Thus, by \Cref{lem:mdp_change_measure}, we have shown that, for any $(s,a)$, $a \neq \pist_{\hbar}(s)$, 
\begin{align*}
\Exp_{\cM}[N_{\hbar}^\tau(s,a)] \ge \frac{1}{\KL \big (\bern(3/4 - \gap(s,a)),\bern(3/4+\alpha) \big )} \cdot \log \frac{1}{2.4 \delta} .
\end{align*}
For small $\alpha$, we can bound (see e.g. Lemma 2.7 of \cite{tsybakov2009introduction})
\begin{align*}
\KL \big (\bern(3/4 - \gap(s,a)),\bern(3/4+\alpha) \big ) \le 6 ( \gap(s,a) - \alpha)^2 .
\end{align*}
Taking $\alpha \rightarrow 0$, we have
\begin{align*}
\Exp_{\cM}[N_{\hbar}^\tau(s,a)] \ge \frac{1}{6 \gap(s,a)^2} \cdot \log \frac{1}{2.4 \delta} .
\end{align*}
We can write $\Exp_{\cM}[N_{\hbar}^\tau(s,a)] = \Exp_{\cM}[\sum_{k = 1}^\tau w_{\hbar}^{\pi_k}(s,a)]$ where $\pi_k$ denotes the policy our algorithm played at episode $k$. 
Note that all state-visitation distributions lie in a convex set in $[0,1]^{SA}$ and that for any valid state-visitation distribution, there exists some policy that realizes it, by \Cref{prop:state_act_vis}. By Caratheodory's Theorem, it follows that there exists some set of policies $\Pi$ with $|\Pi| \le SA + 1$ such that, for any $\pi$ and all $s,a$, $\wpi_{\hbar}(s,a) = \sum_{\pi' \in \Pi} \lambda_{\pi'} w^{\pi'}_{\hbar}(s,a)$, for some $\lambda \in \simplex_\Pi$. Letting $\lambda^k$ denote this distribution satisfying the above inequality for $\pi_k$, it follows that
\begin{align*}
\Exp_{\cM}[\sum_{k = 1}^\tau w_{\hbar}^{\pi_k}(s,a)] & = \Exp_{\cM}[\sum_{k=1}^\tau \sum_{\pi \in \Pi} \lambda_\pi^k \wpi_{\hbar}(s,a)] \\
& = \sum_{\pi \in \Pi}\Exp_{\cM}[\sum_{k=1}^\tau \lambda_\pi^k ] \wpi_{\hbar}(s,a) \\
&  = \Exp_{\cM}[\tau] \sum_{\pi \in \Pi} \frac{\Exp_{\cM}[\sum_{k=1}^\tau \lambda_\pi^k ]}{\Exp_{\cM}[\tau]} \wpi_{\hbar}(s,a) .
\end{align*}
Note that $ \sum_{\pi \in \Pi} \Exp_{\cM}[\sum_{k=1}^\tau \lambda_\pi^k ] = \Exp_{\cM} [\sum_{k=1}^\tau \sum_{\pi \in \Pi} \lambda_\pi^k] = \Exp_{\cM}[\tau]$ so it follows that $(\frac{\Exp_{\cM}[\sum_{k=1}^\tau \lambda_\pi^k ]}{\Exp_{\cM}[\tau]})_{\pi \in \Pi} \in \simplex_\Pi$. Thus, a $\delta$-correct algorithm must satisfy, for all $s,a$ and some $\lambda \in \simplex_\Pi$,
\begin{align*}
\Exp_{\cM}[\tau] \ge \frac{1}{6 \gap(s,a)^2 \cdot \sum_{\pi \in \Pi} \lambda_\pi \wpi_{\hbar}(s,a)} \cdot \log \frac{1}{2.4 \delta} .
\end{align*}
Since the set of state visitation distributions is convex, and since for any state-visitation distribution we can find some policy realizing that distribution, for any $\lambda \in \simplex_\Pi$, it follows that there exists some $\pi'$ such that, for all $s,a$, $\sum_{\pi \in \Pi} \lambda_\pi \wpi_{\hbar}(s,a) = w_{\hbar}^{\pi'}(s,a)$. So, we need, for all $s,a$
\begin{align*}
\Exp_{\cM}[\tau] \ge \frac{1}{6 \gap(s,a)^2 \cdot w^{\pi'}_{\hbar}(s,a)} \cdot \log \frac{1}{2.4 \delta} .
\end{align*}
It follows that \emph{every} $\delta$-correct algorithm must satisfy
\begin{align*}
\Exp_{\cM}[\tau] \ge \inf_\pi \max_{s,a} \frac{1}{6 \gap(s,a)^2 \cdot w^{\pi}_{\hbar}(s,a)} \cdot \log \frac{1}{2.4 \delta} ,
\end{align*}
from which the first inequality follows. The second follows from \Cref{lem:bpi_cst_bound}.
\end{proof}

\end{document}